\documentclass[accepted]{uai2026} 
                        
\usepackage[american]{babel}

\usepackage{natbib} 

\usepackage{mathtools} 
\usepackage{amsthm}
\usepackage{amssymb}
\usepackage{algorithm}
\usepackage{algorithmic}
\usepackage[dvipsnames]{xcolor}
\usepackage{pdflscape}
\usepackage{booktabs} 
\usepackage{subcaption} 
\usepackage{graphicx}
\usepackage{hyperref}
\usepackage[capitalize,noabbrev]{cleveref}
\usepackage{microtype}

\usepackage{xcolor}
\usepackage{soul}

\theoremstyle{plain}
\newtheorem{theorem}{Theorem}[section]
\newtheorem{proposition}[theorem]{Proposition}

\theoremstyle{definition}

\theoremstyle{remark}

\author[1]{Simon D. Nguyen}
\author[1]{Troy Russo}
\author[1]{Kentaro Hoffman}
\author[1,2]{Tyler H. McCormick}
\affil[1]{Department of Statistics, University of Washington}
\affil[2]{Department of Sociology, University of Washington}
  
\begin{document}
\title{Adaptive Active Learning for Regression via Reinforcement Learning}
\maketitle
\begin{abstract}
Active learning for regression reduces labeling costs by selecting the most informative samples. Improved Greedy Sampling is a prominent method that balances feature-space diversity and output-space uncertainty using a static, multiplicative rule. We propose Weighted improved Greedy Sampling (WiGS), which replaces this framework with a dynamic, additive criterion. We formulate weight selection as a reinforcement learning problem, enabling an agent to adapt the exploration-investigation balance throughout learning. Experiments on 18 benchmark datasets and a synthetic environment show WiGS outperforms iGS and other baseline methods in both accuracy and labeling efficiency, particularly in domains with irregular data density where the baseline's multiplicative rule ignores high-error samples in dense regions. 
\end{abstract}

\section{Introduction}
\label{sec:Intro}

Acquiring labeled data is a persistent bottleneck across supervised machine learning applications. In regression tasks specifically, which are prevalent in domains like robotics \citep{ALR_Robotics}, drug discovery \citep{ALR_Drugs}, environmental science \cite{ALR_Environment}, and materials science \citep{AL_Regression_Benchmark}, active learning (AL) mitigates this cost by strategically querying samples that maximize information gain, thereby achieving high predictive performance while minimizing the total labeling cost. In active learning for regression, effective strategies must navigate the trade-off between representativeness and informativeness \citep{Huang2010}. In this work, we frame this as a balance between \textit{exploration} (querying sparse regions to comprehensively span the input feature space) and \textit{investigation} (querying high-uncertainty regions to refine the decision boundary).

Current state-of-the-art heuristics, such as the Greedy Sampling (GS) family proposed by \citet{iGS}, operationalize this trade-off using metrics for feature-space diversity and output-space uncertainty. However, the standard improved Greedy Sampling (iGS) baseline combines these signals using a static, multiplicative rule. We identify that this fixed balance is sub-optimal in heterogeneous domains: the multiplicative criterion suppresses necessary investigation in dense feature regions, effectively ``vetoing'' high-error samples simply because they lack feature novelty. We argue that the optimal balance is not fixed, but dynamic, requiring adaptation to the evolving state of the learner. Indeed, our empirical analysis of the learned policies reveals that the agent never converges to a single static weight, but instead maintains a high-variance, reactive strategy throughout the learning process (see Appendix \ref{subsec:AnalysisOfLearnedAgentPolicy}). Furthermore, even in scenarios where a specific static balance suffices, identifying this optimal scalar \textit{a priori} is impossible without expensive, hindsight-based validation so an autonomous method to find this optimal scalar is superior.

To achieve this, we introduce \textit{Weighted improved Greedy Sampling (WiGS)}, a flexible framework that recasts the selection criterion as an additive combination of diversity and uncertainty. We propose two distinct mechanisms for varying this balance: (1) \textit{dynamic} strategies, which follow a fixed temporal schedule independent of the data; and (2) \textit{adaptive} strategies, which actively learn to adjust the weight based on model feedback. Focusing on the latter, we build on the “learning to active learn” paradigm \citep{LearningToActiveLearning} to formulate the weighting strategy as a reinforcement learning (RL) problem. Unlike predecessors that learn complex policies from raw data for classification tasks \citep{Woodward2017, Liu2019}, our agent manages a control signal defined as the scalar weight balancing exploration and investigation. This reduces the agent's role to a strategic decision-maker that determines the optimal weight to balance these two scores. By modulating the exploration-investigation trade-off based on the evolving data state, the agent effectively corrects the failure modes of static heuristics or autonomously converges to an optimal static configuration.

Our contributions are: (1) we propose the flexible WiGS framework; (2) we formulate the dynamic balancing of AL objectives as a continuous-control RL problem; and (3) we demonstrate through extensive simulations across 20 benchmark datasets that this adaptive approach outperforms static baselines in both accuracy and label efficiency.\footnote{To ensure reproducibility, our codebase is available at\\ \href{https://github.com/thatswhatsimonsaid/WeightedGreedySampling}{https://github.com/thatswhatsimonsaid/WeightedGreedySampling}.}

\section{Greedy Sampling}
\label{sec:Background}


We focus on the pool-based active learning setting, where the algorithm iteratively queries a single sample from a large, fixed collection of unlabeled data \citep{Settles2012}. Let the input space be $\mathcal{X}$ and the output space be $\mathcal{Y}$. At any iteration, we have a labeled training set $D_{tr} = \{(\mathbf{x}_i, y_i)\}_{i=1}^{k}$ and a large unlabeled candidate pool $D_{cdd} = \{\mathbf{x}_j\}_{j=k+1}^{N}$. A regression model $f: \mathcal{X} \to \mathcal{Y}$ is trained on $D_{tr}$. The goal is to select the sample $\mathbf{x}^* \in D_{cdd}$ that, when labeled, maximally improves the model's predictive performance, defined practically as the reduction in generalization error across the entire domain. \citet{iGS} provide a set of powerful, intuitive heuristics for this task which are henceforth referred to as the Greedy Sampling (GS) family of selectors.

\subsection{Exploration (GS\MakeLowercase{x}) and Investigation (GS\MakeLowercase{y})}
\label{subsec:GSxGSY}

Greedy Sampling frames the exploration-investigation trade-off using a ``furthest nearest neighbor'' logic: selecting candidates that are maximally distant from the nearest labeled point in either feature or output space. For any candidate $\mathbf{x}_n \in D_{cdd}$ and labeled sample $(\mathbf{x}_m, y_m) \in D_{tr}$, we define the pairwise distances: 
\begin{equation}
    d_{nm}^x \equiv ||\mathbf{x}_n - \mathbf{x}_m|| \text{ and } d_{nm}^y \equiv |f(\mathbf{x}_n) - y_m|
\end{equation}

\textbf{Greedy Sampling on Features (GSx)} targets diversity in $\mathcal{X}$ (exploration). GSx is model-agnostic, selecting the candidate with the maximum distance to its nearest labeled neighbor in the input space:
\begin{equation}
    \mathbf{x}^*_{GSx} = \underset{\mathbf{x}_n \in D_{cdd}}{\arg\max} ~d_n^x, \quad \text{where} \quad d_n^x \equiv \min_{m} d_{nm}^x
\end{equation}

\textbf{Greedy Sampling on the Output (GSy)} targets diversity in $\mathcal{Y}$ (investigation). GSy utilizes the current model $f(\cdot)$ to select the candidate with the maximum prediction distance to the nearest known label:
\begin{equation}
    \mathbf{x}^*_{GSy} = \underset{\mathbf{x}_n \in D_{cdd}}{\arg\max} ~d_n^y, \quad \text{where} \quad d_n^y \equiv \min_{m} d_{nm}^y
\end{equation}

\subsection{Improved Greedy Sampling (\MakeLowercase{i}GS)}
\label{subsec:iGS}
As GSx focuses purely on feature space diversity and GSy on predicted output space diversity, both have limitations. GSx ignores the predictive regression model, while GSy can be unreliable if the initial model is poor. \citep{iGS} propose the improved Greedy Sampling
(iGS) approach to create a balanced strategy by combining both. The final score for a candidate $\mathbf{x}_n$ is the minimum of the products of its pairwise distances to each labeled point,
and the selection criterion is to maximize this value:
\begin{equation} \label{eq:igs}
\mathbf{x}^*_{iGS} = \underset{\mathbf{x}_n \in D_{cdd}}{\arg\max} ~s_n^{iGS}, \ \text{s.t.} \ s_n^{iGS} = \min_{m} (d_{nm}^x \cdot d_{nm}^y)
\end{equation}
This multiplicative rule ensures that a sample must score high in both exploration and investigation to be selected, preventing the selection of outliers (high diversity, low uncertainty) or redundant points (low diversity). However, as we argue in Section \ref{sec:WiGS}, this strict multiplicative requirement can become overly conservative in feature spaces with heterogeneous data densities.

\section{Weighted Improved Greedy Sampling (W\MakeLowercase{i}GS)}
\label{sec:WiGS}

The static, multiplicative selection criterion of the iGS baseline (Equation \ref{eq:igs}) assumes a fixed relationship between the importance of feature space exploration and predicted output space investigation. We contend that this balance is not fixed but should instead be dynamic, adapting to the specific characteristics of the dataset and the current stage of the active learning procedure. To address this, the \textbf{Weighted improved Greedy Sampling (WiGS)} framework recasts the selection criterion as a flexible, additive combination of scores. This formulation allows for explicit and dynamic control over the exploration-investigation trade-off.

\subsection{The W\MakeLowercase{i}GS Framework and Score}
\label{subsec:WiGS_FrameworkScore}

To enable dynamic balancing (which here refers to temporal changes independent of data), WiGS recasts the selection criterion as a flexible, weighted additive function. First, to ensure the input-space exploration $d_{nm}^x$ and output-space investigation $d_{nm}^y$ metrics are comparable, we apply a normalization function $\phi(\cdot)$ to ensure the raw distances have comparable magnitudes.

WiGS computes a weighted additive distance between each candidate and every point in the labeled set using a dynamic weight, $w_x^{(t)} \in [0, 1]$. The final score for a candidate $\mathbf{x}_n$ is the minimum of these combined scores taken over all labeled points:
\begin{equation} \label{eq:wigs_score}
    s_n^{WiGS} = \min_{m} \left( w_x^{(t)} \phi(d_{nm}^x) + (1-w_x^{(t)}) \phi(d_{nm}^y) \right)
\end{equation}
The candidate observation with the highest score is selected for labeling: $\mathbf{x}^*_{WiGS} = \underset{\mathbf{x}_n \in D_{cdd}}{\arg\max} ~ s_n^{WiGS}$. The primary challenge lies in devising an optimal strategy for dynamically updating $w_x^{(t)}$ throughout the active learning process.

\subsection{Theoretical Analysis: Density Veto}
\label{sec:TheoreticalProof}

To motivate our additive framework, we identify a critical failure mode of the multiplicative iGS criterion, which we term the \textit{density veto}. For this analysis, let $d \in [0,1]$ and $u \in [0,1]$ denote the normalized feature exploration (diversity) and predictive investigation (uncertainty) scores, respectively.

In regions of high feature density ($d \to 0$), the multiplicative score $S_{iGS} = d \cdot u$ is suppressed to near-zero regardless of the magnitude of $u$. This effectively ``vetoes'' the selection of high-uncertainty points simply because they reside in dense regions. We prove that in these regimes, the multiplicative selector is mathematically incapable of prioritizing uncertainty, whereas the additive selector retains this capability.

\begin{proposition}[The Density Veto]
Consider a candidate pool containing a ``target'' $x^*$ with high uncertainty $u^*$ and low diversity $d^*$ (high density), and a ``distractor'' $x'$ with lower uncertainty $u' < u^*$ but moderate diversity $d' > d^*$.

As the feature density around $x^*$ increases ($d^* \to 0$), there exists a threshold $\delta > 0$ such that if $d^* < \delta$, the multiplicative selector strictly prefers the sub-optimal distractor $x'$ ($S_{iGS}(x') > S_{iGS}(x^*)$), regardless of the uncertainty gap. In contrast, there always exists a weight $w \in [0, 1]$ such that the additive selector prefers the high-uncertainty target $x^*$.
\end{proposition}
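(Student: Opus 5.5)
The argument is elementary and splits into the two claims. Throughout, I model the scores exactly as in the setup preceding the statement. The normalized diversity and uncertainty of each candidate are scalars $d,u\in[0,1]$. The multiplicative score is $S_{iGS}=d\cdot u$, and the additive score is $S_w = w\,d + (1-w)\,u$ for $w\in[0,1]$. The pairwise minimum over labeled points is abstracted into these scalar scores. The distractor's pair $(d',u')$ is held fixed, with $d'>0$ and $u'>0$, while $d^*\to 0$. The target's uncertainty $u^*$ is allowed to be anything in $(u',1]$, and this is what ``regardless of the uncertainty gap'' means.

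\emph{Multiplicative part.} I will define the threshold explicitly as $\delta = d'u'$, or more generally $\delta = d'u'/u^*$. I prefer the first form because it does not depend on $u^*$, which is exactly what makes the conclusion uniform in the gap. Since $u^*\le 1$, every $d^*<\delta$ gives
\[
S_{iGS}(x^*) = d^*u^* \le d^* < d'u' = S_{iGS}(x').
\]
So the multiplicative selector strictly prefers $x'$. I will also note that $\delta>0$ requires $d'u'>0$, so the degenerate case $u'=0$ has to be excluded, or else handled by observing that it cannot arise once $d^*\to0$ with a strict preference. This makes the hypothesis $u'>0$ explicit; I expect it to be the only delicate point.

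\emph{Additive part.} I will exhibit the single weight $w=0$, which gives $S_0(x^*)=u^*>u'=S_0(x')$. For a less degenerate statement, I will solve
\[
w d^* + (1-w)u^* > w d' + (1-w)u'.
\]
Rearranged, this reads $(1-w)(u^*-u') > w(d'-d^*)$, which holds for all
\[
w < w_{\max} = \frac{u^*-u'}{(u^*-u') + (d'-d^*)}.
\]
Since $u^*>u'$, we get $w_{\max}>0$, so a whole nonempty interval $[0,w_{\max})$ of weights works. Moreover, as $d^*\to0$, $w_{\max}$ tends to $(u^*-u')/(u^*-u'+d')>0$, so the interval does not collapse in the dense limit. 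This contrasts with the multiplicative case.

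The main obstacle is not computational. It is making the statement precise: the scalar abstraction of the min-over-labeled-points scores, the requirement $u'>0$, and the reading of ``regardless of the uncertainty gap'' as uniformity over $u^*\in(u',1]$. I will state these conventions at the start of the proof and then carry out the two short inequalities above.
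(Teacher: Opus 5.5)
Your proposal is correct and follows the paper's proof essentially step for step. The paper sets $\delta = d'u'/u^*$. For the additive selector it rearranges to $(1-w)(u^*-u') > w(d'-d^*)$ and takes any $w \in (0,1/K)$, where $1/K = (u^*-u')/\bigl((u^*-u')+(d'-d^*)\bigr)$ is exactly your $w_{\max}$. Your uniform threshold $\delta = d'u'$ (using $u^*\le 1$), the explicit choice $w=0$, and the remark that $u'>0$ is needed are minor sharpenings of the same argument. The paper simply assumes $d',u',u^*>0$; with $u'=0$ the claim genuinely fails, so that case must be excluded rather than ``handled''.
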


The full proof is provided in Appendix \ref{sec:proofs}.

\subsection{Weighting Strategies: Static and Time-Decay}
\label{subsec:TimeDecayingHeuristics}

As a starting point, we explore simple, non-adaptive strategies for setting the weight $w_x^{(t)}$.

\textbf{Static Weights:} The most straightforward approach is to set $w_x^{(t)}$ to a fixed constant throughout the learning process (i.e., $w_x^{(t)} = w_x$). This reduces WiGS to a simple hyperparameter and is useful when we know the true relationship between informativity from the feature space and the output space. While the optimal balance is unknown in practice, this heuristic allows us to test whether a different fixed additive balance (e.g., $w_x = 0.25$ or $w_x = 0.75$) can outperform the implicit, multiplicative balance of the iGS method.

\textbf{Time-Decay Weights:} We can incorporate a simple dynamic heuristic by making the weight a function of the iteration number. We hypothesize that exploration is more critical at the beginning of the process when the model is poorly specified. As more data is collected, the model becomes more reliable, and investigation becomes more fruitful. We implement two common decay functions:
\begin{itemize}[leftmargin=*, noitemsep, topsep=0pt]
    \item \textbf{Linear Decay:} $w_x^{(t)} = 1 - (c\cdot t/T)$, where $T$ is the total number of AL iterations.
    \item \textbf{Exponential Decay:} $w_x^{(t)} = \exp(-c \cdot t/T)$, where $c$ is a decay-rate constant.
\end{itemize}

\subsection{Adaptive Weighting via Reinforcement Learning}
\label{subsubsec:WiGS_RL}

To move beyond data-agnostic heuristics onto adaptive methods, we formulate weight selection as a reinforcement learning (RL) problem. We adopt the ``learning to active learn'' paradigm \citep{LearningToActiveLearning}. However, whereas \citet{LearningToActiveLearning} trained an agent for a stream-based classification task with a binary ``label/discard'' action, we train an agent for a pool-based regression task where its action is the continuous weight $w_x^{(t)}$ that balances exploration and investigation.

A critical methodological constraint is to avoid ``double-dipping'' by optimizing directly on the evaluation metric (which requires oracle labels). We resolve this by decoupling the reward signal: instead of using the true test error, we derive the reward strictly from the currently labeled set $D_{tr}^{(t)}$ using \textit{K-fold Cross-Validation}. This ensures the agent learns from a clean, unbiased signal without data leakage from the unlabeled pool.

\subsubsection{Discretized Adaptation (WiGS-MAB)}
We first simplify the problem using Multi-Armed Bandits (MAB), which learn the best action on average without considering state context.
\begin{itemize}[leftmargin=*, noitemsep, topsep=0pt]
    \item \textbf{Arms:} A discrete, coarse set of weights (e.g., $\{0.25, 0.50, 0.75\}$).
    \item \textbf{Action:} Selection of one arm at iteration $t$.
    \item \textbf{Reward:} The improvement in generalization: $r_t = CV_{RMSE}^{(t-1)} - CV_{RMSE}^{(t)}$.
\end{itemize}
We employ the \textit{UCB1} algorithm \citep{MAB_FiniteTimeAnalysis} to balance exploring under-tested weights and exploiting those with the highest historical reward. To navigate the fundamental trade-off between weight granularity and exploration limits, we intentionally restrict the action space to a coarse grid. Because MAB algorithms treat arms independently and cannot generalize across similar weights, a finer grid would exhaust the strictly bounded labeling budget on exploration alone. This approach allows the selector to autonomously identify the most effective constant trade-off for a given dataset without requiring manual tuning or risking exploitation failure.

\subsubsection{Continuous Adaptation (WiGS-SAC)}
\label{subsec:WiGS_SAC}
While MAB is limited to a discrete set of static actions, the optimal strategy often requires (1) \textit{continuous control} over the weight $w_x$, and (2) \textit{dynamic adaptation} to the evolving learning state. To achieve this, we formulate the task as a full, continuous-state Markov Decision Process (MDP).

\paragraph{The MDP Formulation}
\begin{itemize}[leftmargin=*, noitemsep, topsep=0pt]
    \item \textbf{State ($s_t$):} A feature vector capturing the learning context, computed \textit{solely} from $D_{tr}^{(t)}$. It includes: (1) current generalization performance via K-fold cross-validation ($CV_{RMSE}$); (2) learning progress ($t/T$); and (3) distributional statistics of $D_{tr}^{(t)}$.
    \item \textbf{Action ($a_t$):} The continuous exploration weight $w_x^{(t)} \in [0, 1]$.
    \item \textbf{Reward ($r_t$):} The reduction in generalization error: $r_t = CV_{RMSE}^{(t-1)} - CV_{RMSE}^{(t)}$. A positive reward indicates the selected weight improved the model.
\end{itemize}

\paragraph{Optimization} We optimize this policy using \textit{Soft Actor-Critic (SAC) }\citep{SAC_Paper}. SAC maximizes a trade-off between expected reward and policy entropy. This entropy term is critical for active learning: it encourages the agent to maintain stochasticity in its weight selection when the reward signal is ambiguous, preventing premature convergence to a deterministic heuristic.

The complete training procedure and active learning loop are summarized in Algorithm \ref{alg:WiGSAlgorithm} in Appendix \ref{sec:algorithm}

\begin{table}[htbp]
    \centering
    \resizebox{\columnwidth}{!}{%
    \begin{tabular}{l c c c}
        \toprule
        \textbf{Method} & \textbf{Weight $w_x^{(t)}$} & \textbf{Dynamic} & \textbf{Adaptive} \\ \midrule
        Random Sampling & NA & \textcolor{red}{$\times$} & \textcolor{red}{$\times$} \\
        GSx & 1 & \textcolor{red}{$\times$} & \textcolor{red}{$\times$} \\
        GSy & 0 & \textcolor{red}{$\times$} & \textcolor{red}{$\times$} \\
        iGS & NA & \textcolor{red}{$\times$} & \textcolor{red}{$\times$} \\
        WiGS - Static Weight (investigation focus) & $<0.5$ & \textcolor{red}{$\times$} & \textcolor{red}{$\times$} \\
        WiGS - Static Weight (exploration focus) & $>0.5$ & \textcolor{red}{$\times$} & \textcolor{red}{$\times$} \\
        WiGS - Linearly Decaying Weight & $1 - c \cdot t/T$ & \textcolor{green}{$\checkmark$} & \textcolor{red}{$\times$} \\
        WiGS - Exponentially Decaying Weight & $\exp(-c \cdot t/T)$ & \textcolor{green}{$\checkmark$} & \textcolor{red}{$\times$} \\
        WiGS (with Multi-Armed Bandits) & Adaptive (Discrete) & \textcolor{green}{$\checkmark$} & \textcolor{green}{$\checkmark$} \\
        WiGS (with Soft Actor-Critic) & Adaptive (Continuous) & \textcolor{green}{$\checkmark$} & \textcolor{green}{$\checkmark$} \\ \bottomrule
    \end{tabular}
    }
    \caption{Summary of baseline and proposed methods. $w_x^{(t)}$ denotes the weight assigned to the feature-space distance at iteration $t$ in Equation \ref{eq:wigs_score}. We distinguish between \textbf{Dynamic} strategies (time-varying based on a fixed schedule) and \textbf{Adaptive} strategies (state-dependent based on learned feedback from the data).}
    \label{tab:MethodsTable}
\end{table}

\section{Experimental Setup and Synthetic Data}
\label{sec:SyntheticExperiments}

To evaluate the performance of our proposed framework, we compare the 10 selector strategies of Table \ref{tab:MethodsTable} in our experiments. These contain four baseline strategies (passive learning, GSx, GSy, and iGS), two static WiGS (one with an investigation-focused strategy with a fixed $w_x$ at $0.25$ and another exploration-focused strategy with $w_x$ fixed at $0.75$, two time-decaying WiGS (linearly decaying at $c=1$ and exponentially decaying at $c=5$), one adaptive WiGS strategy using Multi-Armed-Bandits with discrete weight choices of $\{0.25, 0.5, 0.75\}$ and one adaptive WiGS strategy using Soft Actor-Critic with a continuous action space for the weights. 

To provide a comprehensive comparison with baseline methods, we incorporate four additional strategies outside the GS family representing distinct active learning paradigms. First, we include Uncertainty Sampling \citep{Cohn1996}, which selects the instance maximizing the analytical predictive variance of the ridge estimator. To benchmark against ensemble-based uncertainty sampling, we implement Query-By-Committee (QBC) \citep{Seung1992}. We utilize a ``Query-by-Bagging''  \citep{Abe1998} approach where the selection criterion is the prediction variance across a committee of bootstrap-trained models. We include Expected Model Change Maximization (EMCM) \citep{Cai2013} as a representative of gradient-based optimization. This method selects the candidate instance expected to induce the largest norm change in the model parameters. Finally, we evaluate Exploration Guided Active Learning (EGAL) \citep{Hu2010}, a model-free strategy that selects points maximizing a combined score of density (similarity to the unlabeled pool) and diversity (dissimilarity to the labeled set), thereby prioritizing representative sampling without reliance on the predictive model.

\subsection{Data Generating Process (DGP)}
\label{subsec:SyntheticDGP}
To rigorously evaluate the exploration-investigation trade-off, we construct a synthetic environment specifically designed to trigger the \textit{density veto} failure mode identified in Section \ref{sec:TheoreticalProof}. 

Features $x \in [0, 1]$ are drawn from a non-uniform three-component Gaussian Mixture Model (GMM) to create distinct regions of high and low density:
\begin{align*}
    p(x) &= 0.4 \mathcal{N}(x \mid 0.2, 0.07^2) + 0.3\mathcal{N}(x \mid 0.5, 0.1^2)\\
    &+ 0.3 \mathcal{N}(x \mid 0.85, 0.05^2).
\end{align*}

The target response is generated via a piecewise function $y = f(x) + \epsilon$ with heteroscedastic noise:
\begin{equation}
\begin{split}
    f(x) &= \begin{cases}
        \sin(10\pi x) & \text{if } x < 0.5 \\
        2x - 1 & \text{if } x \geq 0.5
    \end{cases} \\
    \text{with } \quad \epsilon &\sim \begin{cases}
        \mathcal{N}(0,1) & \text{if } 0.8 < x < 0.9 \\
        \mathcal{N}(0, 0.1^2) & \text{otherwise.}
    \end{cases}
\end{split}
\end{equation}

Crucially, the high-noise band ($0.8 < x < 0.9$) is strategically placed to overlap with the densest GMM component centered at $x=0.85$. This creates a conflict: the region contains the highest aleatoric uncertainty (requiring investigation) but also the highest feature density (implying low exploration value).

We hypothesize that the static, multiplicative iGS baseline will fail in this area. The high density ($d_n^x \to 0$) will suppress the total score, effectively ``vetoing'' the selection of these high-error points. In contrast, we predict the adaptive WiGS agent will learn to decouple these objectives, setting $w_x^{(t)} \approx 0$ to ignore the misleading density signal and prioritize the critical investigation of the noise trap.

A more complex \text{Three-Regime} variant of this DGP with additional noise traps and its corresponding results are detailed in Appendix \ref{sec:ThreeRegime}.

\begin{figure}[htbp]
    \centering
    \includegraphics[width=0.48\textwidth]{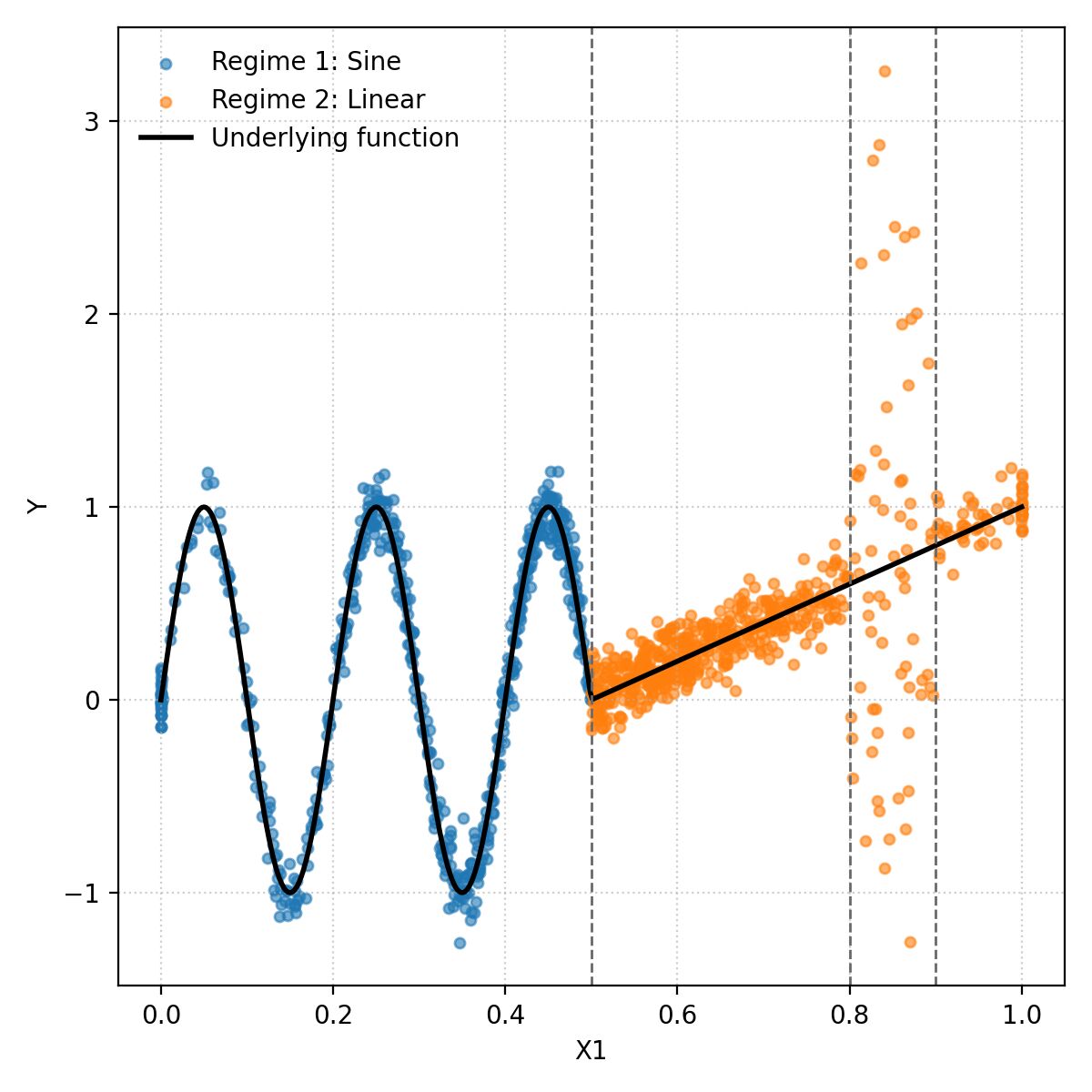}
    \caption{Visualization of the synthetic dataset ($N=1000$). The high-noise region (trap) at $x \approx 0.85$ coincides with high data density.}
    \label{fig:synthetic_dgp}
\end{figure}

\subsection{Experimental Protocol}
\label{subsec:experimental_protocol}

We evaluate performance on synthetic datasets ($N=1000$) and 18 real-world benchmarks (detailed in Appendix \ref{tab:DatasetsTable}). For each dataset, we perform 100 independent replications with different random seeds. In each trial, the data is randomly partitioned into an initial training set $D_{tr}$ ($5\%$ of samples) and a candidate pool $D_{cdd}$ ($95\%$). The active learning process iterates until the pool is exhausted.

Following \citet{iGS}, all strategies employ \textit{Ridge Regression} ($\alpha=0.01$) as the predictor model to isolate the impact of the selection strategy from model complexity. To further evaluate the generalizability of our framework to non-linear methods, we provide additional results using a Random Forest Regressor in Appendix \ref{subsec:RF_Results}. For the adaptive WiGS agents, the reward signal is computed via \textit{5-fold Cross-Validation} on the currently labeled set $D_{tr}$. This design strictly avoids data leakage from the unlabeled pool, ensuring the agent learns a policy deployable in real-world settings where oracle labels are unavailable.

\subsection{Evaluation Metrics}
\label{subsec:evaluation_metric}

A critical component of our methodology is the decoupling of the evaluation metric from the reinforcement learning reward. To avoid the ``active learning validation paradox'' \citep{ALPitfalls}, where optimizing on a test set implies access to oracle labels, our agents derive their reward signal solely from K-fold Cross-Validation RMSE on the currently labeled set $D_{tr}$. This ensures the policy is learned without data leakage, consistent with best practices for real-world deployment \citep{Konyushkova2017}.

For empirical evaluation, we follow the precedent of \citet{iGS} and use the \textit{Full-Pool Root Mean Squared Error (RMSE)}. This metric assesses performance across the entire domain ($D_{tr} \cup D_{cdd}$) by comparing a hybrid prediction vector against the ground truth labels. Specifically, we define the prediction for labeled samples ($D_{tr}$) as their known true values, and for unlabeled samples ($D_{cdd}$) as the model's estimates. This yields a performance trace over time, which we report as the deviation from the iGS baseline (averaged across 100 seeds).

To summarize global performance, we calculate the \textit{Area Under the Learning Curve (AUC)} via trapezoidal integration of the RMSE trace. We report the \textit{Relative AUC}, normalized by the iGS baseline:
\begin{equation}
    \text{Rel. AUC} = \frac{\int \text{RMSE}_{\text{method}}(t) dt}{\int \text{RMSE}_{\text{iGS}}(t) dt}
\end{equation}
Because a single scalar AUC can obscure how performance evolves over time in active learning, we supplement this global metric with milestone label efficiency ($N_{rel}$) and full RMSE trace plots (Appendix \ref{sec:MainResultsTracePlots}) to fully characterize the learning trajectories.

\subsection{Results and Analysis of Agent Policy}
\label{subsec:AgentPolicyResults}

We evaluate the performance of the proposed WiGS strategies against the baseline on the synthetic Two-Regime dataset. Figure \ref{fig:SyntheticTracePlots} presents the progression of the Full-Pool RMSE as the active learning process acquires more labels. To highlight performance gains relative to the state-of-the-art, we visualize the RMSE deviation from the iGS baseline: $\Delta \text{RMSE} \equiv \text{RMSE}_{\text{method}} - \text{RMSE}_{\text{iGS}}.$
A value below $0$ (the red line) indicates lower error compared to the iGS baseline. The relative AUC can be seen in Figure \ref{fig:AUC_Heatmap}.

Statistical analysis confirms that the observed performance gap is significant. Specifically, the Wilcoxon signed-rank tests (Appendix \ref{subsec:WRSTResults}) show that WiGS-SAC outperforms the baselines with statistical significance ($p < 0.05$).

Consistent with literature, the single-objective strategies GSx (pure exploration) and GSy (pure investigation) perform poorly, frequently resulting in errors higher than the iGS baseline \citep{iGS, AL_Regression_Benchmark}. This confirms that neither feature-space coverage nor uncertainty sampling alone is sufficient for complex regression landscapes.

These results directly validate the ``density veto'' proposition from Section \ref{sec:TheoreticalProof}. The high-noise trap ($x \in [0.8, 0.9]$) deliberately overlaps with a Gaussian density peak. For a candidate in this region, the feature diversity score is locally minimized (due to high density), while the uncertainty is maximized. As predicted, the multiplicative baseline suppresses this signal and fails to reduce error in the trap region, whereas the adaptive agent successfully targets it.

\begin{figure}[t!]
    \centering
    \includegraphics[width=0.48\textwidth, height=0.48\textwidth]{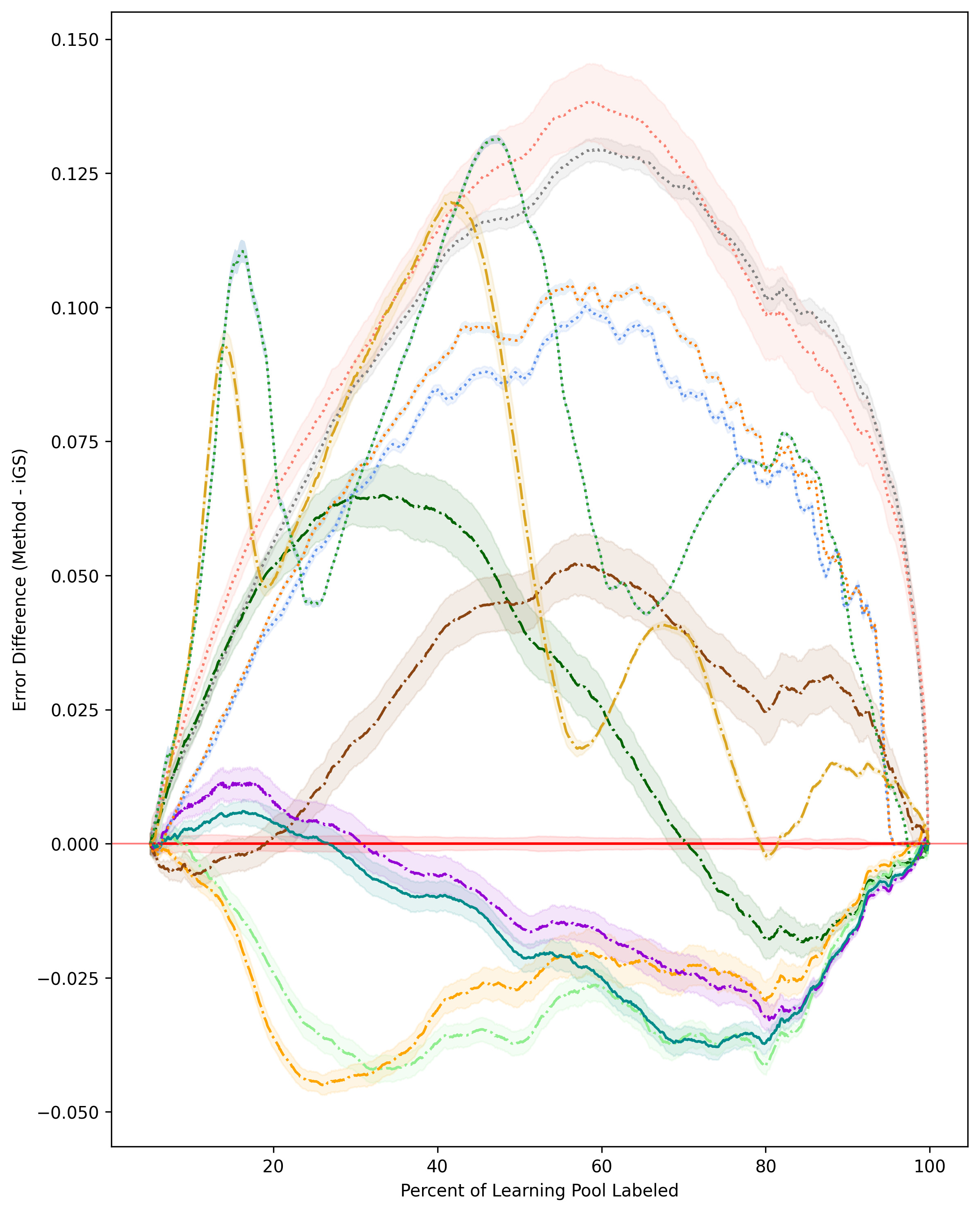}    
    \vspace{0.5em}    
    \centering
    \includegraphics[width=\linewidth]{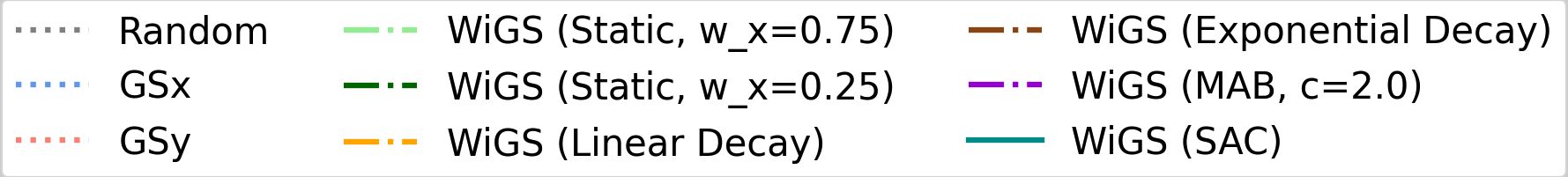}
    \caption{Performance difference relative to iGS (red line). Values below zero indicate superior performance. The adaptive WiGS-SAC agent and exploration-focused static variants (bottom cluster) consistently outperform the baseline, reducing absolute error by up to 0.05.}
    \label{fig:SyntheticTracePlots}
\end{figure}

\begin{figure*}[tbp]
    \centering
    \includegraphics[width=\textwidth]{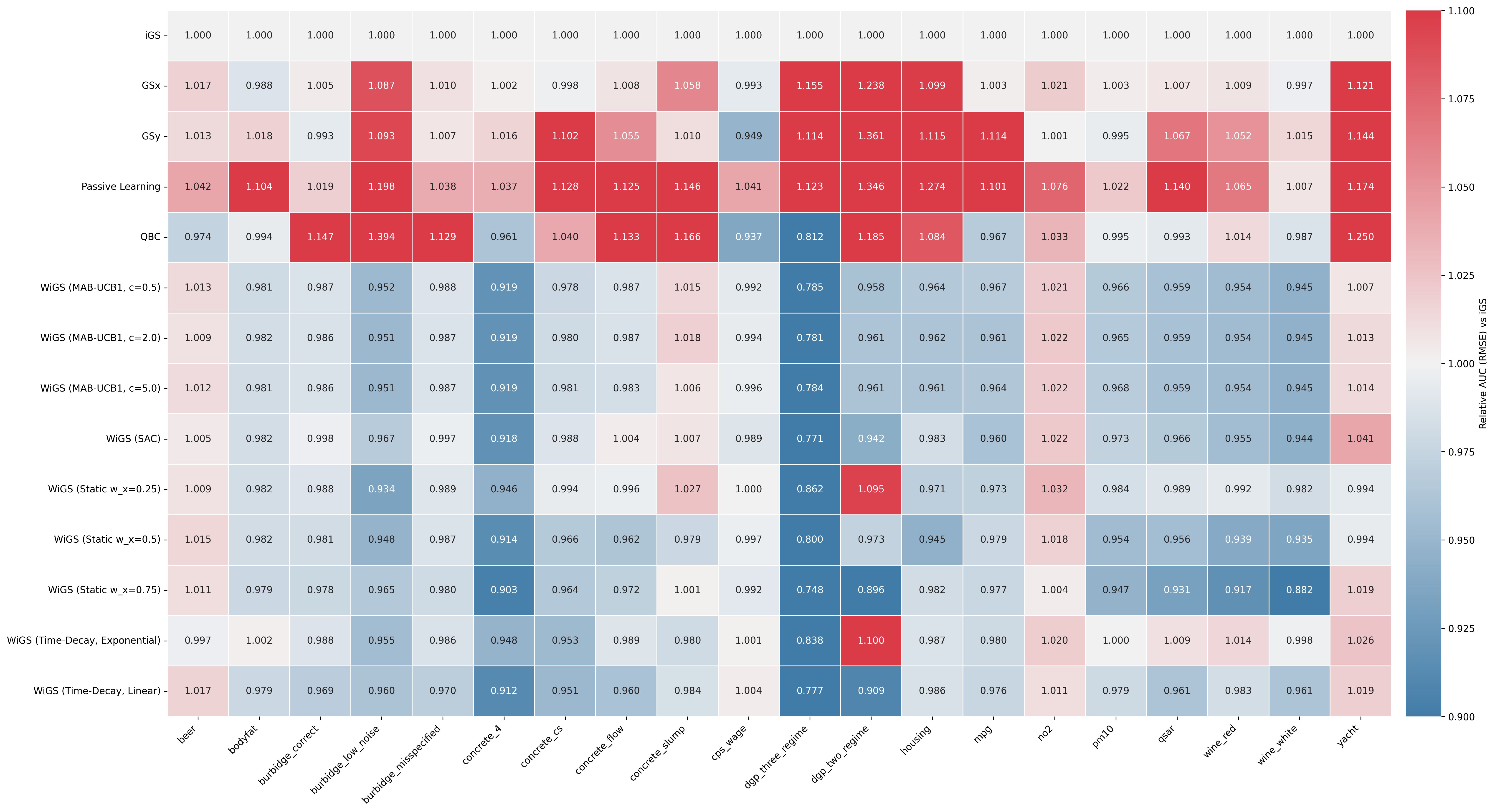}
    \caption{Global Performance Heatmap. Values represent the ratio of the Area Under the RMSE Curve (AUC) for each method relative to the iGS baseline. Blue cells ($<1.0$) indicate superior performance (lower cumulative error), while red cells ($>1.0$) indicate inferior performance. Notably, the WiGS methods (bottom rows) demonstrate consistent robustness across diverse domains.}
    \label{fig:AUC_Heatmap}
\end{figure*}

Interestingly, the static WiGS strategy with $w_x = 0.75$ (exploration-focused) performed exceptionally well, rivaling the adaptive SAC agent. This suggests that for this specific non-uniform DGP, a strong exploration bias happened to be the optimal global strategy. We emphasize that this static baseline represents a best-case performance selected \textit{ex post facto} from our grid search ($w_x \in \{0, 0.25, 0.5, 0.75, 1\}$). However, the critical advantage of WiGS-SAC is its \textit{autonomy}. Without knowing \textit{a priori} that $0.75$ was the optimal parameter, the RL agent learned a policy that matched this performance. In a real-world setting, identifying this optimal scalar would require an exhaustive search with access to ground-truth labels. In contrast, the investigation-heavy static weight ($w_x = 0.25$) performed significantly worse. This underscores the risk of static hyperparameters and the value of the SAC agent's ability to adaptively discover the optimal weighting strategy.

\section{Benchmark Experiments}
\label{sec:BenchmarkExperiments}

To ensure broad applicability, we utilize a suite of 18 publicly available datasets popular in the active learning literature \citep{iGS, iRDM, IDEAL, Burbridge, Cai2013}. As summarized in Table \ref{tab:DatasetsTable} in the appendix, these datasets cover a wide range of properties, varying significantly in sample size and feature dimensionality. Prior to training, each dataset is processed through a standardized pipeline: continuous features are scaled to zero mean and unit variance, while categorical features are one-hot encoded. To control for the potential impact of outliers on this preprocessing normalization choice, we also replicate our full experimental suite using robust scaling (based on the interquartile range), with results detailed in Appendix \ref{subsec:robust_norm}.

We apply the same experimental protocol described in Section \ref{subsec:experimental_protocol} to all 18 benchmark datasets, comparing all 14 selection strategies with the full-pool RMSE metric averaged across 100 replications.

\subsection{Experimental Results}
\label{subsec:ExperimentalBenchmarkResutlts}


Figure \ref{fig:AUC_Heatmap} summarizes the global performance across all benchmarks. A value below $1.0$ (blue) indicates that a method achieved a lower cumulative error than the iGS baseline. The widespread red among the alternative baselines underscores the strength of the iGS standard, which in turn highlights the significance of the WiGS framework consistently achieving superior (blue) performance. Temporal trace plots are provided in Figures \ref{fig:MainResults1} and \ref{fig:MainResults2} in Appendix \ref{sec:MainResultsTracePlots}.

The adaptive WiGS-SAC agent demonstrates remarkable robustness, matching or outperforming the iGS baseline on 15 of the 20 evaluated datasets. Although static baselines can perform competitively if optimally tuned \textit{a priori}, the adaptive agent strictly outperforms the best static baseline ($w_x=0.25$) on the \texttt{mpg} dataset. This confirms that for sufficiently complex tasks, the optimal exploration-investigation trade-off is dynamic, a finding further reinforced by the superiority of the adaptive agent in our Random Forest experiments (Appendix \ref{subsec:RF_Results}). On datasets where a fixed weight does suffice, WiGS-SAC still provides the practical advantage of \textit{autonomy} by successfully converging to the optimal policy purely from the evolving data state, effectively automating the otherwise impossible hyperparameter search.

For example, while the investigation-heavy static strategy ($w_x=0.25$) fails on the \textit{Two-Regime} dataset (relative error $1.095$), the adaptive SAC agent successfully navigates the trade-off ($0.943$), validating its ability to dynamically optimize the weighting policy without human intervention. The advanced baselines exhibit significant volatility. Consistent with \citet{AL_Regression_Benchmark}, QBC and Uncertainty Sampling suffer catastrophic failures on noisy domains (e.g., \textit{Burbidge} errors $>1.40$). Conversely, EGAL consistently underperforms (red columns), confirming that purely density-based sampling prioritizes redundant regions. In contrast, WiGS-SAC maintains consistent stability across the entire benchmark suite, avoiding the high-variance failure modes of pure uncertainty or pure density strategies.

\subsection{Label Efficiency and Robustness}
\label{subsec:LabelEfficiency}

Practical deployment of active learning strategies often hinges on cost reduction. We quantify this using \textit{Relative Label Efficiency} ($N_{rel}$): the ratio of labels required by a method to reach a specific performance milestone relative to the iGS baseline. An $N_{rel} < 1.0$ indicates the method requires fewer labels to achieve the same accuracy.

Figure \ref{fig:label_efficiency} aggregates these efficiency scores across our entire experimental suite. Each boxplot summarizes the distribution of $N_{rel}$ values calculated over the 20 distinct datasets, where each data point represents the mean efficiency over 100 independent simulation trials. 

\begin{figure*}[t]
    \centering
    \includegraphics[width=0.9\textwidth]{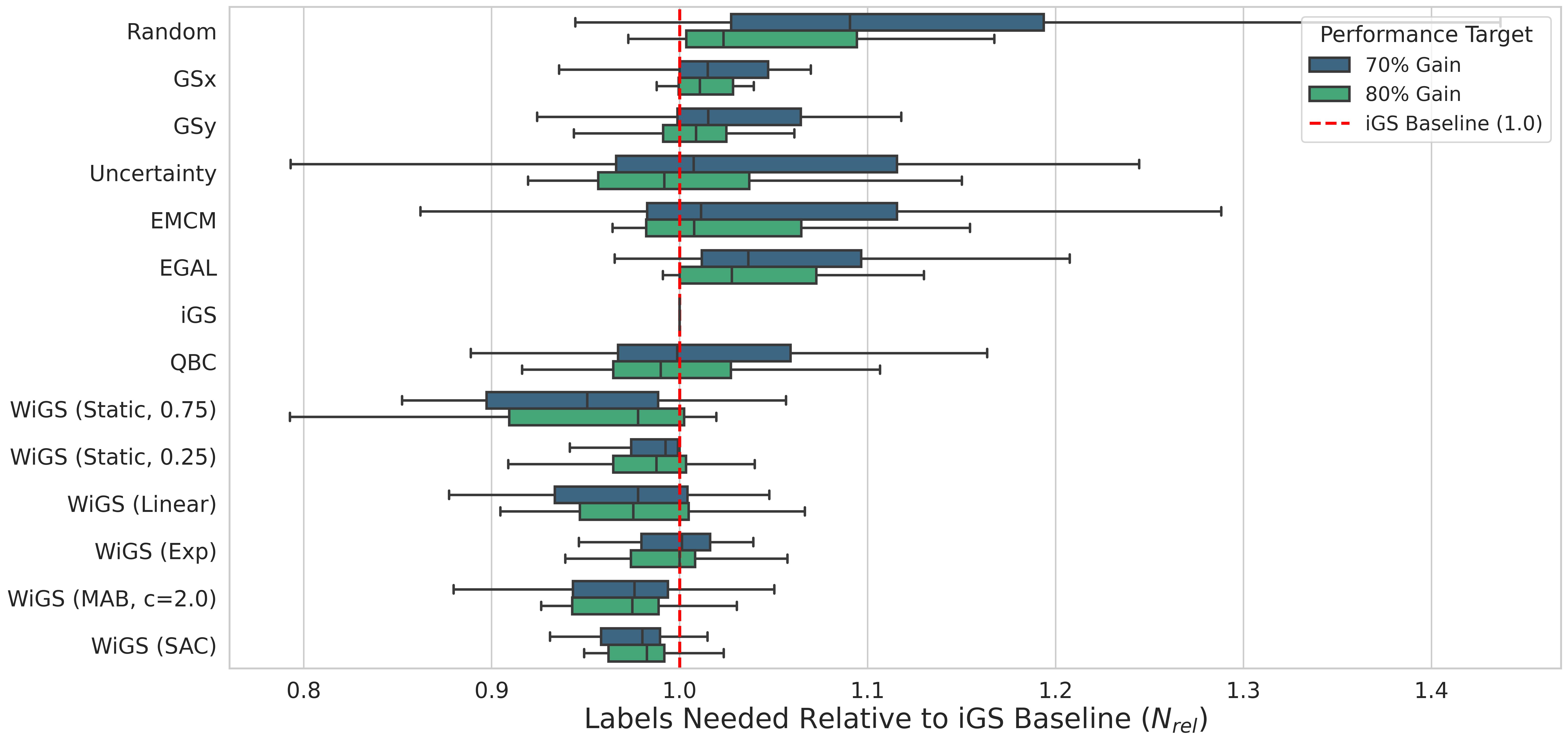}
    \caption{Relative Label Efficiency ($N_{rel}$) aggregated across 20 datasets. The distribution represents the labeling budget required to achieve $70\%$ (blue) and $80\%$ (green) of the total possible performance gain. Values to the left ($<1.0$) indicate superior efficiency. The adaptive WiGS agents (SAC, MAB) not only require fewer labels to reach these milestones than the iGS baseline, but also exhibit narrower variance, highlighting their robust generalization across different datasets.}
    \label{fig:label_efficiency}
\end{figure*}
The results highlight two critical advantages of the proposed framework. Firstly, the adaptive WiGS strategies consistently shift the efficiency distribution to the favorable region indicating a reduction of labeling costs. For WiGS-MAB ($c=2.0$), the median efficiency is approximately $0.96$, implying a $4\%$ reduction in labeling costs. More importantly, the upper quartile remains at or below $1.0$, indicating that the adaptive strategy yields benefits in the vast majority of scenarios and rarely underperforms the baseline.

Secondly, a major finding is the volatility of the advanced baselines. While QBC and EMCM achieve high efficiency on select datasets (left whiskers extending to $\approx 0.85$), they exhibit heavy tails of inefficiency (right whiskers extending beyond $1.15$). In contrast, WiGS-SAC demonstrates superior worst-case performance, maintaining a tight distribution with minimal right skew. This reliability makes WiGS a safer choice for real-world applications where the dataset characteristics are initially unknown.

\subsection{Algorithmic Warm-up}
\label{subsec:SampleComplexity}

A critical insight is the characteristic \textit{warm-up period} of the RL-based strategies. Unlike static heuristics that operate on a fixed criterion from the first iteration, the WiGS-SAC agent requires initial data to estimate the value function and refine its policy \citep{Kakade2003}. Consequently, performance often lags behind the iGS baseline during the early exploration phase before surpassing it as the policy converges.

This sample complexity explains the performance stratification observed across dataset scales in the trace plots of Figures \ref{fig:MainResults1} and \ref{fig:MainResults2}. During early iterations, the reinforcement learning agent must actively explore the continuous action space to model the reward landscape. In data-scarce regimes with severely constrained labeling budgets, such as \textit{Yacht} ($N=308$) or \textit{NO2} ($N=500$), the active learning loop terminates while the agent is still in this algorithmic warm-up phase, resulting in performance parity with static heuristics. Conversely, in large-scale tasks like \textit{Wine-White} ($N=4898$) and \textit{Wine-Red} ($N=1599$), the extended labeling horizon allows the downstream reduction in generalization error to heavily amortize this initial exploration cost. These findings yield a practical operational guideline: WiGS-SAC is optimal when the labeling budget is sufficient to overcome the initial RL warm-up. 

\section{Conclusion}
\label{sec:Conclusion}

As data labeling remains a critical bottleneck in domains ranging from materials science to generative AI \citep{InstructGPT}, active learning is essential for efficient model development. This work challenged the prevailing reliance on static, multiplicative heuristics, demonstrating that they enforce a rigid trade-off between exploration and investigation that fails in heterogeneous feature spaces. To overcome this, we introduced the \textit{Weighted improved Greedy Sampling (WiGS)} framework, which captures this balance as a dynamic, additive control problem solvable via reinforcement learning.

Through extensive experiments on 18 benchmarks, our study establishes autonomy as the definitive advantage of the WiGS framework. Unlike static heuristics that require expensive, \textit{ex post facto} grid searches, WiGS-SAC effectively automates hyperparameter tuning by autonomously converging to the optimal static weight when a fixed trade-off suffices. Moreover, on complex benchmarks, the agent validates the time-dependent nature of active learning by strictly outperforming even the best-tuned static baselines, proving that a self-regulating agent can effectively replace manual heuristics without prior knowledge.

Our theoretical and empirical analyses confirm the mechanism behind this success. We proved that the multiplicative iGS criterion suffers from a ``density veto,'' blinding it to high-error samples in dense regions. In contrast, the WiGS-SAC agent learns to decouple these objectives, dynamically shifting priority from feature diversity in complex regions to uncertainty reduction in noisy regimes. By enabling agents to autonomously tune their own exploration strategies, this work moves us closer to general-purpose active learning systems capable of adapting to the unique complexities of diverse scientific and industrial domains.

\bibliographystyle{plainnat}
\bibliography{bibliography}

\newpage
\onecolumn

\title{Adaptive Active Learning for Regression via Reinforcement Learning\\(Supplementary Material)}

\maketitle
\appendix

\section{Proof of Proposition 3.1 (The Density Veto)}
\label{sec:proofs}
\begin{proof}
    Recall the scoring functions for the multiplicative (iGS) and additive (WiGS) strategies:
    \begin{equation}
        S_{iGS}(x) = d(x) \cdot u(x) \quad \text{and} \quad S_{WiGS}(x; w) = w \cdot d(x) + (1-w) \cdot u(x)
    \end{equation}
    
    \textbf{1. Multiplicative Failure Mode:}
    The multiplicative selector fails to select the high-uncertainty target $x^*$ if $S_{iGS}(x^*) < S_{iGS}(x')$. Substituting the scores:
    \begin{equation}
        d^* u^* < d' u' \implies d^* < \frac{d' u'}{u^*}
    \end{equation}
    Let $\delta = \frac{d' u'}{u^*}$. Since $d', u', u^*$ are positive constants, $\delta > 0$. Thus, if the local density around $x^*$ is high enough such that $d^* < \delta$, the algorithm is forced to rank the sub-optimal $x'$ higher than $x^*$. The low diversity score acts as a veto, blinding the model to the uncertainty signal $u^*$.

    \textbf{2. Additive Robustness:}
    For the additive selector, we seek a weight $w$ such that $S_{WiGS}(x^*) > S_{WiGS}(x')$:
    \begin{equation}
        w d^* + (1-w) u^* > w d' + (1-w) u'
    \end{equation}
    Rearranging to isolate $w$:
    \begin{equation}
        (1-w)(u^* - u') > w (d' - d^*) \implies \frac{1-w}{w} > \frac{d' - d^*}{u^* - u'}
    \end{equation}
    Since $\frac{1-w}{w} = \frac{1}{w} - 1$, we obtain the condition:
    \begin{equation}
        \frac{1}{w} > \frac{d' - d^*}{u^* - u'} + 1
    \end{equation}
    Since $u^* > u'$ (target has higher uncertainty) and $d' > d^*$ (distractor is in a sparser region), the term on the right-hand side is a positive constant $K$. The inequality becomes $\frac{1}{w} > K$. As $w \to 0$ (indicating a shift towards pure investigation), the term $\frac{1}{w} \to \infty$. Therefore, the inequality can always be satisfied by choosing any weight $w \in (0, 1/K)$. This demonstrates that the additive framework retains the capacity to select $x^*$ by dynamically adjusting $w$.
\end{proof}

\clearpage
\section{Implementation Details}
\subsection{Algorithm}
\label{sec:algorithm}
\begin{algorithm}[!htb]
   \caption{Weighted Improved Greedy Sampling (WiGS)}
   \label{alg:WiGSAlgorithm}
\begin{algorithmic}[1]
    \STATE {\bfseries Input:} Initial training set $D_{tr}^{(0)}$; Candidate pool $D_{cdd}^{(0)}$; Model $f$; Total iterations $T$;
    \STATE \quad \quad Weighting Strategy $S \in \{\text{Static, Decay, MAB, SAC}\}$
    \STATE {\bfseries Initialize:} Agent (if $S \in \{\text{MAB, SAC}\}$); Replay Buffer $\mathcal{B}$ (if $S = \text{SAC}$)
    \STATE {\bfseries Initialize:} $CV_{RMSE}^{prev} \leftarrow \infty$; $s_{prev} \leftarrow \text{null}$; $w_{prev} \leftarrow \text{null}$
    
    \FOR{$t = 0$ {\bfseries to} $T-1$}
        \STATE Train model $f^{(t)}$ on $D_{tr}^{(t)}$
        
        \STATE $CV_{RMSE}^{(t)} \leftarrow \mathrm{CalculateCV\_RMSE}(f^{(t)}, D_{tr}^{(t)})$
        \STATE $r_t \leftarrow CV_{RMSE}^{prev} - CV_{RMSE}^{(t)}$
        
        \IF{$S = \text{Static}$}
            \STATE $w_x^{(t)} \leftarrow w_{fixed}$
        \ELSIF{$S = \text{Decay}$}
            \STATE $w_x^{(t)} \leftarrow \mathrm{CalculateDecay}(t, T)$ 
        \ELSIF{$S = \text{MAB}$}
            \IF{$t > 0$}
                \STATE $\mathrm{UpdateArmValue}(w_{prev}, r_t)$
            \ENDIF
            \STATE $w_x^{(t)} \leftarrow \mathrm{SelectArmUCB1}()$
        \ELSIF{$S = \text{SAC}$}
            \STATE $s_t \leftarrow \mathrm{GetState}(CV_{RMSE}^{(t)}, D_{tr}^{(t)}, t)$
            \IF{$t > 0$}
                \STATE Store $(s_{prev}, w_{prev}, r_t, s_t)$ in $\mathcal{B}$ 
                \STATE $\mathrm{UpdateSACAgent}(\mathcal{B})$
            \ENDIF
            \STATE $w_x^{(t)} \leftarrow \pi(s_t) + \text{noise}$
        \ENDIF 
        
        \STATE Compute pairwise distances $d_{nm}^x, d_{nm}^y$ and normalize via $\phi(\cdot)$
        \FOR{{\bfseries each} candidate $\mathbf{x}_n \in D_{cdd}^{(t)}$} 
            \STATE $s_n^{WiGS} \leftarrow \min_{m \in D_{tr}^{(t)}} \left( w_x^{(t)} \phi(d_{nm}^x) + (1-w_x^{(t)}) \phi(d_{nm}^y) \right)$
        \ENDFOR
        
        \STATE $\mathbf{x}^* \leftarrow \underset{\mathbf{x}_n \in D_{cdd}^{(t)}}{\arg\max} ~ s_n^{WiGS}$
        \STATE $y^* \leftarrow \mathrm{QueryOracle}(\mathbf{x}^*)$
        \STATE $D_{tr}^{(t+1)} \leftarrow D_{tr}^{(t)} \cup \{(\mathbf{x}^*, y^*)\}$
        \STATE $D_{cdd}^{(t+1)} \leftarrow D_{cdd}^{(t)} \setminus \{\mathbf{x}^*\}$
        
        \STATE $CV_{RMSE}^{prev} \leftarrow CV_{RMSE}^{(t)}$
        \STATE $w_{prev} \leftarrow w_x^{(t)}$
        \IF{$S = \text{SAC}$}
            \STATE $s_{prev} \leftarrow s_t$ 
        \ENDIF
    \ENDFOR
    \STATE {\bfseries Output:} Final Model $f^{(T)}$
\end{algorithmic}
\end{algorithm}

\subsection{Reinforcement Architecture and Hyperparameters}
\label{sec:Hyperparams}

To ensure reproducibility, we provide the specific hyperparameters used for both the Multi-Armed Bandits (MAB) and the Soft Actor-Critic (SAC) agents and their training processes. The implementation uses the PyTorch framework.

\textbf{Soft Actor-Critic (WiGS-SAC) Configuration:}
\textbf{Network Architecture:}
The Actor and Critic networks share a similar Multi-Layer Perceptron (MLP) architecture:
\begin{itemize}[leftmargin=*, noitemsep, topsep=0pt]
    \item \textbf{Hidden Layers:} 2 fully connected layers with 64 units each.
    \item \textbf{Activation:} ReLU activation for hidden layers.
    \item \textbf{Output (Actor):} A Tanh activation to bound the action space to $[-1, 1]$, which is then linearly scaled to $[0, 1]$ for the weight $w_x$.
    \item \textbf{Output (Critic):} Linear output for Q-value estimation.
\end{itemize}

\textbf{Hyperparameters:}
Table \ref{tab:Hyperparams} lists the hyperparameters held constant across all 20 datasets (2 synthetic + 18 benchmarks). No dataset-specific hyperparameter tuning was performed, highlighting the robustness of the default configuration.

\begin{table}[htbp]
    \centering
    \begin{tabular}{l c}
        \toprule
        \textbf{Parameter} & \textbf{Value} \\ \midrule
        Optimizer & Adam \\
        Learning Rate ($\alpha_{lr}$) & $3 \cdot 10^{-4}$ \\
        Discount Factor ($\gamma$) & 0.99 \\
        Replay Buffer Size ($\mathcal{B}$) & 10,000 \\
        Batch Size & 64 \\
        Soft Update Rate ($\tau$) & 0.005 \\
        Entropy Coefficient ($\alpha$) & 0.2 \\
        Hidden Units & 64 \\
        \bottomrule
    \end{tabular}
    \caption{Hyperparameters for the WiGS-SAC agent.}
    \label{tab:Hyperparams}
\end{table}

\textbf{Multi-Armed Bandit (WiGS-MAB) Configuration:}
For the discretized adaptive strategy, we employed the Upper Confidence Bound (UCB1) algorithm \citep{MAB_FiniteTimeAnalysis} to select the weight $w_x$ from a finite set of options.
\begin{itemize}[leftmargin=*, noitemsep, topsep=0pt]
    \item \textbf{Arms:} The set of candidate weights was defined as $\mathcal{A} = \{0.25, 0.50, 0.75\}$.
    \item \textbf{Exploration Constant ($c$):} This hyperparameter controls the degree of exploration in the UCB1 formula. We evaluated $c \in \{0.5, 2.0, 5.0\}$ to test sensitivity to exploration aggressiveness.
    \item \textbf{Initialization:} The agent sequentially selects each arm once during the first $|\mathcal{A}|$ iterations to initialize the reward estimates before switching to the UCB selection rule.
\end{itemize}

\textbf{Computational Resources:}
All experiments were conducted on a standard high-performance computing cluster.

\clearpage
\section{Additional Synthetic Simulation Details}
\subsection{Three Regime Synthetic Simulation}
\label{sec:ThreeRegime}

This DGP presents a more complex, three-regime investigation challenge to test the agent's ability to learn a more granular policy. The function $f(x)$ and noise $\epsilon$ are defined as:
\begin{equation}
    f(x) = \begin{cases}
        \sin(8\pi x) & \text{if } x < 0.4 \\
        3x - 1.5 & \text{if } 0.4 \leq x < 0.7 \\
        2\cos(6\pi x) & \text{if } X \geq 0.7
    \end{cases} 
    \text{ with }
    \epsilon \sim \begin{cases}
        \mathcal{N}(0, 1.5^2) & \text{if } 0.6 < x < 0.65 \\
        \mathcal{N}(0, 0.15^2) & \text{if } x \geq 0.7 \\
        \mathcal{N}(0, 0.1^2) & \text{otherwise}
    \end{cases}
\label{eq:DGP3}
\end{equation}

This DGP tests the agent's ability to prioritize different degrees of investigation, with three distinct functional forms and three noise levels. It includes an extreme noise, sparse region ($\sigma=1.5$) and a moderately high-noise, dense region ($\sigma=0.15$ for $X \geq 0.7$). This latter region serves as our primary test, again overlapping with the GMM clump at $x=0.85$ to trigger the same hypothesized iGS failure mechanism. A visualization of this DGP is presented in Figure \ref{fig:synthetic_dgp_3regime}.

\begin{figure}[htbp]
    \centering
    \begin{subfigure}[b]{0.48\textwidth}
        \centering
        \includegraphics[width=\textwidth]{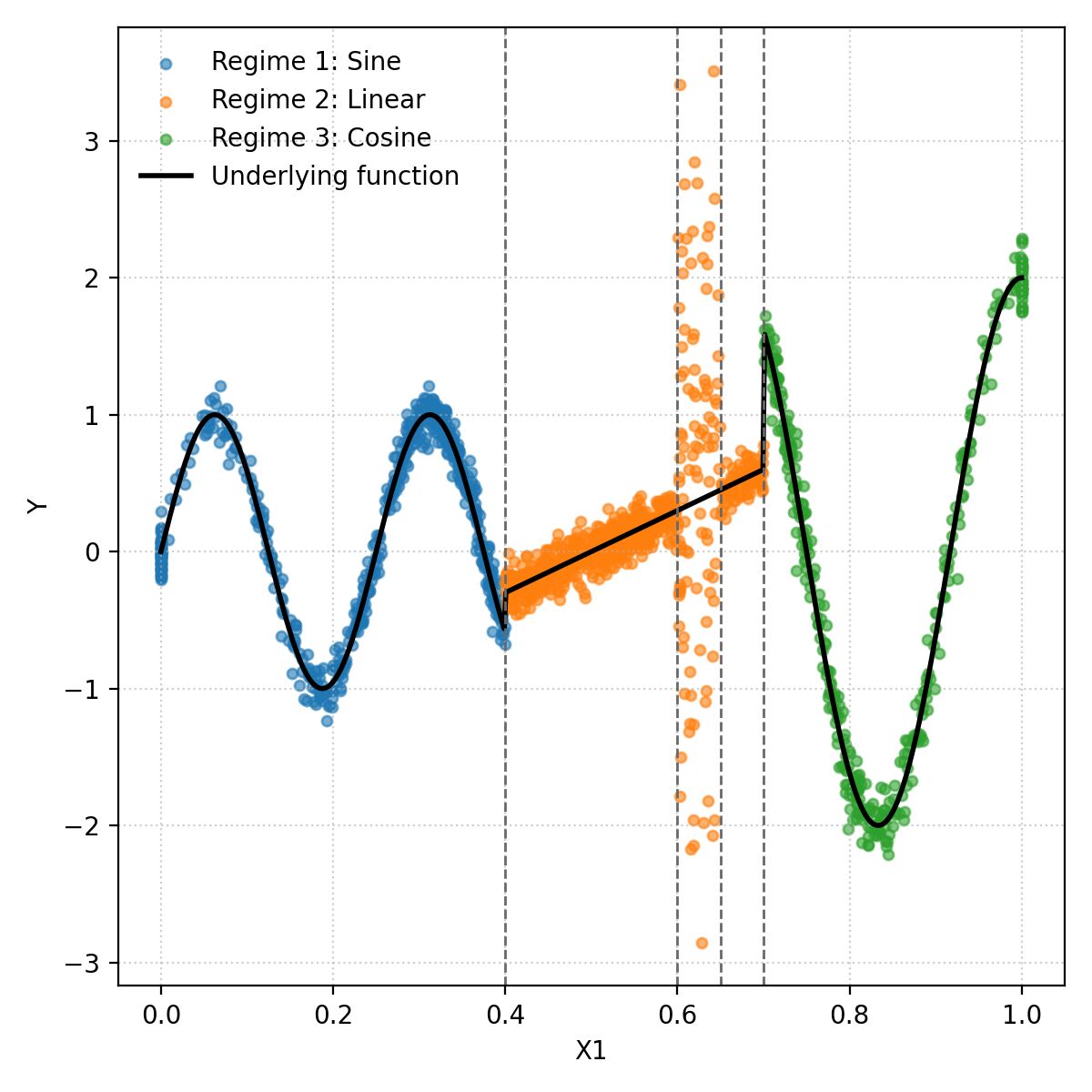}
        \caption{Three-Regime DGP}
        \label{fig:synthetic_dgp_3regime}
    \end{subfigure}
    \caption{Visualization of the synthetic datasets. Shows $N=1000$ sample data points (dots) and the true underlying function $f(x)$ (solid line). Shaded regions indicate areas of high noise that overlap with dense data regions from the GMM, creating strategic conflicts between exploration and investigation.}
    \label{fig:synthetic_dgps}
\end{figure}

\subsection{Analysis of Learned Agent Policy}
\label{subsec:AnalysisOfLearnedAgentPolicy}

To understand why the adaptive strategies succeed where static heuristics fail, we analyze the specific policies learned by the WiGS-SAC agent. Specifically, we examine how the agent adapts the weight $w_x^{(t)}$ in response to local data characteristics, verifying our hypothesis that the optimal balance is non-stationary.

\subsection{Spatial Adaptation of Weights}

Figure \ref{fig:Heatmaps} visualizes the average weight $w_x$ chosen by the SAC agent for each selected data point across the feature space. The color gradient indicates the agent's preference: red denotes a preference for feature-space exploration ($w_x \to 1$), while blue denotes a preference for output-space investigation ($w_x \to 0$).

The results confirm that the agent learns a region-specific policy. In the Two-Regime dataset (Figure \ref{fig:Heatmap_2Regime}), the agent avoids applying a blanket strategy to the sine wave ($x < 0.5$). Instead, we observe a distinct pattern that appears tied to local curvature: the vertices are predominantly red/orange, while the linear segments connecting them shift toward blue/white. This observation is consistent with the intuition that feature-space exploration is most critical at points of high curvature to define the function's extrema, whereas linear regions can be easily interpolated, allowing the agent to focus on minimizing prediction error via investigation.

In the right regime ($x \geq 0.5$), the agent encounters a linear function with a high-noise trap ($0.8 < x < 0.9$). Here, the policy becomes more heterogeneous. Unlike the clear exploration signal at the sine wave vertices, the selection weights in the trap region show a mix of investigation (blue) and exploration (red). This suggests the agent is balancing two competing signals: the high prediction error (driving investigation) and the need to maintain coverage in a difficult region (driving exploration), rather than committing to a single heuristic.

In the Three-Regime dataset (Figure \ref{fig:Heatmap_3Regime}), these behaviors are more pronounced with the sine and cosine regimes strongly replicating the ``red vertices, blue slopes'' pattern. The agent's behavior in the central linear region ($0.4 < x < 0.6$) and the subsequent noise trap ($0.6 < x < 0.7$) reveals a sophisticated transition strategy. Initially, the linear region is dominated by investigation (blue/white) as the function is simple. However, as $x$ approaches $0.6$, we observe an increasing mix of red/orange weights. This suggests the agent is reacting to the decreasing data density (leaving the central GMM clump) by ramping up exploration to prepare for the upcoming sparsity.

Most notably, the high-noise trap ($0.6 < x < 0.7$) displays a distinct split in policy. In the first half ($0.60 < x < 0.65$), where noise is extreme ($\sigma=1.5$) and density is lowest (see Equation \ref{eq:DGP3}), the agent commits to deep red (pure exploration). Here, the agent essentially ignores the chaotic error signal to force a ``bridge'' across the data desert. As it crosses into the second half ($0.65 < x < 0.70$) and approaches the dense cosine regime, the weights shift back to blue (investigation), allowing the model to stabilize its predictions as it lands in the new regime.

\begin{figure}[htbp]
    \centering
    \begin{subfigure}[b]{0.48\textwidth}
        \centering
        \includegraphics[width=\textwidth]{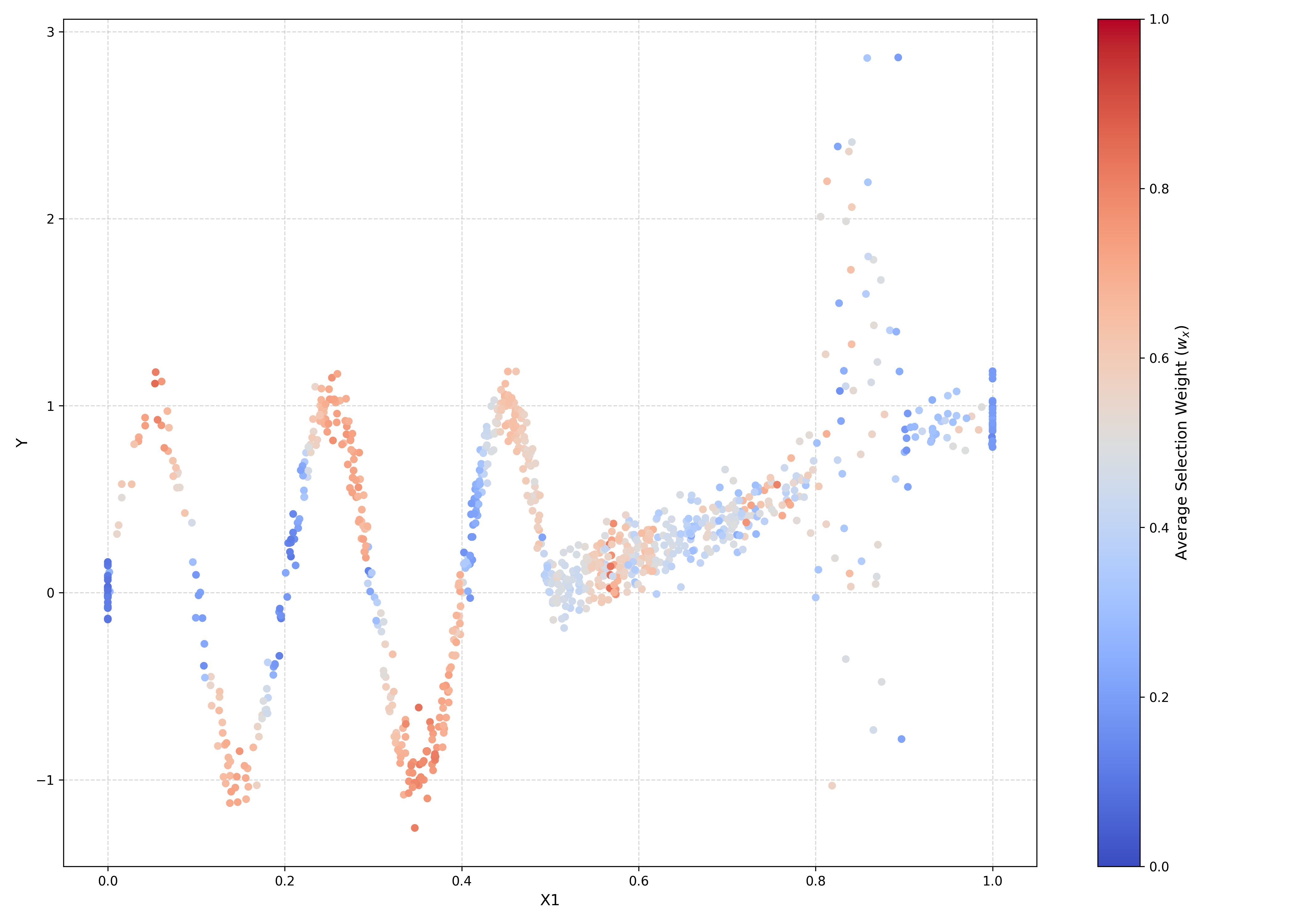}
        \caption{Two-Regime Policy Heatmap}
        \label{fig:Heatmap_2Regime}
    \end{subfigure}
    \hfill
    \begin{subfigure}[b]{0.48\textwidth}
        \centering
        \includegraphics[width=\textwidth]{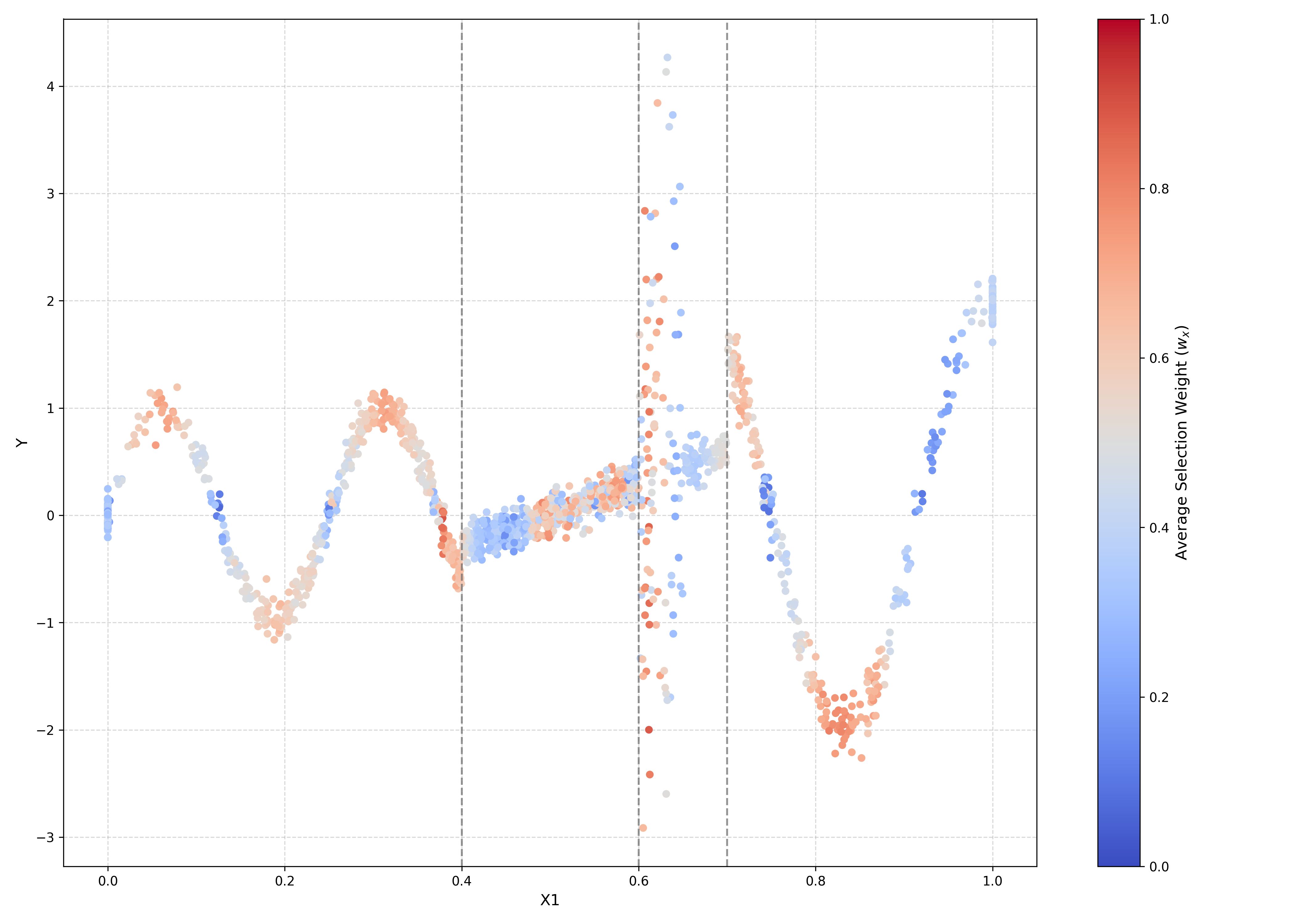}
        \caption{Three-Regime Policy Heatmap}
        \label{fig:Heatmap_3Regime}
    \end{subfigure}
    \caption{Spatial visualization of the learned weighting policy. Each point represents a selected sample, colored by the average weight $w_x$ assigned to it by the WiGS-SAC agent. Red indicates high exploration focus; Blue indicates high investigation focus. The agent learns to decouple the strategy, applying exploration to complex oscillations (vertices) and investigation to linear regions.}
    \label{fig:Heatmaps}
\end{figure}

\subsection{Temporal Adaptation of Weights}
We further investigate the temporal evolution of the weight parameter. Figure \ref{fig:WeightTrends} displays the average $w_x^{(t)}$ across all 100 seeds over the course of the active learning iterations. 

Notably, the agent does not converge to a single static ``optimal'' weight, nor does it follow a monotonic decay curve. While the mean weight remains centered near $0.5$, the large standard deviation (shaded region) indicates that the agent maintains a high degree of variance throughout the entire process. This suggests that the policy is highly reactive: at any given iteration, the agent may swing heavily toward exploration or investigation depending on the specific candidates remaining in the pool. This supports our hypothesis that the ``Exploration-Investigation'' trade-off is not a global hyperparameter to be tuned, but a dynamic decision to be made at every step.

\begin{figure}[htbp]
    \centering
    \begin{subfigure}[b]{0.48\textwidth}
        \centering
        \includegraphics[width=\textwidth]{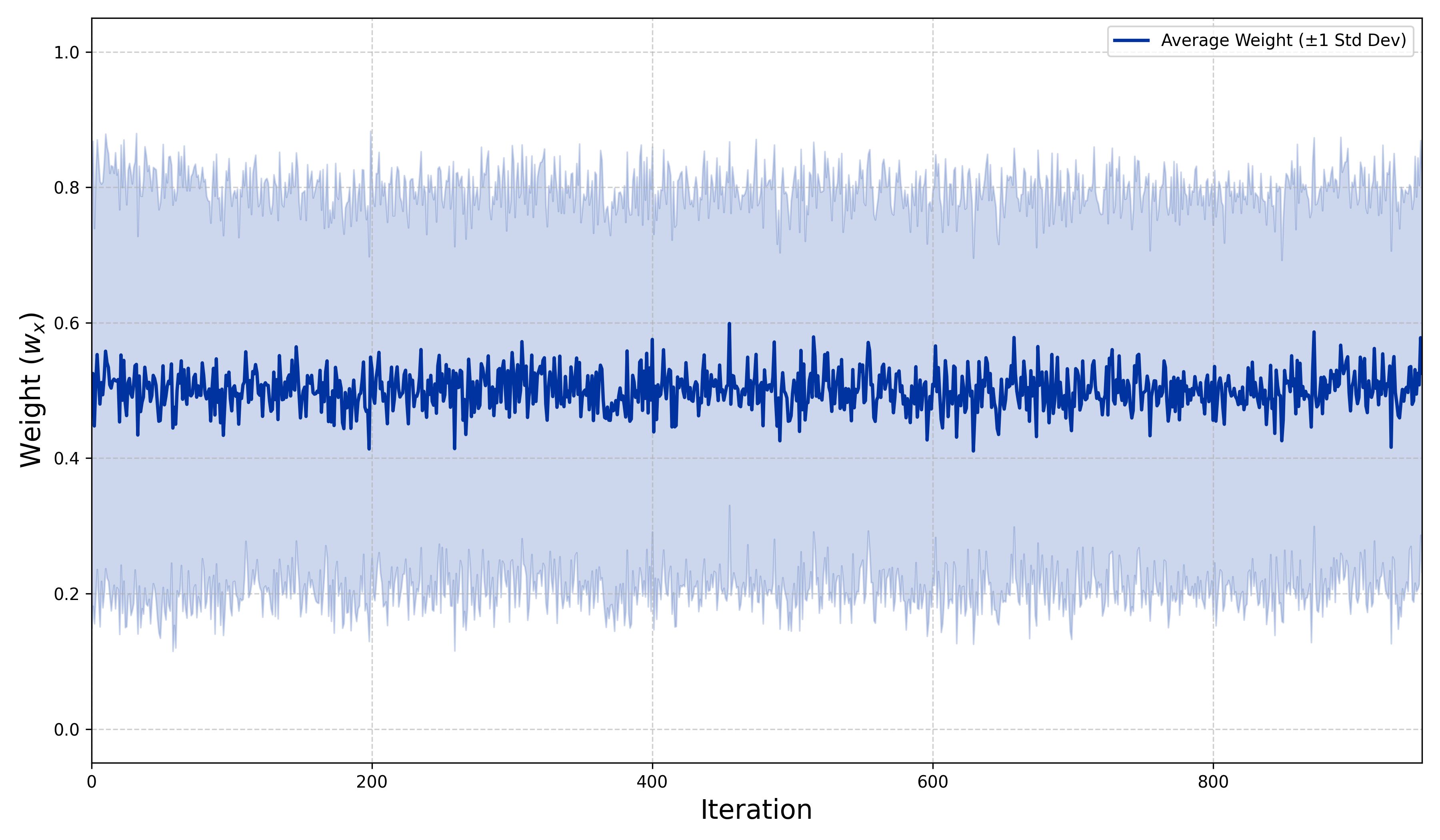}
        \caption{Two-Regime Policy Average Weights}
        \label{fig:Trend_2Regime}
    \end{subfigure}
    \hfill
    \begin{subfigure}[b]{0.48\textwidth}
        \centering
        \includegraphics[width=\textwidth]{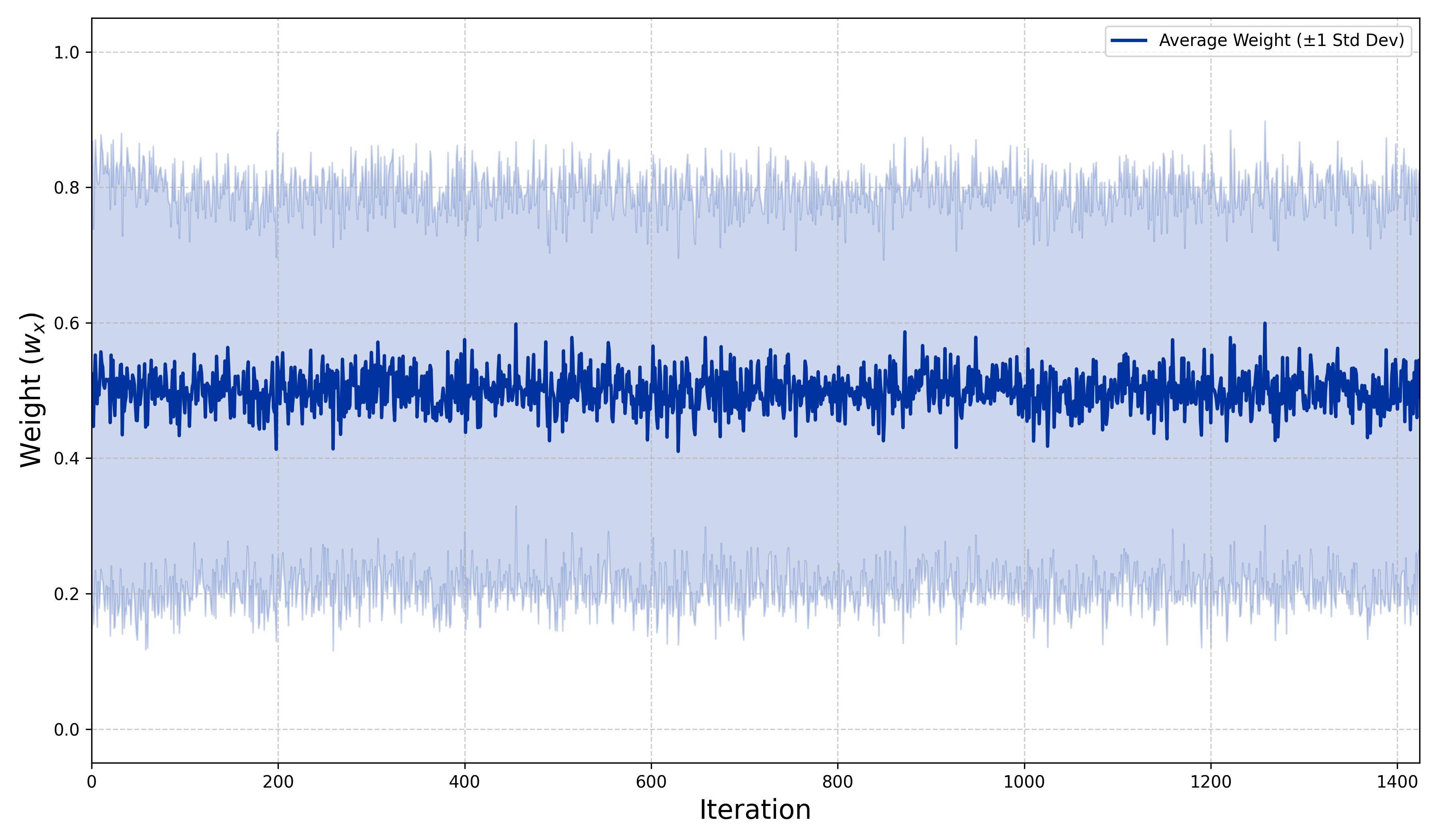}
        \caption{Three-Regime Policy Average Weights}
        \label{fig:Trend_3Regime}
    \end{subfigure}
    \caption{Average weight trend for WiGS-SAC on the Two-Regime and Three-Regime dataset. The solid line shows the mean weight across 100 seeds, while the shaded region represents $\pm 1$ standard deviation. The persistent high variance demonstrates that the agent remains reactive throughout the learning process, rather than converging to a static heuristic.}
    \label{fig:WeightTrends}
\end{figure}

\clearpage
\begin{landscape}
\subsection{Wilcoxon Signed-Ranked Signed Tests}
\label{subsec:WRSTResults}
Table \ref{tab:WRST_2Regime} and \ref{tab:WRST_3Regime} present the 
p-values from the pairwise Wilcoxon signed-rank tests for the two 
synthetic datasets. This test was run on the paired vectors of 
average Full-Pool RMSEs from the 100 simulation seeds. A p-value 
below 0.05 indicates a statistically significant difference in 
performance between the two strategies.

\begin{table}[htbp]
    \centering    
    \begin{subtable}[t]{\linewidth}
        \centering
        \resizebox{\linewidth}{!}{%
            \begin{tabular}{lllllllllllllll}
\toprule
 & Passive & GSx & GSy & iGS & WiGS-S (0.25) & WiGS-S (0.75) & WiGS-Lin & WiGS-Exp & WiGS-MAB (c=2.0) & WiGS-SAC & QBC & Uncertainty Sampling & EGAL & EMCM \\
\midrule
Passive & $1.000$ &  &  &  &  &  &  &  &  &  &  &  &  &  \\
GSx & $<0.001$ & $1.000$ &  &  &  &  &  &  &  &  &  &  &  &  \\
GSy & $0.094$ & $<0.001$ & $1.000$ &  &  &  &  &  &  &  &  &  &  &  \\
iGS & $<0.001$ & $<0.001$ & $<0.001$ & $1.000$ &  &  &  &  &  &  &  &  &  &  \\
WiGS-S (0.25) & $<0.001$ & $<0.001$ & $<0.001$ & $<0.001$ & $1.000$ &  &  &  &  &  &  &  &  &  \\
WiGS-S (0.75) & $<0.001$ & $<0.001$ & $<0.001$ & $<0.001$ & $<0.001$ & $1.000$ &  &  &  &  &  &  &  &  \\
WiGS-Lin & $<0.001$ & $<0.001$ & $<0.001$ & $<0.001$ & $<0.001$ & $<0.001$ & $1.000$ &  &  &  &  &  &  &  \\
WiGS-Exp & $<0.001$ & $<0.001$ & $<0.001$ & $<0.001$ & $0.616$ & $<0.001$ & $<0.001$ & $1.000$ &  &  &  &  &  &  \\
WiGS-MAB (c=2.0) & $<0.001$ & $<0.001$ & $<0.001$ & $<0.001$ & $<0.001$ & $<0.001$ & $<0.001$ & $<0.001$ & $1.000$ &  &  &  &  &  \\
WiGS-SAC & $<0.001$ & $<0.001$ & $<0.001$ & $<0.001$ & $<0.001$ & $<0.001$ & $<0.001$ & $<0.001$ & $<0.001$ & $1.000$ &  &  &  &  \\
QBC & $<0.001$ & $<0.001$ & $<0.001$ & $<0.001$ & $<0.001$ & $<0.001$ & $<0.001$ & $<0.001$ & $<0.001$ & $<0.001$ & $1.000$ &  &  &  \\
Uncertainty Sampling & $<0.001$ & $<0.001$ & $<0.001$ & $<0.001$ & $<0.001$ & $<0.001$ & $<0.001$ & $<0.001$ & $<0.001$ & $<0.001$ & $<0.001$ & $1.000$ &  &  \\
EGAL & $<0.001$ & $<0.001$ & $<0.001$ & $<0.001$ & $<0.001$ & $<0.001$ & $<0.001$ & $<0.001$ & $<0.001$ & $<0.001$ & $<0.001$ & $0.003$ & $1.000$ &  \\
EMCM & $<0.001$ & $<0.001$ & $<0.001$ & $<0.001$ & $<0.001$ & $<0.001$ & $<0.001$ & $<0.001$ & $<0.001$ & $<0.001$ & $<0.001$ & $1.000$ & $0.003$ & $1.000$ \\
\bottomrule
\end{tabular}

        }
        \caption{P-values for the Two-Regime Dataset.}
        \label{tab:WRST_2Regime}
    \end{subtable}
    \begin{subtable}[t]{\linewidth}
        \centering
        \resizebox{\linewidth}{!}{%
            \begin{tabular}{lllllllllllllll}
\toprule
 & Passive & GSx & GSy & iGS & WiGS-S (0.25) & WiGS-S (0.75) & WiGS-Lin & WiGS-Exp & WiGS-MAB (c=2.0) & WiGS-SAC & QBC & Uncertainty Sampling & EGAL & EMCM \\
\midrule
Passive & $1.000$ &  &  &  &  &  &  &  &  &  &  &  &  &  \\
GSx & $<0.001$ & $1.000$ &  &  &  &  &  &  &  &  &  &  &  &  \\
GSy & $0.417$ & $<0.001$ & $1.000$ &  &  &  &  &  &  &  &  &  &  &  \\
iGS & $<0.001$ & $<0.001$ & $<0.001$ & $1.000$ &  &  &  &  &  &  &  &  &  &  \\
WiGS-S (0.25) & $<0.001$ & $<0.001$ & $<0.001$ & $<0.001$ & $1.000$ &  &  &  &  &  &  &  &  &  \\
WiGS-S (0.75) & $<0.001$ & $<0.001$ & $<0.001$ & $<0.001$ & $<0.001$ & $1.000$ &  &  &  &  &  &  &  &  \\
WiGS-Lin & $<0.001$ & $<0.001$ & $<0.001$ & $<0.001$ & $<0.001$ & $<0.001$ & $1.000$ &  &  &  &  &  &  &  \\
WiGS-Exp & $<0.001$ & $<0.001$ & $<0.001$ & $<0.001$ & $0.002$ & $<0.001$ & $<0.001$ & $1.000$ &  &  &  &  &  &  \\
WiGS-MAB (c=2.0) & $<0.001$ & $<0.001$ & $<0.001$ & $<0.001$ & $<0.001$ & $<0.001$ & $0.386$ & $<0.001$ & $1.000$ &  &  &  &  &  \\
WiGS-SAC & $<0.001$ & $<0.001$ & $<0.001$ & $<0.001$ & $<0.001$ & $<0.001$ & $0.038$ & $<0.001$ & $<0.001$ & $1.000$ &  &  &  &  \\
QBC & $<0.001$ & $<0.001$ & $<0.001$ & $<0.001$ & $<0.001$ & $<0.001$ & $<0.001$ & $<0.001$ & $<0.001$ & $<0.001$ & $1.000$ &  &  &  \\
Uncertainty Sampling & $<0.001$ & $<0.001$ & $<0.001$ & $<0.001$ & $0.379$ & $<0.001$ & $<0.001$ & $<0.001$ & $<0.001$ & $<0.001$ & $<0.001$ & $1.000$ &  &  \\
EGAL & $<0.001$ & $<0.001$ & $<0.001$ & $<0.001$ & $<0.001$ & $<0.001$ & $<0.001$ & $<0.001$ & $<0.001$ & $<0.001$ & $<0.001$ & $<0.001$ & $1.000$ &  \\
EMCM & $<0.001$ & $<0.001$ & $<0.001$ & $<0.001$ & $0.379$ & $<0.001$ & $<0.001$ & $<0.001$ & $<0.001$ & $<0.001$ & $<0.001$ & $1.000$ & $<0.001$ & $1.000$ \\
\bottomrule
\end{tabular}

        }
        \caption{P-values for the Three-Regime Dataset}
        \label{tab:WRST_3Regime}
    \end{subtable}
    \caption{Statistical significance of performance differences on synthetic datasets. Tables display p-values from pairwise Wilcoxon Signed-Rank Tests comparing Full-Pool RMSE traces across 100 independent trials. A p-value $< 0.05$ indicates a statistically significant difference in performance distributions.}
\end{table}
\end{landscape}
\clearpage

\clearpage
\section{Additional Benchmark Experiment Details}
\subsection{Dataset Table}
\begin{table}[htbp]
    \centering
    \resizebox{\columnwidth}{!}{%
    \begin{tabular}{lllll}
        \toprule
        \textbf{no.} & \textbf{Dataset} & \textbf{Source} & \textbf{Size} & \textbf{Features} \\ \midrule
        1 & AutoMPG & UCI ML Repository & 392 & 8 \\
        2 & Beer & Kaggle & 365 & 5 \\
        3 & Body Fat & Kaggle & 252 & 13 \\
        4 & Burbidge - Correct & \citet{Burbridge} & 1000 & 1 \\
        5 & Burbidge - Misspecified & \citet{Burbridge} & 1000 & 1 \\
        6 & Burbidge - Low Noise & \citet{Burbridge} & 1000 & 1 \\
        7 & Concrete & UCI ML Repository & 1030 & 8 \\
        8 & Concrete - CS & UCI ML Repository & 103 & 7 \\
        9 & Concrete - Flow & UCI ML Repository & 103 & 7 \\
        10 & Concrete - Slump & UCI ML Repository & 103 & 7 \\
        11 & CPS & CMU Stat Lib & 534 & 16 \\
        12 & Housing & UCI ML Repository & 506 & 13 \\
        13 & NO2 & CMU Stat Lib & 500 & 7 \\
        14 & PM10 & CMU Stat Lib & 500 & 7 \\
        15 & QSAR & UCI ML Repository & 546 & 8 \\
        16 & Wine - Red & UCI ML Repository & 1599 & 11 \\
        17 & Wine - White & UCI ML Repository & 4898 & 11 \\
        18 & Yacht & UCI ML Repository & 308 & 6 \\
        \bottomrule
    \end{tabular}
    }
    \caption{Benchmark datasets used in the experiments.}
    \label{tab:DatasetsTable}
\end{table}

\subsection{Experimental Results Trace Plots}
\label{sec:MainResultsTracePlots}
\begin{figure*}
    \centering
    \vspace*{-1cm} 
    
    \begin{subfigure}[b]{0.31\textwidth}
        \centering
        \includegraphics[width=\linewidth, keepaspectratio]{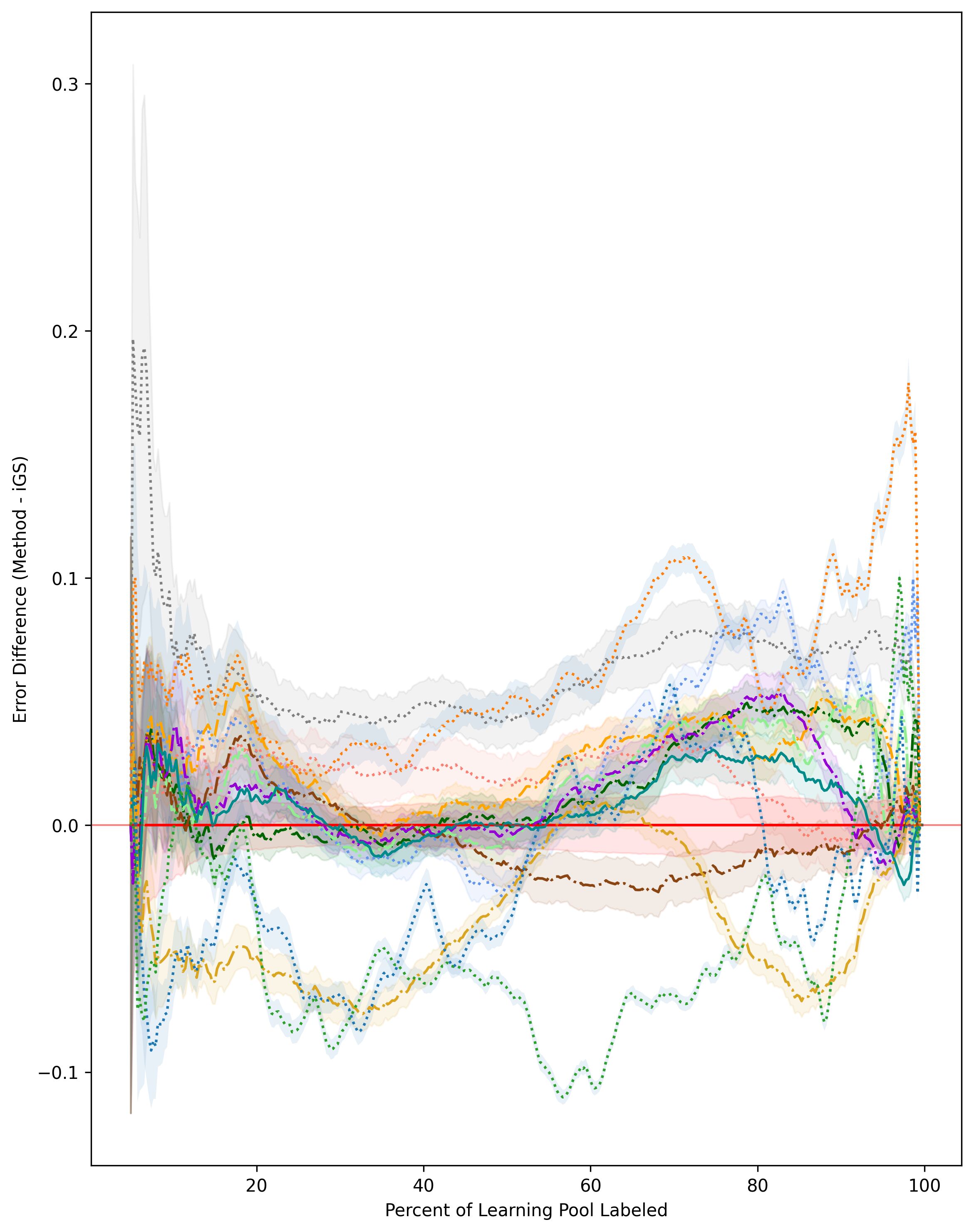}        
        \caption{beer}
    \end{subfigure}
    \hfill
    \begin{subfigure}[b]{0.31\textwidth}
        \centering
        \includegraphics[width=\linewidth, keepaspectratio]{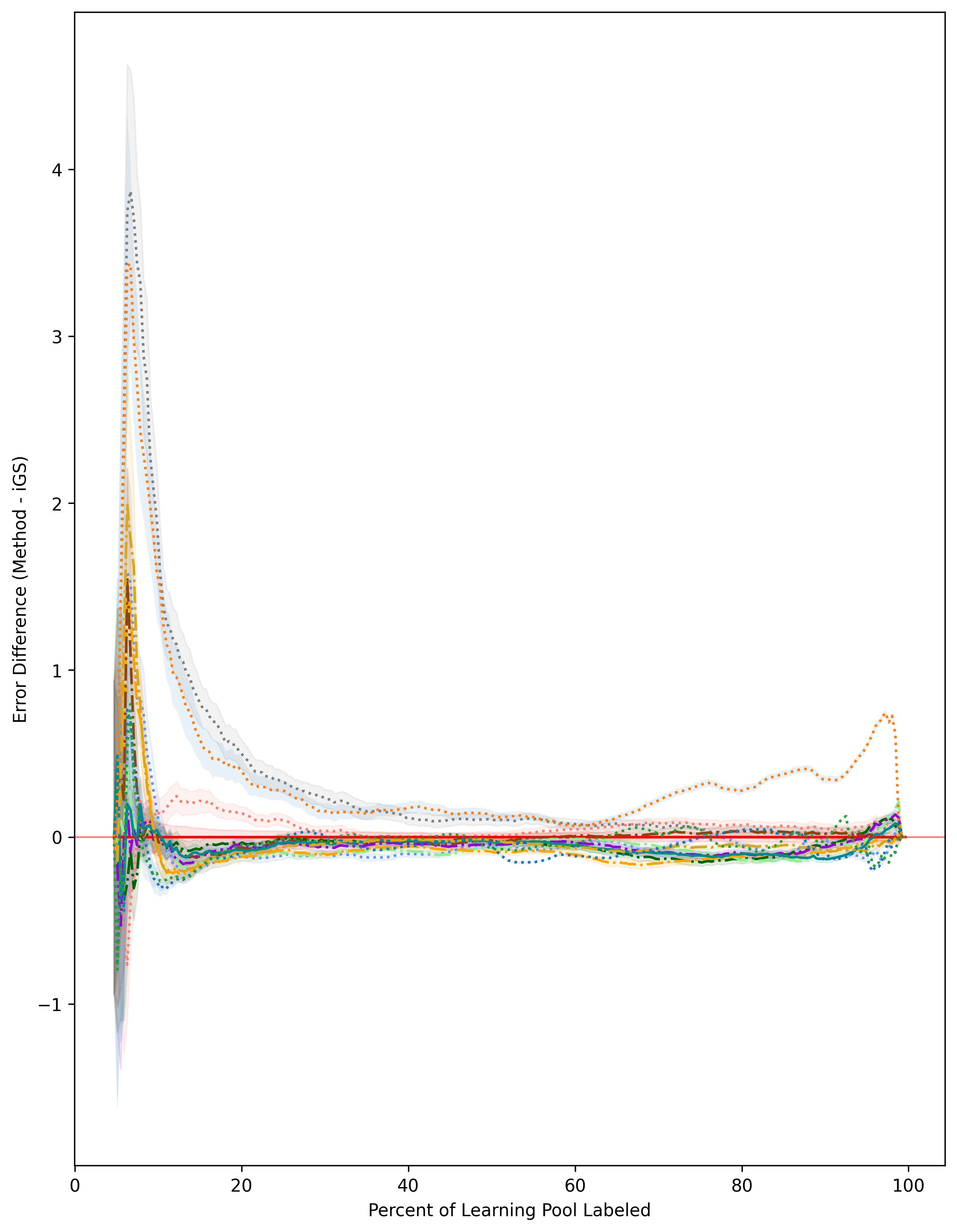}
        \caption{bodyfat}
    \end{subfigure}
    \hfill
    \begin{subfigure}[b]{0.31\textwidth}
        \centering
        \includegraphics[width=\linewidth, keepaspectratio]{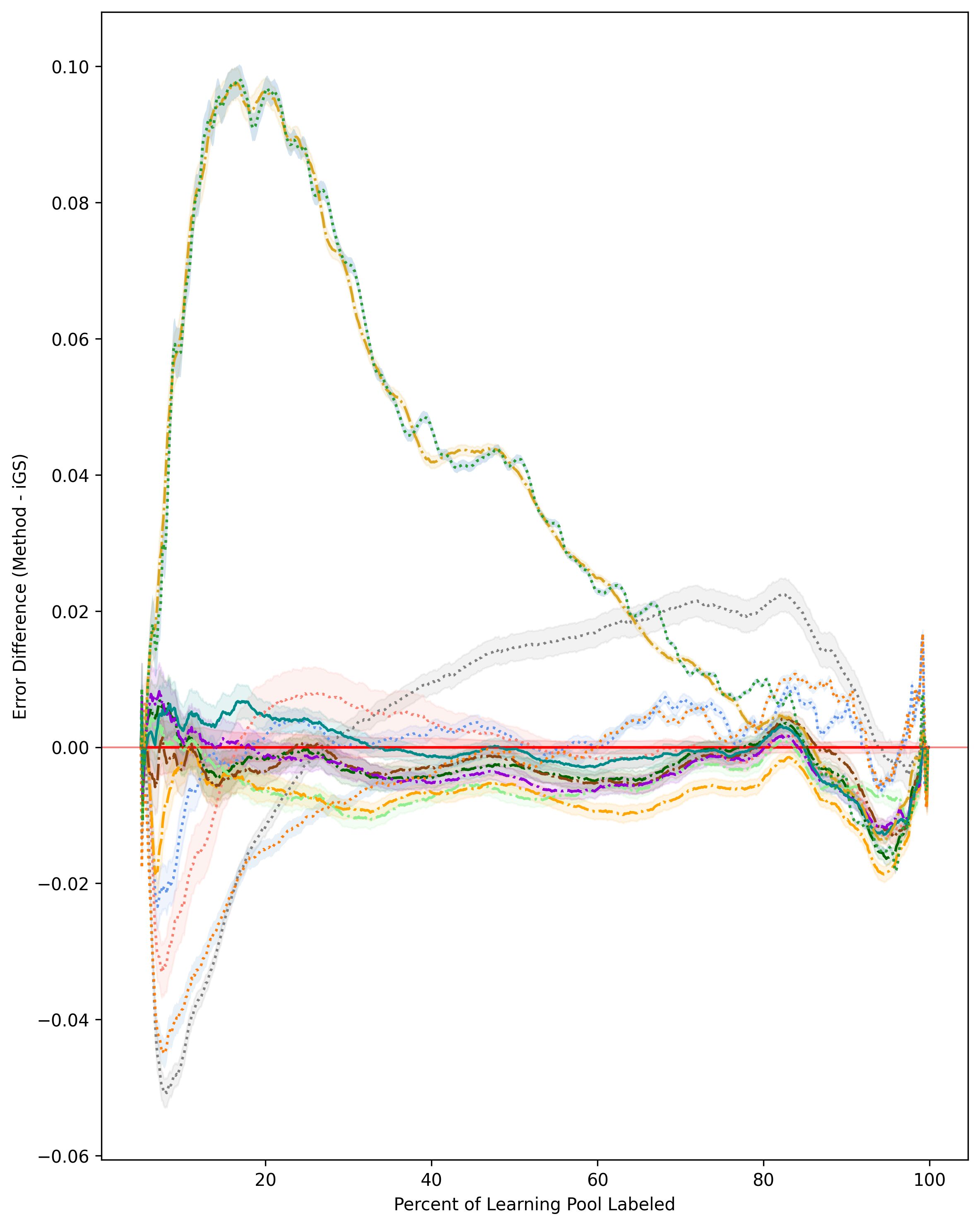}
        \caption{burbidge\_correct}
    \end{subfigure}
    
    \vspace{0.3em} 
    
    \begin{subfigure}[b]{0.31\textwidth}
        \centering
        \includegraphics[width=\linewidth, keepaspectratio]{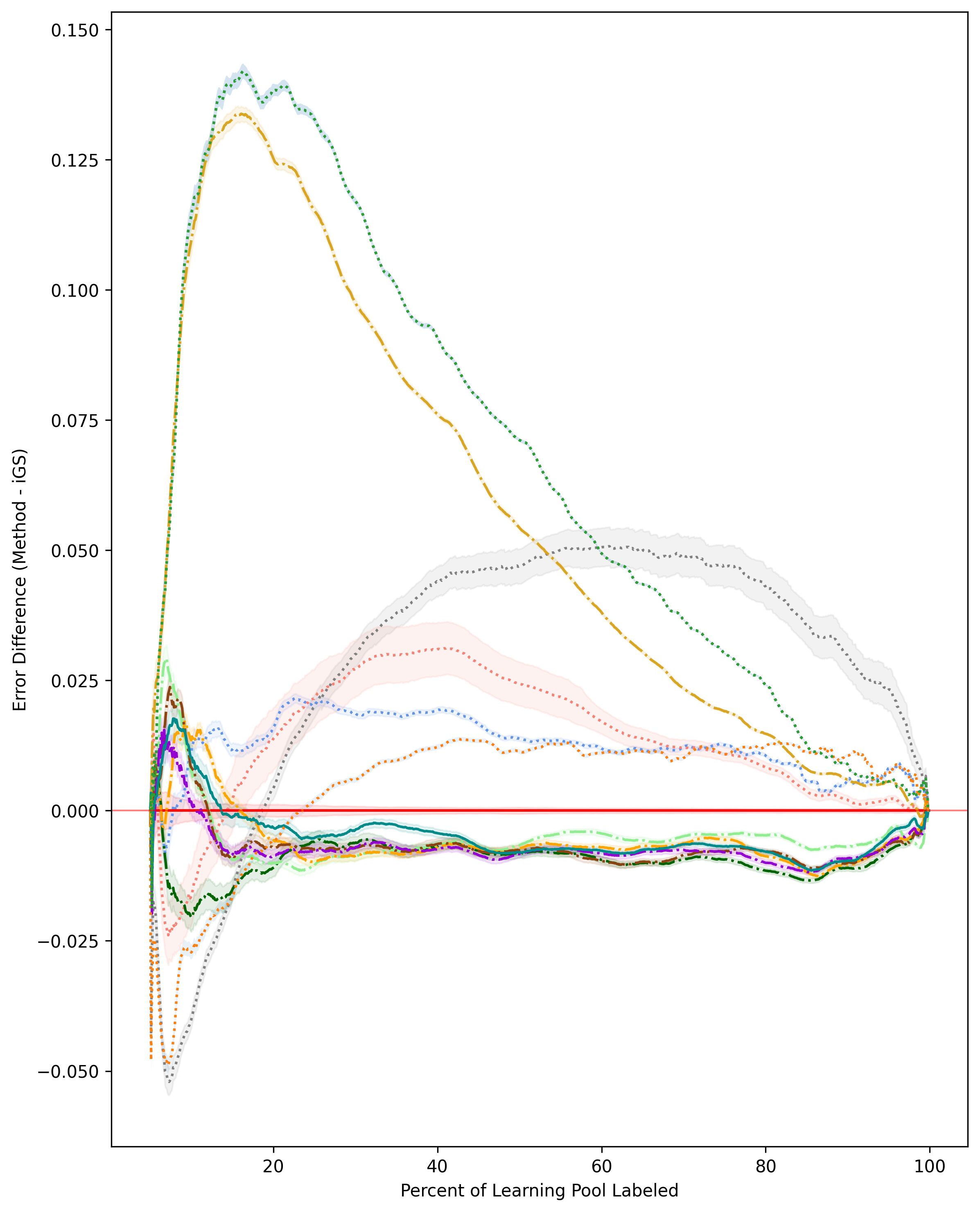}
        \caption{burbidge\_low\_noise}
    \end{subfigure}
    \hfill
    \begin{subfigure}[b]{0.31\textwidth}
        \centering
        \includegraphics[width=\linewidth, keepaspectratio]{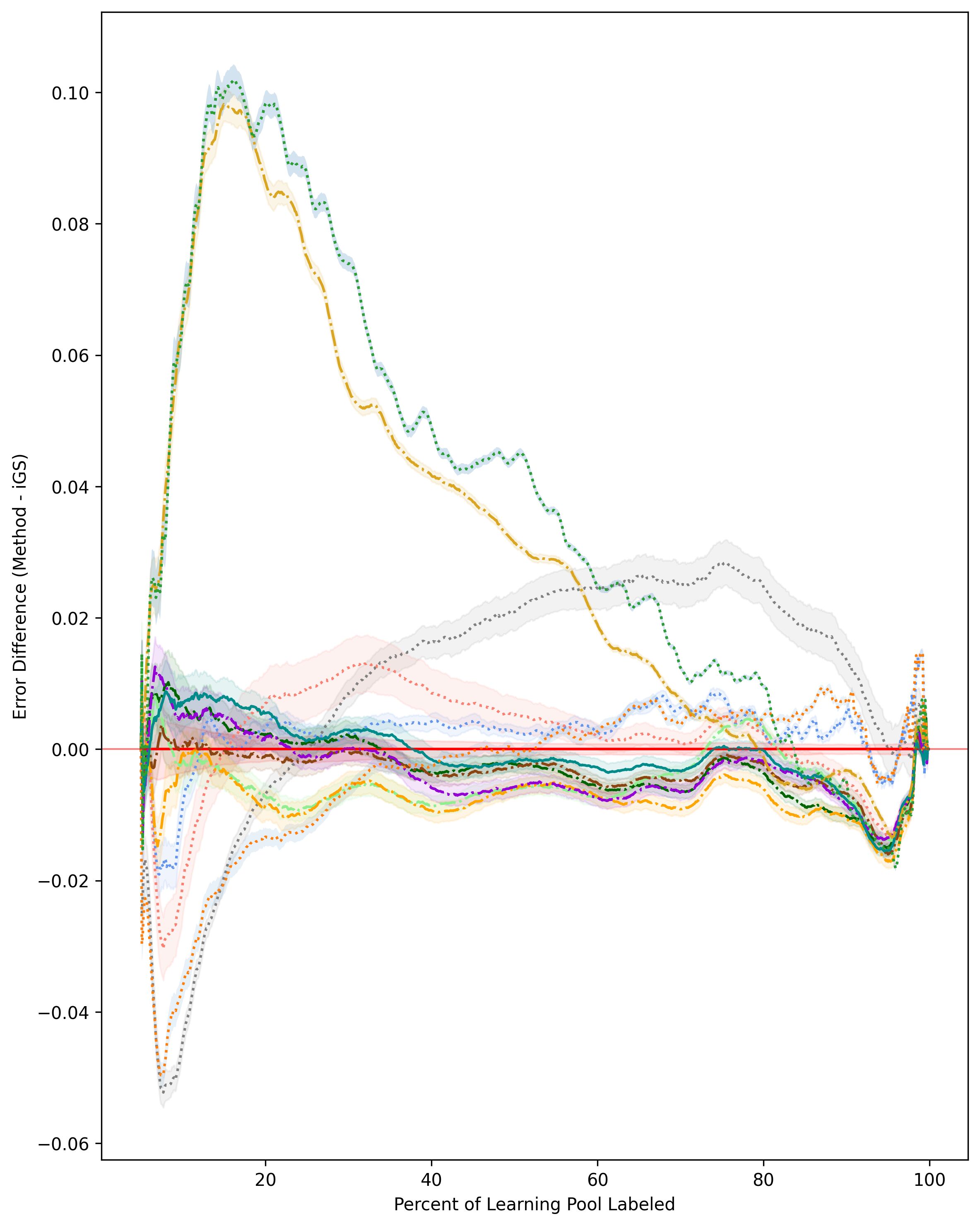}
        \caption{burbidge\_misspecified}
    \end{subfigure}
    \hfill
    \begin{subfigure}[b]{0.31\textwidth}
        \centering
        \includegraphics[width=\linewidth, keepaspectratio]{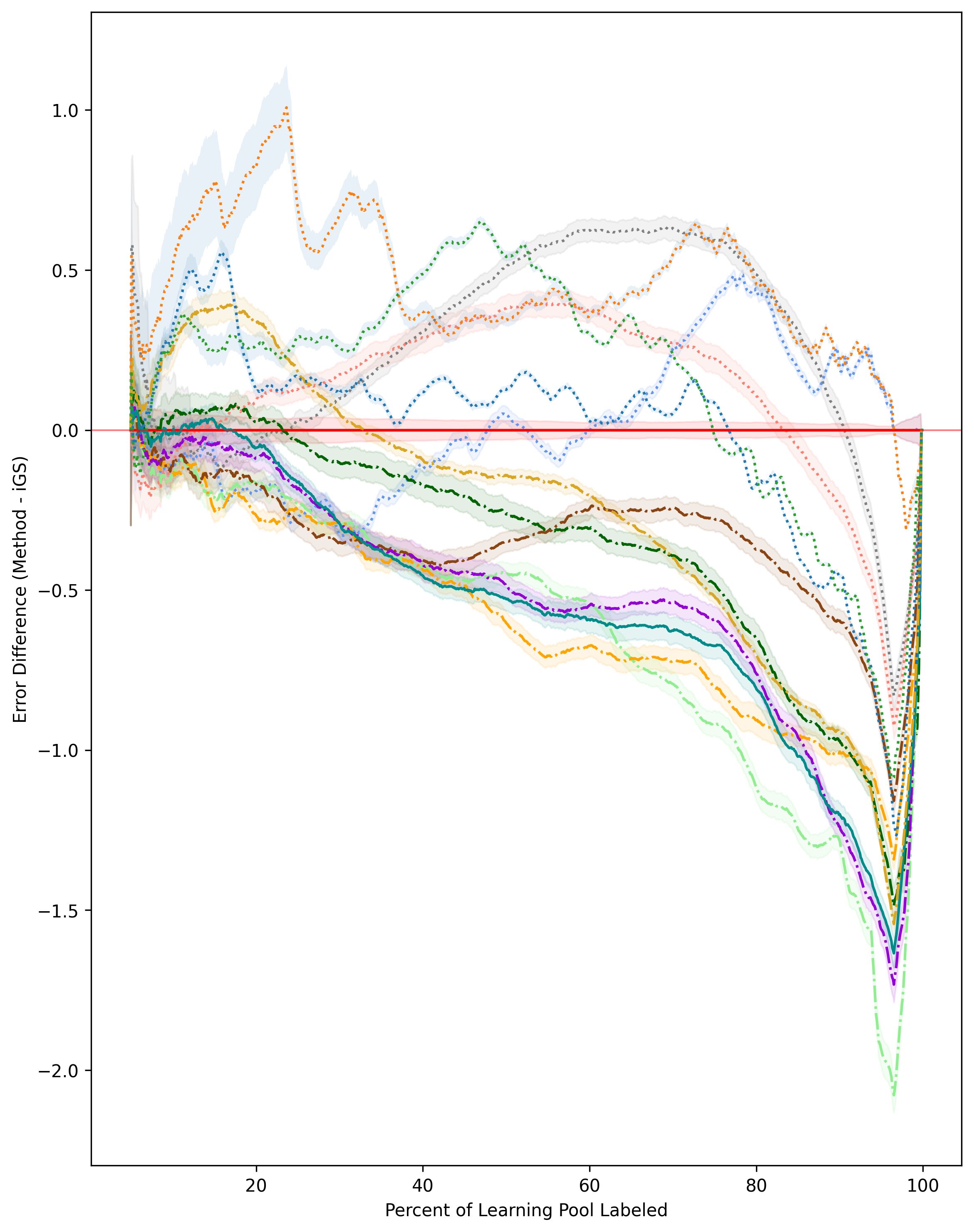}
        \caption{concrete\_4}
    \end{subfigure}
    
    \vspace{0.3em}
    
    \begin{subfigure}[b]{0.31\textwidth}
        \centering
        \includegraphics[width=\linewidth, keepaspectratio]{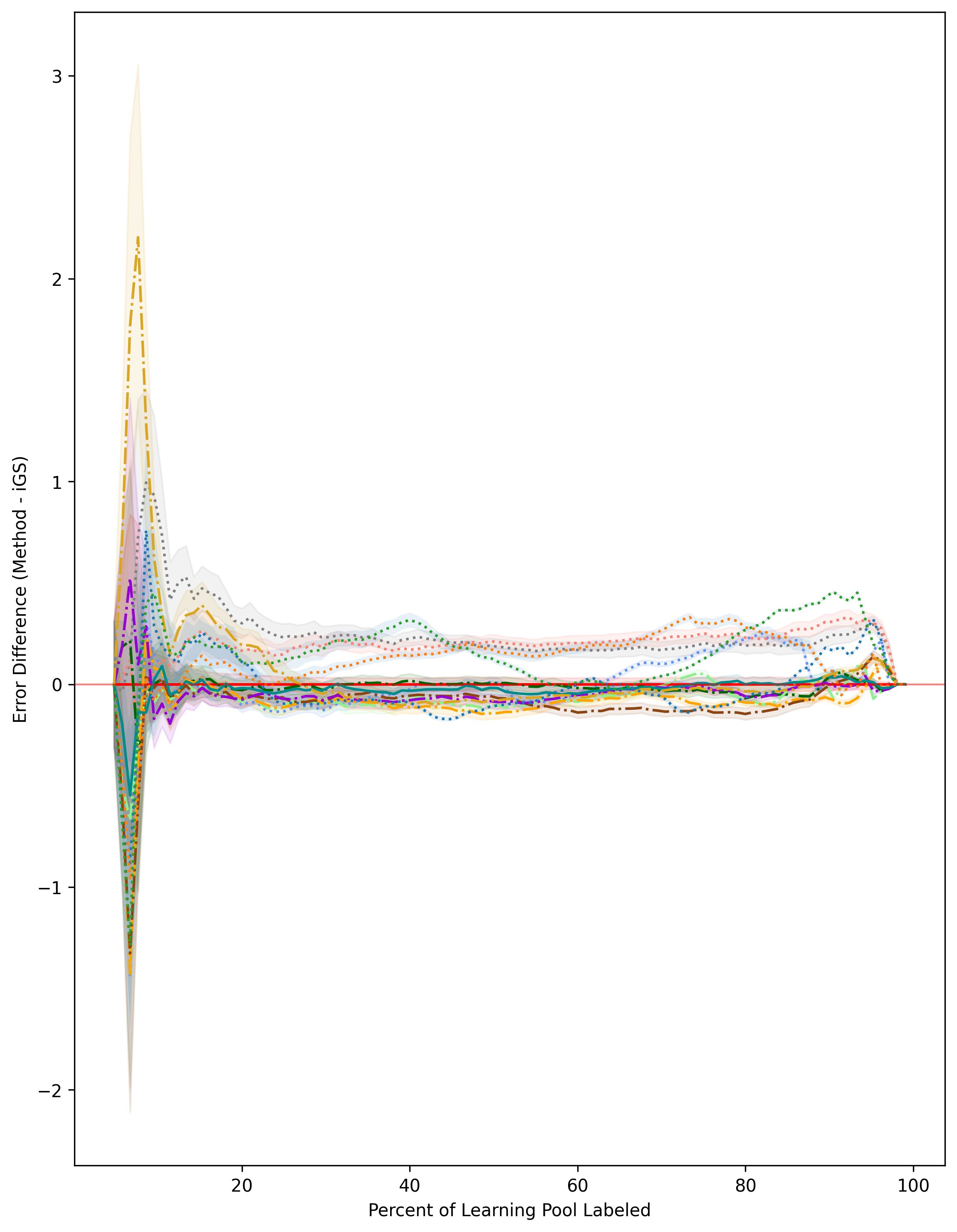}
        \caption{concrete\_cs}
    \end{subfigure}
    \hfill
    \begin{subfigure}[b]{0.31\textwidth}
        \centering
        \includegraphics[width=\linewidth, keepaspectratio]{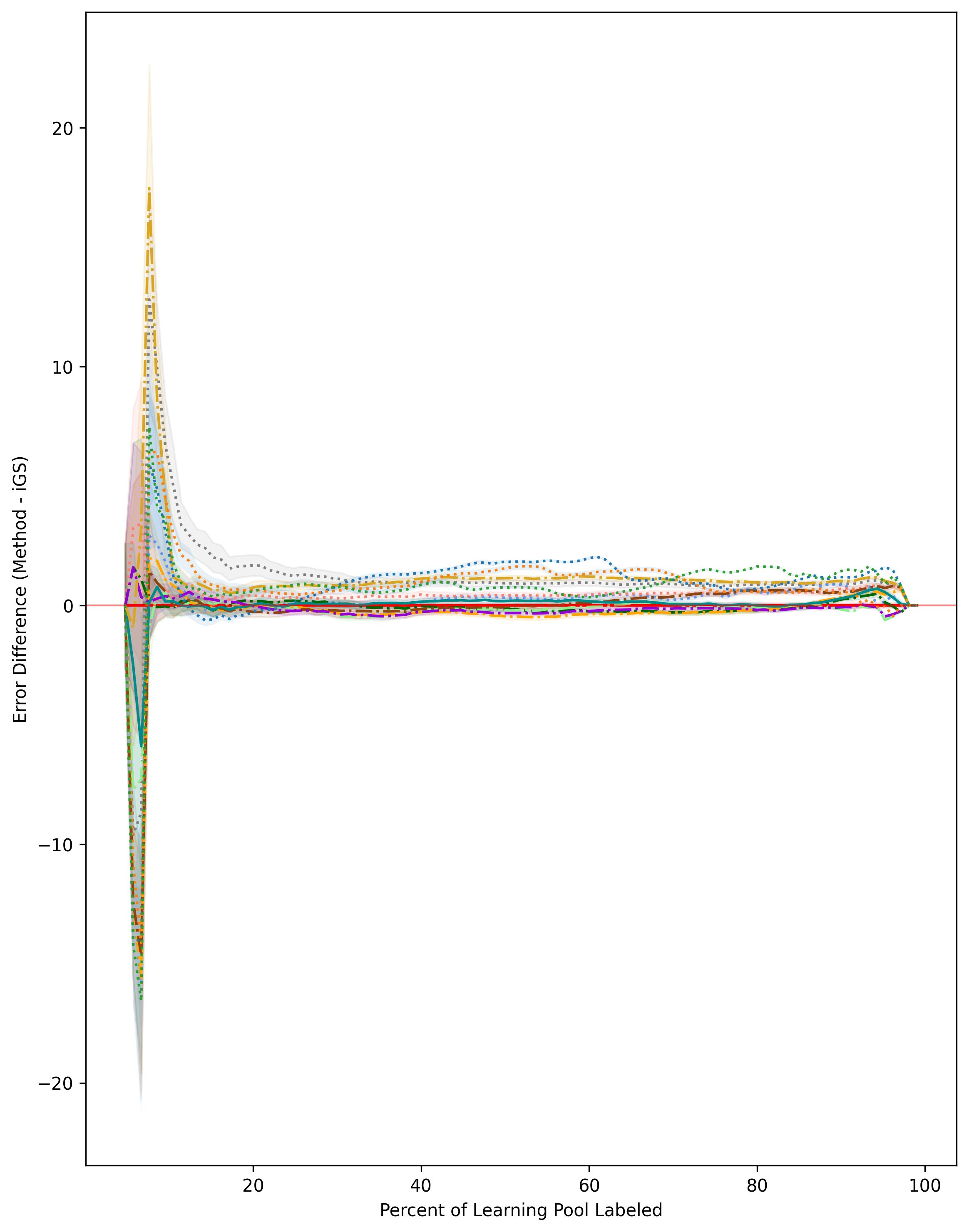}
        \caption{concrete\_flow}
    \end{subfigure}
    \hfill
    \begin{subfigure}[b]{0.31\textwidth}
        \centering
        \includegraphics[width=\linewidth, keepaspectratio]{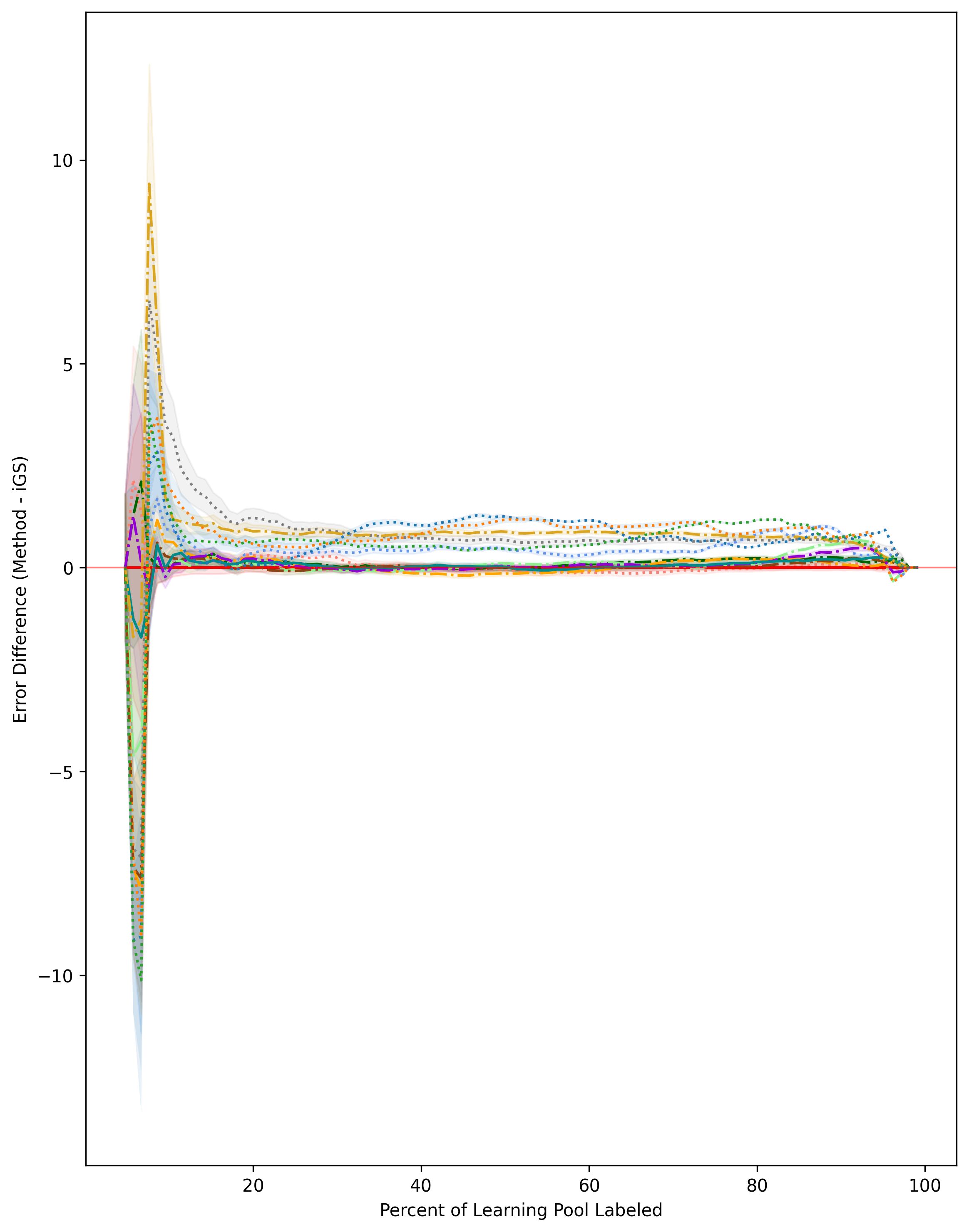}
        \caption{concrete\_slump}
    \end{subfigure}
    
    \vspace{0.5em}
    \centering
    \includegraphics[width=\linewidth]{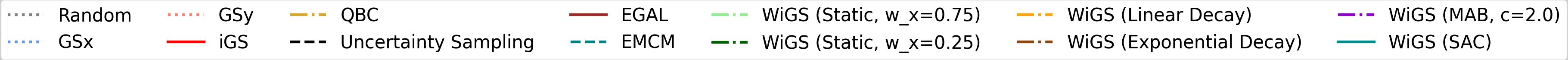}
    
    \caption{Full-pool RMSE trace plots for benchmark datasets (Part 1 of 2).}
    \label{fig:MainResults1}
\end{figure*}

\clearpage
\begin{figure*}
    \centering
    \vspace*{-1cm} 
    
    \begin{subfigure}[b]{0.31\textwidth}
        \centering
        \includegraphics[width=\linewidth, keepaspectratio]{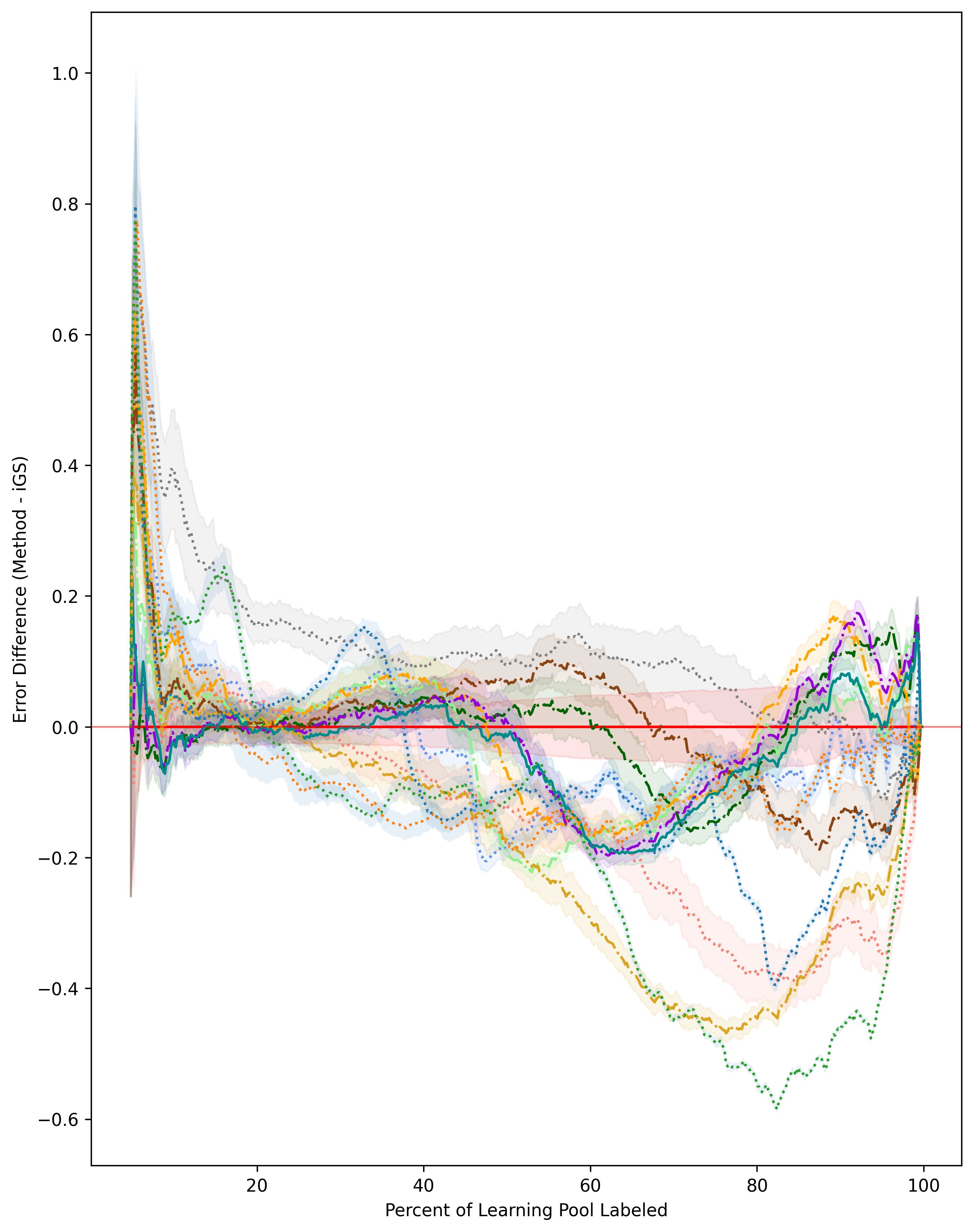}
        \caption{cps\_wage}    \end{subfigure}
    \hfill
    \begin{subfigure}[b]{0.31\textwidth}
        \centering
        \includegraphics[width=\linewidth, keepaspectratio]{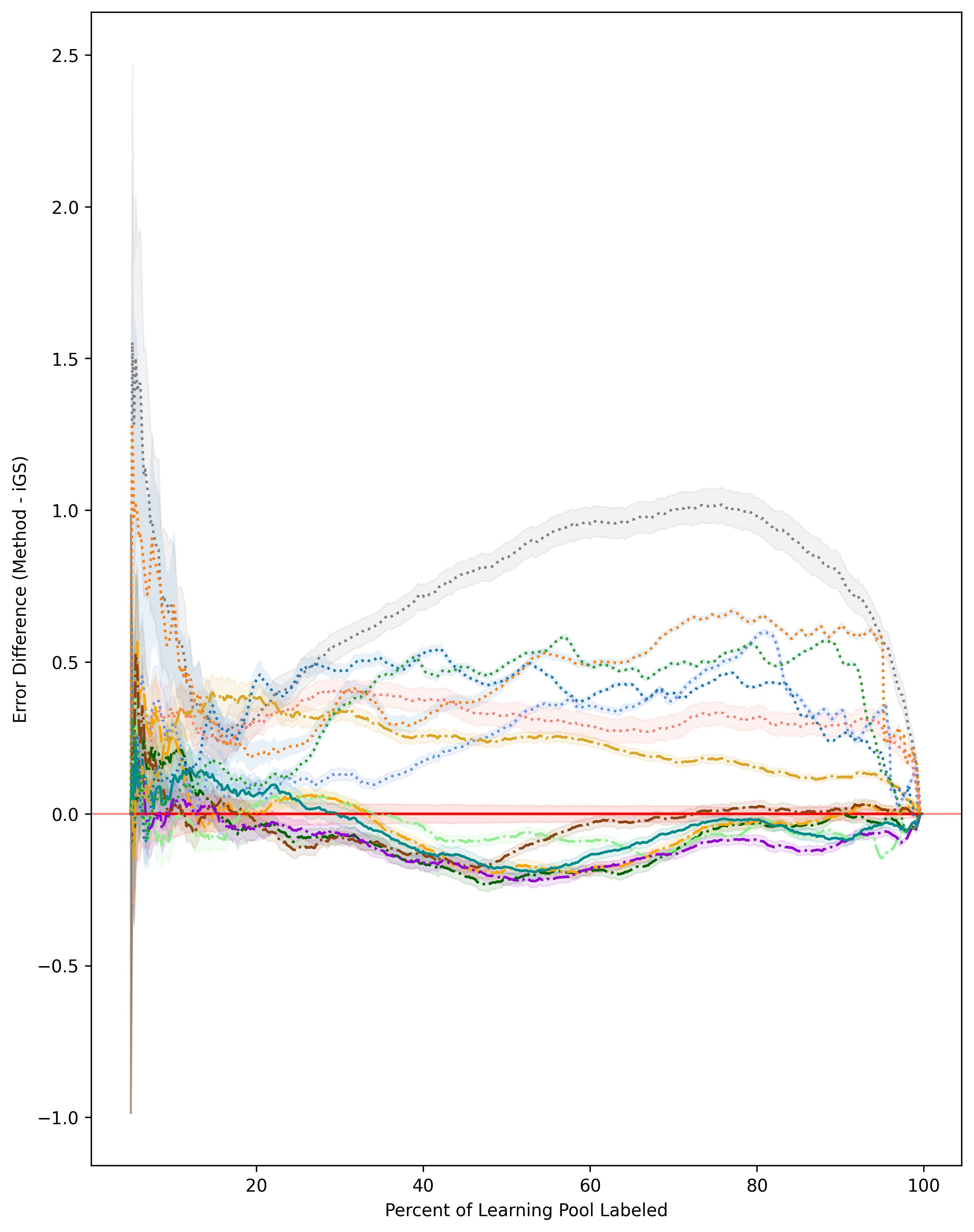}
        \caption{housing}
    \end{subfigure}
    \hfill
    \begin{subfigure}[b]{0.31\textwidth}
        \centering
        \includegraphics[width=\linewidth, keepaspectratio]{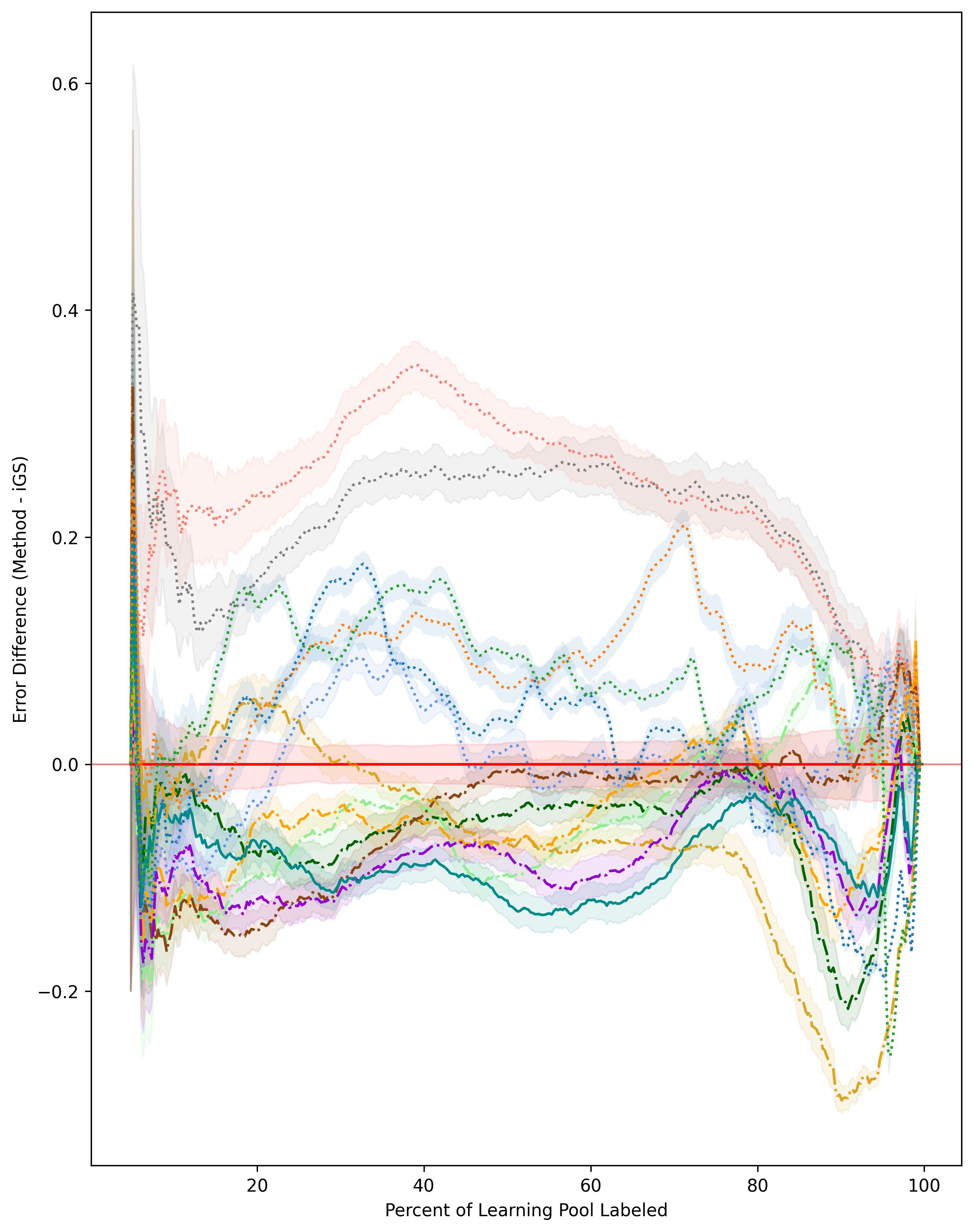}
        \caption{mpg}
    \end{subfigure}
    
    \vspace{0.3em}
    
    \begin{subfigure}[b]{0.31\textwidth}
        \centering
        \includegraphics[width=\linewidth, keepaspectratio]{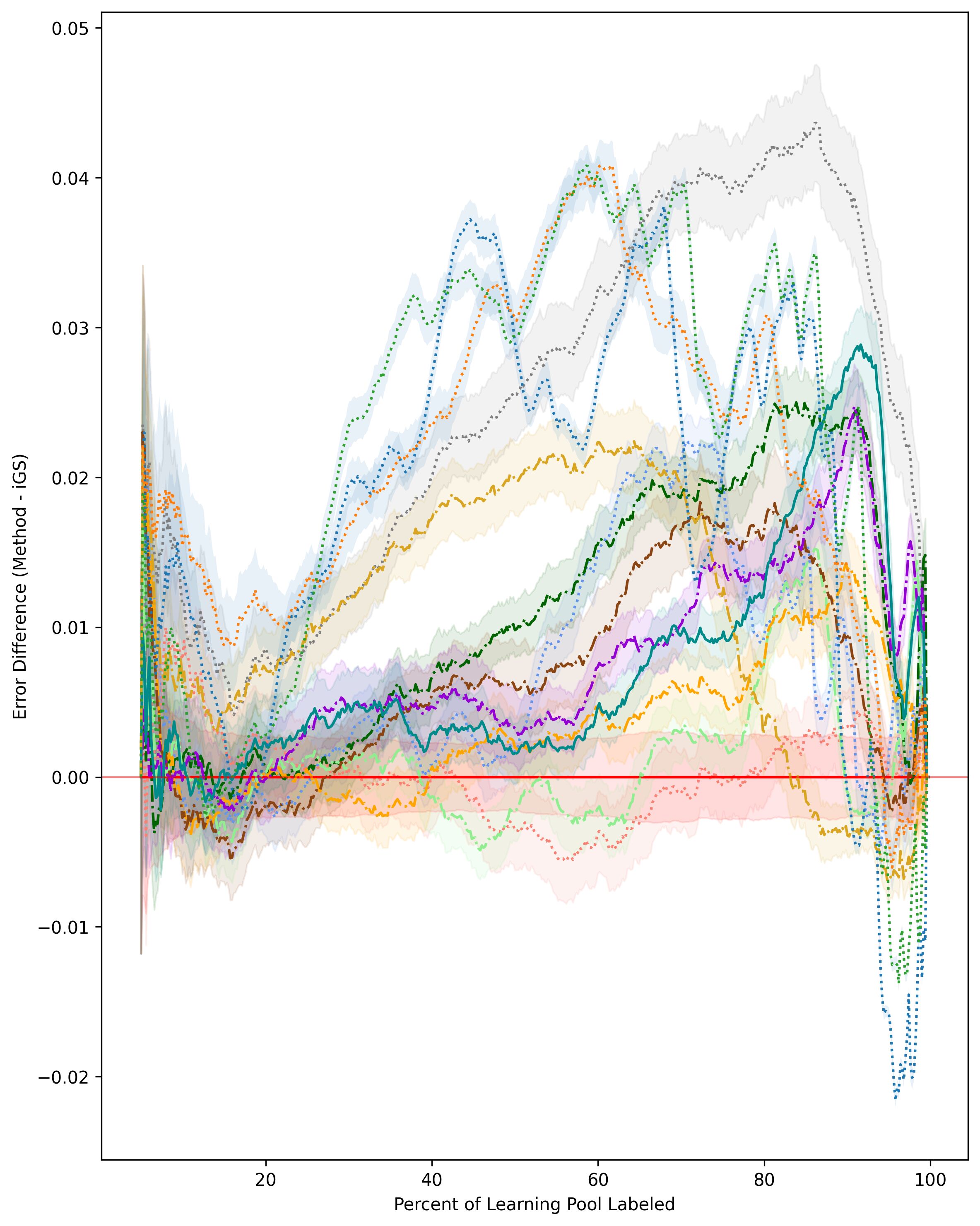}
        \caption{no2}
    \end{subfigure}
    \hfill
    \begin{subfigure}[b]{0.31\textwidth}
        \centering
        \includegraphics[width=\linewidth, keepaspectratio]{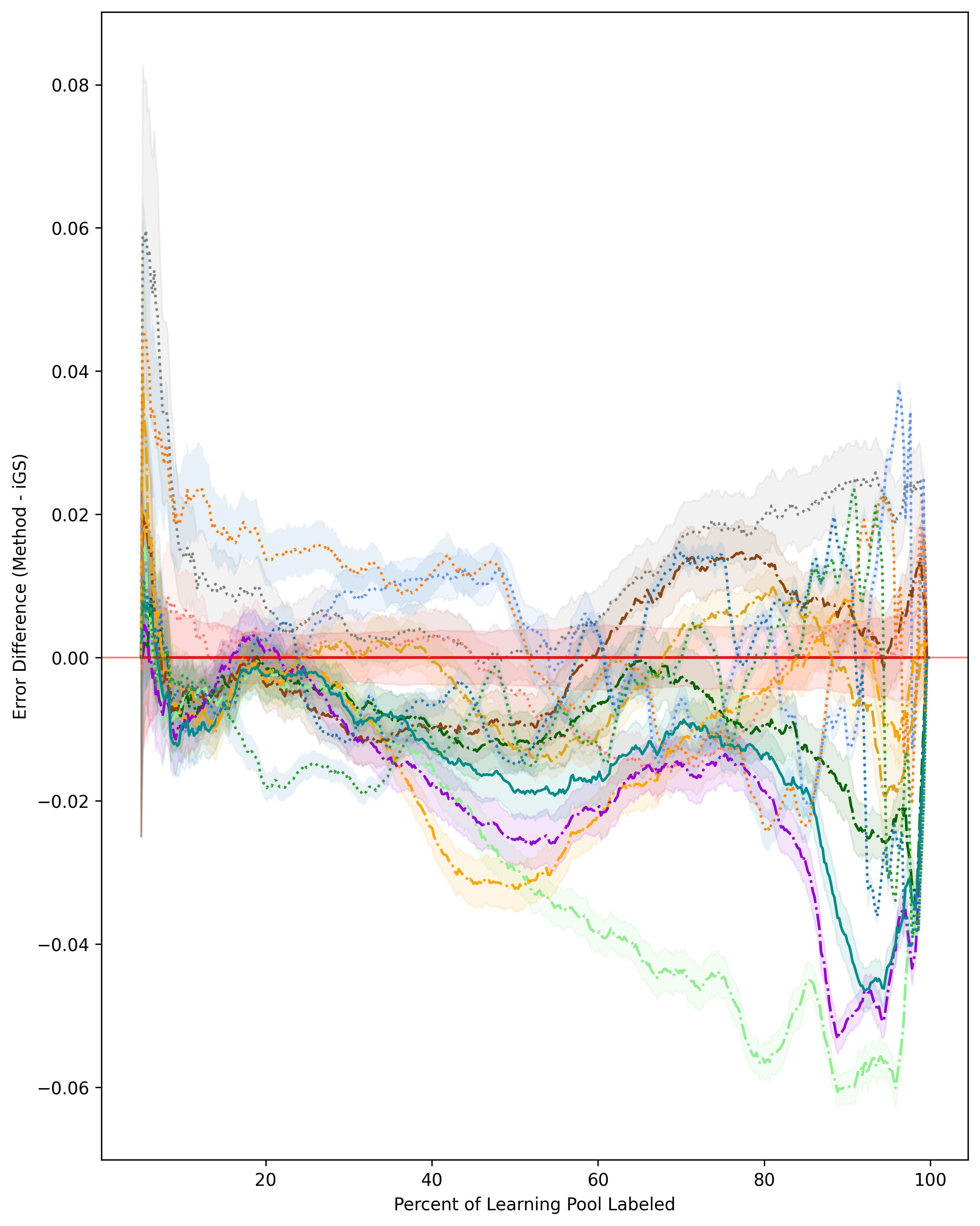}
        \caption{pm10}
    \end{subfigure}
    \hfill
    \begin{subfigure}[b]{0.31\textwidth}
        \centering
        \includegraphics[width=\linewidth, keepaspectratio]{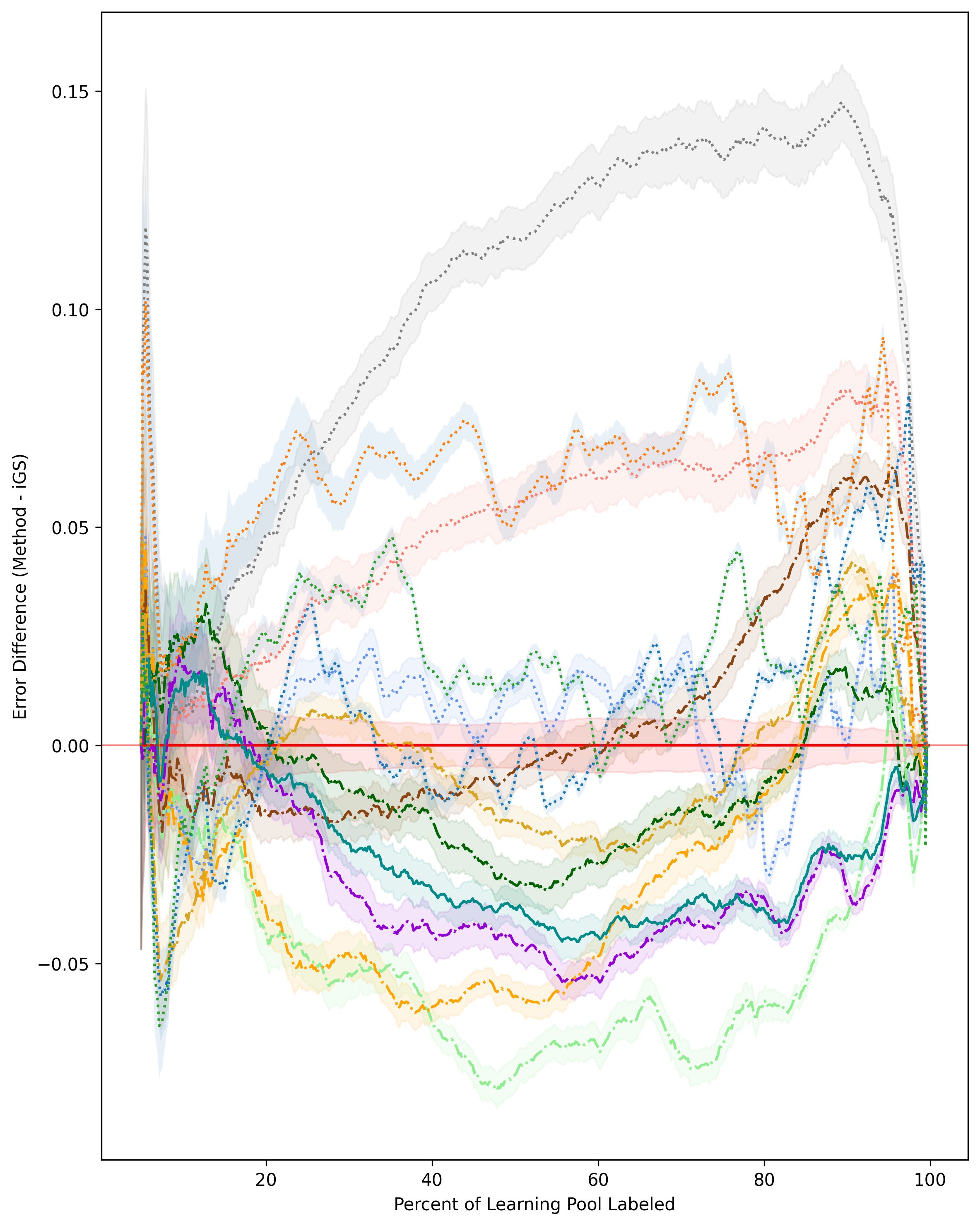}
        \caption{qsar}
    \end{subfigure}
    
    \vspace{0.3em}
    
    \begin{subfigure}[b]{0.31\textwidth}
        \centering
        \includegraphics[width=\linewidth, keepaspectratio]{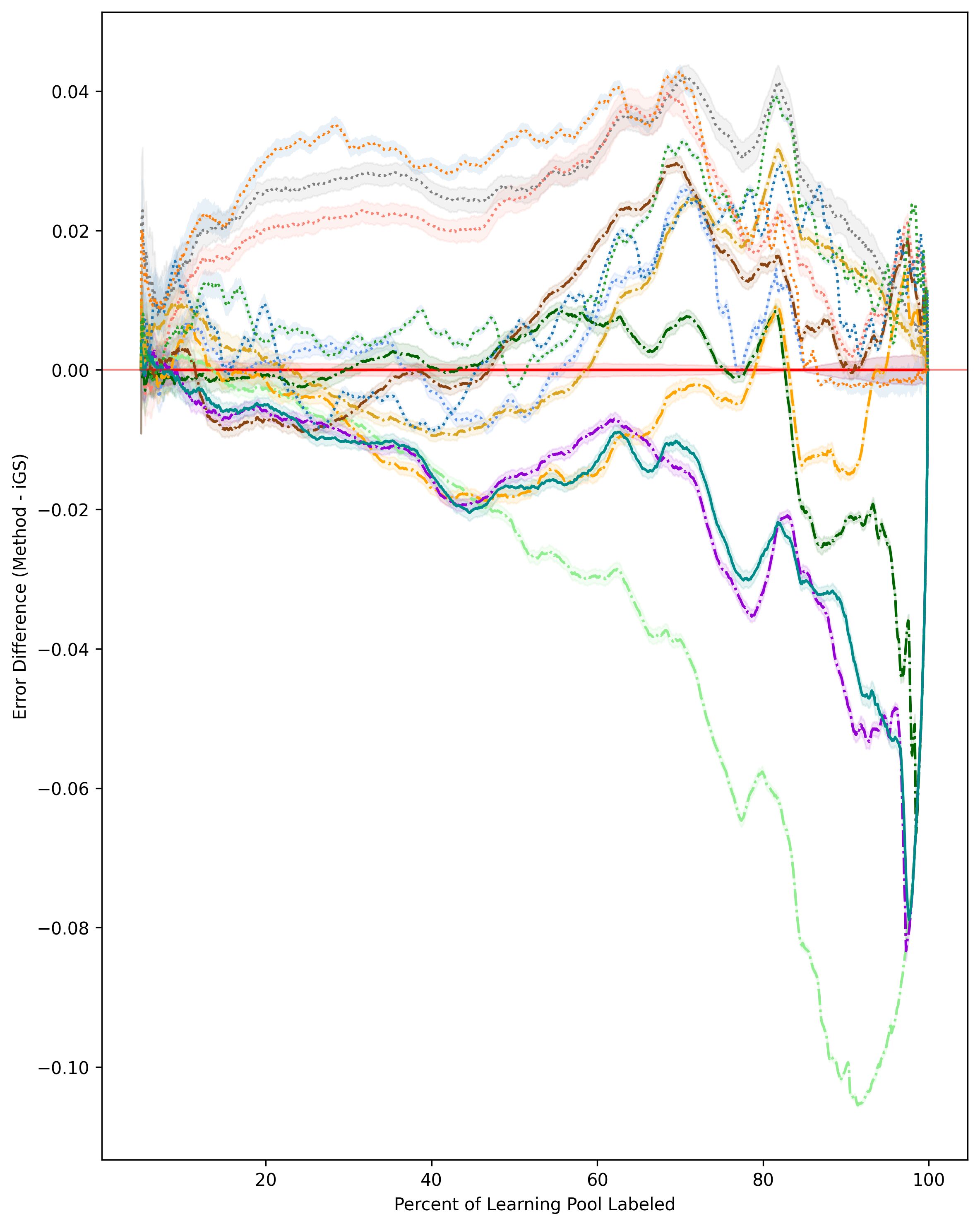}
        \caption{wine\_red}
    \end{subfigure}
    \hfill
    \begin{subfigure}[b]{0.31\textwidth}
        \centering
        \includegraphics[width=\linewidth, keepaspectratio]{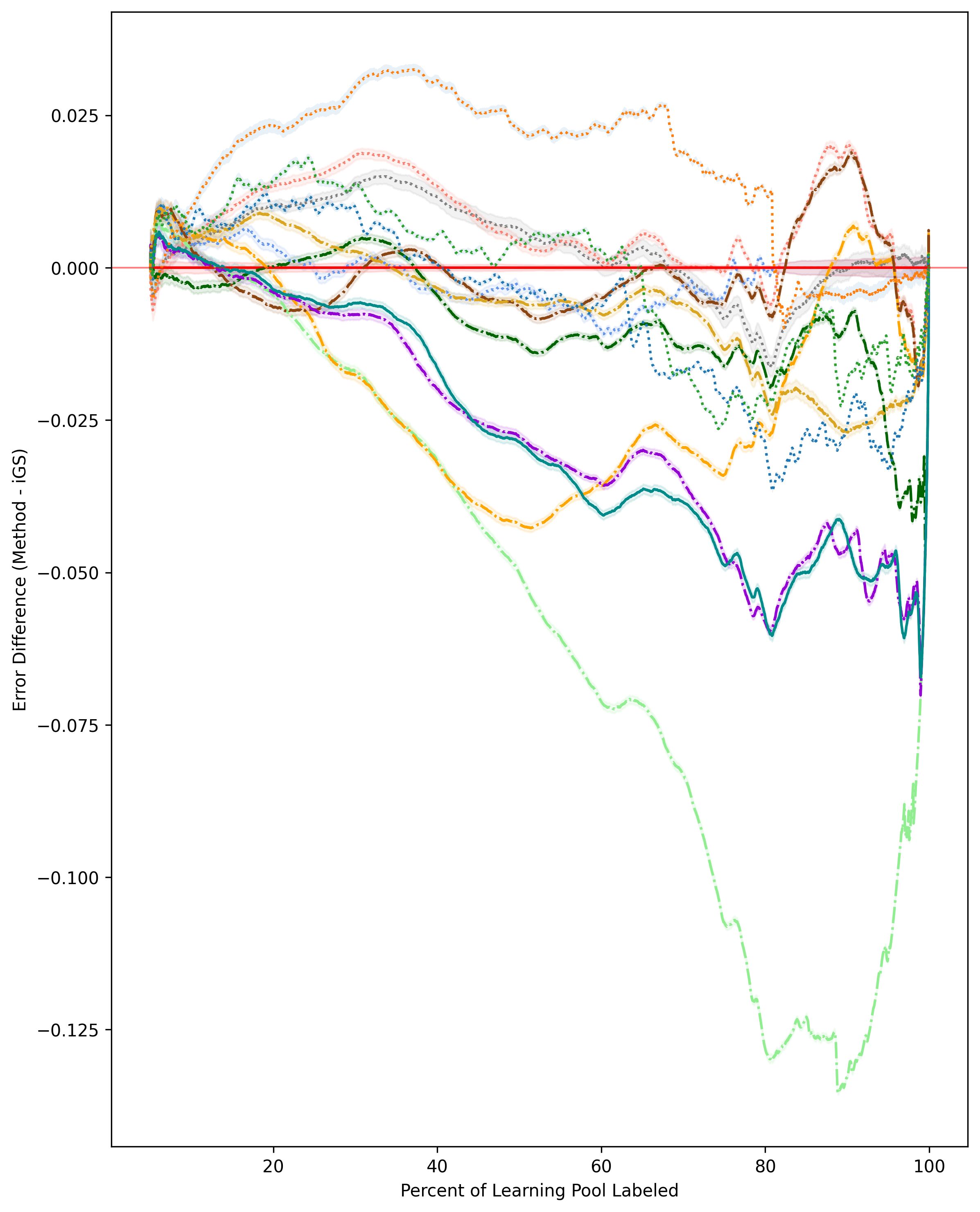}
        \caption{wine\_white}
    \end{subfigure}
    \hfill
    \begin{subfigure}[b]{0.31\textwidth}
        \centering
        \includegraphics[width=\linewidth, keepaspectratio]{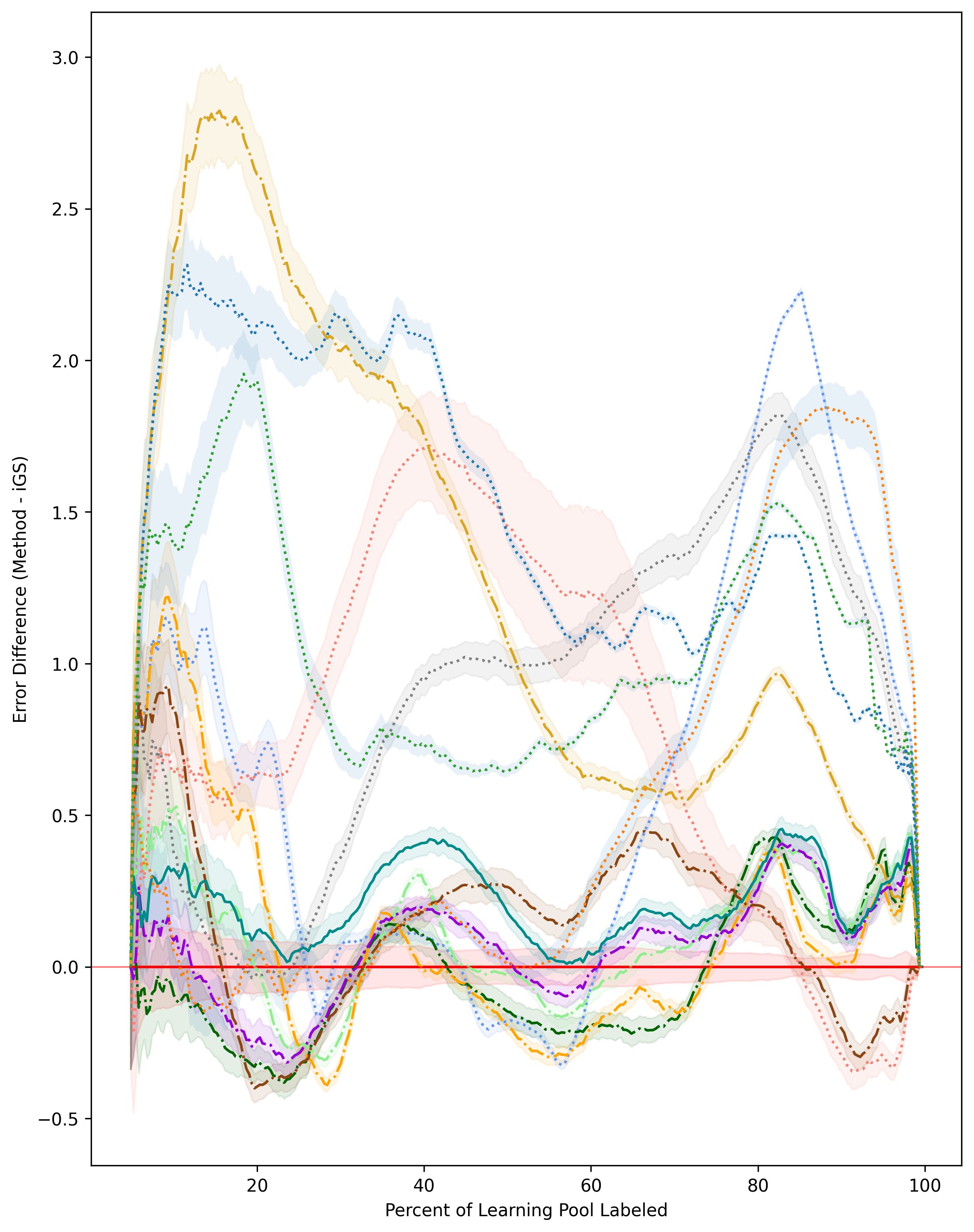}
        \caption{yacht}
    \end{subfigure}
    
    \vspace{0.5em}
    \centering
    \includegraphics[width=\linewidth]{upload_all_files/manuscript/benchmark_legend.jpg}
    
    \caption{Full-pool RMSE trace plots for benchmark datasets (Part 2 of 2).}
    \label{fig:MainResults2}
\end{figure*}
\clearpage

\section{Computational Complexity}
Table \ref{tab:RuntimeComparison} presents the median runtime across all simulations. As expected, the reinforcement learning-based WiGS-SAC incurs the highest computational cost (median $416.4$s), approximately $27\times$ that of the iGS baseline ($15.5$s). This overhead stems from the iterative training of the actor-critic neural networks. However, in practical active learning scenarios, such as material science experimentation or medical diagnosis, the cost of obtaining a single label (oracle) often spans hours or days. In such contexts, a computational delay of minutes is negligible compared to the substantial reduction in labeling budget and sample efficiency achieved by the agent. 

Conversely, the WiGS-MAB and Static variants demonstrate excellent computational efficiency, with runtimes nearly identical to the iGS baseline ($\approx 15.5$s). This highlights the flexibility of the WiGS framework: users can select the lightweight MAB or Static strategies for time-constrained applications, or deploy the deep RL agent when maximizing label efficiency is the main concern. Notably, EGAL exhibits poor scaling on larger datasets (e.g., \texttt{Wine White}), likely due to the complexity of its density calculations, whereas WiGS-SAC's complexity is driven primarily by the fixed-size neural network updates.

\begin{table*}[htbp]
    \centering
    \setlength{\tabcolsep}{2pt}
    \begin{tabular}{l rrrrrrrrrrrrrr}
        \toprule
         & \multicolumn{4}{c}{\textbf{Baselines}} & \multicolumn{4}{c}{\textbf{Advanced Baselines}} & \multicolumn{4}{c}{\textbf{WiGS (Static/Decay)}} & \multicolumn{2}{c}{\textbf{WiGS (Adaptive)}} \\
         \cmidrule(lr){2-5} \cmidrule(lr){6-9} \cmidrule(lr){10-13} \cmidrule(lr){14-15}
        \textbf{Dataset} & Rand. & GSx & GSy & iGS & QBC & Uncert. & EMCM & EGAL & 0.25 & 0.75 & Lin. & Exp. & MAB(2) & SAC \\ \midrule
        Beer & 11.1 & 11.5 & 11.8 & 12.1 & 17.1 & 11.3 & 10.9 & 11.9 & 12.5 & 12.5 & 12.4 & 12.2 & 12.0 & 225.0 \\
        Bodyfat & 6.8 & 7.0 & 7.2 & 7.3 & 10.4 & 6.9 & 6.9 & 7.1 & 7.3 & 7.3 & 7.3 & 7.3 & 7.2 & 205.8 \\
        Burb. Correct & 25.9 & 27.9 & 28.9 & 31.7 & 39.7 & 27.0 & 26.4 & 37.2 & 36.6 & 36.6 & 36.7 & 36.7 & 36.5 & 234.6 \\
        Burb. Low Noise & 25.3 & 27.1 & 28.1 & 31.3 & 39.0 & 26.2 & 25.3 & 34.2 & 36.2 & 36.4 & 36.4 & 36.3 & 35.7 & 448.5 \\
        Burb. Misspecified & 25.0 & 26.8 & 27.7 & 30.9 & 39.1 & 26.2 & 25.5 & 34.1 & 35.8 & 35.9 & 35.9 & 35.7 & 35.4 & 461.2 \\
        Conc. 4 & 28.2 & 30.8 & 31.1 & 35.3 & 44.4 & 29.6 & 29.0 & 39.0 & 40.4 & 40.6 & 40.6 & 40.6 & 40.2 & 428.7 \\
        Conc. Cs & 2.6 & 2.6 & 2.8 & 2.8 & 4.0 & 2.6 & 2.6 & 2.7 & 2.8 & 2.7 & 2.7 & 2.7 & 2.7 & 531.1 \\
        Conc. Flow & 2.6 & 2.6 & 2.7 & 2.7 & 4.0 & 2.6 & 2.6 & 2.7 & 2.7 & 2.7 & 2.8 & 2.8 & 2.7 & 557.4 \\
        Conc. Slump & 2.6 & 2.7 & 2.8 & 2.8 & 4.1 & 2.7 & 2.7 & 2.8 & 2.8 & 2.8 & 2.8 & 2.8 & 2.7 & 604.5 \\
        Cps Wage & 14.7 & 15.5 & 15.7 & 16.2 & 22.9 & 15.2 & 15.1 & 16.7 & 16.5 & 16.5 & 16.5 & 16.5 & 16.4 & 349.5 \\
        Dgp Three Regime & 40.1 & 45.4 & 47.6 & 57.7 & 67.5 & 41.1 & 42.1 & 73.1 & 86.5 & 87.6 & 90.7 & 91.3 & 88.5 & 327.3 \\
        Dgp Two Regime & 26.3 & 28.3 & 29.6 & 32.3 & 43.7 & 28.3 & 27.1 & 36.4 & 37.8 & 38.3 & 39.5 & 39.4 & 39.9 & 239.3 \\
        Housing & 13.6 & 14.3 & 14.7 & 15.1 & 21.0 & 14.0 & 13.7 & 15.1 & 15.4 & 15.4 & 15.3 & 15.2 & 15.1 & 433.1 \\
        Mpg & 10.3 & 10.6 & 11.0 & 11.1 & 15.7 & 10.4 & 10.4 & 11.0 & 11.2 & 11.3 & 11.2 & 11.2 & 11.1 & 393.0 \\
        No2 & 13.1 & 13.6 & 14.0 & 14.4 & 20.3 & 13.4 & 13.3 & 14.5 & 14.6 & 14.6 & 14.6 & 14.6 & 14.5 & 404.1 \\
        Pm10 & 13.2 & 13.7 & 14.1 & 14.4 & 20.3 & 13.4 & 13.4 & 14.6 & 14.7 & 14.7 & 14.7 & 14.7 & 14.6 & 391.4 \\
        Qsar & 14.4 & 15.1 & 15.5 & 15.9 & 22.3 & 14.7 & 14.5 & 16.0 & 16.3 & 16.3 & 16.3 & 16.2 & 16.0 & 387.2 \\
        Wine Red & 44.8 & 54.3 & 51.8 & 67.2 & 70.1 & 47.1 & 45.4 & 93.0 & 87.5 & 87.8 & 87.2 & 87.2 & 85.5 & 541.5 \\
        Wine White & 152.7 & 391.8 & 286.9 & 695.1 & 274.9 & 181.2 & 150.1 & 2307.8 & 1373.5 & 1354.5 & 1340.8 & 1327.4 & 1300.8 & 1380.5 \\
        Yacht & 8.0 & 8.3 & 8.5 & 8.6 & 12.4 & 8.3 & 8.2 & 8.6 & 8.7 & 8.7 & 8.7 & 8.7 & 8.6 & 472.0 \\
        \textbf{MEDIAN} & \textbf{14.0} & \textbf{14.7} & \textbf{15.1} & \textbf{15.5} & \textbf{21.7} & \textbf{14.4} & \textbf{14.1} & \textbf{15.6} & \textbf{15.8} & \textbf{15.9} & \textbf{15.8} & \textbf{15.7} & \textbf{15.5} & \textbf{416.4} \\
        \bottomrule
    \end{tabular}
    \caption{Median runtime (seconds) across simulation seeds. `Uncert.' denotes Uncertainty Sampling.}
    \label{tab:RuntimeComparison}
\end{table*}

\clearpage
\section{Sensitivity Analyses}
\subsection{Additional Evaluation metrics}

\label{sec:OtherMetrics}

In addition to the mean RMSE performance the following additional metrics to provide insight into the predictive quality and stability of the proposed methods.

\textbf{Predictive Quality (Correlation Coefficient):}
While RMSE measures the magnitude of error, the Correlation Coefficient (CC) measures how well the model captures the linear relationship between predictions and true labels. We choose to include this metric as it is noted as a secondary performance metric in the original iGS work by \citet{iGS}. The CC trace plots in Figures \ref{fig:CCResults1} and \ref{fig:CCResults2} confirm that the improvements in RMSE are not artifacts of scaling. WiGS-SAC achieves a higher correlation with the true labels faster than the baselines, particularly in the early stages of learning. This early convergence is critical for active learning scenarios where the labeling budget is strictly limited.


\textbf{Stability (Variance of RMSE):}
The variance plots presented in Figures \ref{fig:VarianceResults1} and \ref{fig:VarianceResults2} illustrate the stability of the active learning process across the 100 random seeds. Despite the use of a stochastic reinforcement learning policy, WiGS-SAC exhibits a variance profile comparable to the deterministic iGS heuristic. This indicates that the adaptive agent does not sacrifice stability for performance; rather, it consistently identifies high-value samples across different initializations of the training set.

\subsection{Model Agnosticism: Results on Random Forest Regression}
\label{subsec:RF_Results}
To demonstrate the versatility of the WiGS framework beyond linear models, we replicated our experimental suite using a non-linear predictor. We initially employed Ridge Regression to align with the experimental protocol of the original iGS baseline \citep{iGS}. However, non-linear models like Random Forests are ubiquitous in practical active learning scenarios where the relationship between features and targets is complex and non-monotonic.

\textbf{Experimental Setup:} The experimental protocol remains identical to that described in Section \ref{subsec:experimental_protocol}, with the substitution of the predictor model and a reduction in the number of independent trials from 100 to 25. We utilized a Random Forest Regressor consisting of $100$ estimators, evaluating a random subset of $\sqrt{p}$ features at each split. A distinct random seed was used for each of the 25 replications to ensure reproducibility. To adapt the GSy and Uncertainty Sampling strategies for this non-linear model, we calculated predictive uncertainty by evaluating the standard deviation of the individual tree predictions within the ensemble. Specifically, for a given candidate, we computed the spread of responses across all 100 trees to serve as the uncertainty proxy, allowing the selector to target regions where the forest's internal consensus was lowest.

\textbf{Performance Analysis:} Figure \ref{fig:RF_Heatmap} presents the Relative AUC (RMSE) for all strategies compared to the iGS baseline. Consistent with the Ridge Regression results, the adaptive WiGS strategies (SAC and MAB) demonstrate robust performance across the benchmark suite. 

Notably, on the synthetic datasets designed to test the "density veto" hypothesis, WiGS-SAC achieves substantial reductions in error with Relative AUCs of $0.837$ (\texttt{dgp\_two\_regime}) and $0.709$ (\texttt{dgp\_three\_regime}). This confirms that the agent's ability to decouple exploration from investigation is particularly valuable in non-linear regimes where static multiplicative heuristics struggle to prioritize high-error samples in dense regions. While baselines like QBC show promise on specific synthetic tasks, they exhibit higher volatility on real-world benchmarks (e.g., performing significantly worse than the baseline on \texttt{wine\_red} with a relative error of $1.093$). In contrast, WiGS-SAC maintains a consistent risk profile, rarely exceeding the baseline error.

\textbf{Label Efficiency:} Figure \ref{fig:RF_Efficiency} illustrates the Relative Label Efficiency ($N_{rel}$). The adaptive WiGS variants consistently shift the efficiency distribution below $1.0$, indicating a reduction in the labeling budget required to achieve $70\%$ and $80\%$ of the total performance gain. Specifically, WiGS-SAC shows a tighter variance compared to methods like Uncertainty Sampling and EGAL, offering a more reliable active learning trajectory for non-linear models.

\begin{figure}[htbp]
    \centering
    \includegraphics[width=\textwidth]{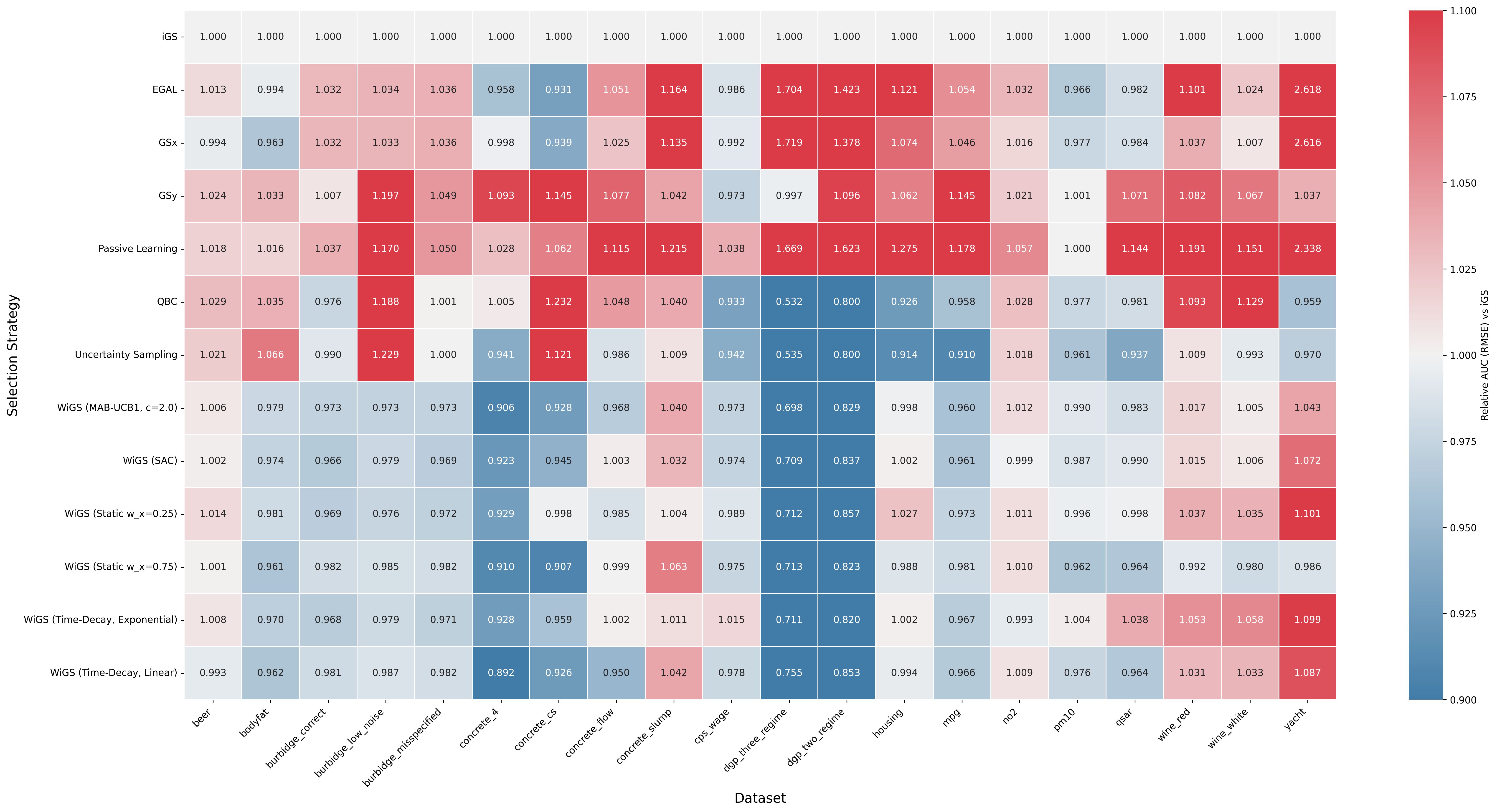}
    \caption{\textbf{Random Forest Results (Heatmap):} Values represent the ratio of the Area Under the RMSE Curve (AUC) relative to the iGS baseline using a Random Forest Regressor. Blue cells ($<1.0$) indicate superior performance.}
    \label{fig:RF_Heatmap}
\end{figure}

\begin{figure}[htbp]
    \centering
    \includegraphics[width=0.8\textwidth]{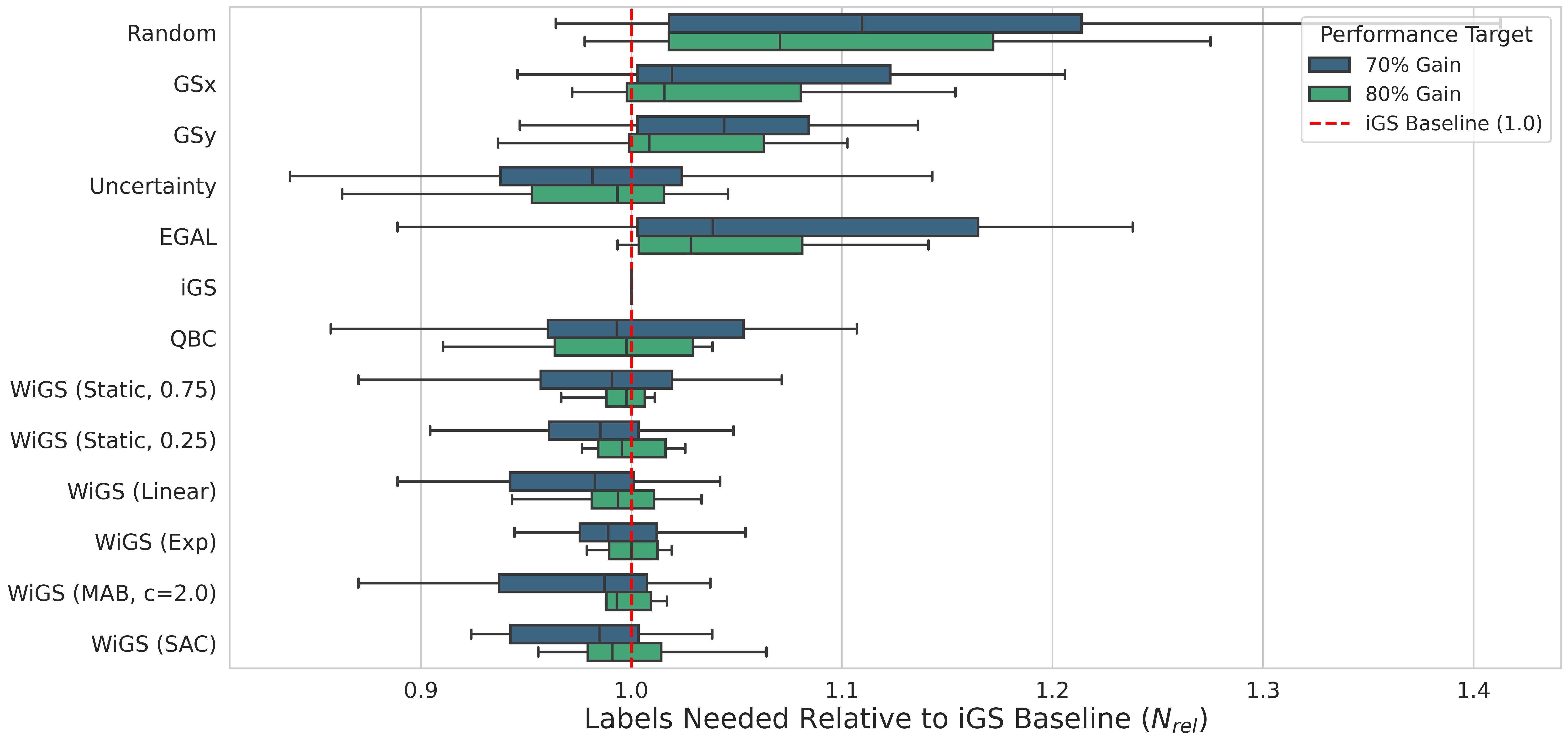} 
    \caption{\textbf{Random Forest Results (Efficiency):} Relative Label Efficiency ($N_{rel}$) aggregated across 20 datasets.}
    \label{fig:RF_Efficiency}
\end{figure}
\begin{figure}
    \centering
    \vspace*{-1cm} 
    
    \begin{subfigure}[b]{0.31\textwidth}
        \centering
        \includegraphics[width=\linewidth, keepaspectratio]{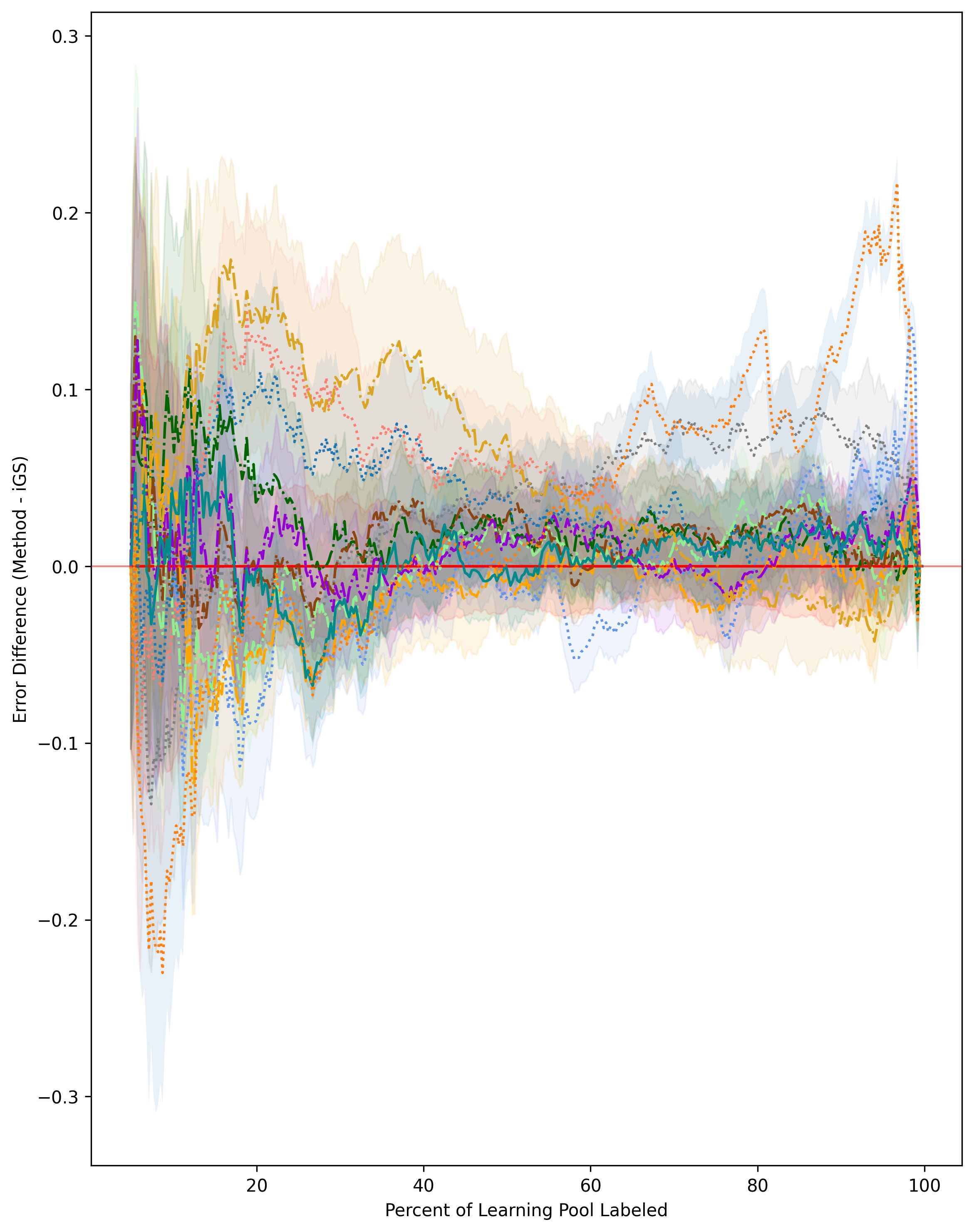}
        \caption{beer}
    \end{subfigure}
    \hfill
    \begin{subfigure}[b]{0.31\textwidth}
        \centering
        \includegraphics[width=\linewidth, keepaspectratio]{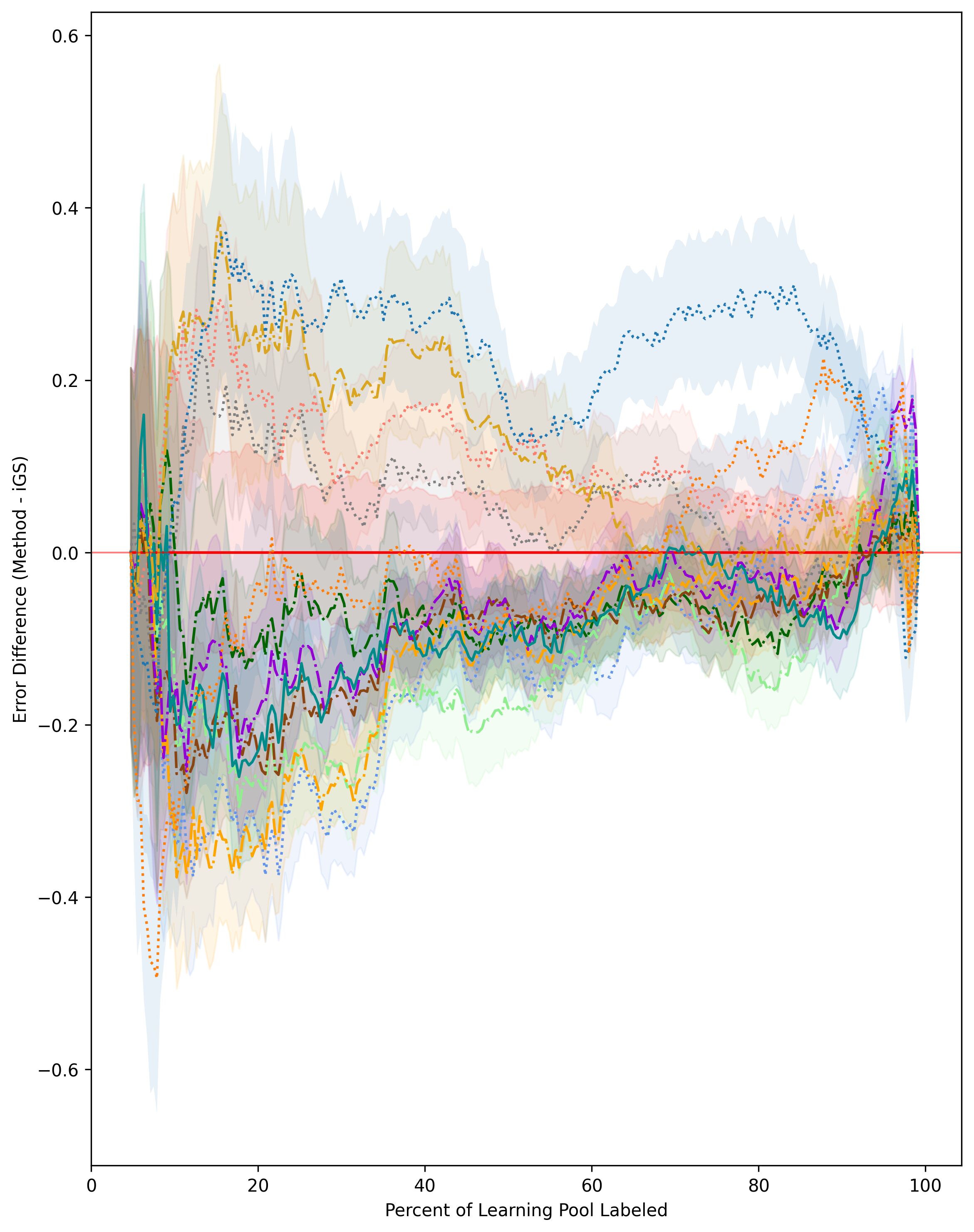}
        \caption{bodyfat}
    \end{subfigure}
    \hfill
    \begin{subfigure}[b]{0.31\textwidth}
        \centering
        \includegraphics[width=\linewidth, keepaspectratio]{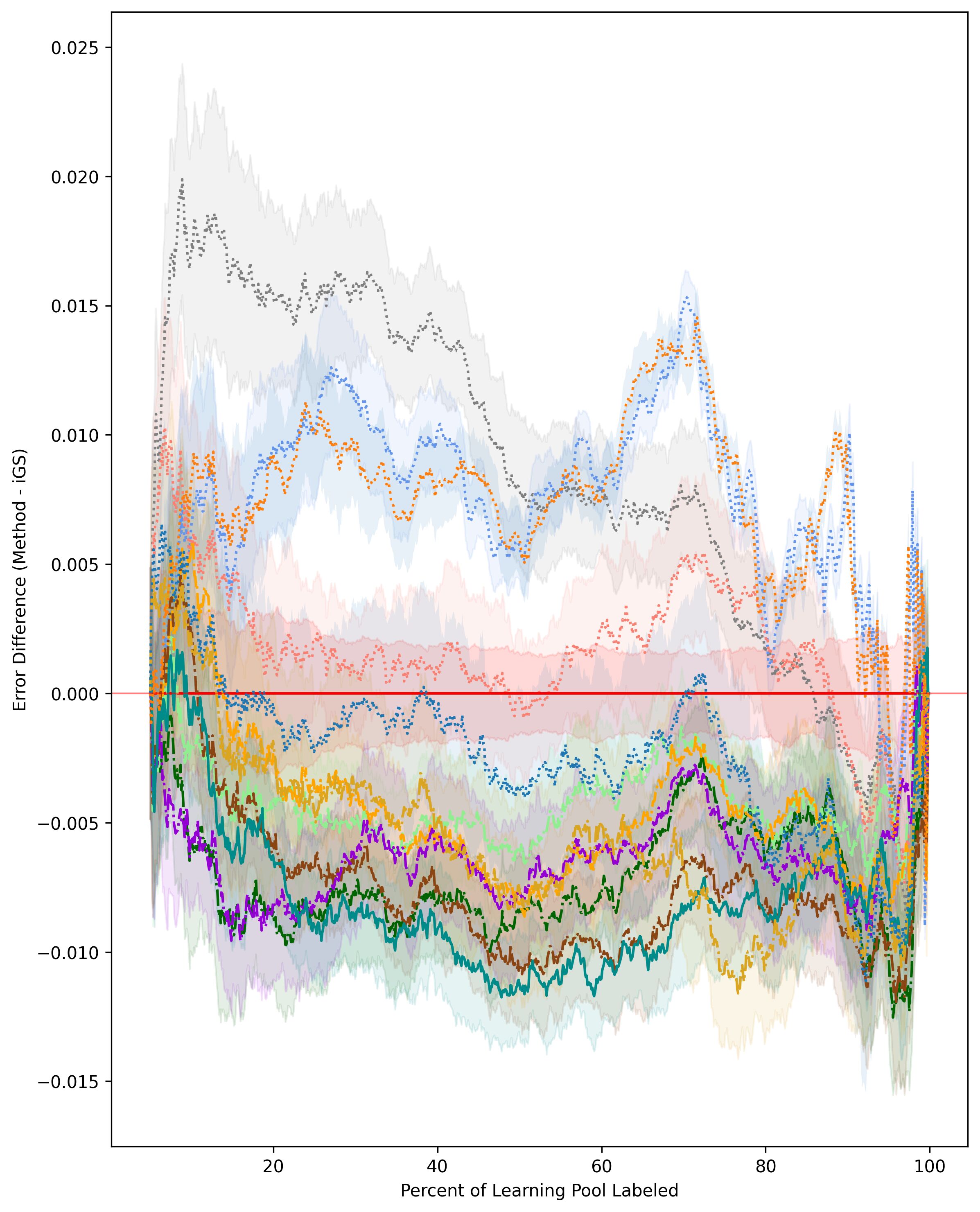}
        \caption{burbidge\_correct}
    \end{subfigure}
    
    \vspace{0.3em} 
    
    \begin{subfigure}[b]{0.31\textwidth}
        \centering
        \includegraphics[width=\linewidth, keepaspectratio]{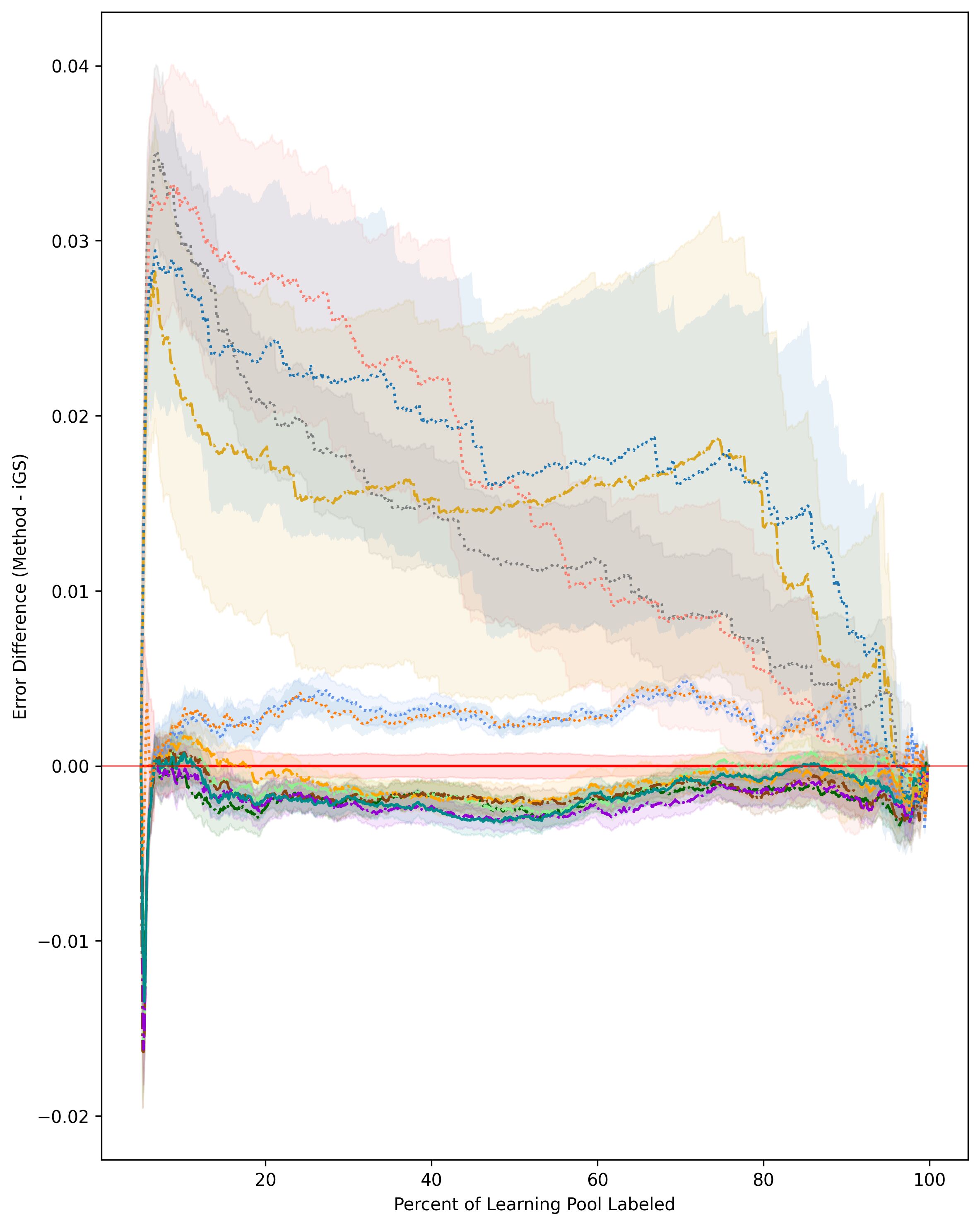}
        \caption{burbidge\_low\_noise}
    \end{subfigure}
    \hfill
    \begin{subfigure}[b]{0.31\textwidth}
        \centering
        \includegraphics[width=\linewidth, keepaspectratio]{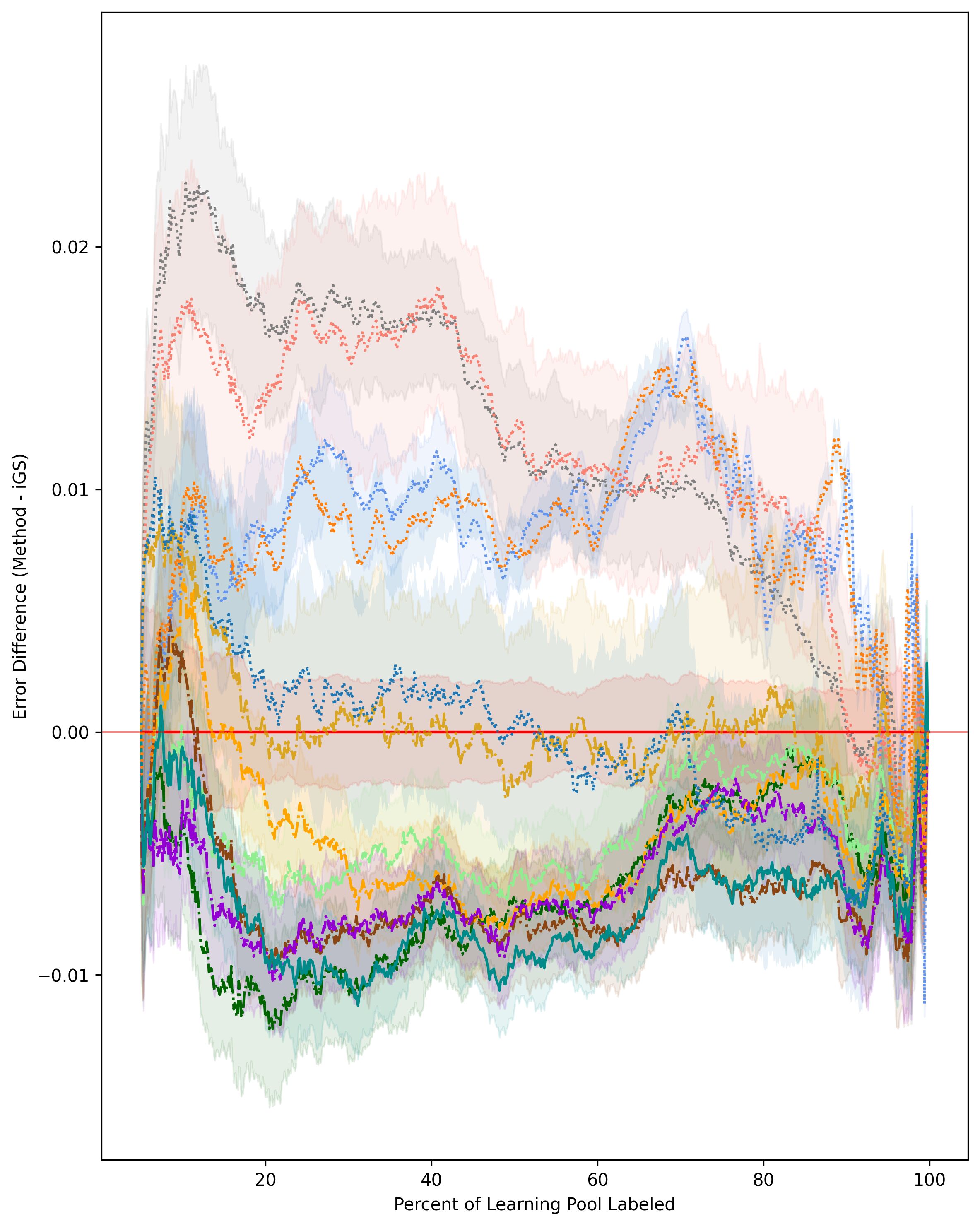}
        \caption{burbidge\_misspecified}
    \end{subfigure}
    \hfill
    \begin{subfigure}[b]{0.31\textwidth}
        \centering
        \includegraphics[width=\linewidth, keepaspectratio]{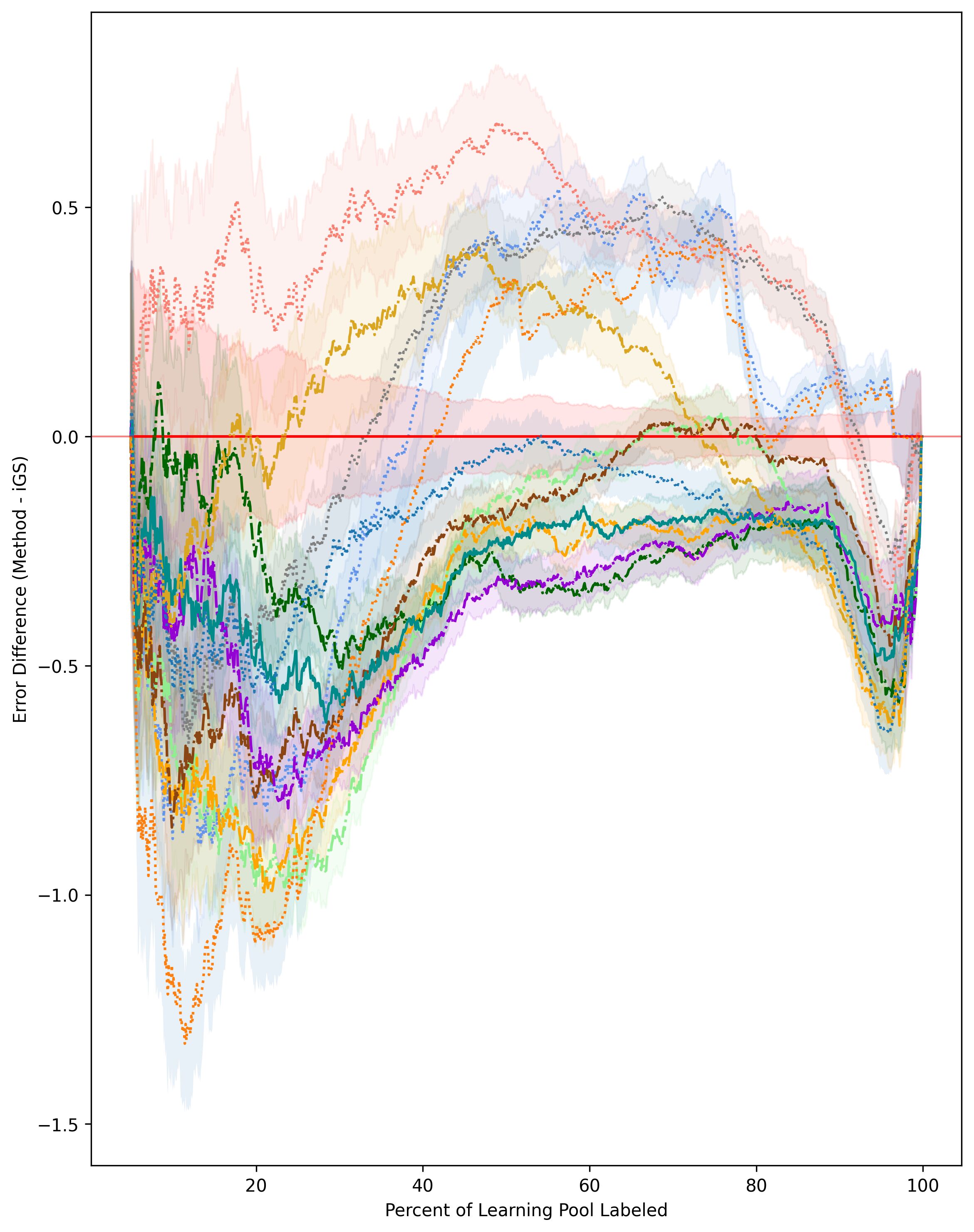}
        \caption{concrete\_4}
    \end{subfigure}
    
    \vspace{0.3em}
    
    \begin{subfigure}[b]{0.31\textwidth}
        \centering
        \includegraphics[width=\linewidth, keepaspectratio]{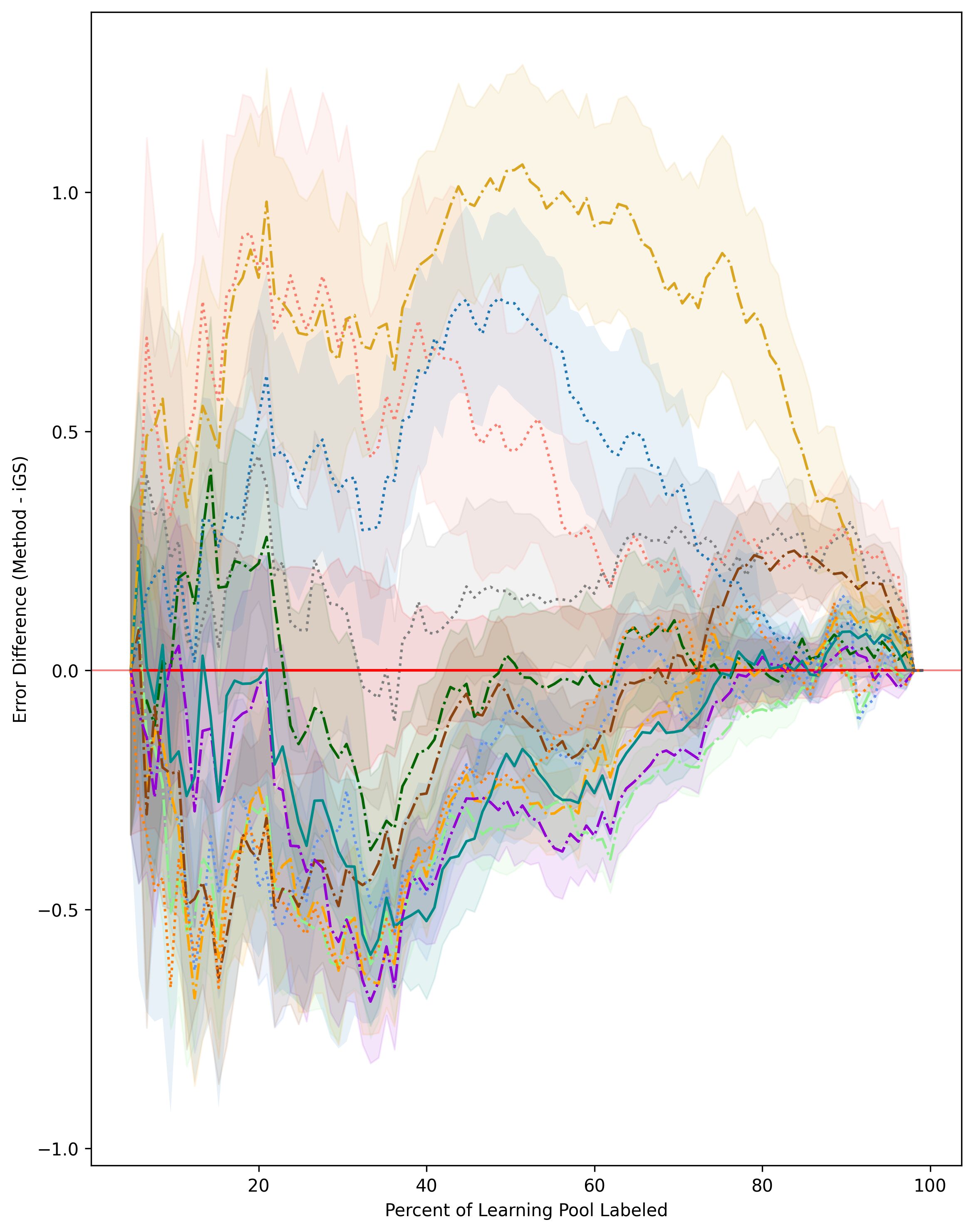}
        \caption{concrete\_cs}
    \end{subfigure}
    \hfill
    \begin{subfigure}[b]{0.31\textwidth}
        \centering
        \includegraphics[width=\linewidth, keepaspectratio]{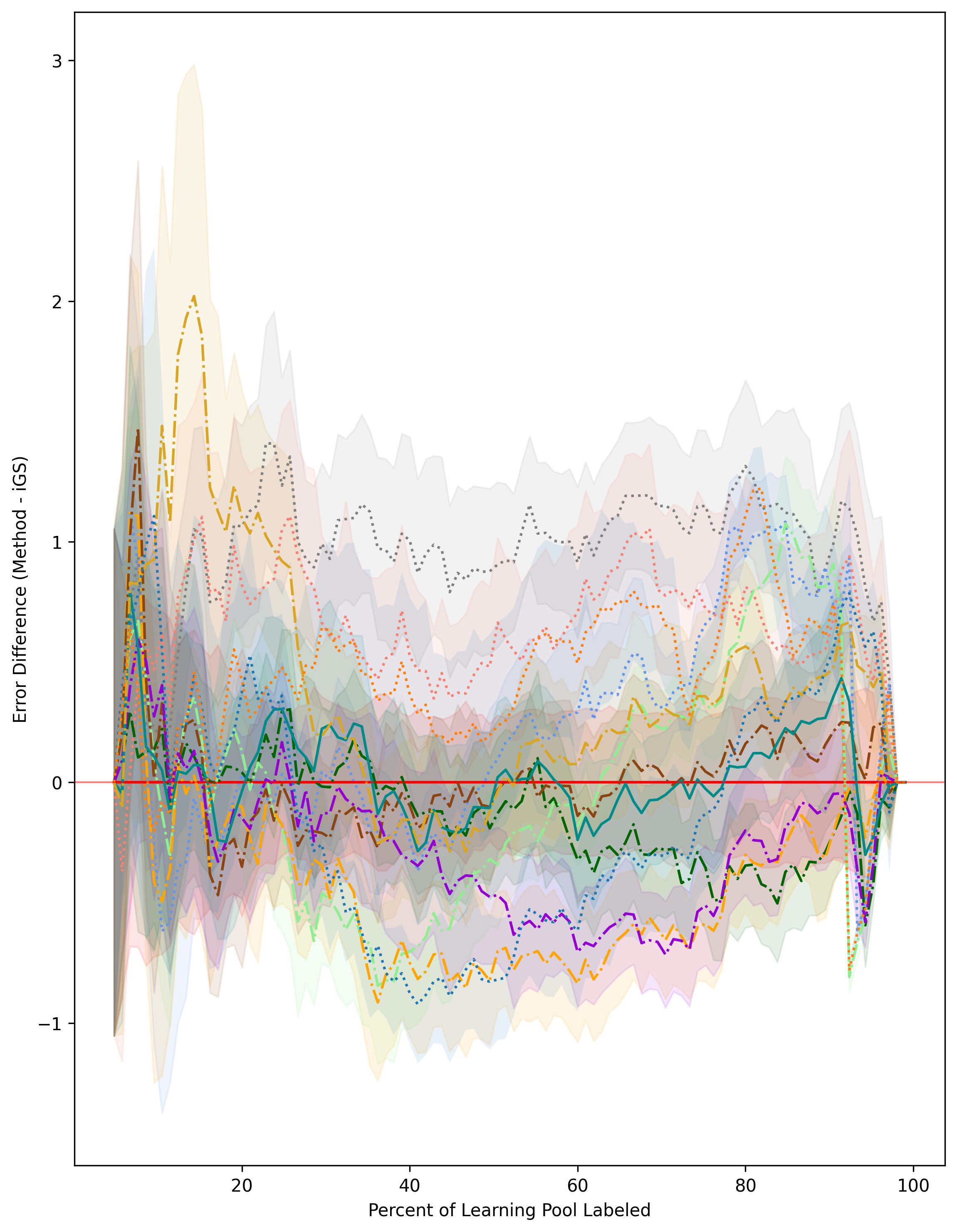}
        \caption{concrete\_flow}
    \end{subfigure}
    \hfill
    \begin{subfigure}[b]{0.31\textwidth}
        \centering
        \includegraphics[width=\linewidth, keepaspectratio]{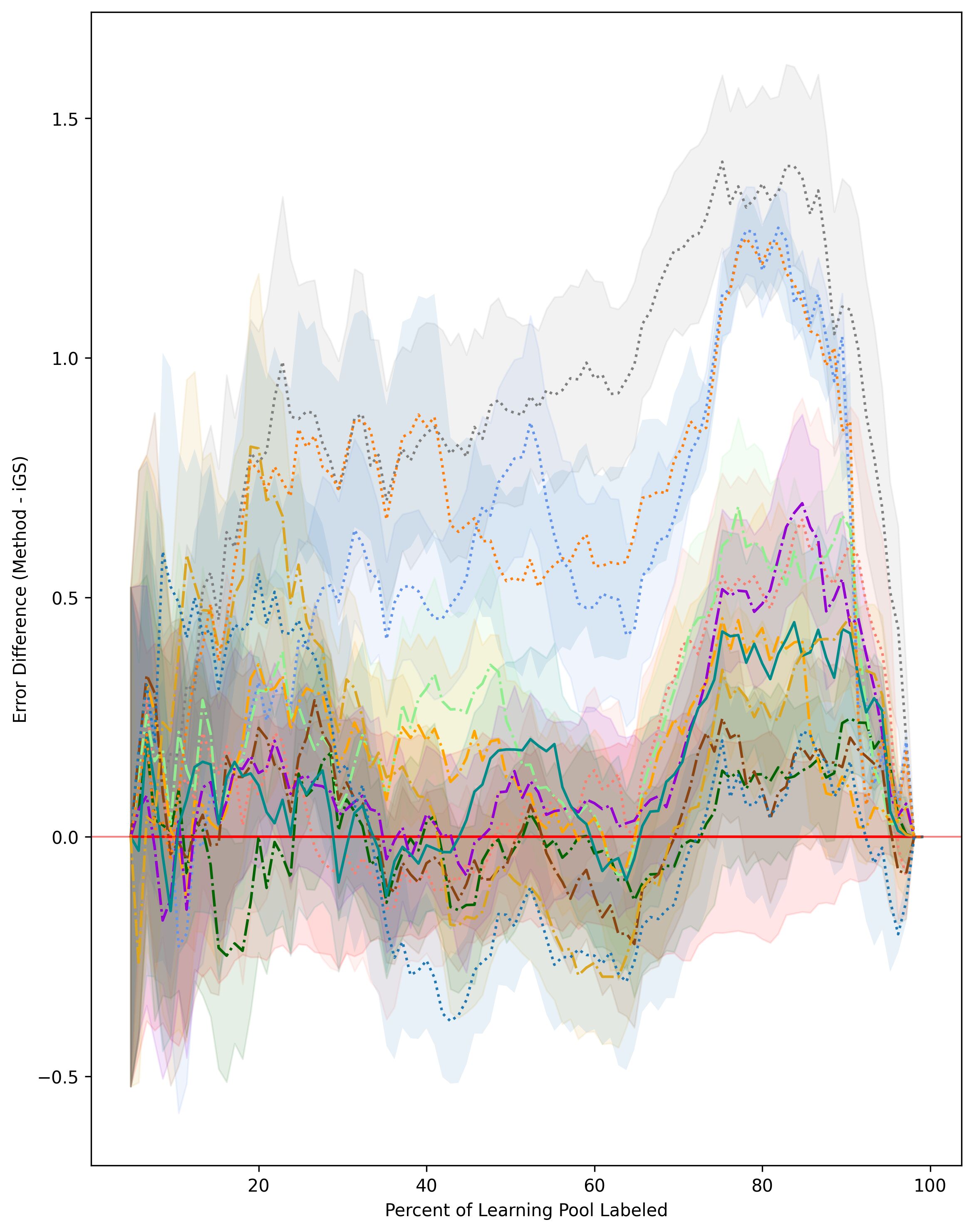}
        \caption{concrete\_slump}
    \end{subfigure}
    
    \vspace{0.5em}
    \centering
    \caption{Full-pool Random Forest RMSE trace plots for benchmark datasets (Part 1 of 2).}
    \label{fig:RFResults1}
\end{figure}

\clearpage
\begin{figure}
    \centering
    \vspace*{-1cm} 
    
    \begin{subfigure}[b]{0.31\textwidth}
        \centering
        \includegraphics[width=\linewidth, keepaspectratio]{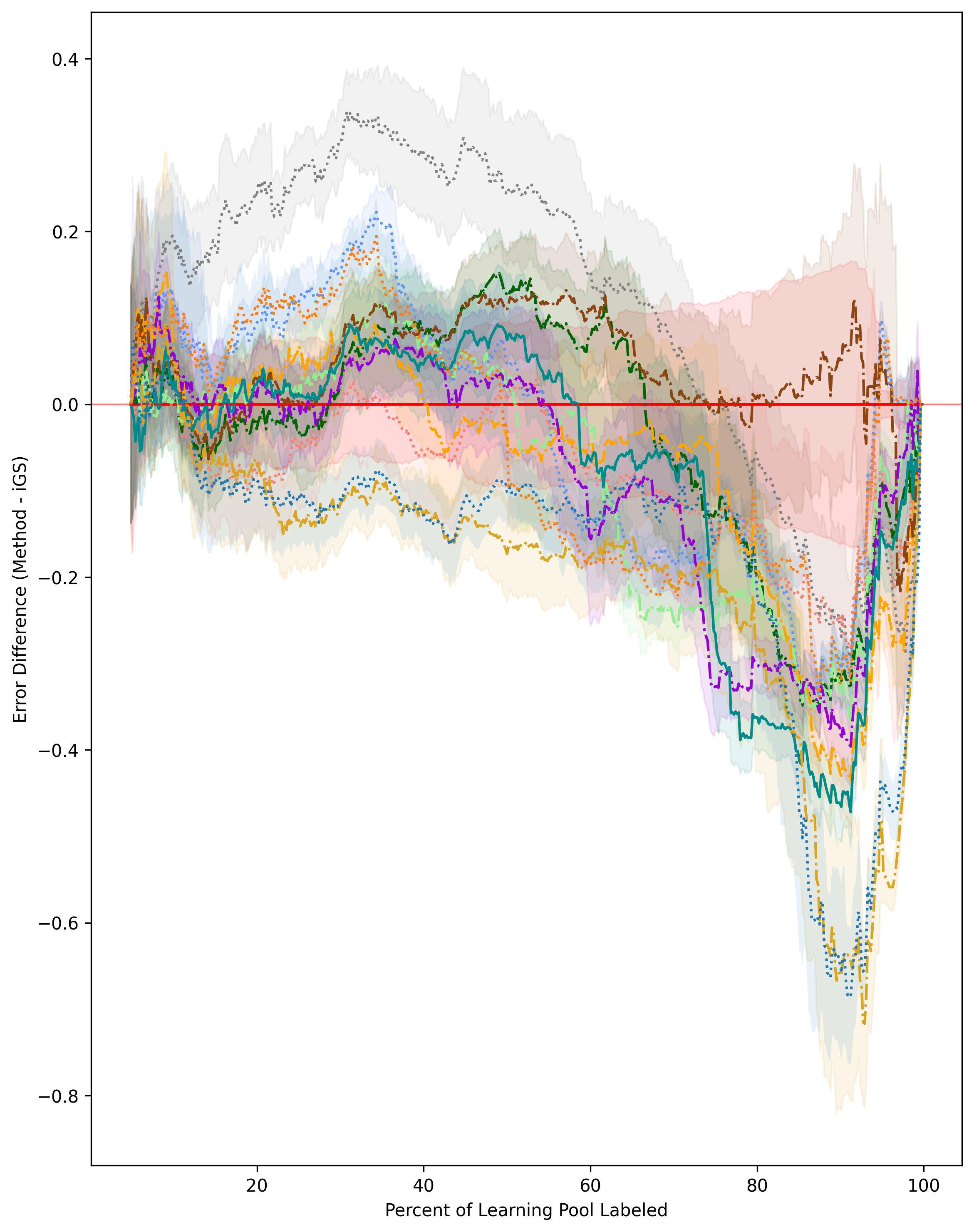}
        \caption{cps\_wage}
    \end{subfigure}
    \hfill
    \begin{subfigure}[b]{0.31\textwidth}
        \centering
        \includegraphics[width=\linewidth, keepaspectratio]{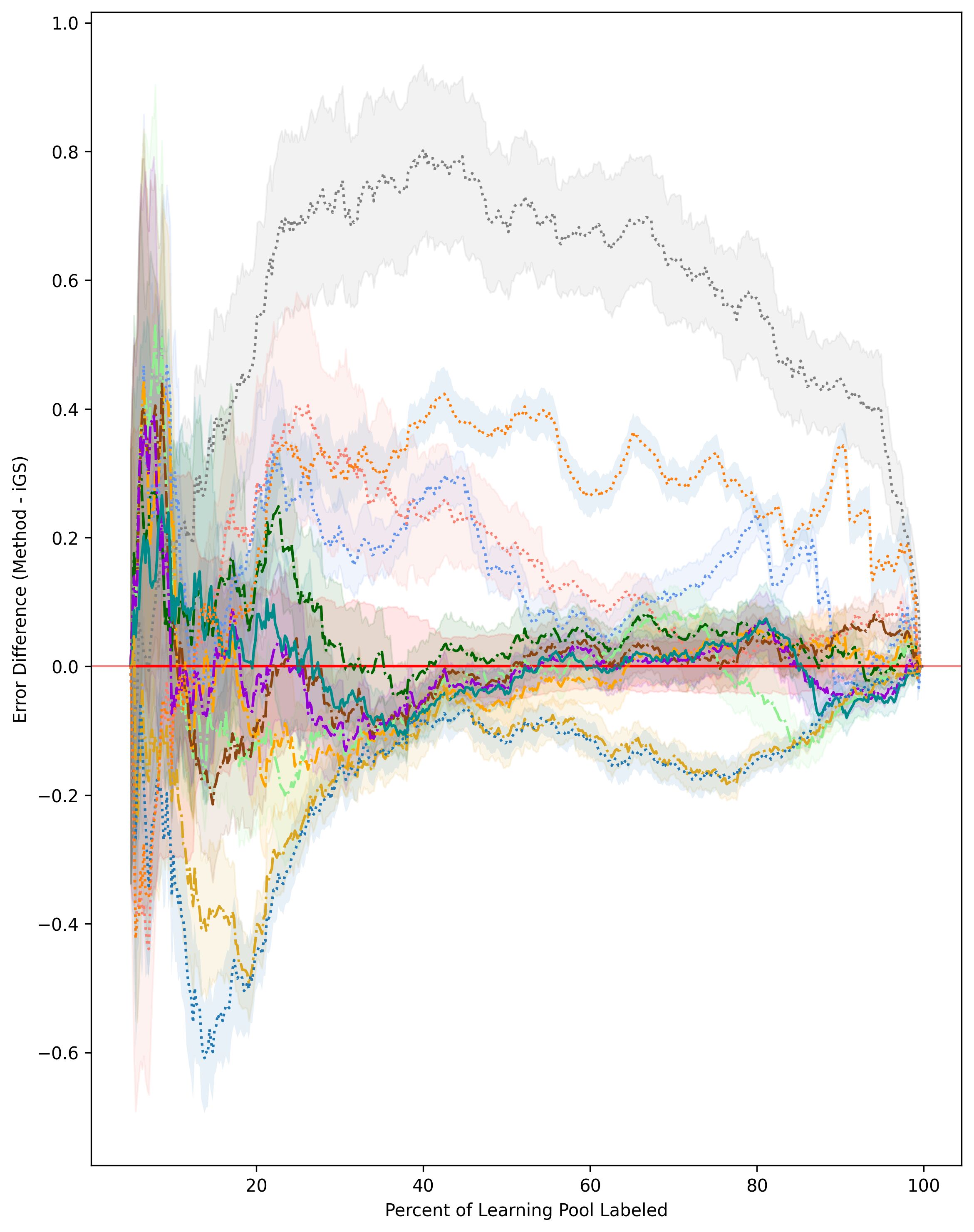}
        \caption{housing}
    \end{subfigure}
    \hfill
    \begin{subfigure}[b]{0.31\textwidth}
        \centering
        \includegraphics[width=\linewidth, keepaspectratio]{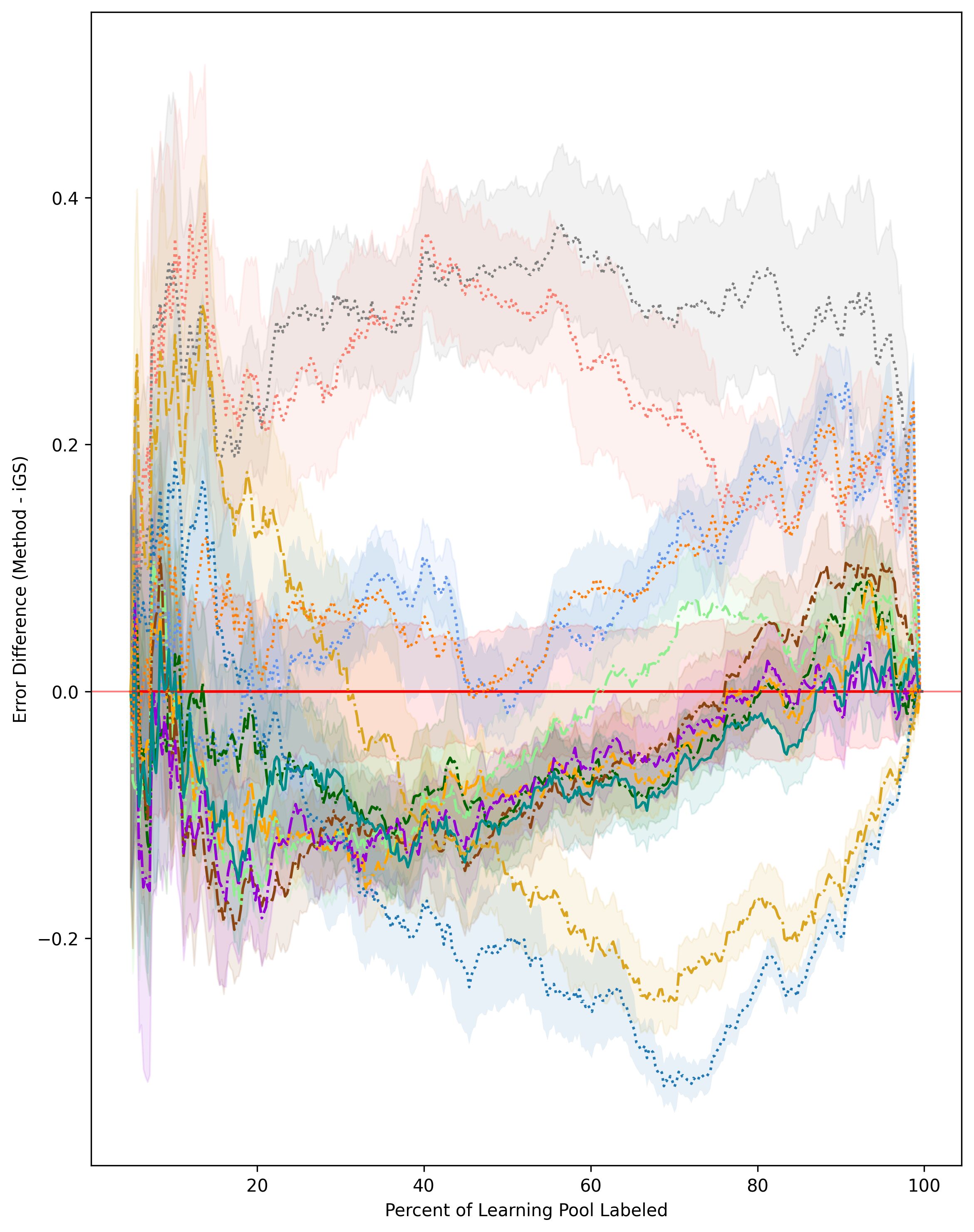}
        \caption{mpg}
    \end{subfigure}
    
    \vspace{0.3em}
    
    \begin{subfigure}[b]{0.31\textwidth}
        \centering
        \includegraphics[width=\linewidth, keepaspectratio]{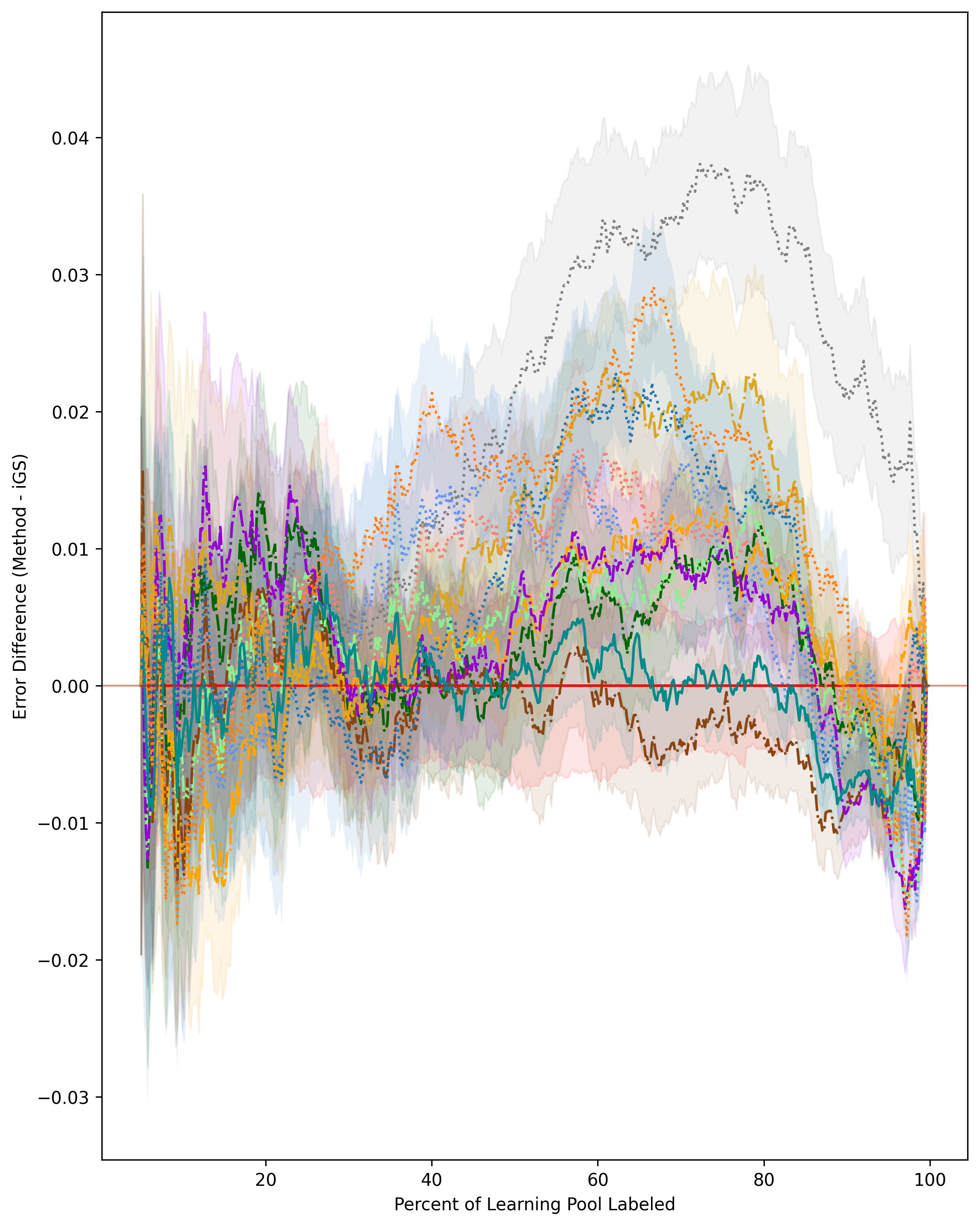}
        \caption{no2}
    \end{subfigure}
    \hfill
    \begin{subfigure}[b]{0.31\textwidth}
        \centering
        \includegraphics[width=\linewidth, keepaspectratio]{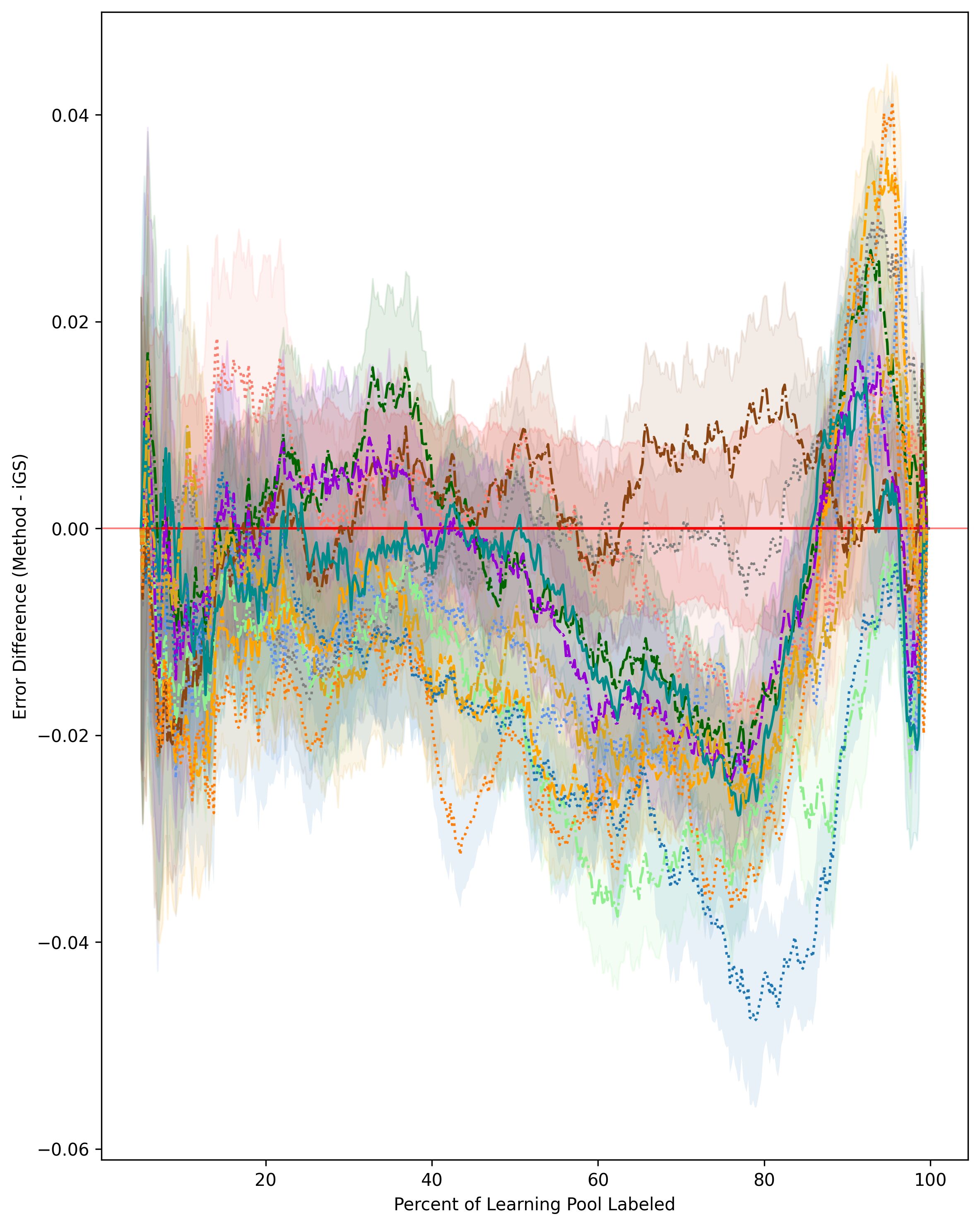}
        \caption{pm10}
    \end{subfigure}
    \hfill
    \begin{subfigure}[b]{0.31\textwidth}
        \centering
        \includegraphics[width=\linewidth, keepaspectratio]{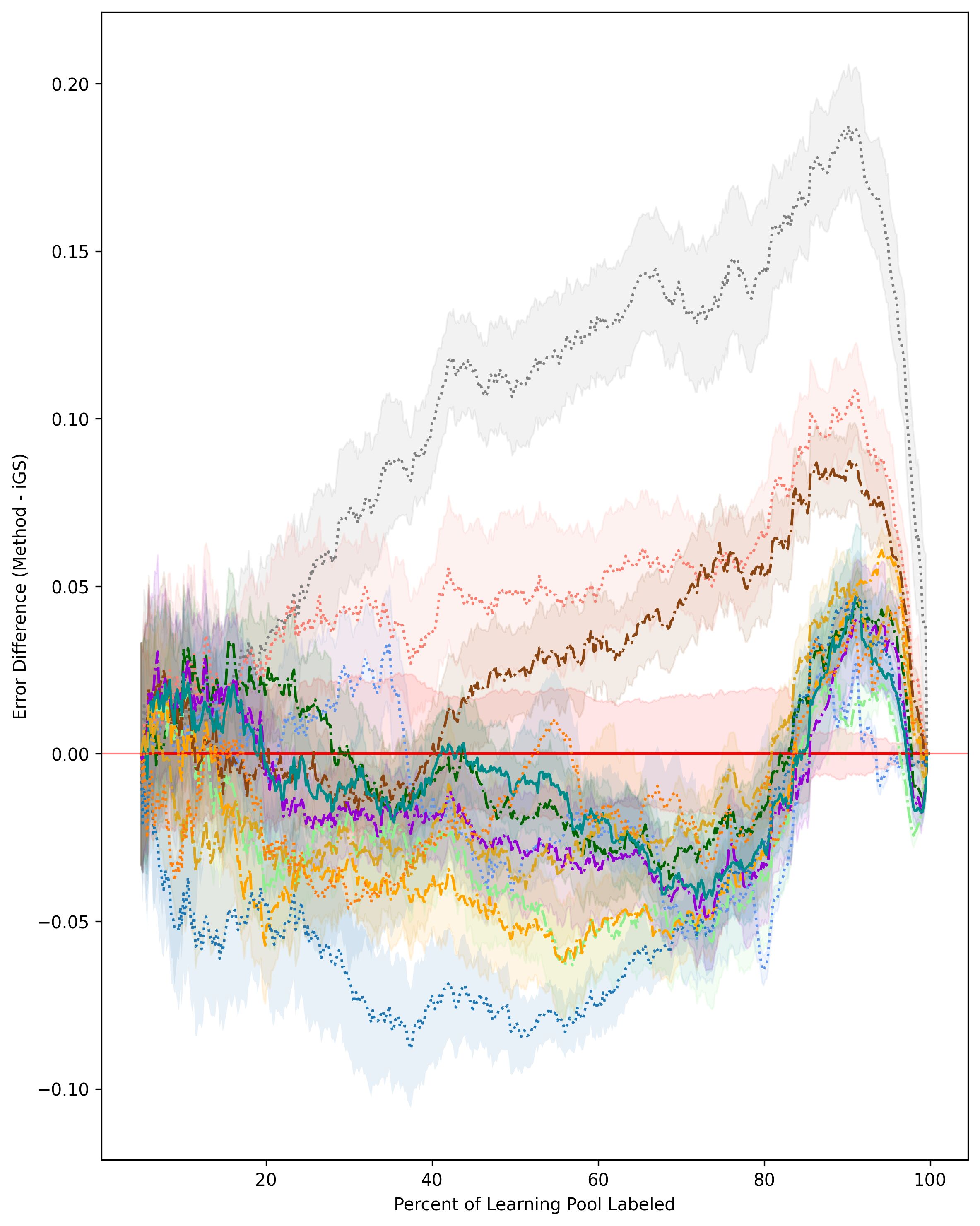}
        \caption{qsar}
    \end{subfigure}
    
    \vspace{0.3em}
    
    \begin{subfigure}[b]{0.31\textwidth}
        \centering
        \includegraphics[width=\linewidth, keepaspectratio]{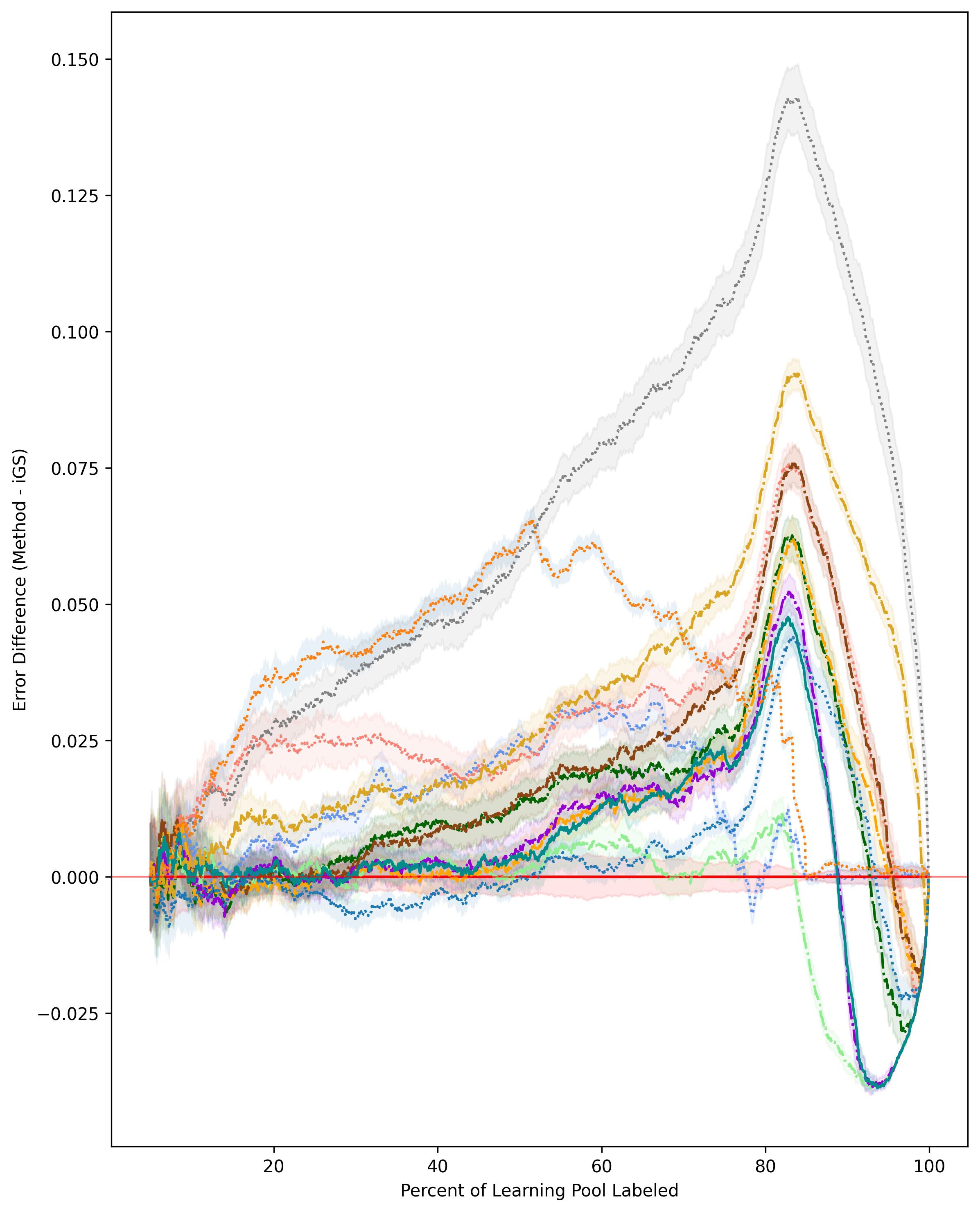}
        \caption{wine\_red}
    \end{subfigure}
    \hfill
    \begin{subfigure}[b]{0.31\textwidth}
        \centering
        \includegraphics[width=\linewidth, keepaspectratio]{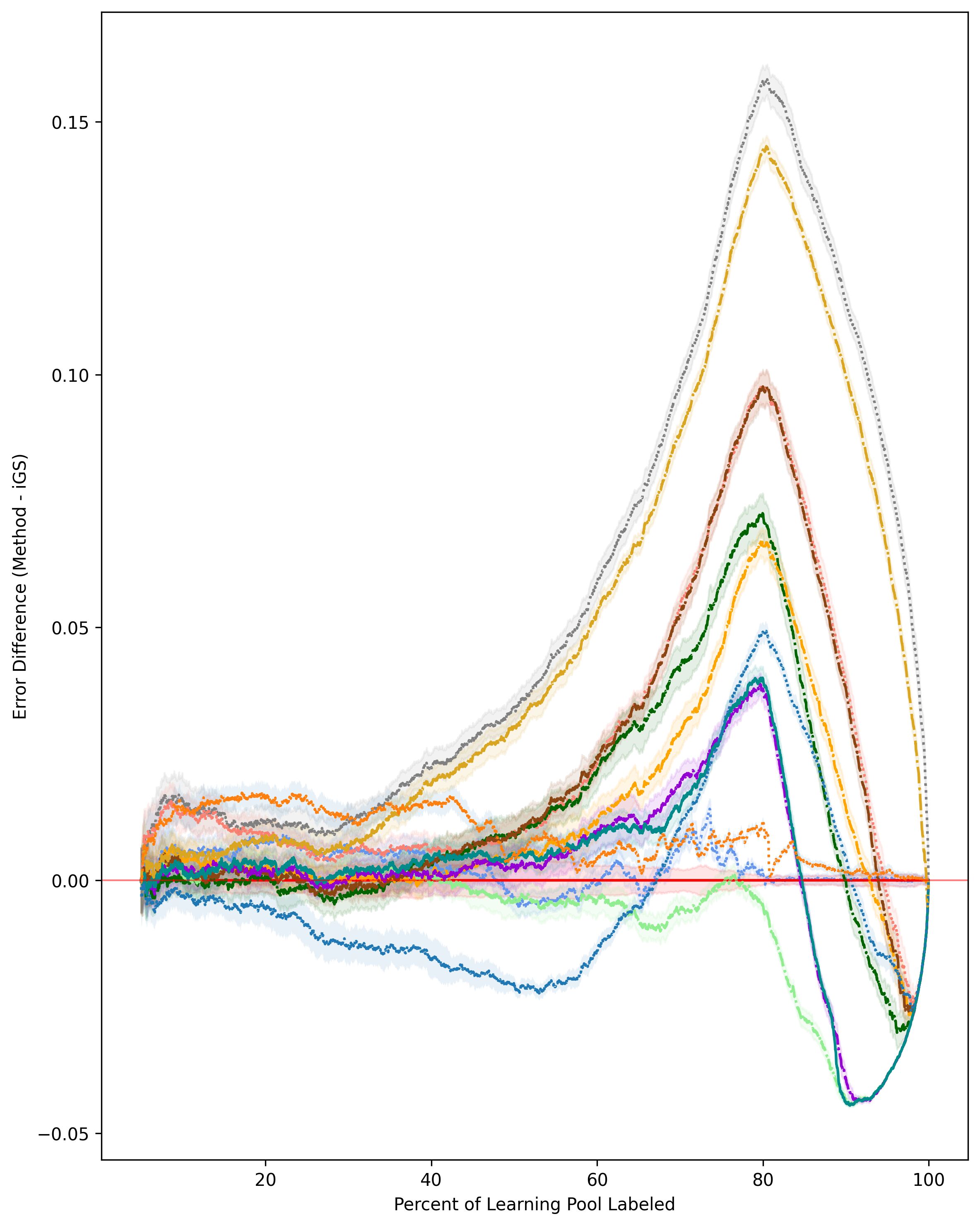}
        \caption{wine\_white}
    \end{subfigure}
    \hfill
    \begin{subfigure}[b]{0.31\textwidth}
        \centering
        \includegraphics[width=\linewidth, keepaspectratio]{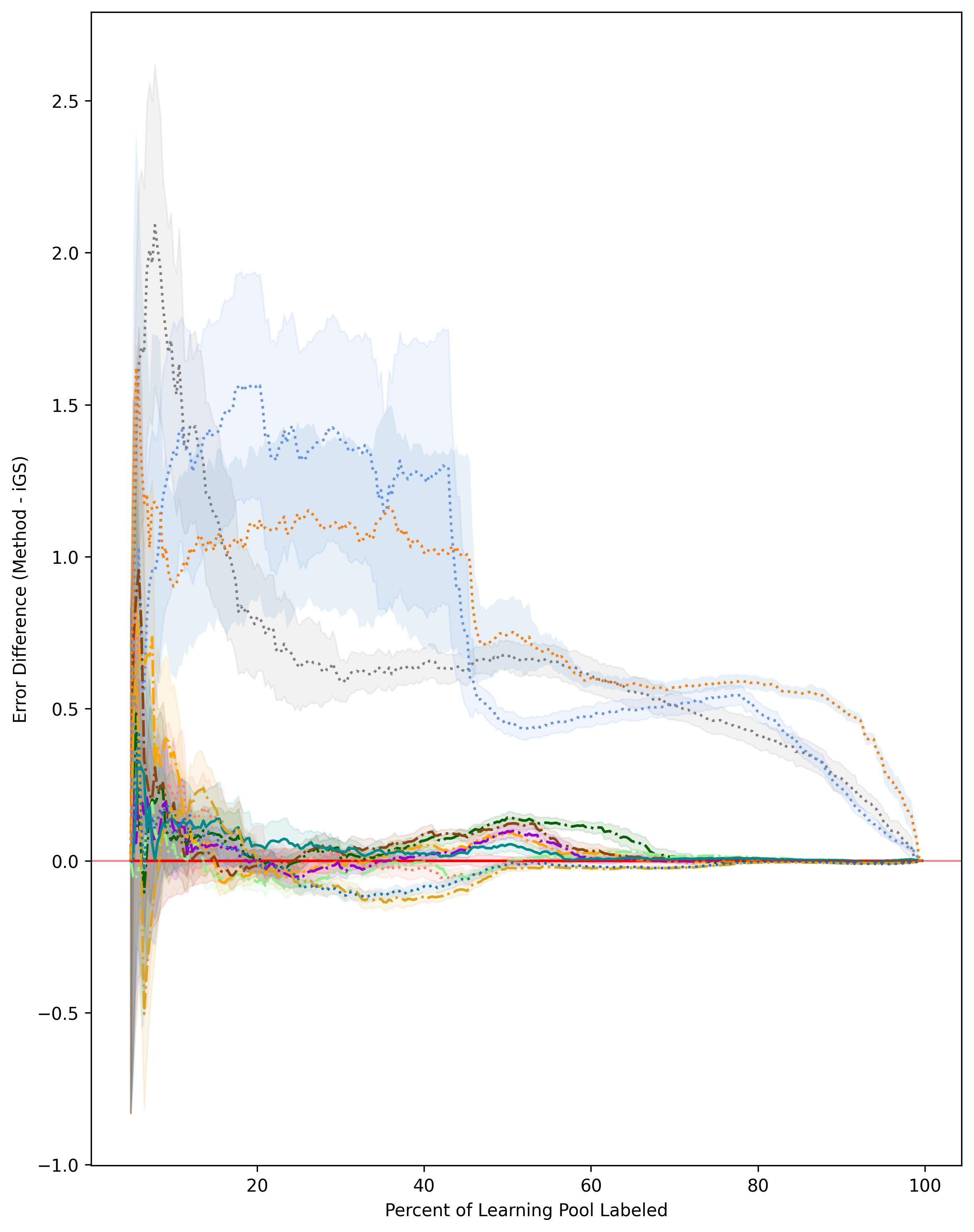}
        \caption{yacht}
    \end{subfigure}
    
    \vspace{0.5em}
    \centering
    \caption{Full-pool Random Forest RMSE trace plots for benchmark datasets (Part 2 of 2).}
    \label{fig:RFResults2}
\end{figure}
\clearpage
\begin{figure}
    \centering
    \vspace*{-1cm} 
    
    \begin{subfigure}[b]{0.31\textwidth}
        \centering
        \includegraphics[width=\linewidth, keepaspectratio]{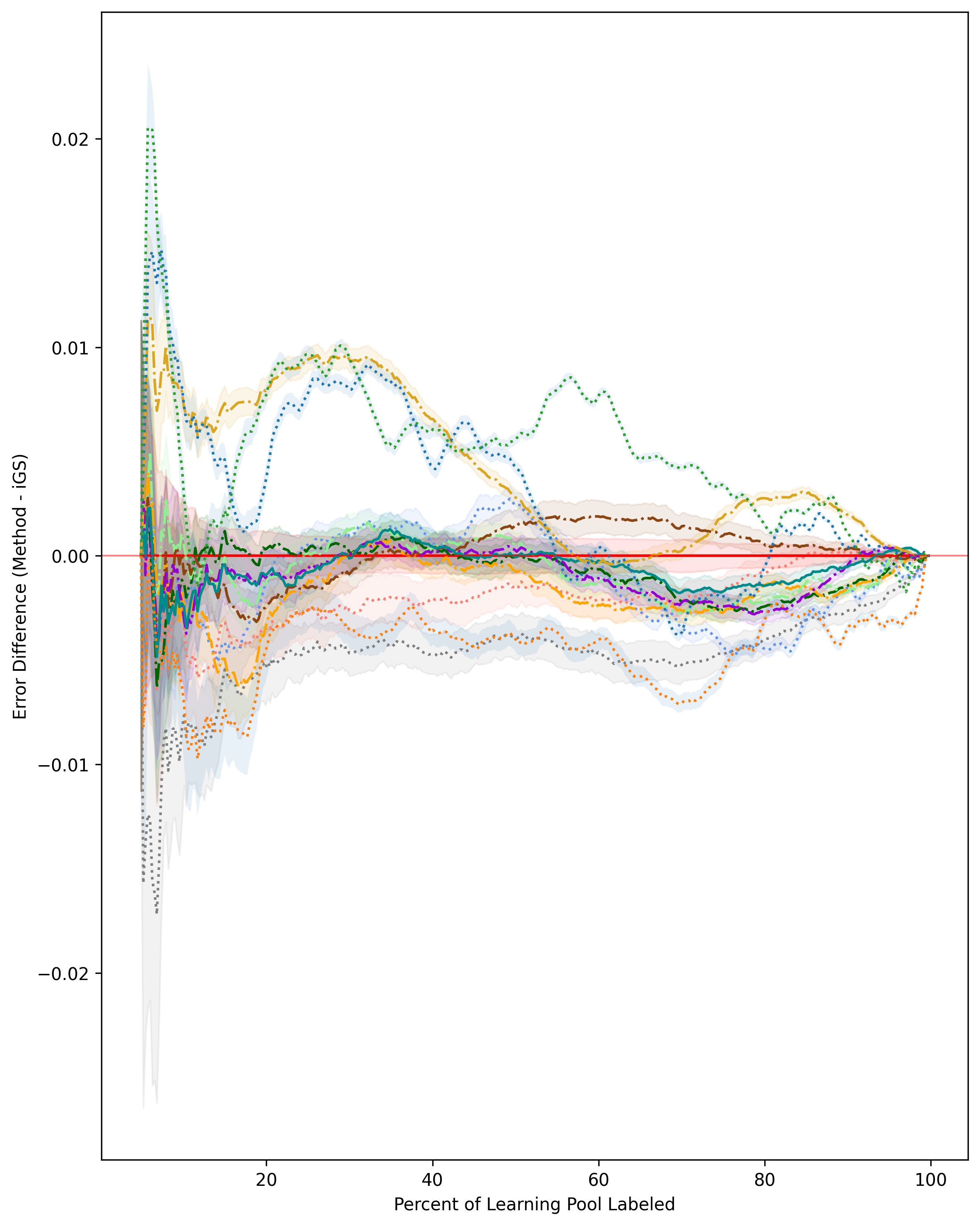}
        \caption{beer}
    \end{subfigure}
    \hfill
    \begin{subfigure}[b]{0.31\textwidth}
        \centering
        \includegraphics[width=\linewidth, keepaspectratio]{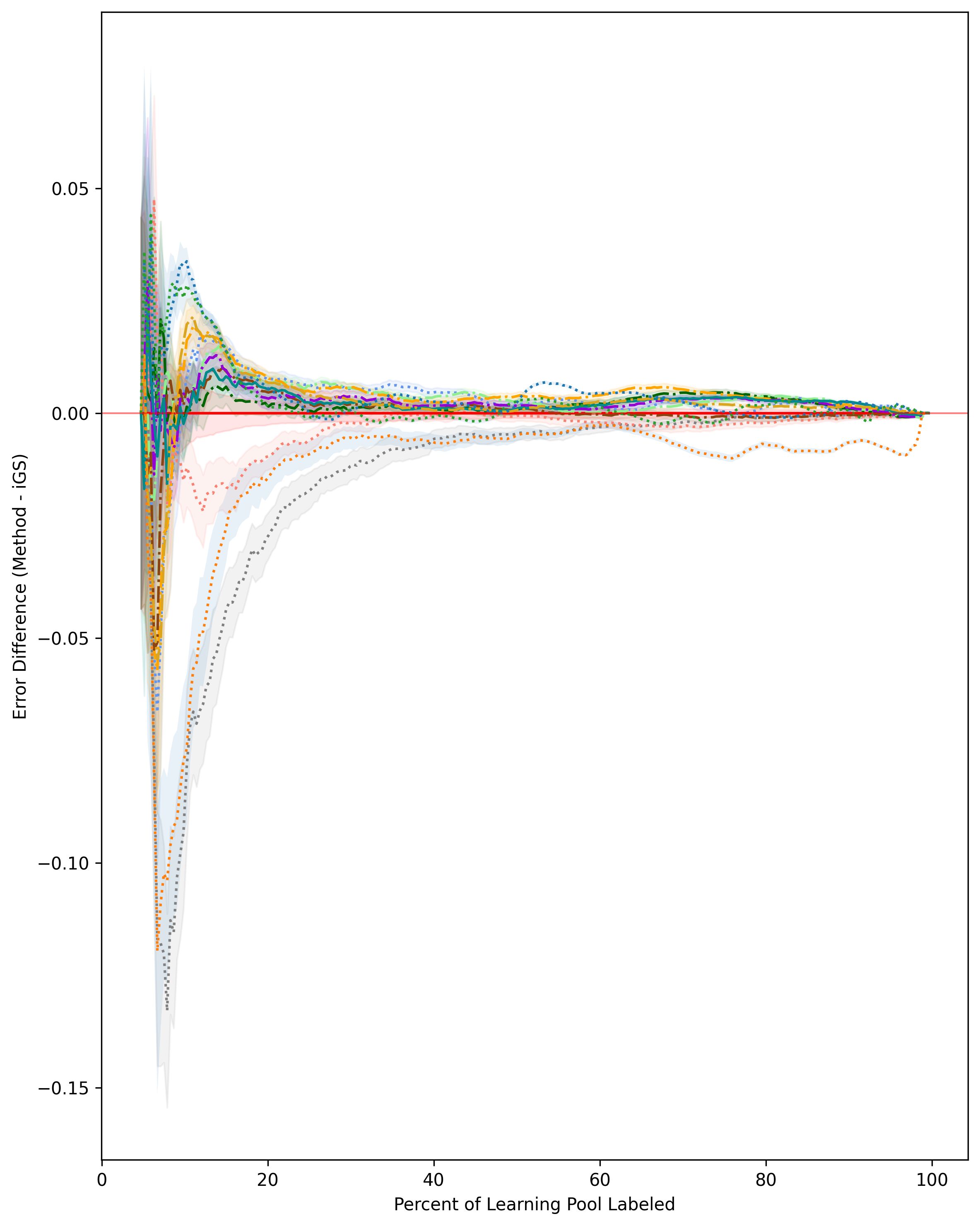}
        \caption{bodyfat}
    \end{subfigure}
    \hfill
    \begin{subfigure}[b]{0.31\textwidth}
        \centering
        \includegraphics[width=\linewidth, keepaspectratio]{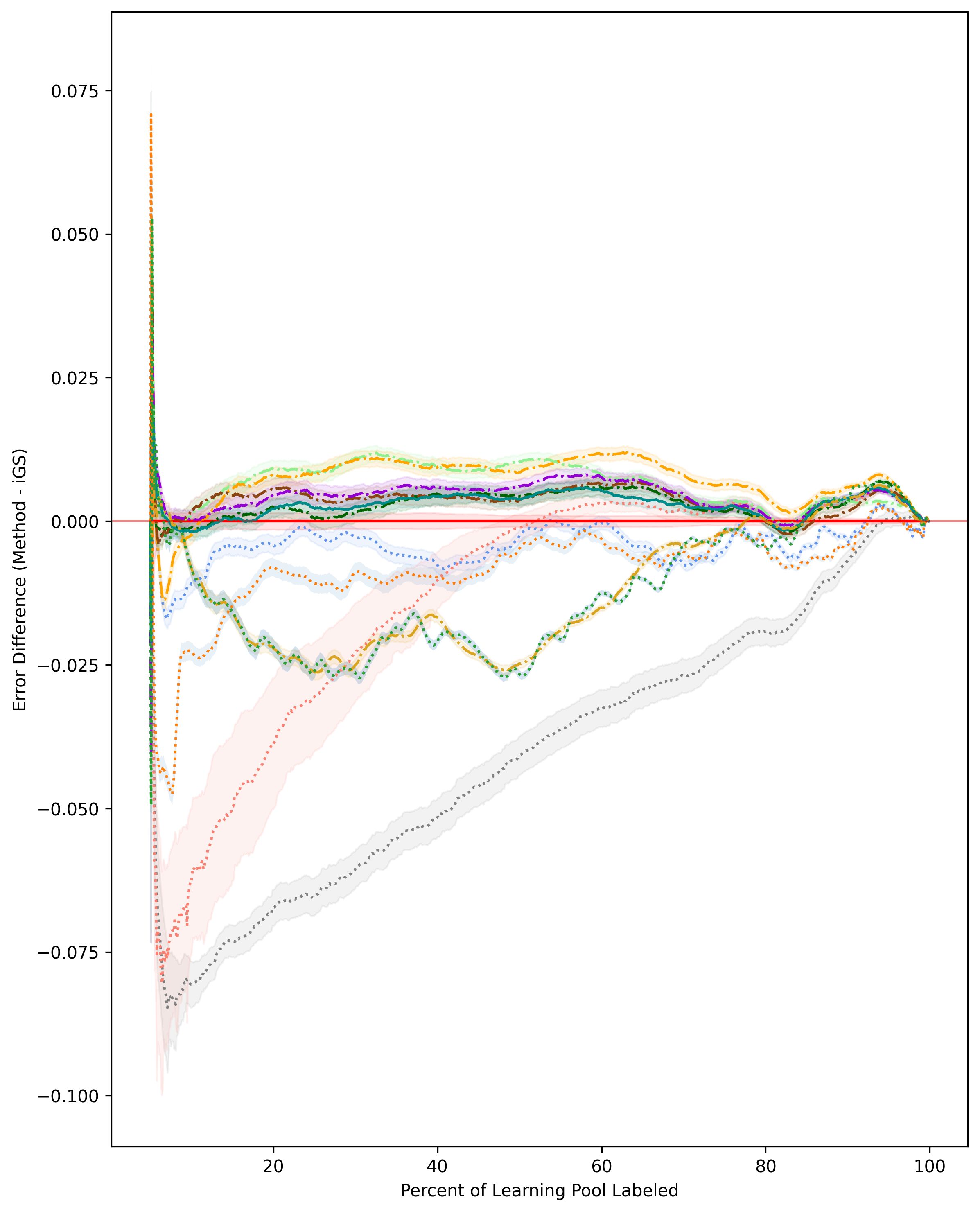}
        \caption{burbidge\_correct}
    \end{subfigure}
    
    \vspace{0.3em} 
    
    \begin{subfigure}[b]{0.31\textwidth}
        \centering
        \includegraphics[width=\linewidth, keepaspectratio]{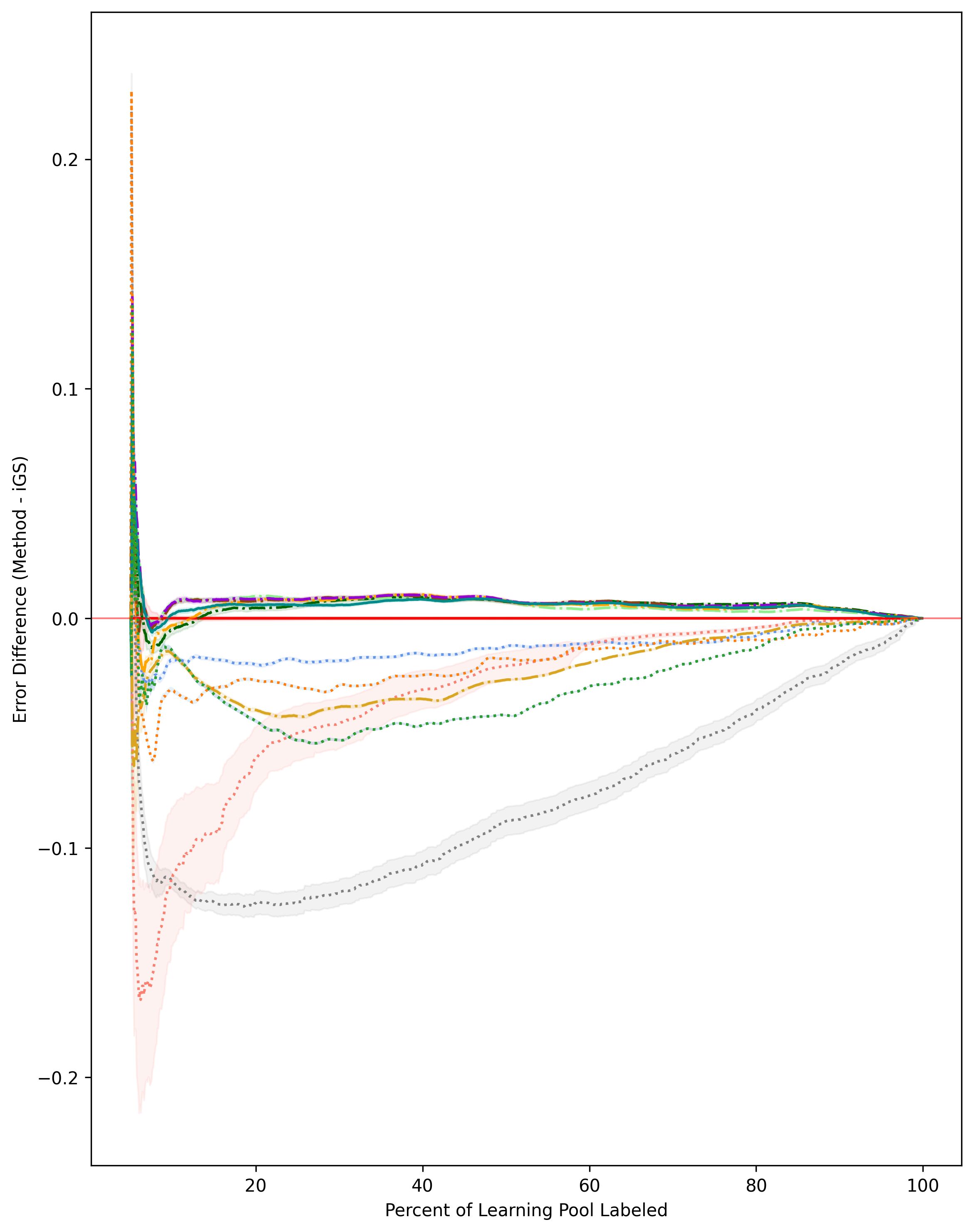}
        \caption{burbidge\_low\_noise}
    \end{subfigure}
    \hfill
    \begin{subfigure}[b]{0.31\textwidth}
        \centering
        \includegraphics[width=\linewidth, keepaspectratio]{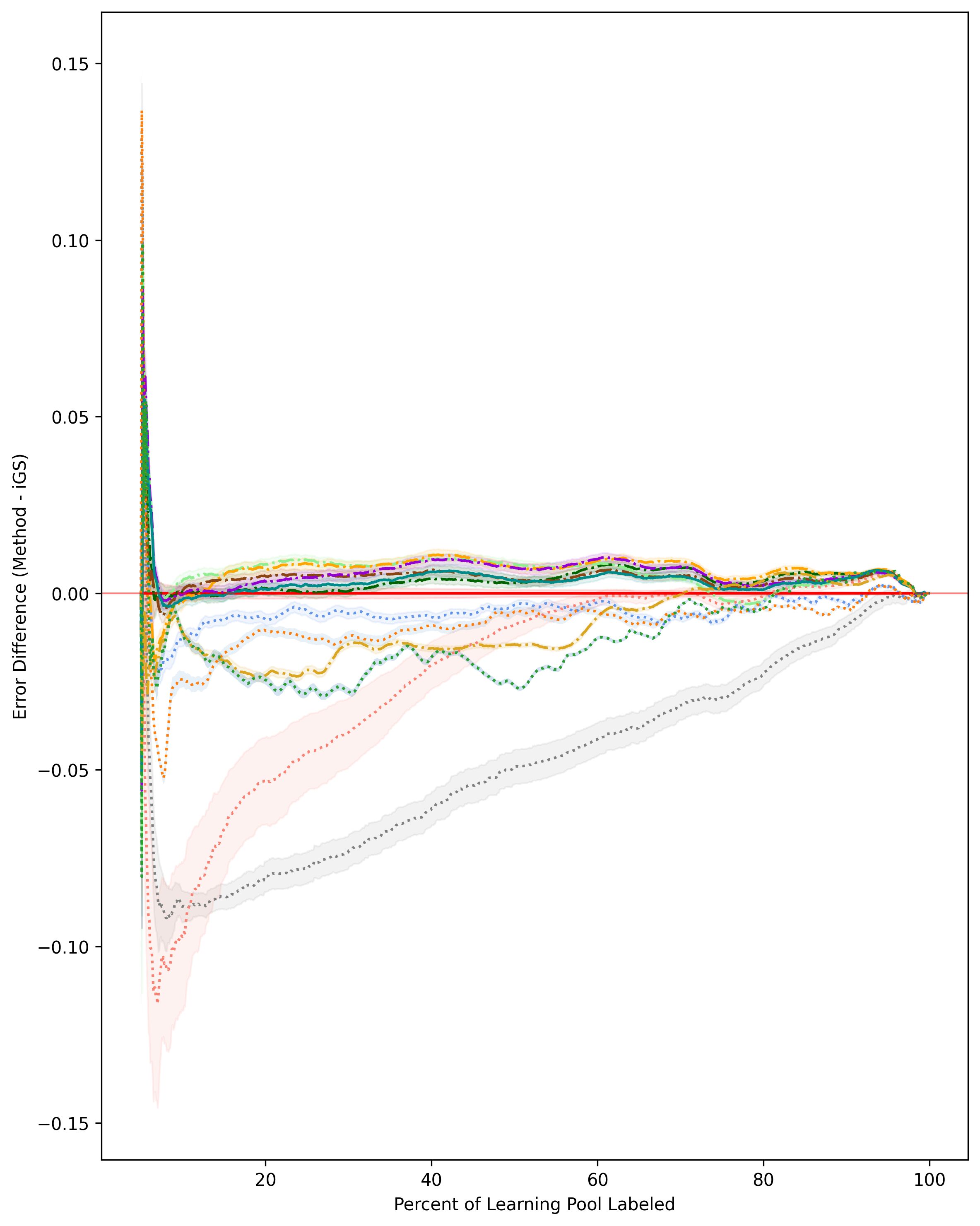}
        \caption{burbidge\_misspecified}
    \end{subfigure}
    \hfill
    \begin{subfigure}[b]{0.31\textwidth}
        \centering
        \includegraphics[width=\linewidth, keepaspectratio]{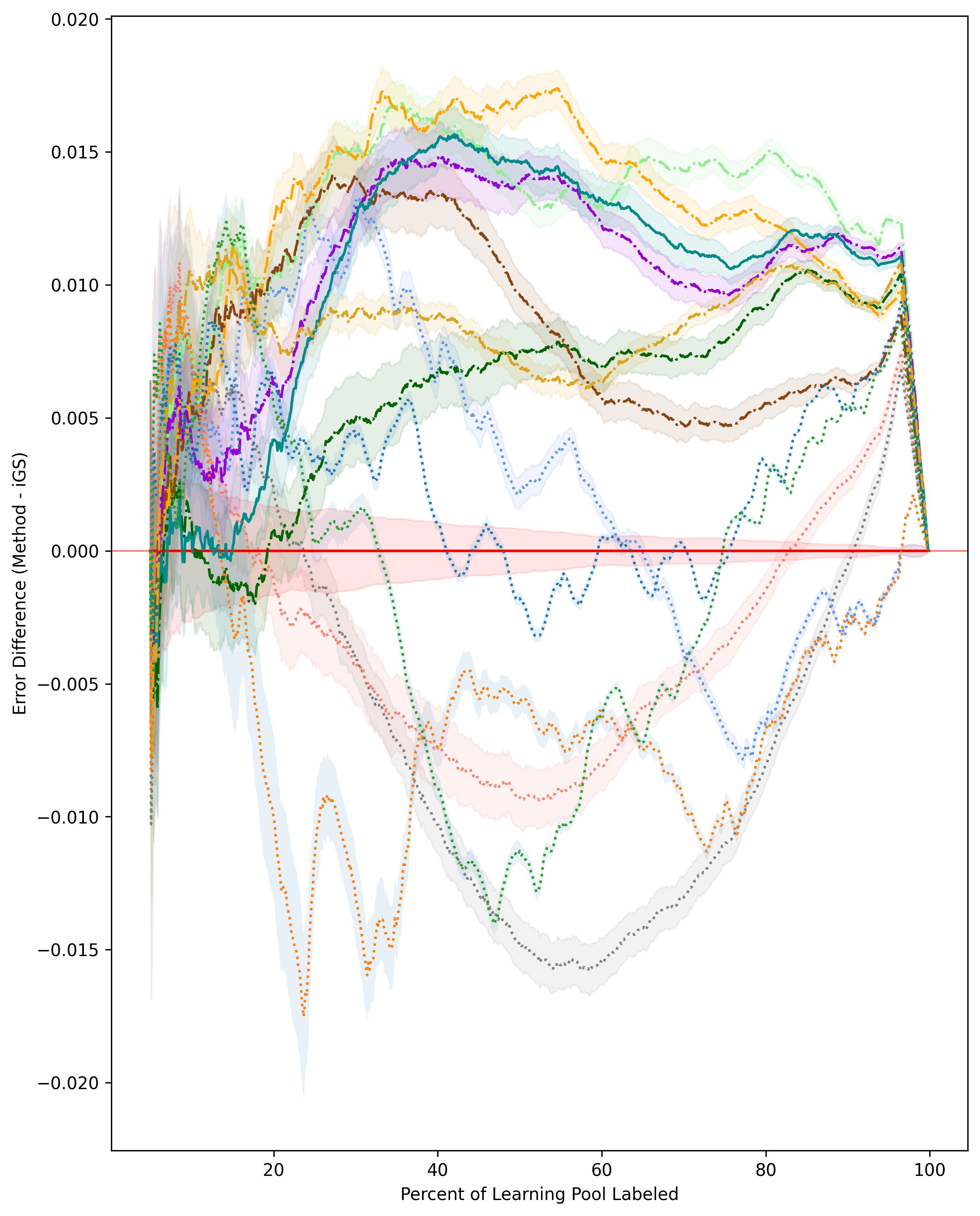}
        \caption{concrete\_4}
    \end{subfigure}
    
    \vspace{0.3em}
    
    \begin{subfigure}[b]{0.31\textwidth}
        \centering
        \includegraphics[width=\linewidth, keepaspectratio]{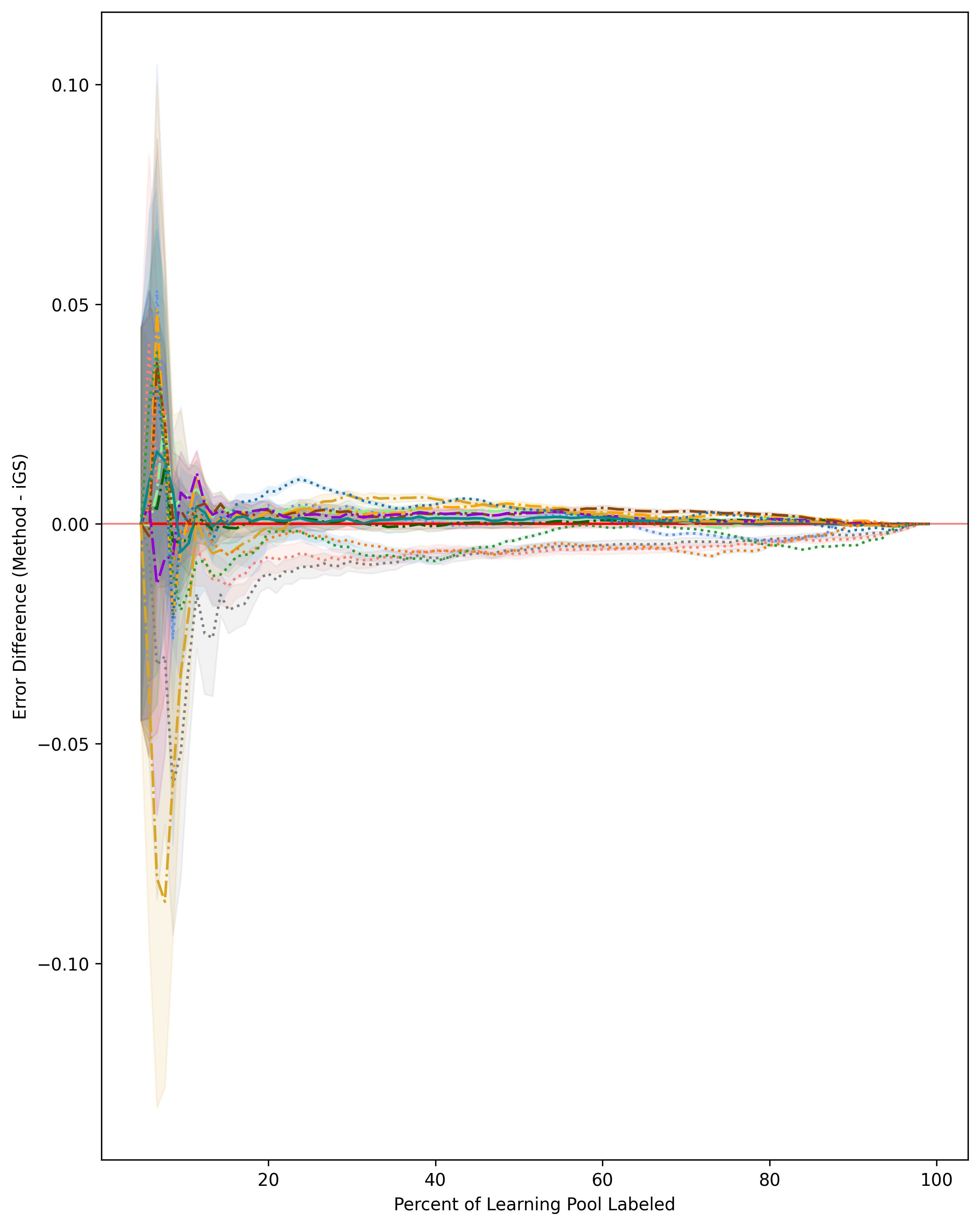}
        \caption{concrete\_cs}
    \end{subfigure}
    \hfill
    \begin{subfigure}[b]{0.31\textwidth}
        \centering
        \includegraphics[width=\linewidth, keepaspectratio]{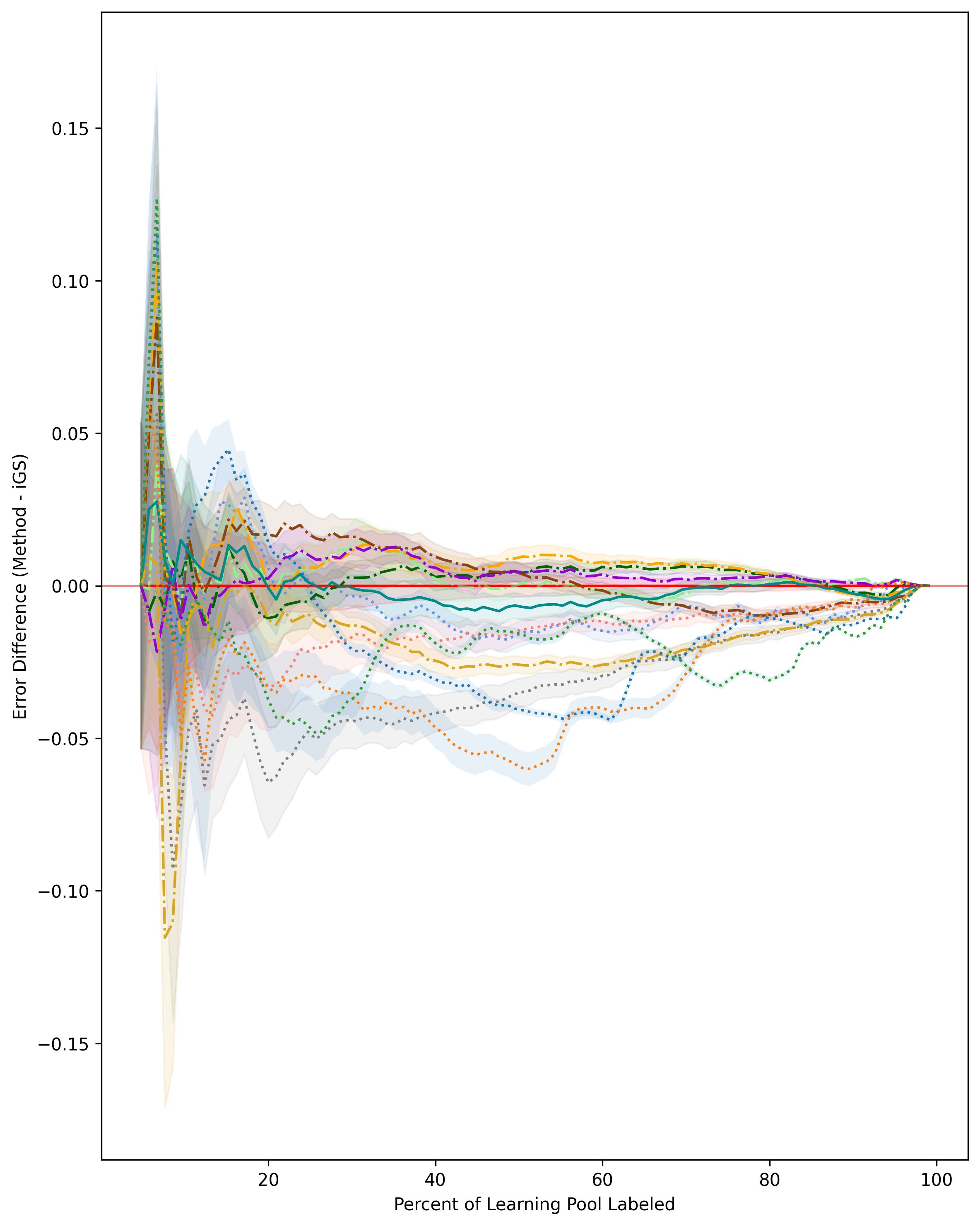}
        \caption{concrete\_flow}
    \end{subfigure}
    \hfill
    \begin{subfigure}[b]{0.31\textwidth}
        \centering
        \includegraphics[width=\linewidth, keepaspectratio]{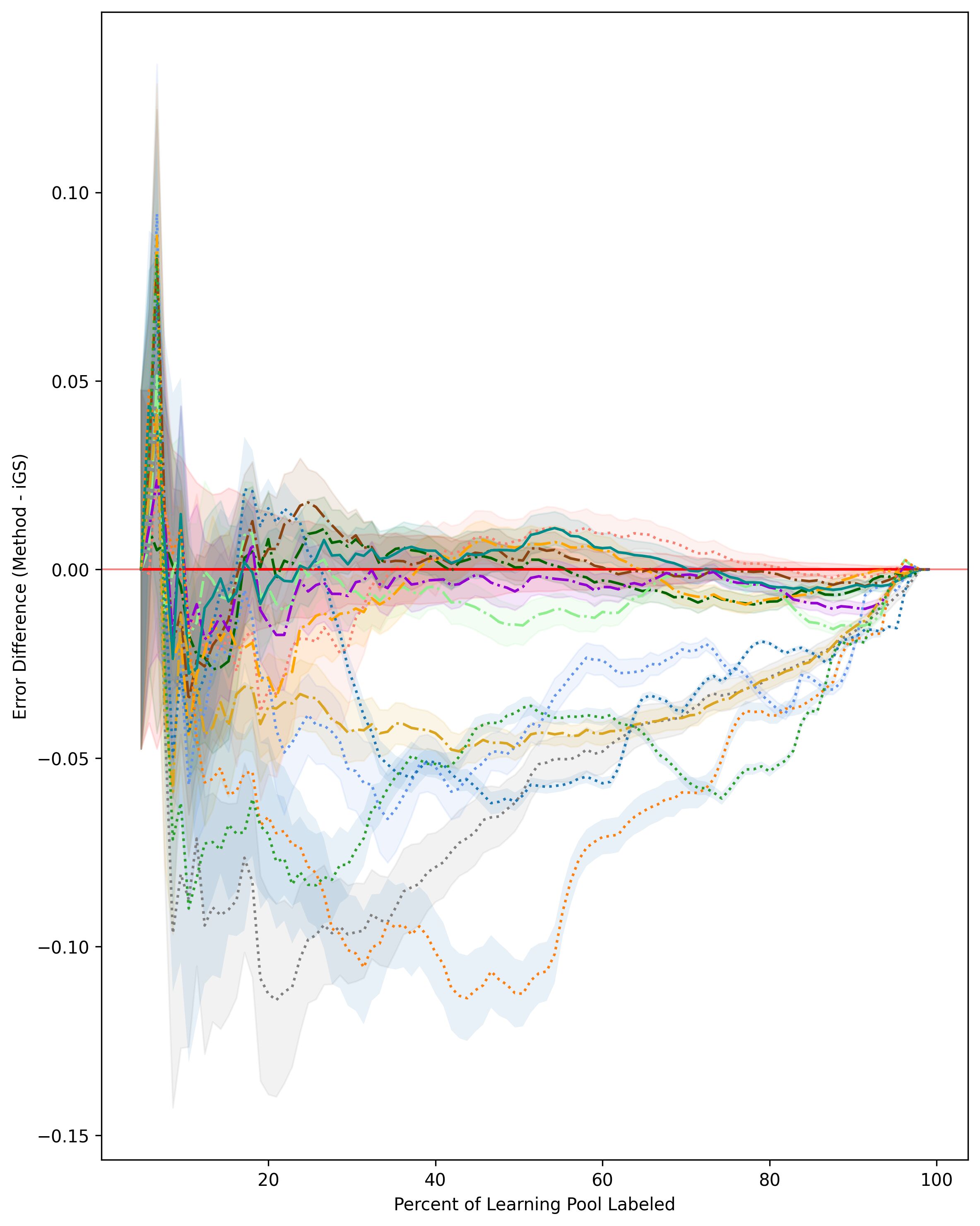}
        \caption{concrete\_slump}
    \end{subfigure}
    
    \vspace{0.5em}
    \centering
    \includegraphics[width=\linewidth]{upload_all_files/manuscript/benchmark_legend.jpg}
    \caption{Full-pool Correlation Coefficients trace plots for benchmark datasets (Part 1 of 2).}
    \label{fig:CCResults1}
\end{figure}

\clearpage
\begin{figure}
    \centering
    \vspace*{-1cm} 
    
    \begin{subfigure}[b]{0.31\textwidth}
        \centering
        \includegraphics[width=\linewidth, keepaspectratio]{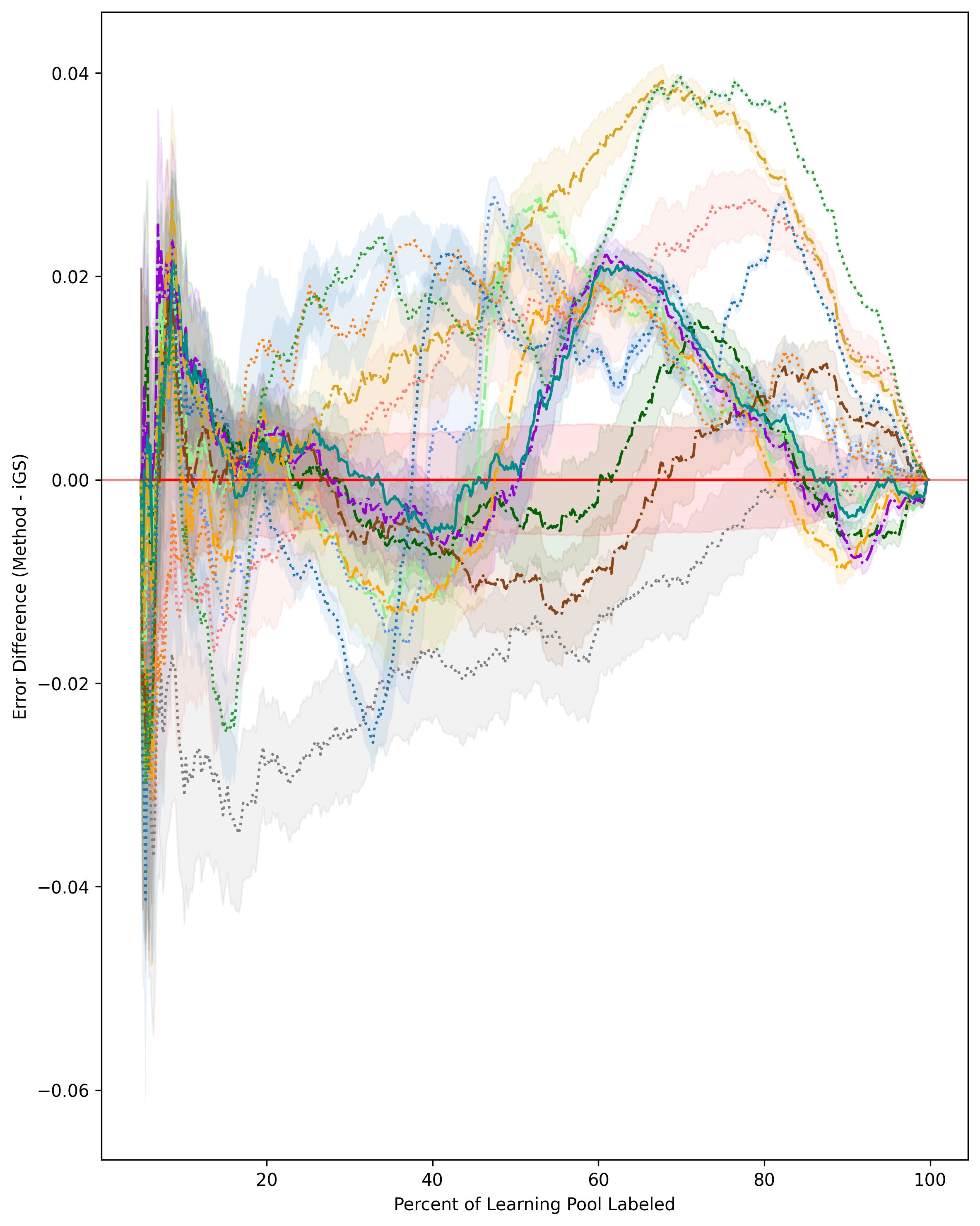}
        \caption{cps\_wage}
    \end{subfigure}
    \hfill
    \begin{subfigure}[b]{0.31\textwidth}
        \centering
        \includegraphics[width=\linewidth, keepaspectratio]{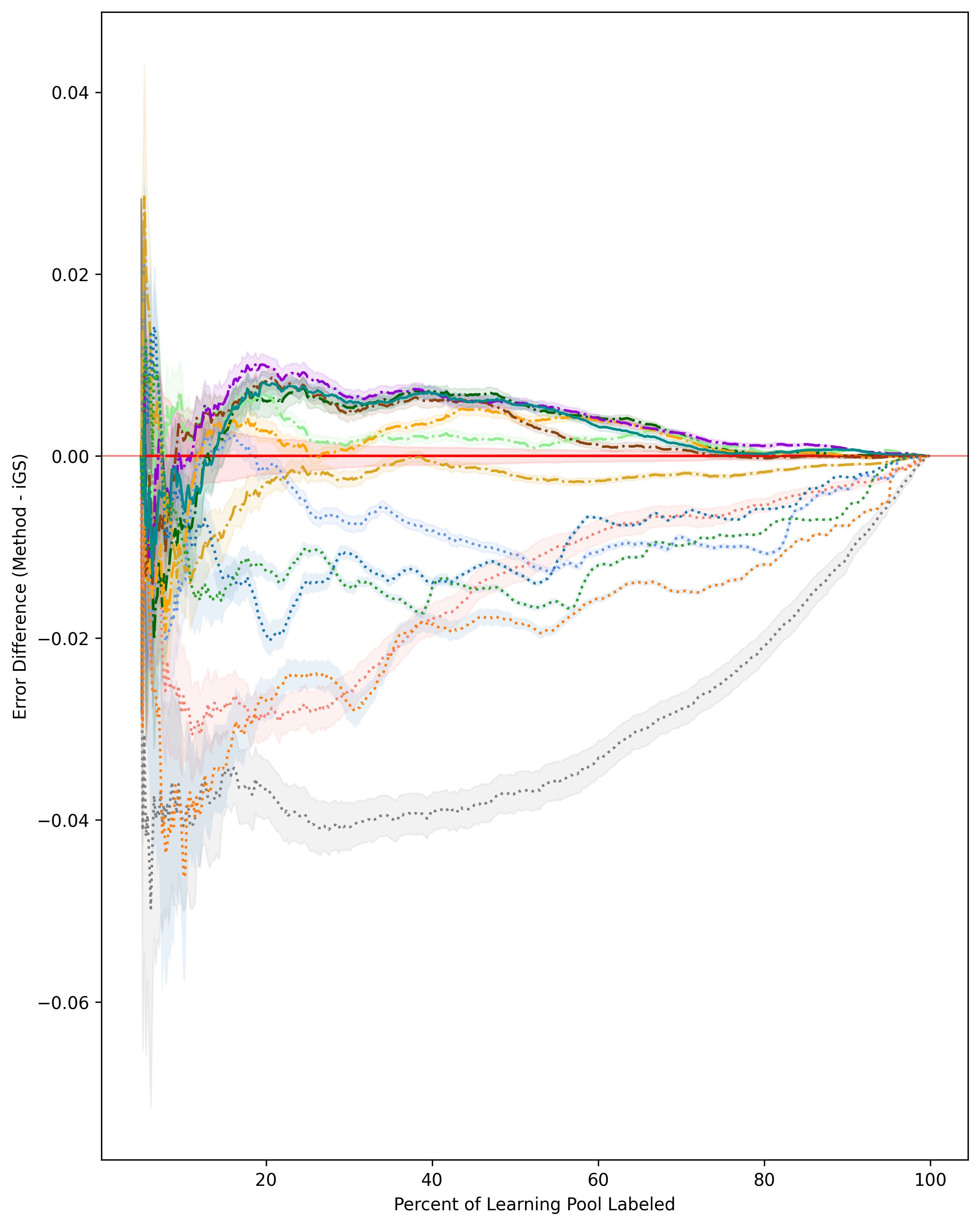}
        \caption{housing}
    \end{subfigure}
    \hfill
    \begin{subfigure}[b]{0.31\textwidth}
        \centering
        \includegraphics[width=\linewidth, keepaspectratio]{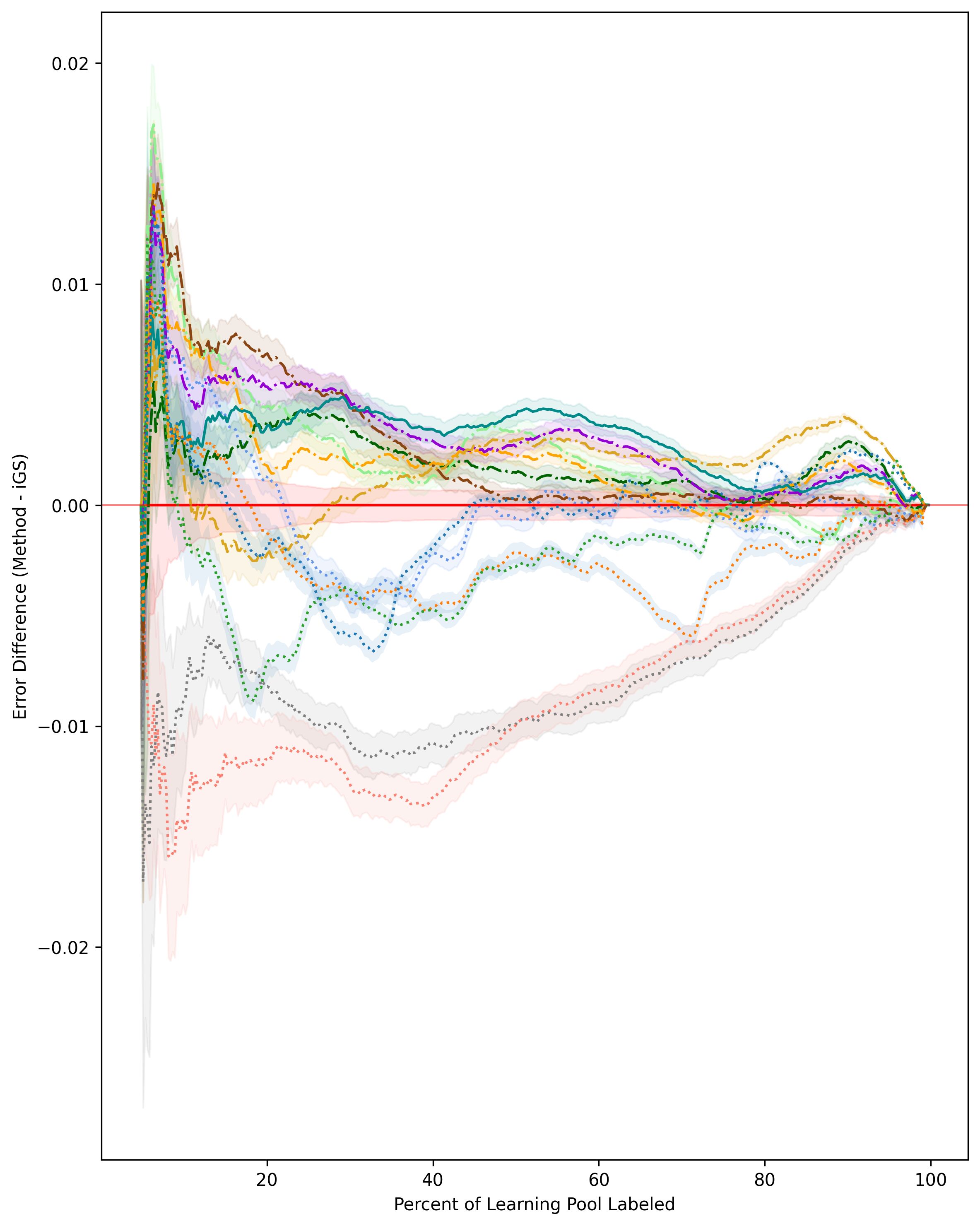}
        \caption{mpg}
    \end{subfigure}
    
    \vspace{0.3em}
    
    \begin{subfigure}[b]{0.31\textwidth}
        \centering
        \includegraphics[width=\linewidth, keepaspectratio]{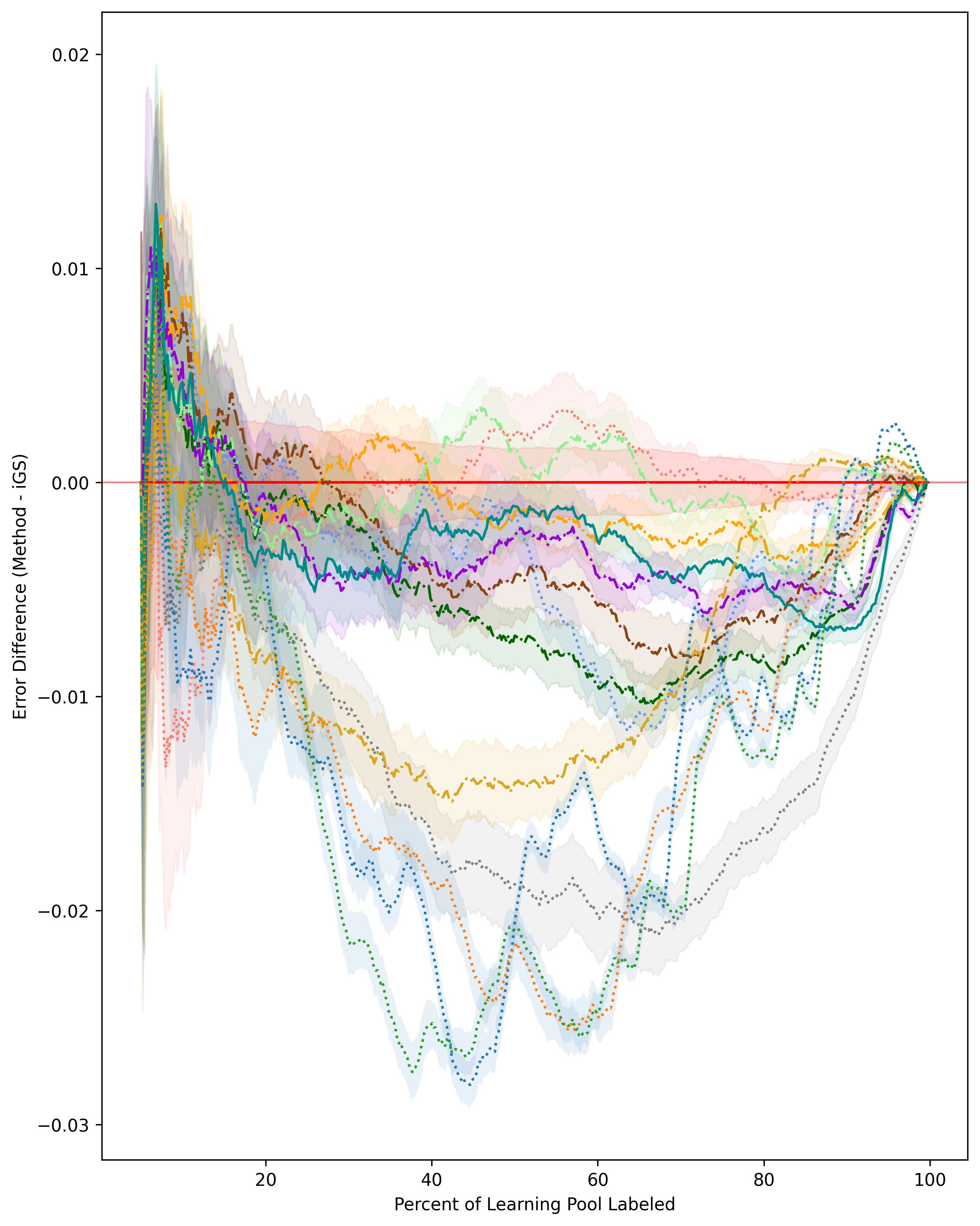}
        \caption{no2}
    \end{subfigure}
    \hfill
    \begin{subfigure}[b]{0.31\textwidth}
        \centering
        \includegraphics[width=\linewidth, keepaspectratio]{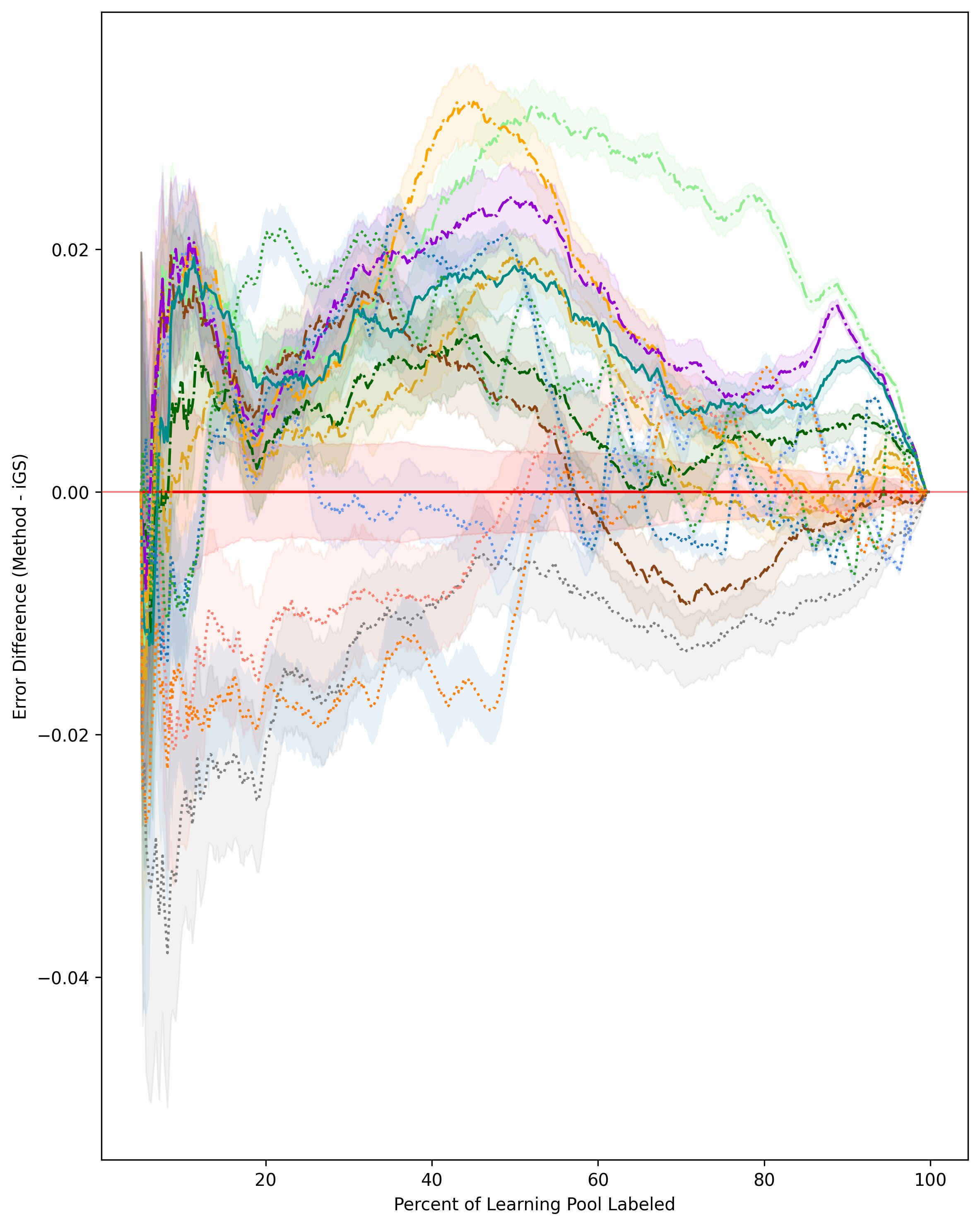}
        \caption{pm10}
    \end{subfigure}
    \hfill
    \begin{subfigure}[b]{0.31\textwidth}
        \centering
        \includegraphics[width=\linewidth, keepaspectratio]{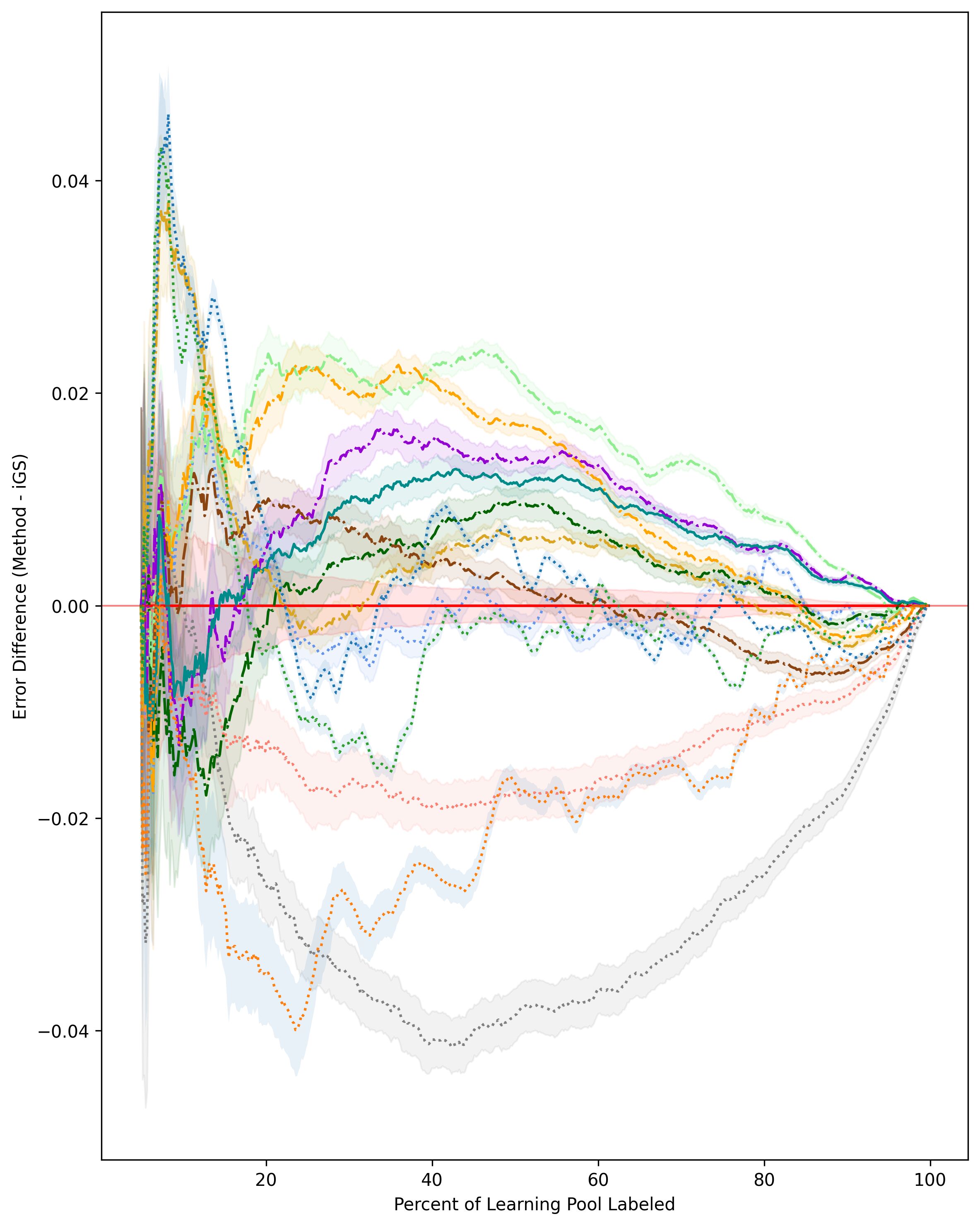}
        \caption{qsar}
    \end{subfigure}
    
    \vspace{0.3em}
    
    \begin{subfigure}[b]{0.31\textwidth}
        \centering
        \includegraphics[width=\linewidth, keepaspectratio]{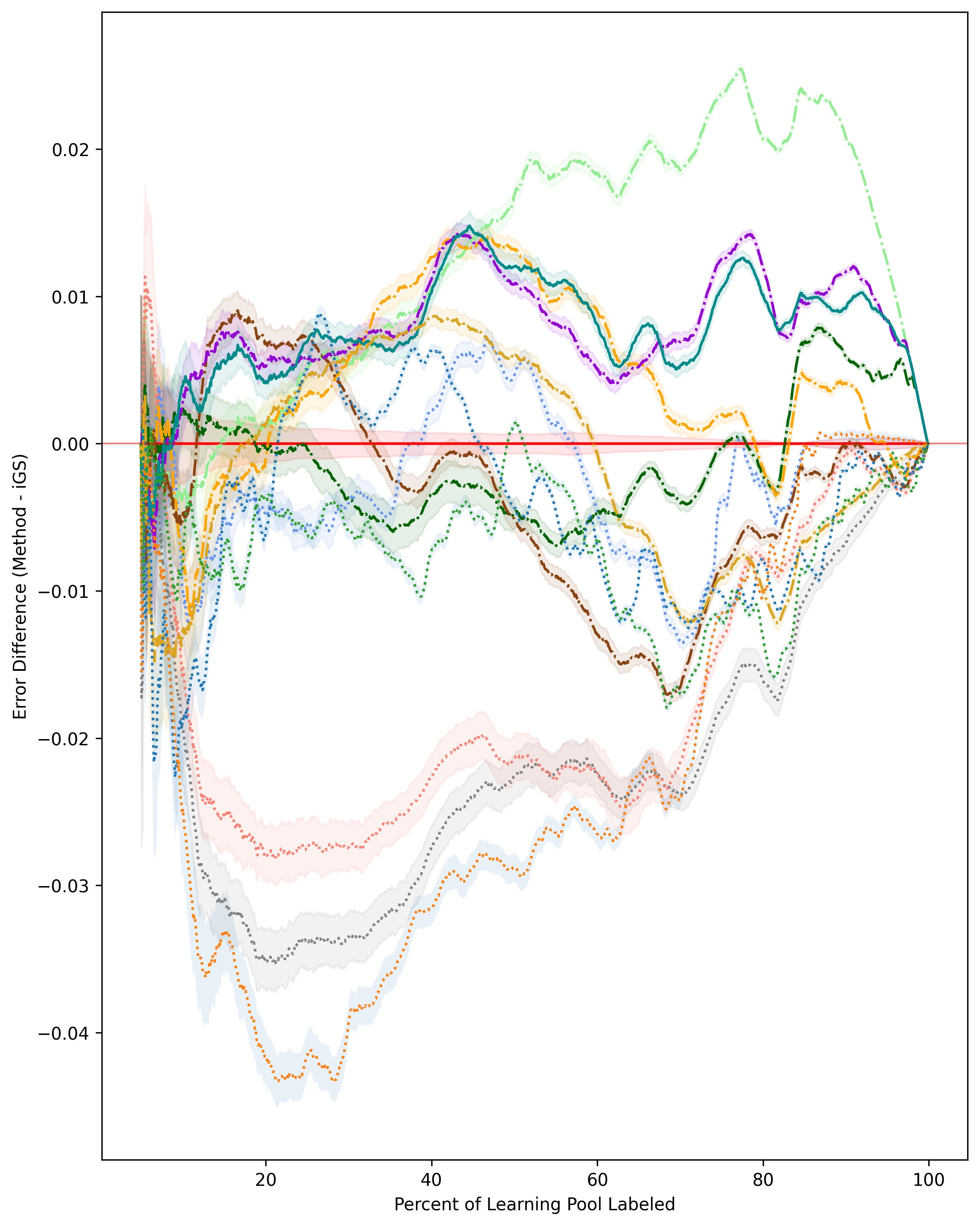}
        \caption{wine\_red}
    \end{subfigure}
    \hfill
    \begin{subfigure}[b]{0.31\textwidth}
        \centering
        \includegraphics[width=\linewidth, keepaspectratio]{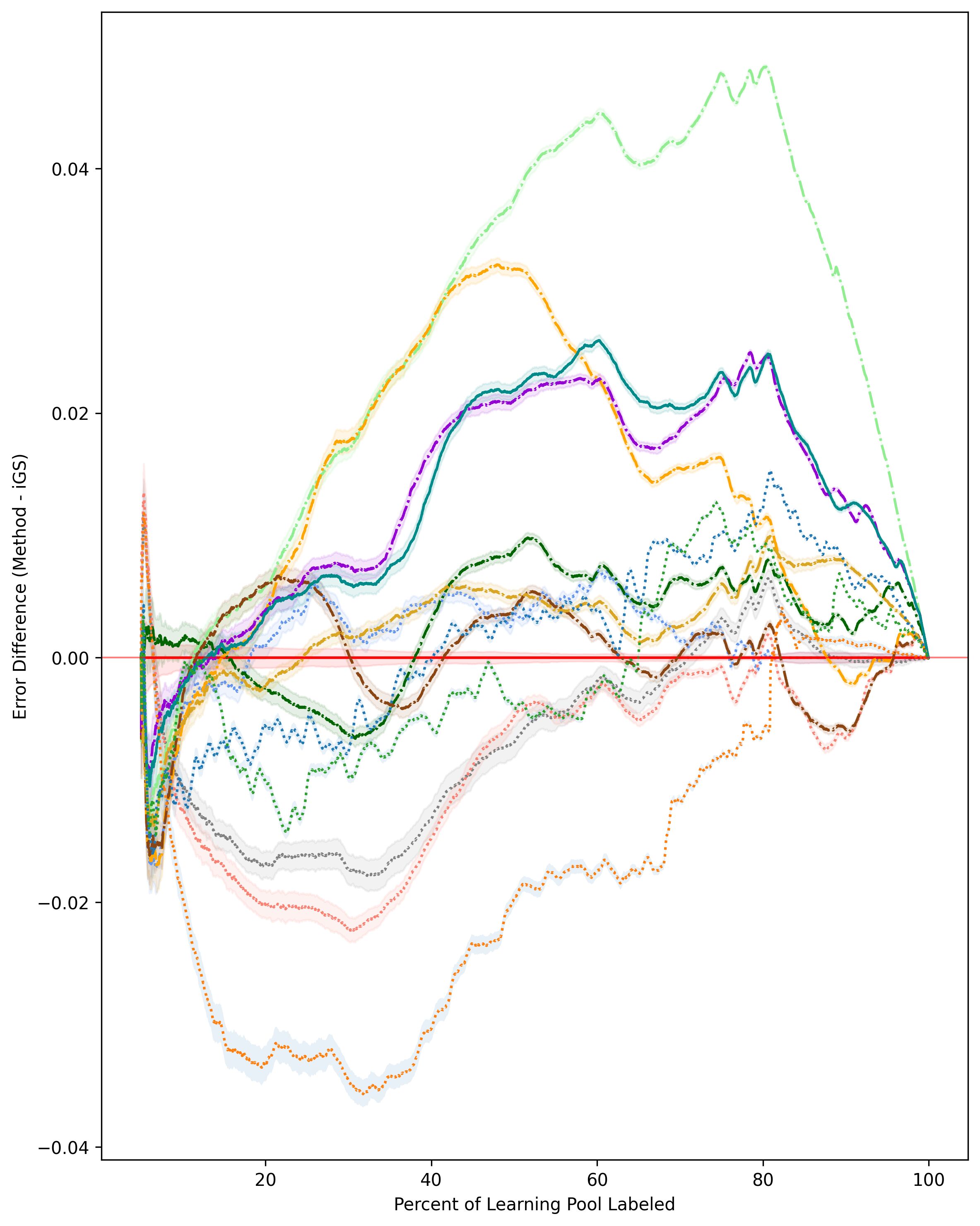}
        \caption{wine\_white}
    \end{subfigure}
    \hfill
    \begin{subfigure}[b]{0.31\textwidth}
        \centering
        \includegraphics[width=\linewidth, keepaspectratio]{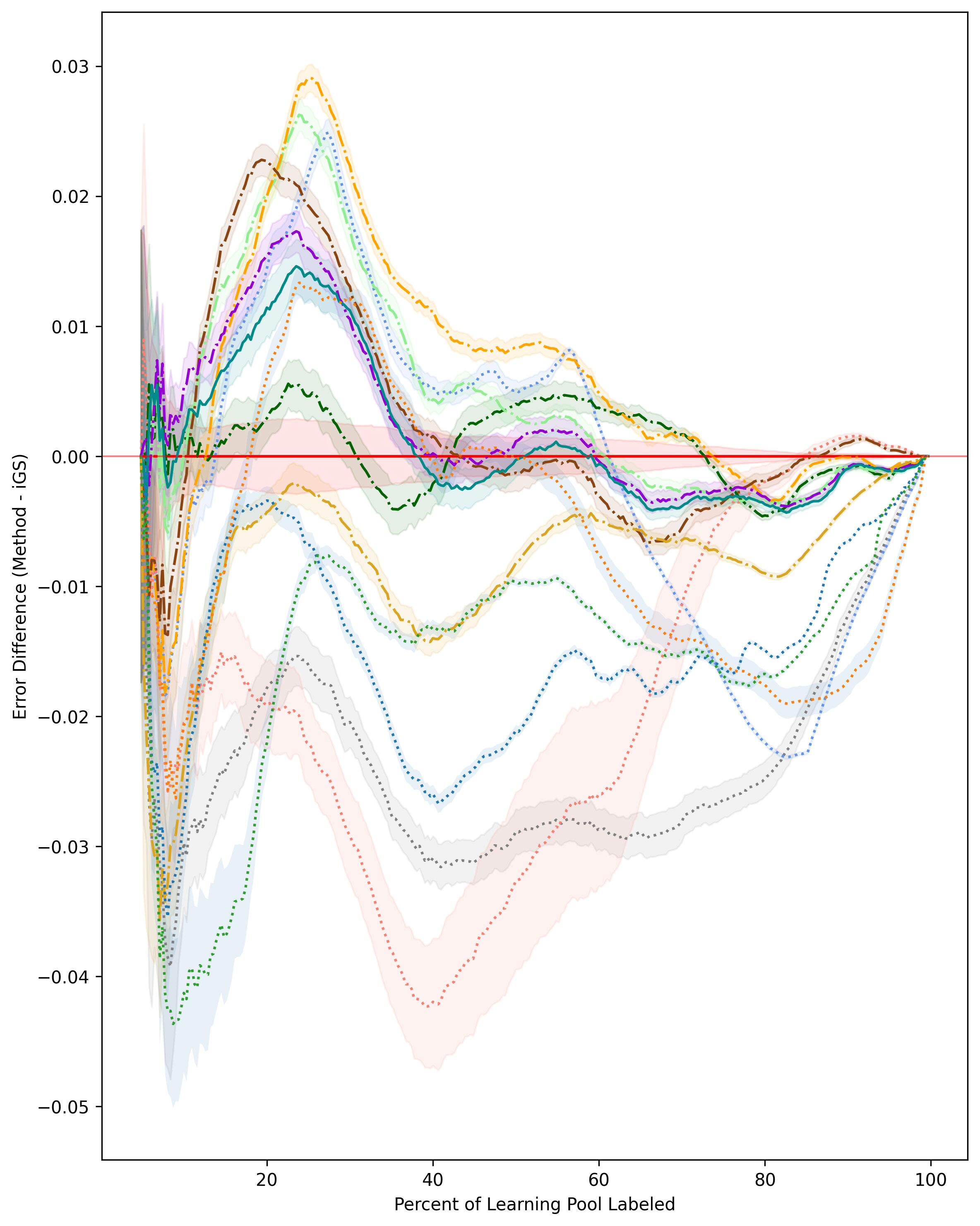}
        \caption{yacht}
    \end{subfigure}
    
    \vspace{0.5em}
    \centering
    \includegraphics[width=\linewidth]{upload_all_files/manuscript/benchmark_legend.jpg}
    \caption{Full-pool Correlation Coefficients trace plots for benchmark datasets (Part 2 of 2).}
    \label{fig:CCResults2}
\end{figure}
\clearpage


\begin{figure}
    \centering
    \vspace*{-1cm} 
    
    \begin{subfigure}[b]{0.31\textwidth}
        \centering
        \includegraphics[width=\linewidth, keepaspectratio]{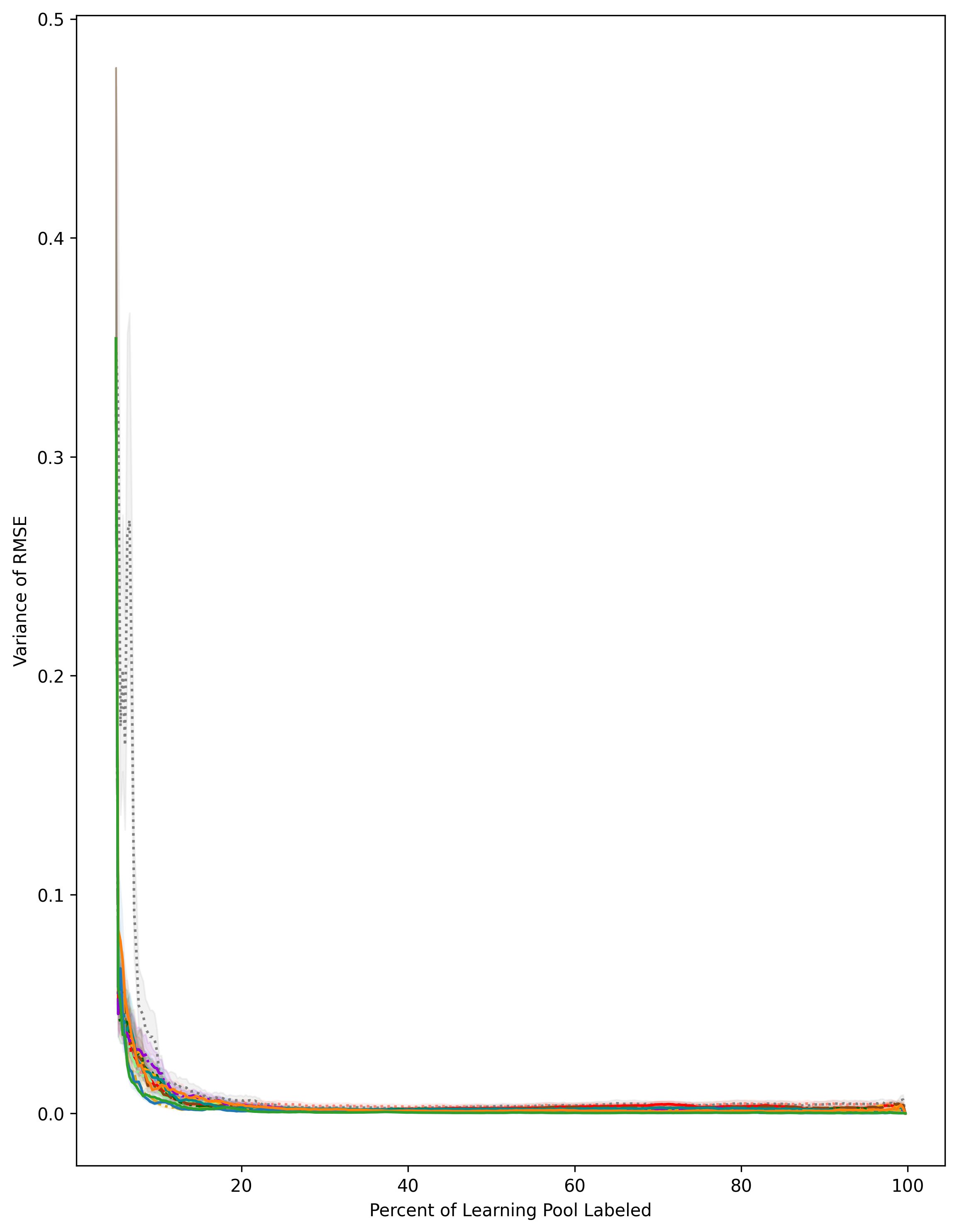}
        \caption{beer}
    \end{subfigure}
    \hfill
    \begin{subfigure}[b]{0.31\textwidth}
        \centering
        \includegraphics[width=\linewidth, keepaspectratio]{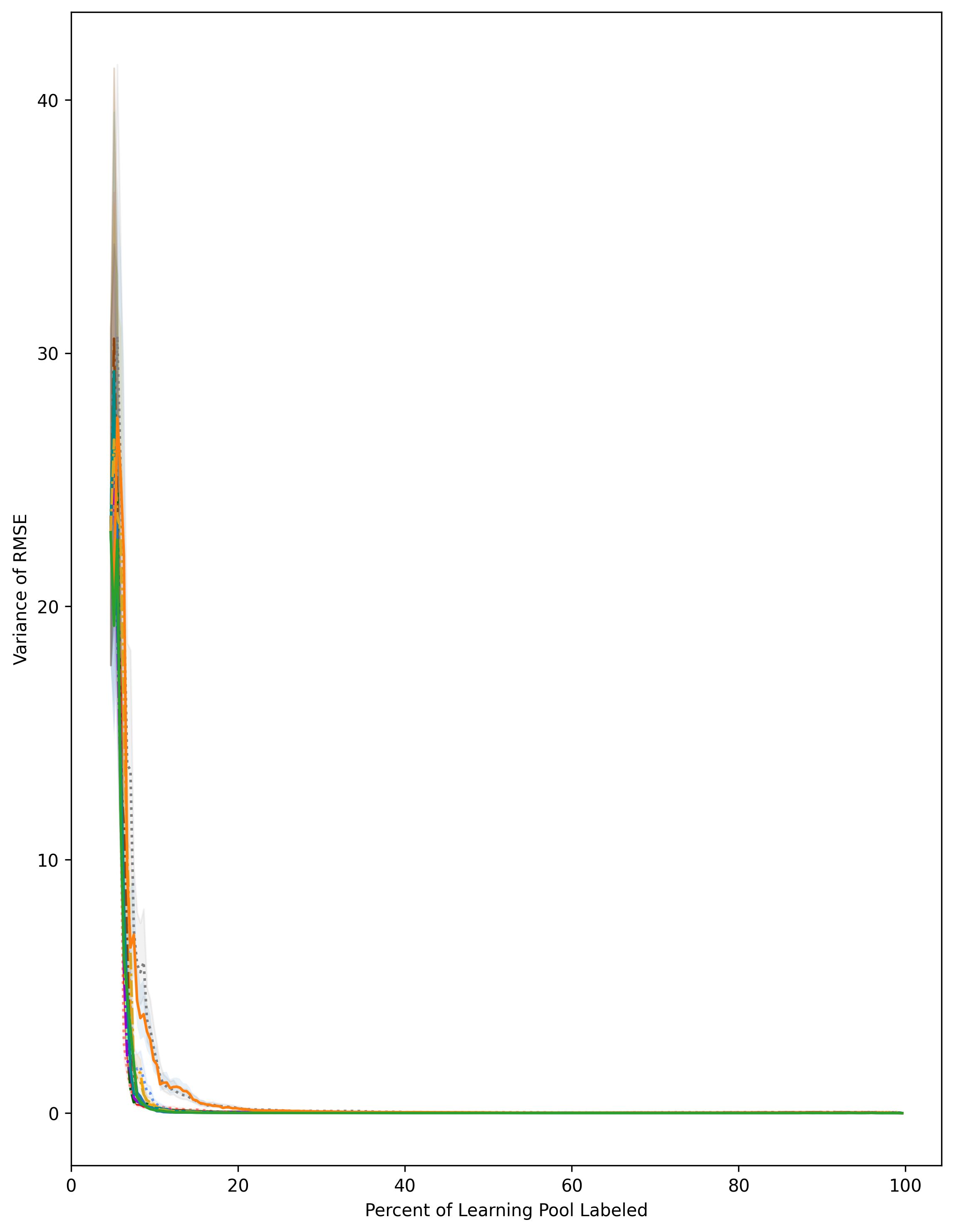}
        \caption{bodyfat}
    \end{subfigure}
    \hfill
    \begin{subfigure}[b]{0.31\textwidth}
        \centering
        \includegraphics[width=\linewidth, keepaspectratio]{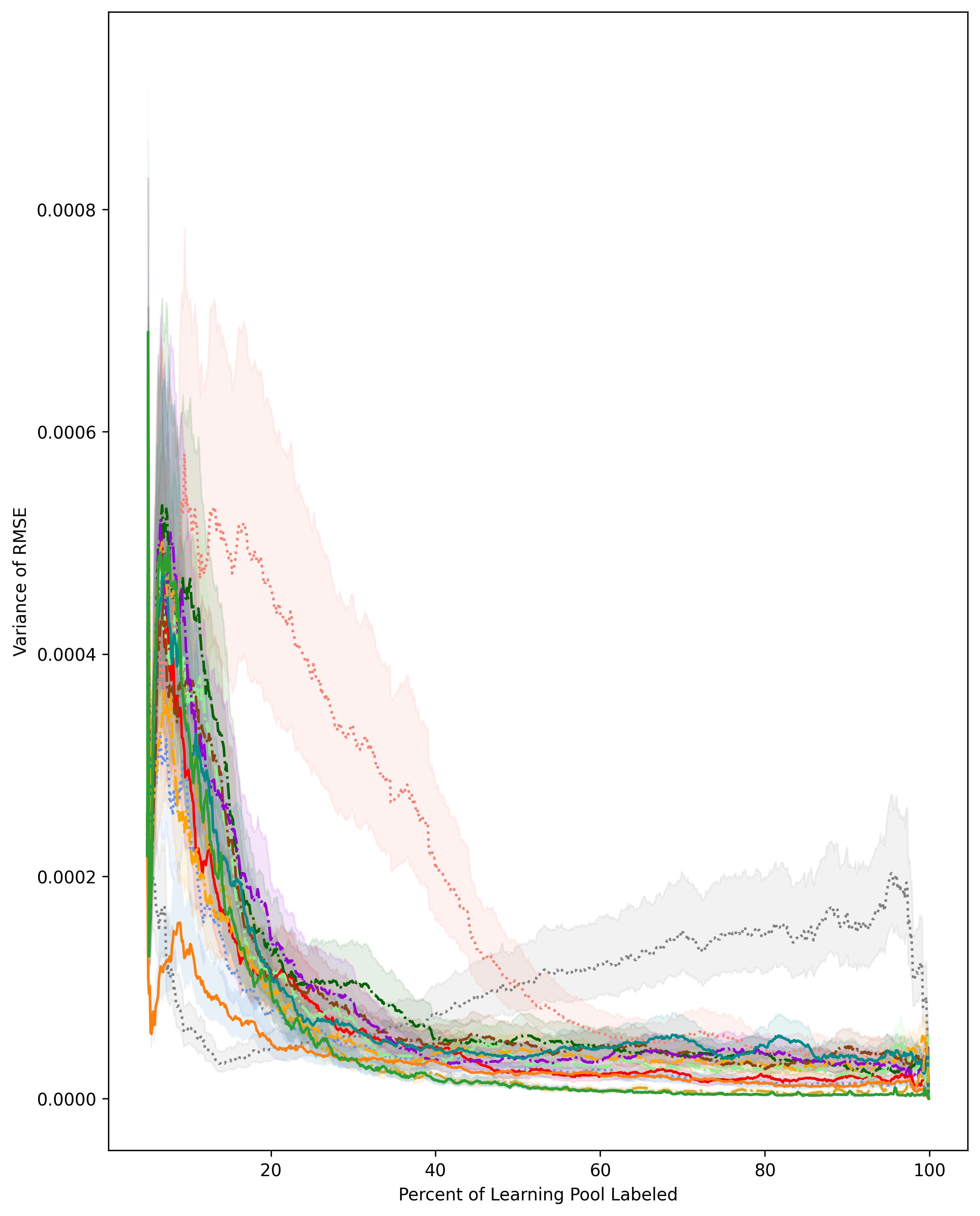}
        \caption{burbidge\_correct}
    \end{subfigure}
    
    \vspace{0.3em} 
    
    \begin{subfigure}[b]{0.31\textwidth}
        \centering
        \includegraphics[width=\linewidth, keepaspectratio]{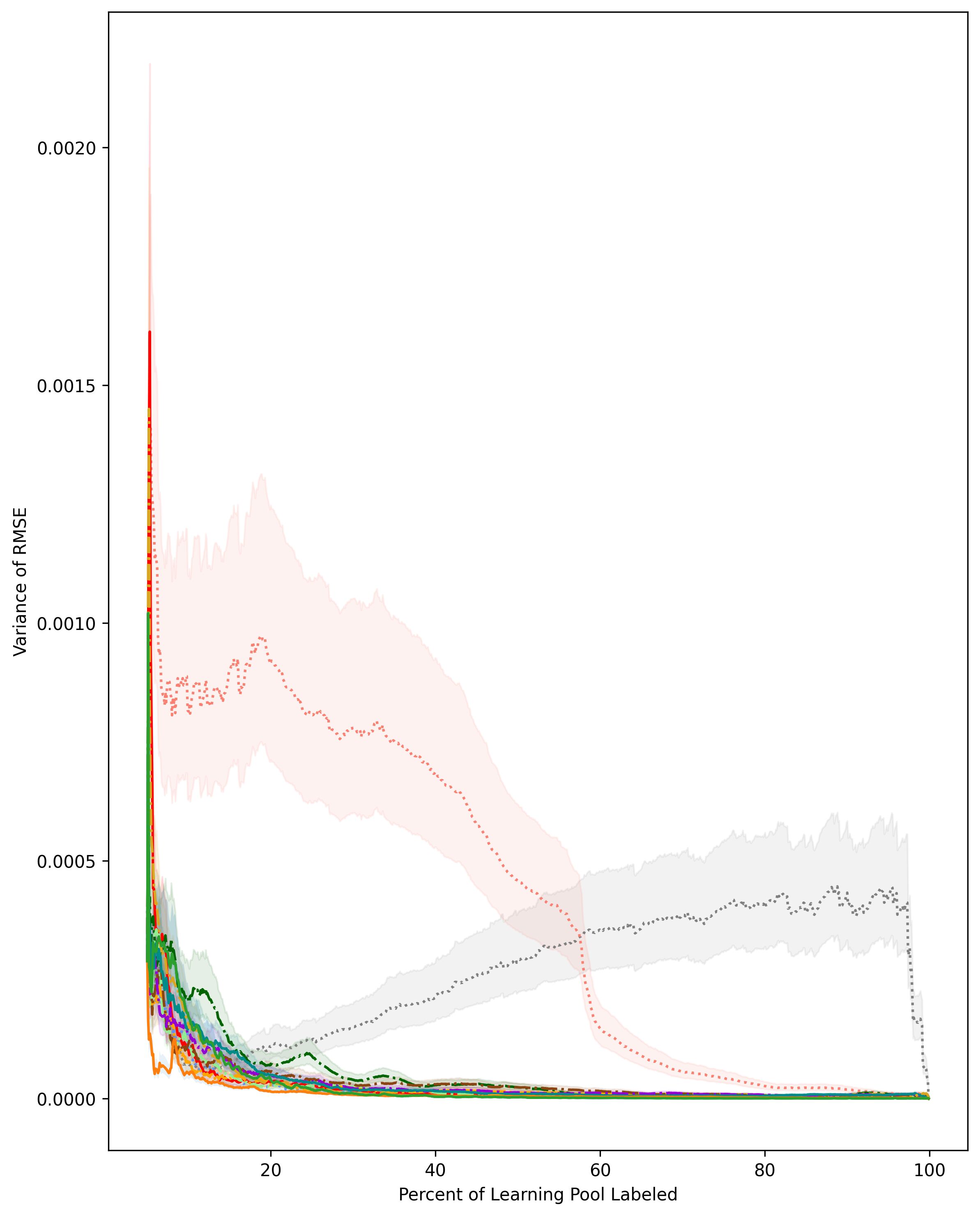}
        \caption{burbidge\_low\_noise}
    \end{subfigure}
    \hfill
    \begin{subfigure}[b]{0.31\textwidth}
        \centering
        \includegraphics[width=\linewidth, keepaspectratio]{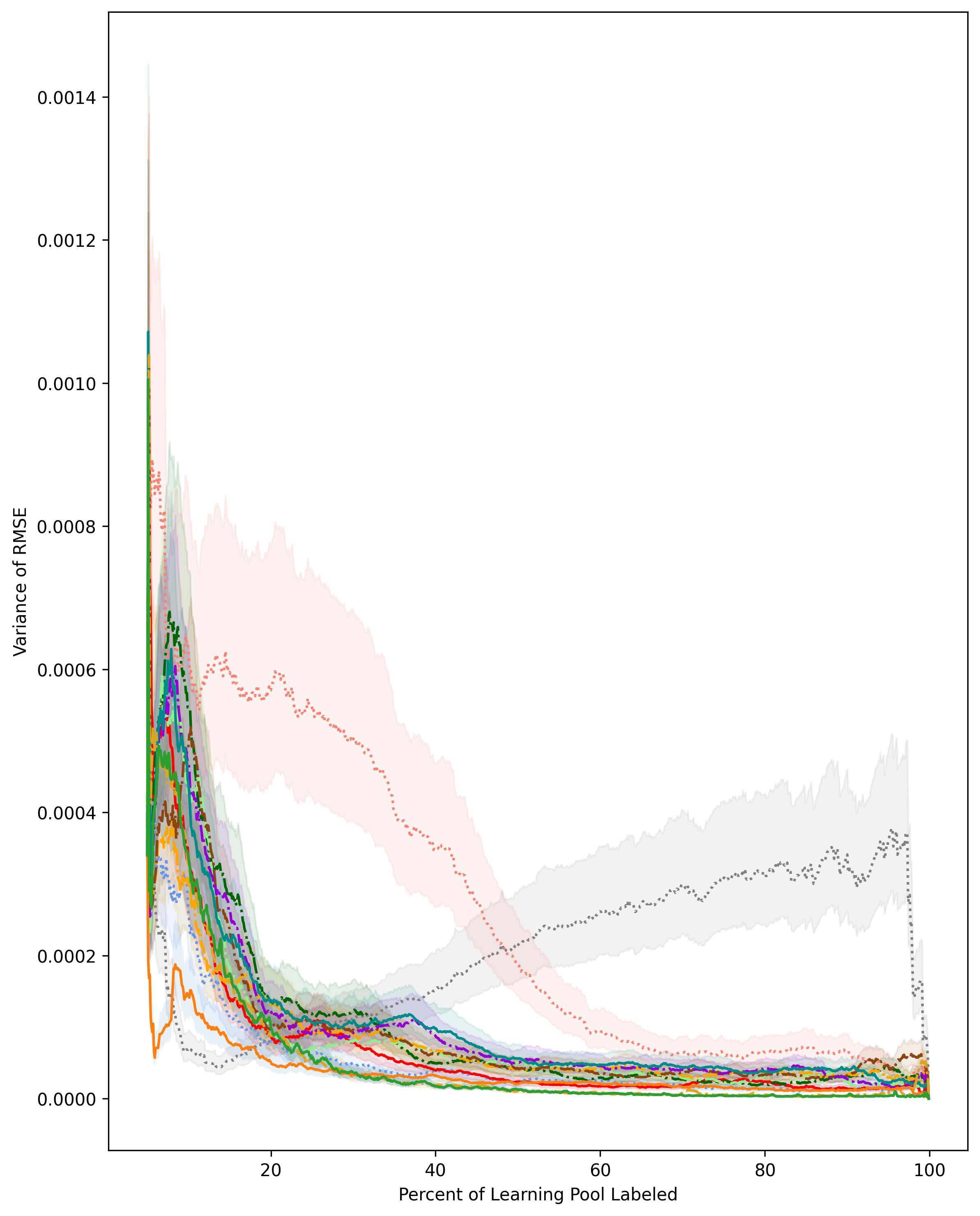}
        \caption{burbidge\_misspecified}
    \end{subfigure}
    \hfill
    \begin{subfigure}[b]{0.31\textwidth}
        \centering
        \includegraphics[width=\linewidth, keepaspectratio]{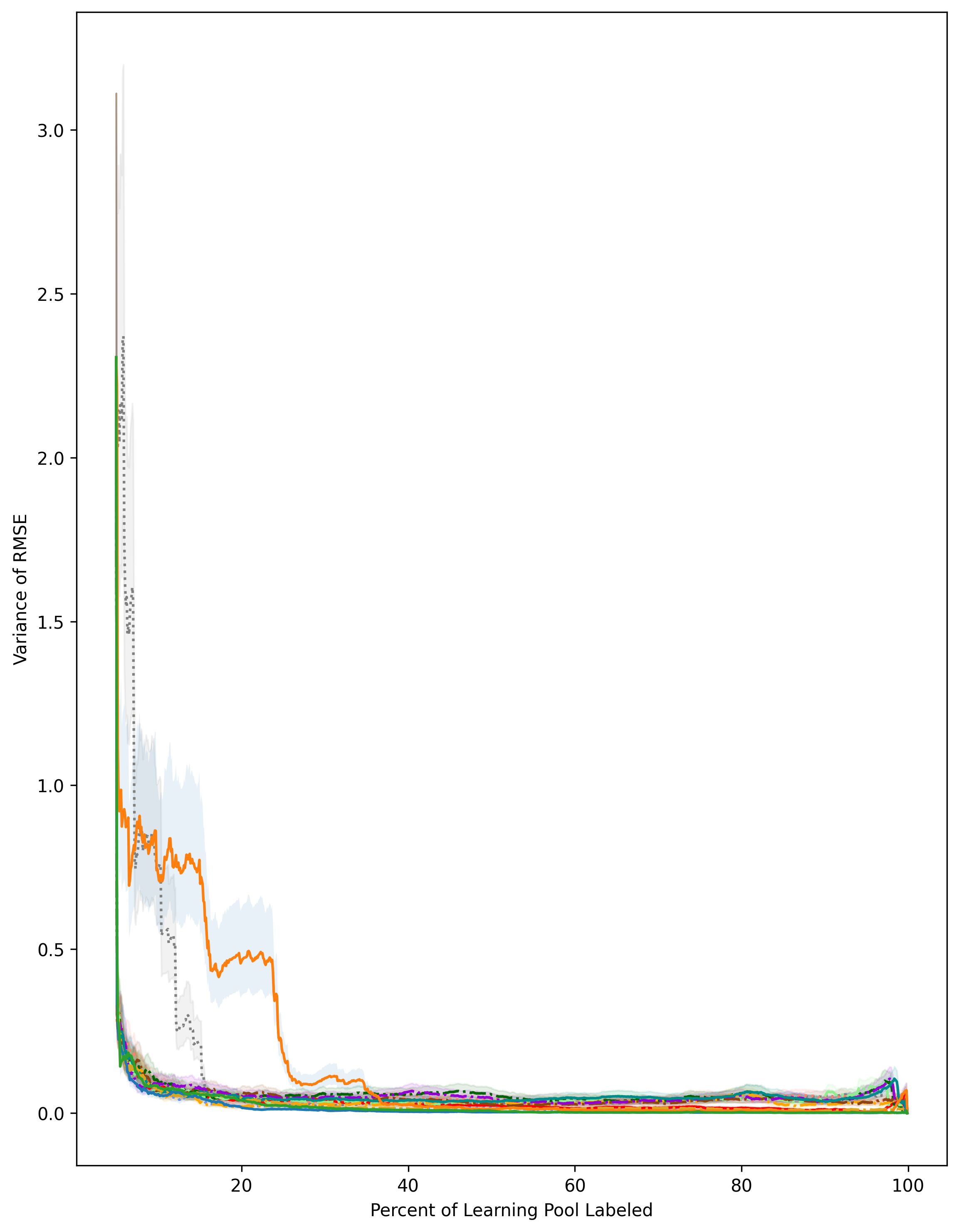}
        \caption{concrete\_4}
    \end{subfigure}
    
    \vspace{0.3em}
    
    \begin{subfigure}[b]{0.31\textwidth}
        \centering
        \includegraphics[width=\linewidth, keepaspectratio]{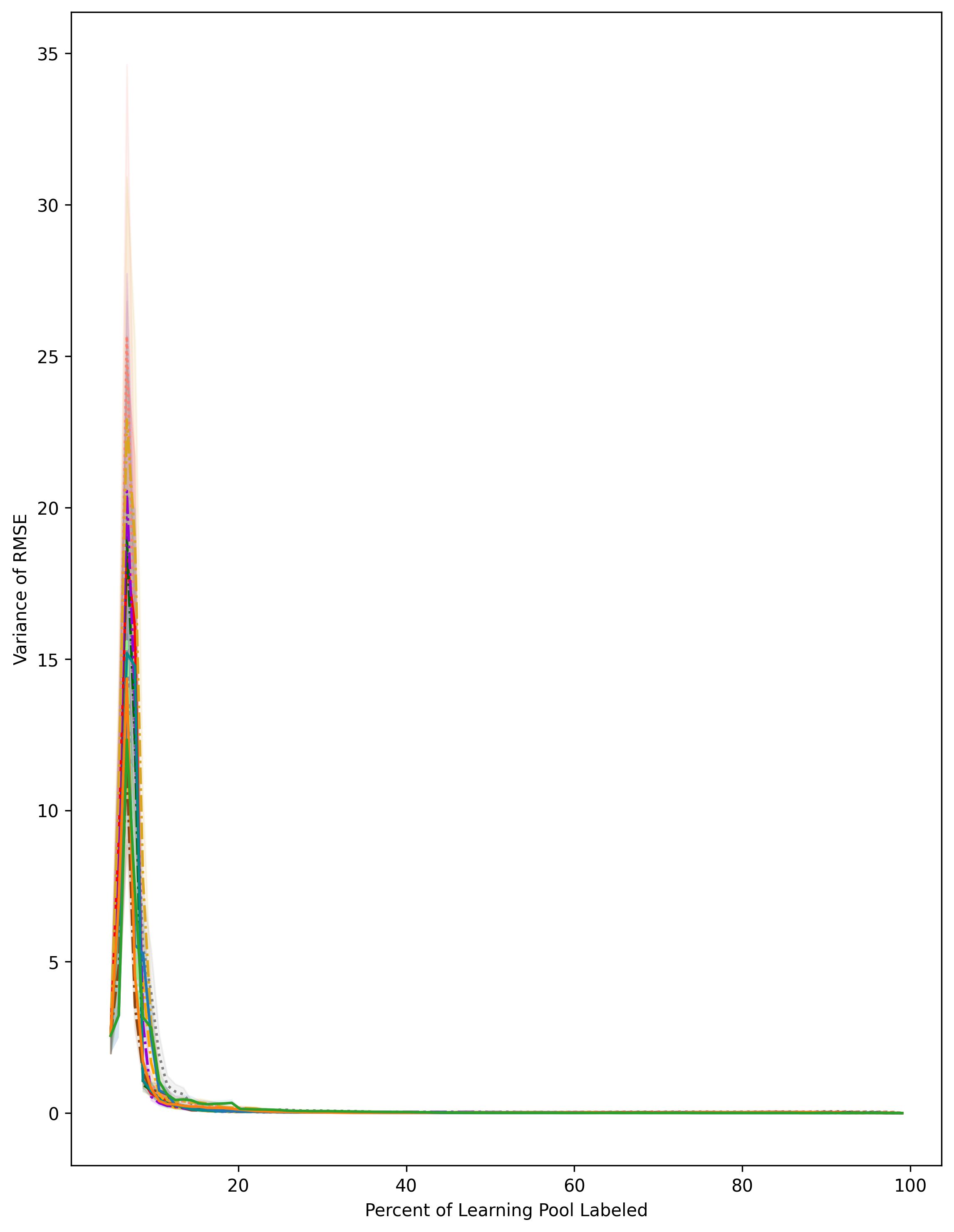}
        \caption{concrete\_cs}
    \end{subfigure}
    \hfill
    \begin{subfigure}[b]{0.31\textwidth}
        \centering
        \includegraphics[width=\linewidth, keepaspectratio]{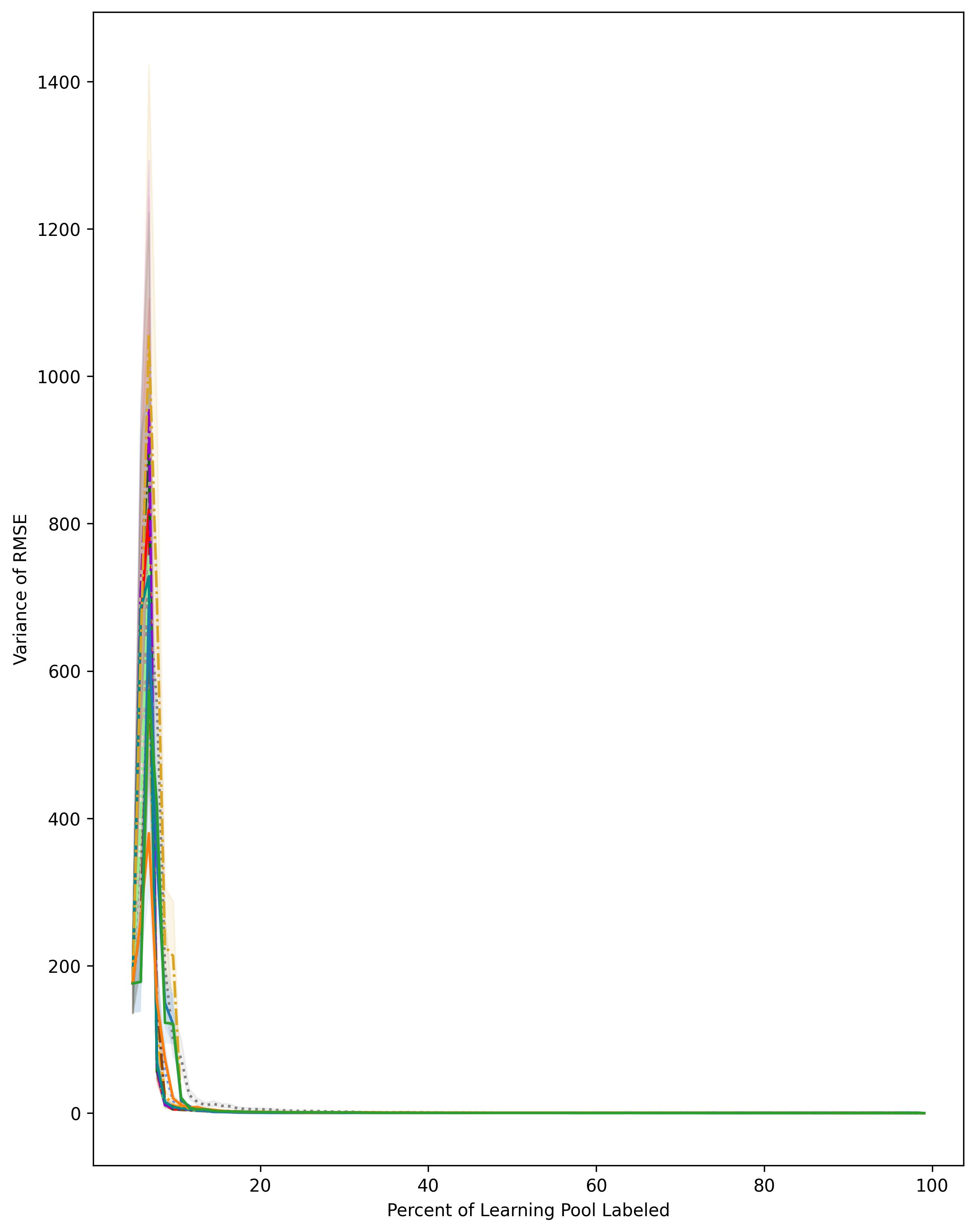}
        \caption{concrete\_flow}
    \end{subfigure}
    \hfill
    \begin{subfigure}[b]{0.31\textwidth}
        \centering
        \includegraphics[width=\linewidth, keepaspectratio]{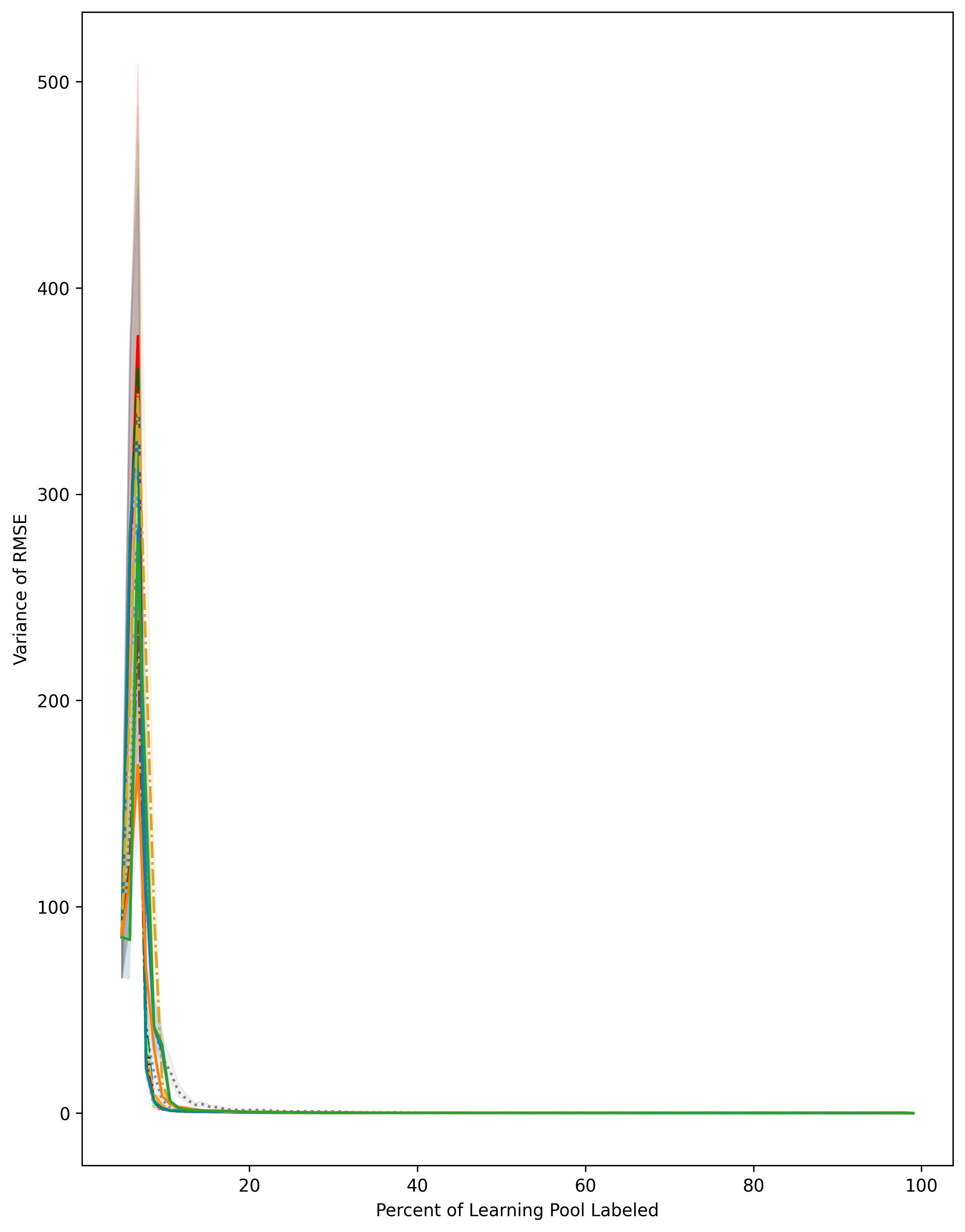}
        \caption{concrete\_slump}
    \end{subfigure}
    
    \vspace{0.5em}
    \centering
    \includegraphics[width=\linewidth]{upload_all_files/manuscript/benchmark_legend.jpg}
    
    \caption{Variance of the full-pool RMSE trace plots for benchmark datasets (Part 1 of 2).}
    \label{fig:VarianceResults1}
\end{figure}

\clearpage
\begin{figure}
    \centering
    \vspace*{-1cm} 
    
    \begin{subfigure}[b]{0.31\textwidth}
        \centering
        \includegraphics[width=\linewidth, keepaspectratio]{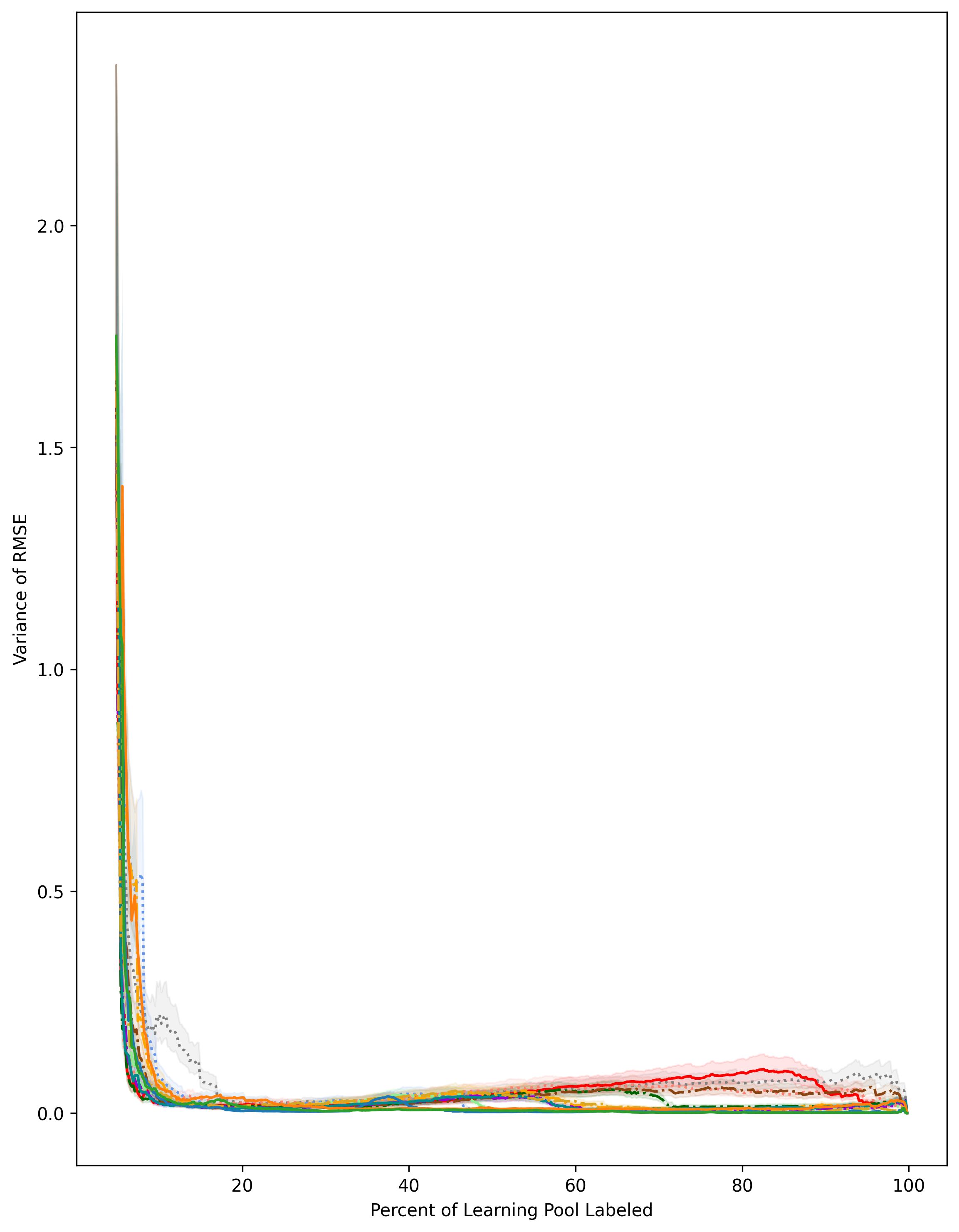}
        \caption{cps\_wage}
    \end{subfigure}
    \hfill
    \begin{subfigure}[b]{0.31\textwidth}
        \centering
        \includegraphics[width=\linewidth, keepaspectratio]{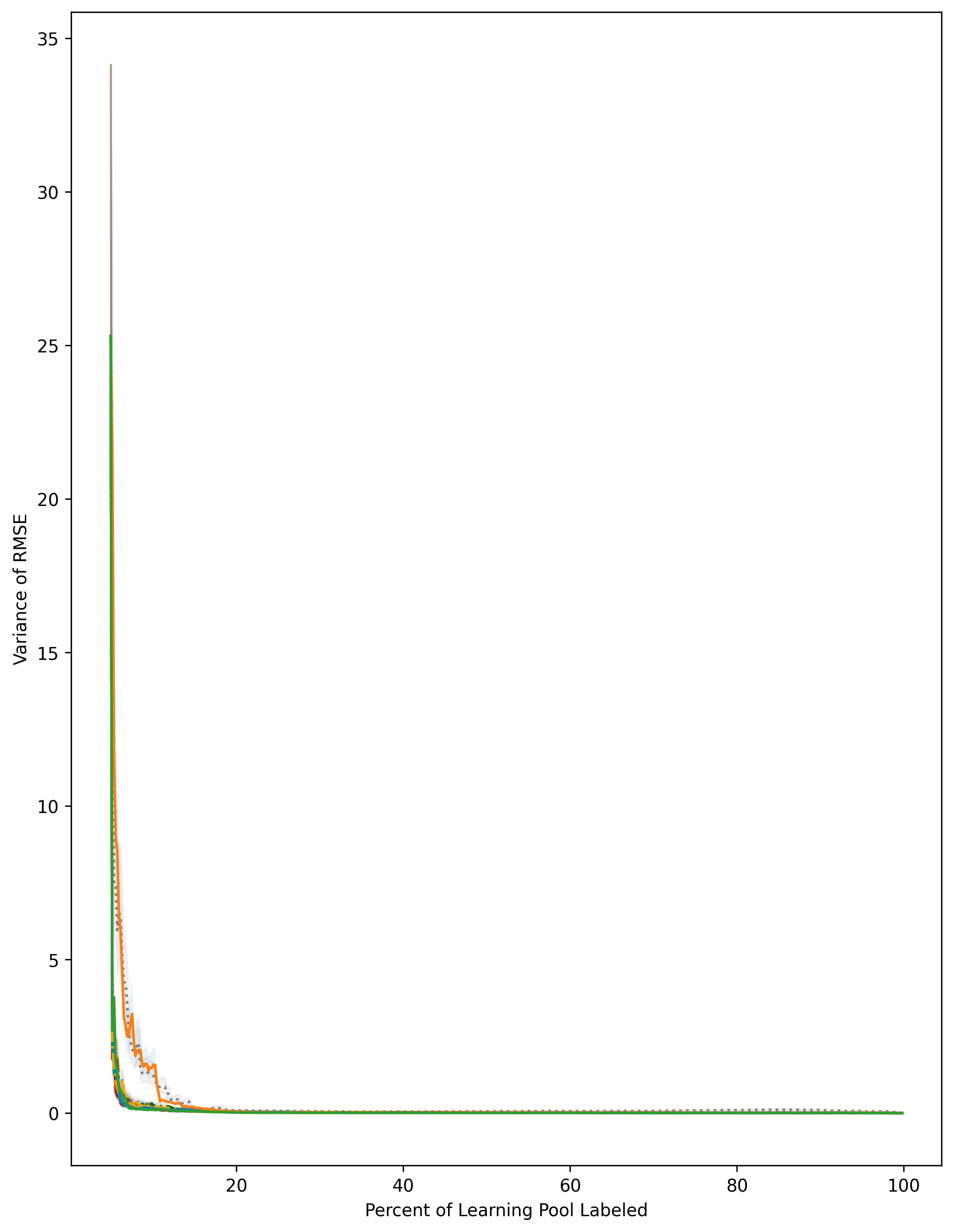}
        \caption{housing}
    \end{subfigure}
    \hfill
    \begin{subfigure}[b]{0.31\textwidth}
        \centering
        \includegraphics[width=\linewidth, keepaspectratio]{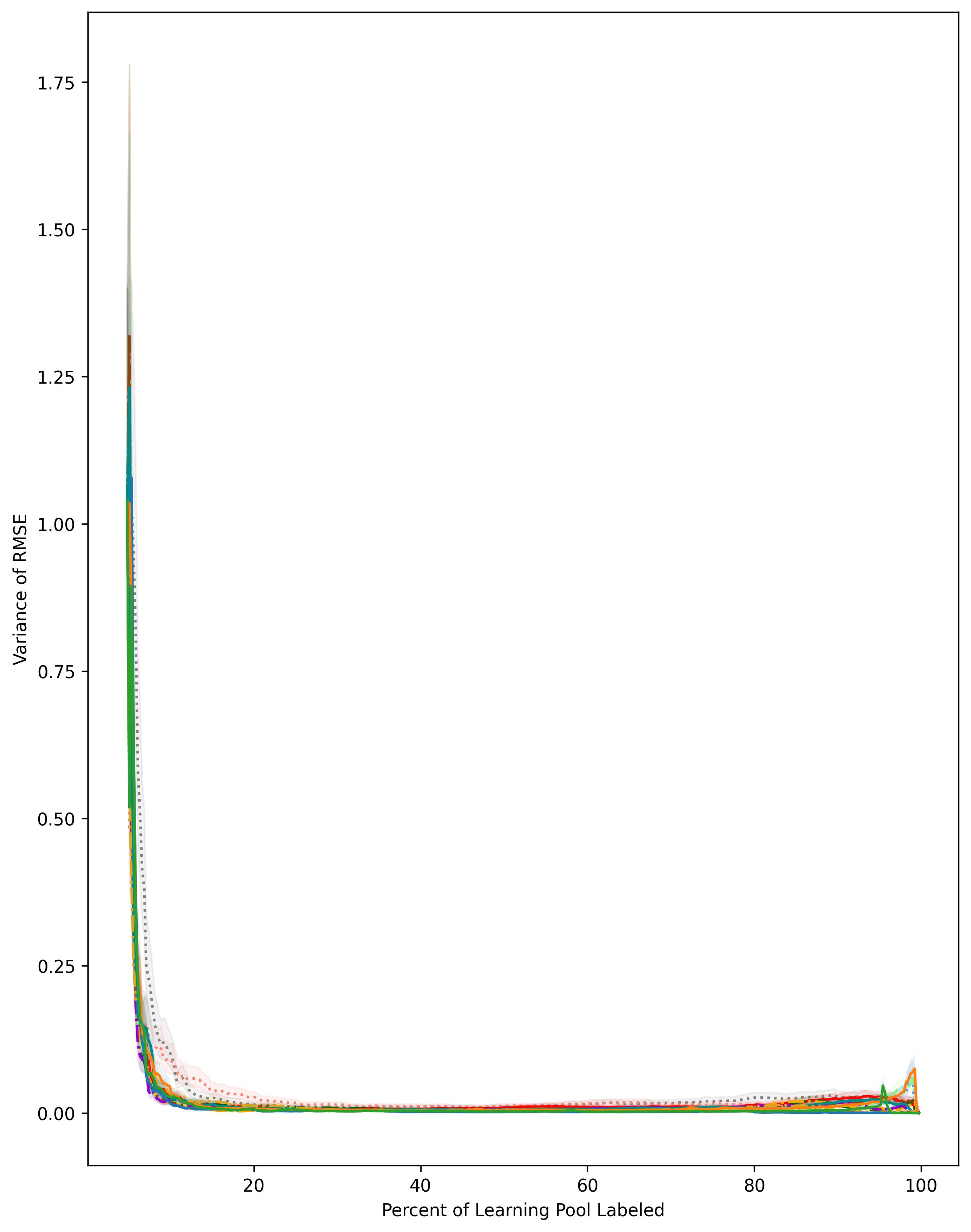}
        \caption{mpg}
    \end{subfigure}
    
    \vspace{0.3em}
    
    \begin{subfigure}[b]{0.31\textwidth}
        \centering
        \includegraphics[width=\linewidth, keepaspectratio]{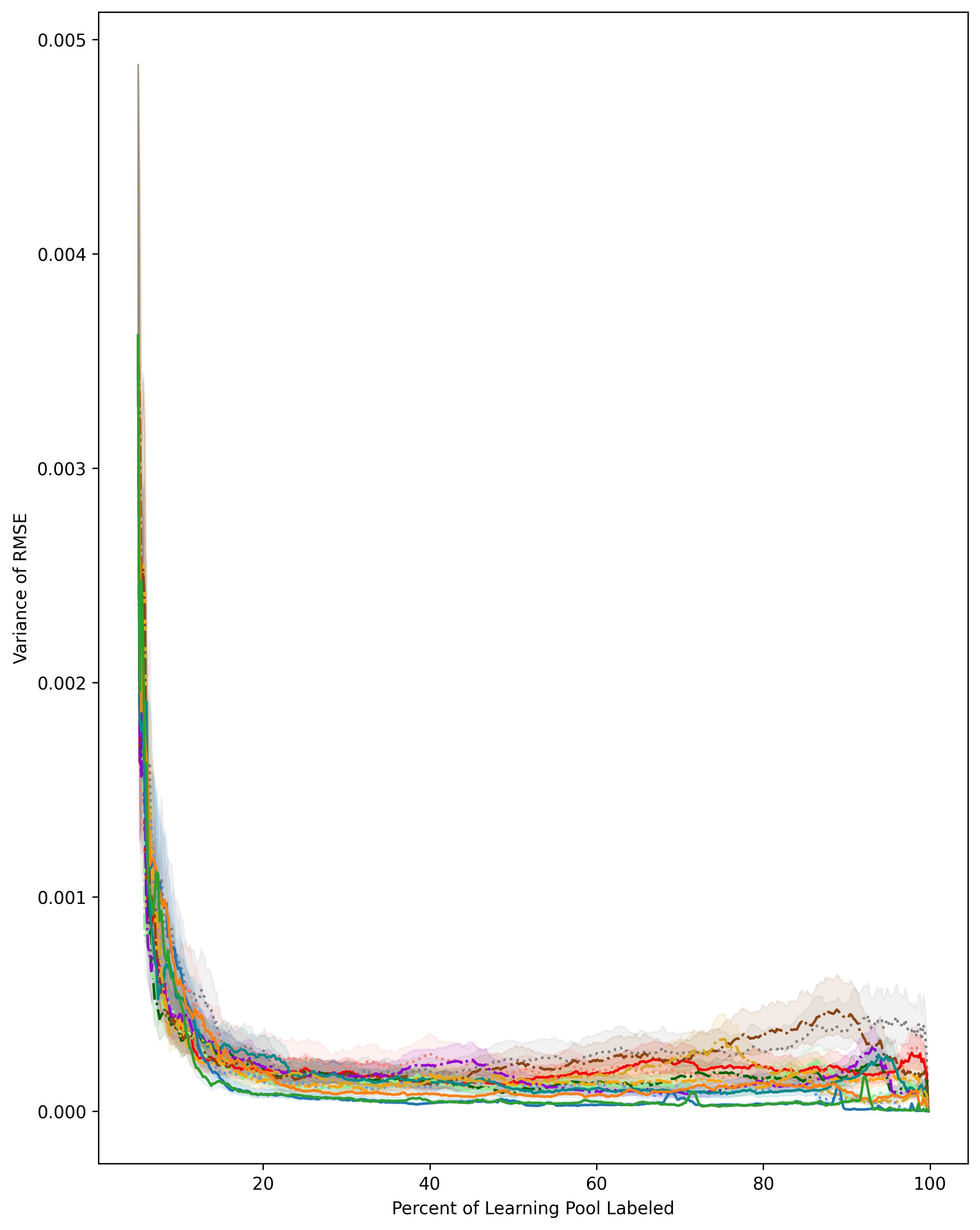}
        \caption{no2}
    \end{subfigure}
    \hfill
    \begin{subfigure}[b]{0.31\textwidth}
        \centering
        \includegraphics[width=\linewidth, keepaspectratio]{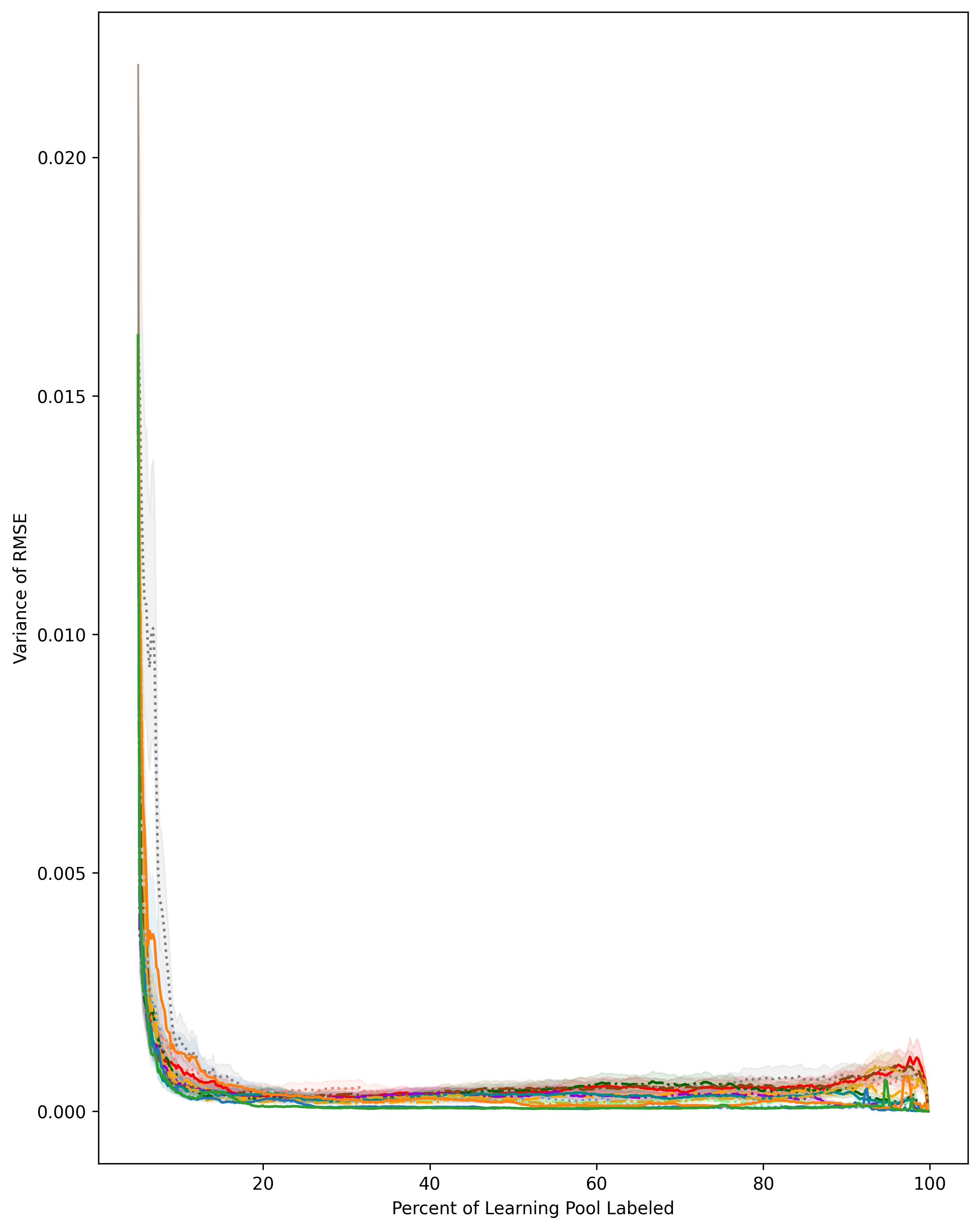}
        \caption{pm10}
    \end{subfigure}
    \hfill
    \begin{subfigure}[b]{0.31\textwidth}
        \centering
        \includegraphics[width=\linewidth, keepaspectratio]{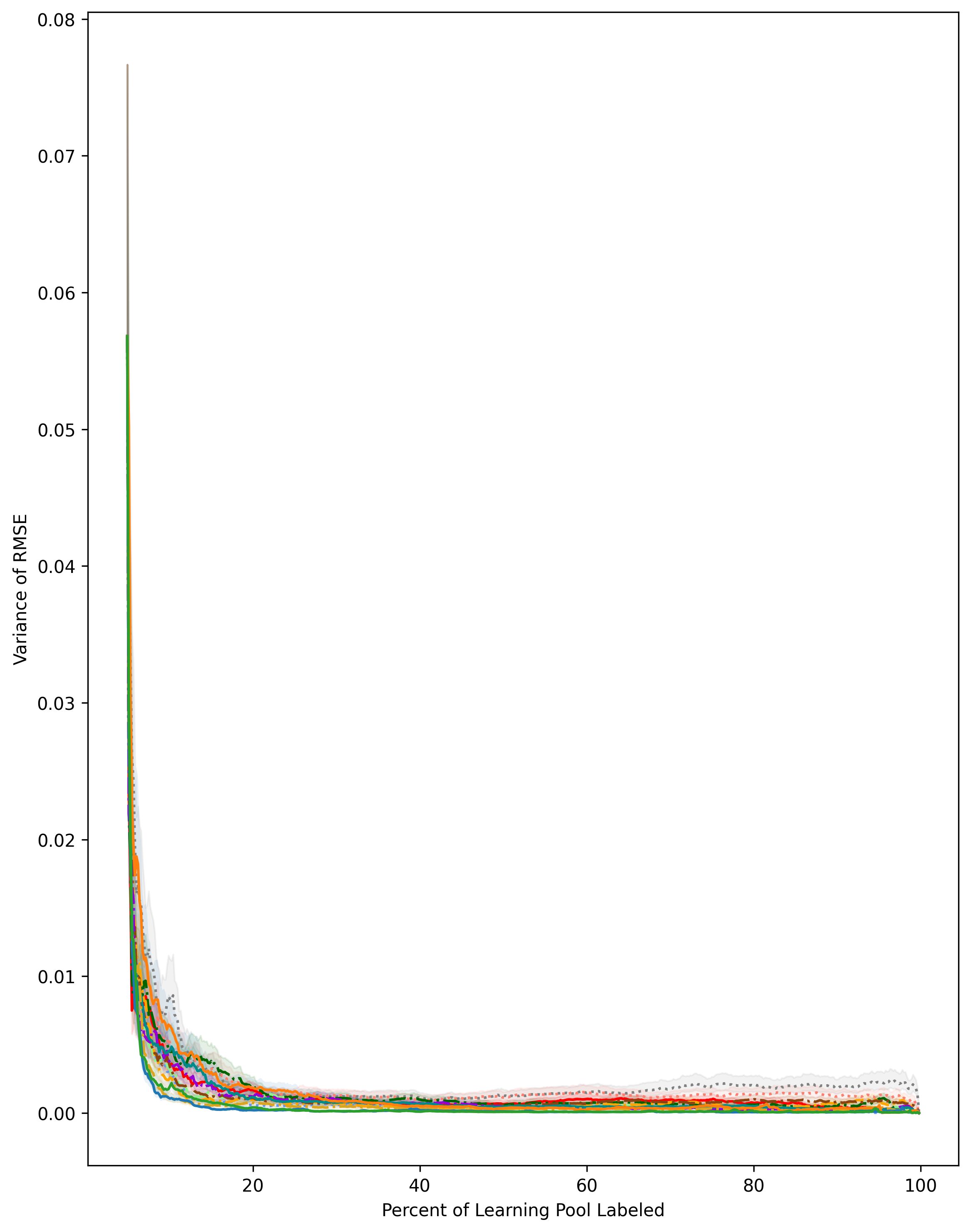}
        \caption{qsar}
    \end{subfigure}
    
    \vspace{0.3em}
    
    \begin{subfigure}[b]{0.31\textwidth}
        \centering
        \includegraphics[width=\linewidth, keepaspectratio]{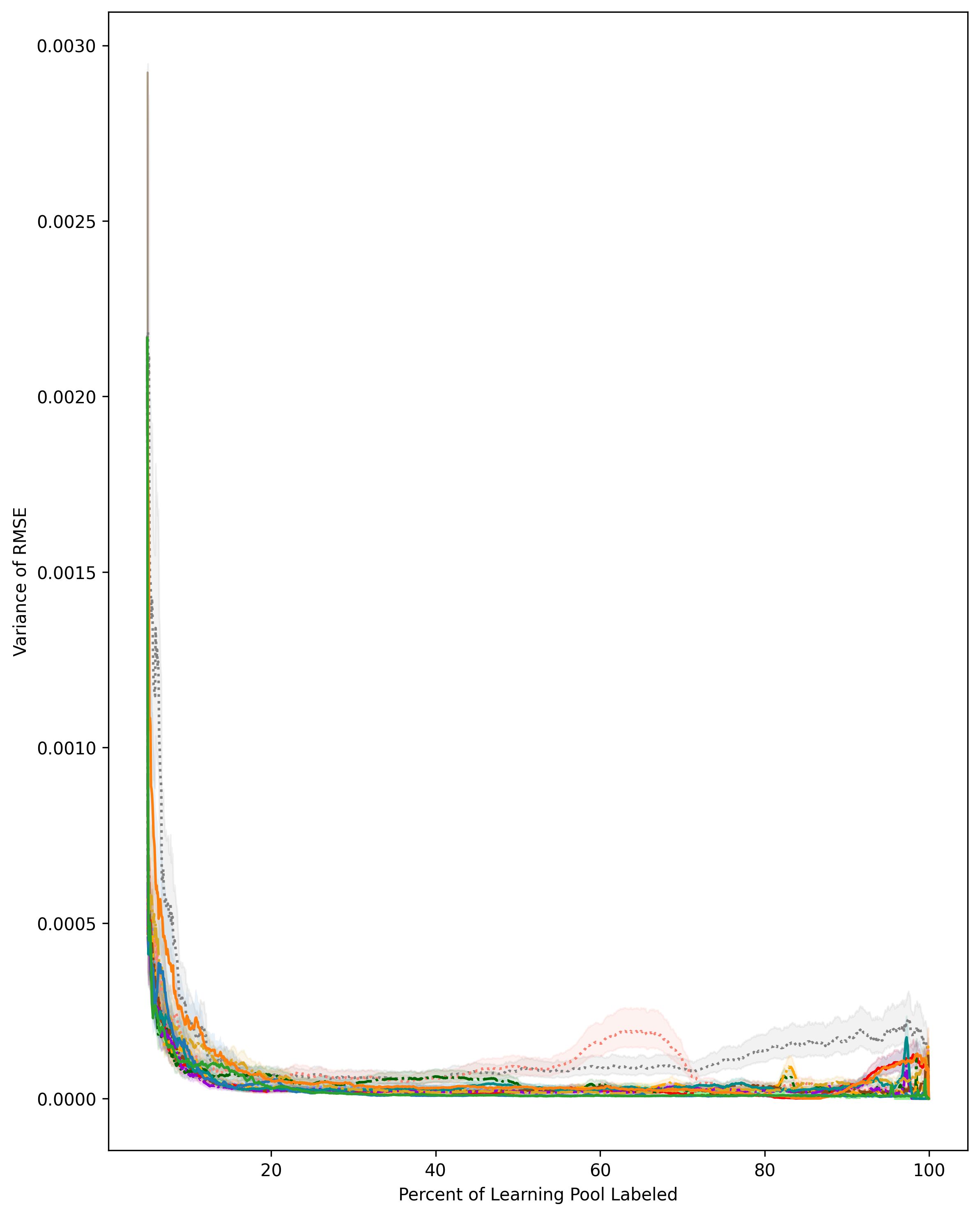}
        \caption{wine\_red}
    \end{subfigure}
    \hfill
    \begin{subfigure}[b]{0.31\textwidth}
        \centering
        \includegraphics[width=\linewidth, keepaspectratio]{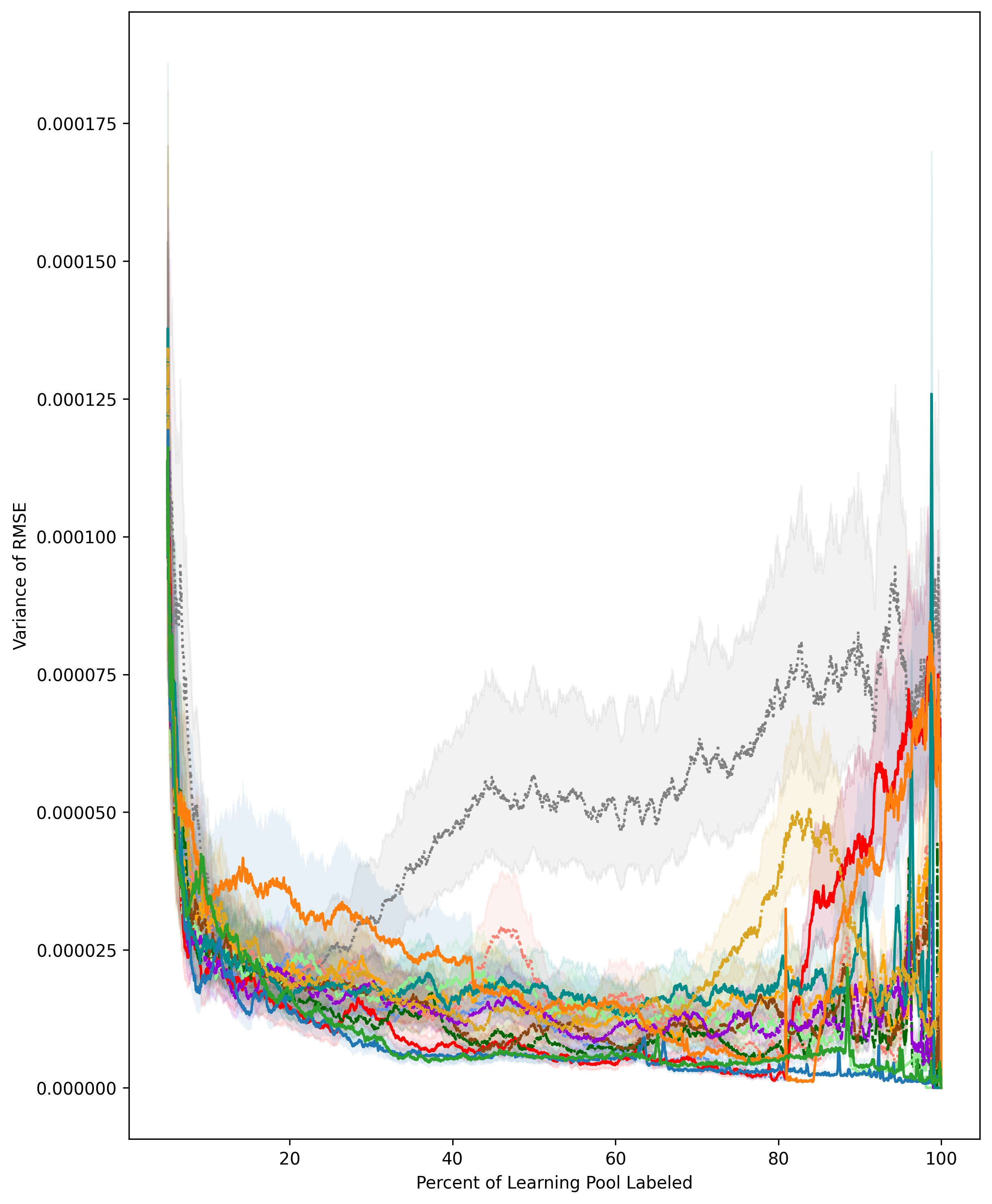}
        \caption{wine\_white}
    \end{subfigure}
    \hfill
    \begin{subfigure}[b]{0.31\textwidth}
        \centering
        \includegraphics[width=\linewidth, keepaspectratio]{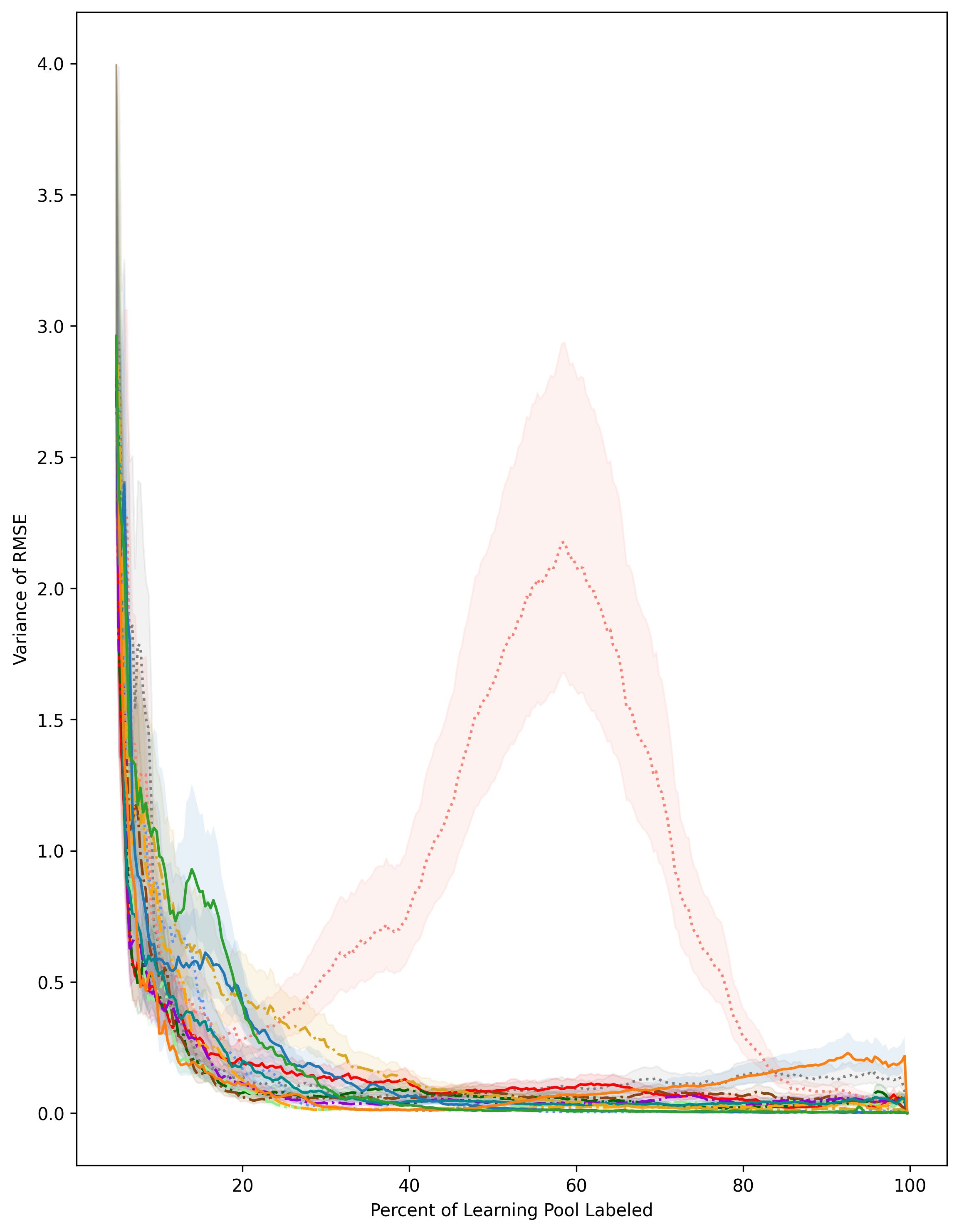}
        \caption{yacht}
    \end{subfigure}
    
    \vspace{0.5em}
    \centering
    \includegraphics[width=\linewidth]{upload_all_files/manuscript/benchmark_legend.jpg}
    
    \caption{Variance of the full-pool RMSE trace plots for benchmark datasets (Part 2 of 2).}
    \label{fig:VarianceResults2}
\end{figure}
\clearpage

\subsection{Robust Normalization Results}
\label{subsec:robust_norm}

To validate the stability of our framework, we repeated our evaluation under a different normalization scheme $\phi$. The results below show the same experiments with a \textit{Robust Normalization} scheme (scaling features based on the interquartile range) to mitigate the impact of outliers in the input space. As illustrated in the Figure \ref{fig:auc_heatmap_robust} and the trace plots below, the relative performance rankings of the strategies remain consistent with our main results. This confirms that the superiority of the adaptive WiGS-SAC agent is intrinsic to its policy learning capability and not an artifact of specific feature scaling choices.

\begin{figure*}[htbp]
    \centering
    \includegraphics[width=\textwidth]{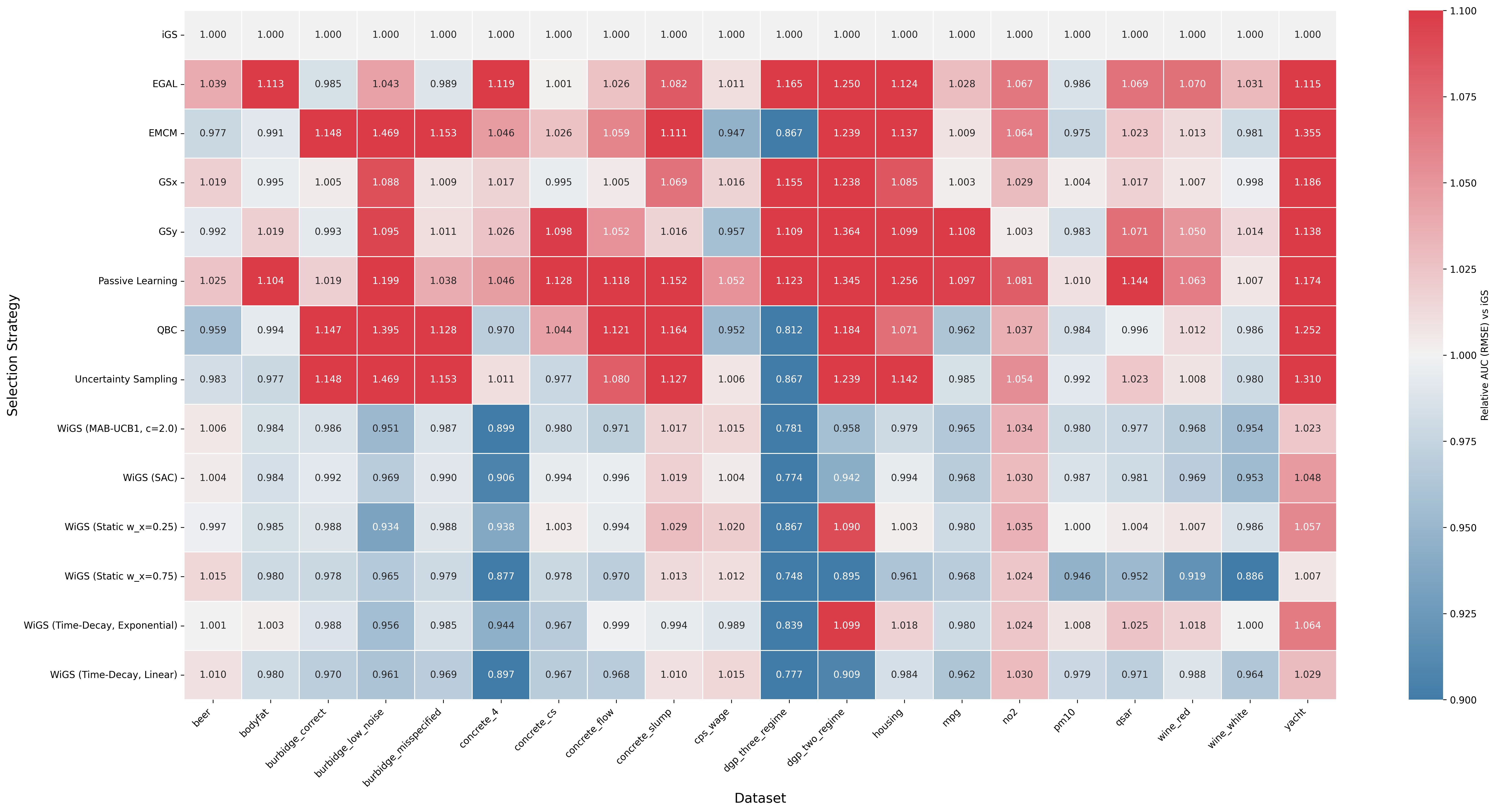}
    \caption{Robust Relative AUC values comparing active learning strategies under robust feature scaling. The results parallel those observed under standard normalization, indicating the method's robustness to preprocessing techniques.}
    \label{fig:auc_heatmap_robust}
\end{figure*}

\begin{figure*}[t]
    \centering
    \includegraphics[width=0.8\textwidth]{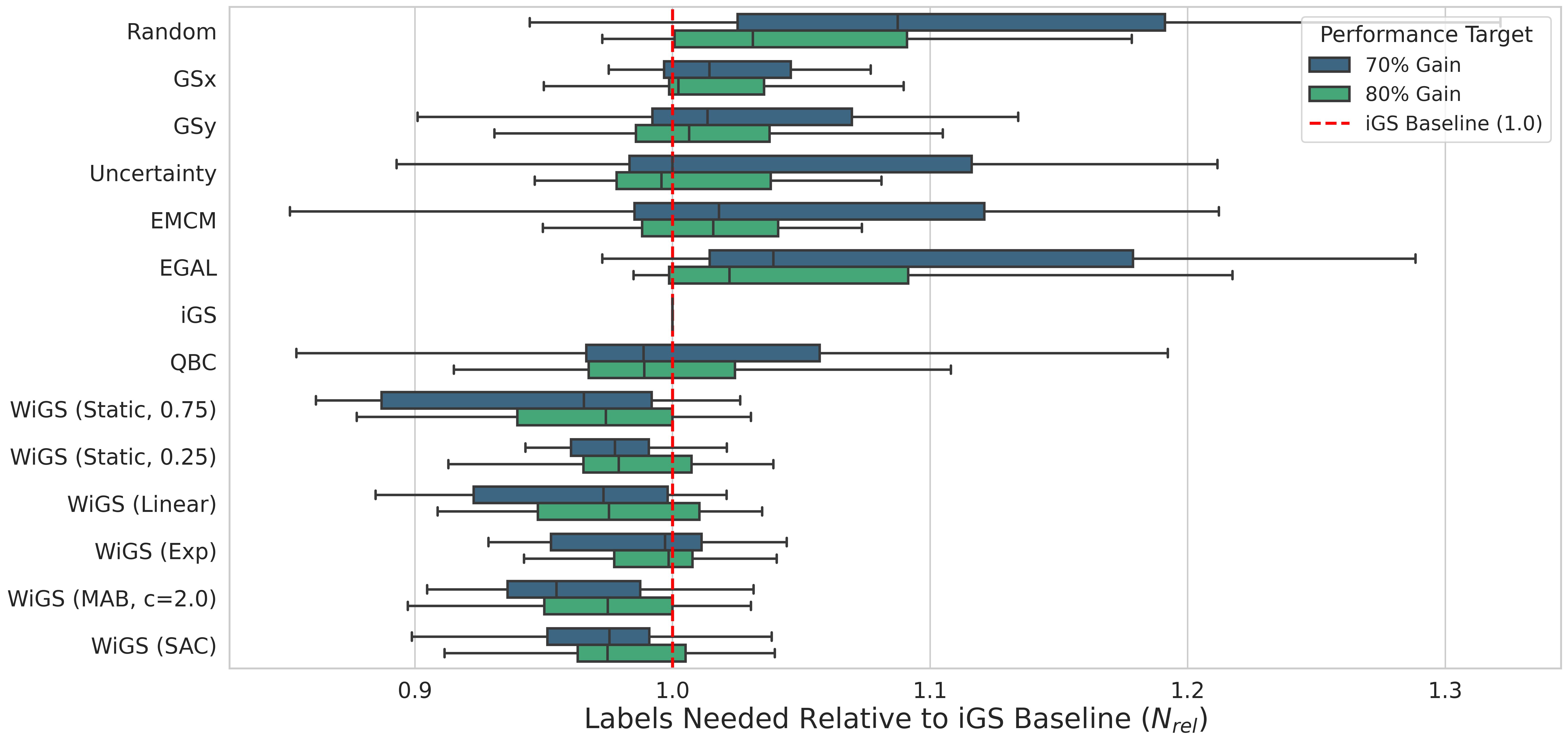}
    \caption{Relative Label Efficiency ($N_{rel}$) aggregated across 20 datasets with a robust normalization. Again, the results are similar to those under standard normalization}
    \label{fig:label_efficiency_robust}
\end{figure*}

\begin{figure}
    \centering
    \vspace*{-1cm} 
    
    \begin{subfigure}[b]{0.31\textwidth}
        \centering
        \includegraphics[width=\linewidth, keepaspectratio]{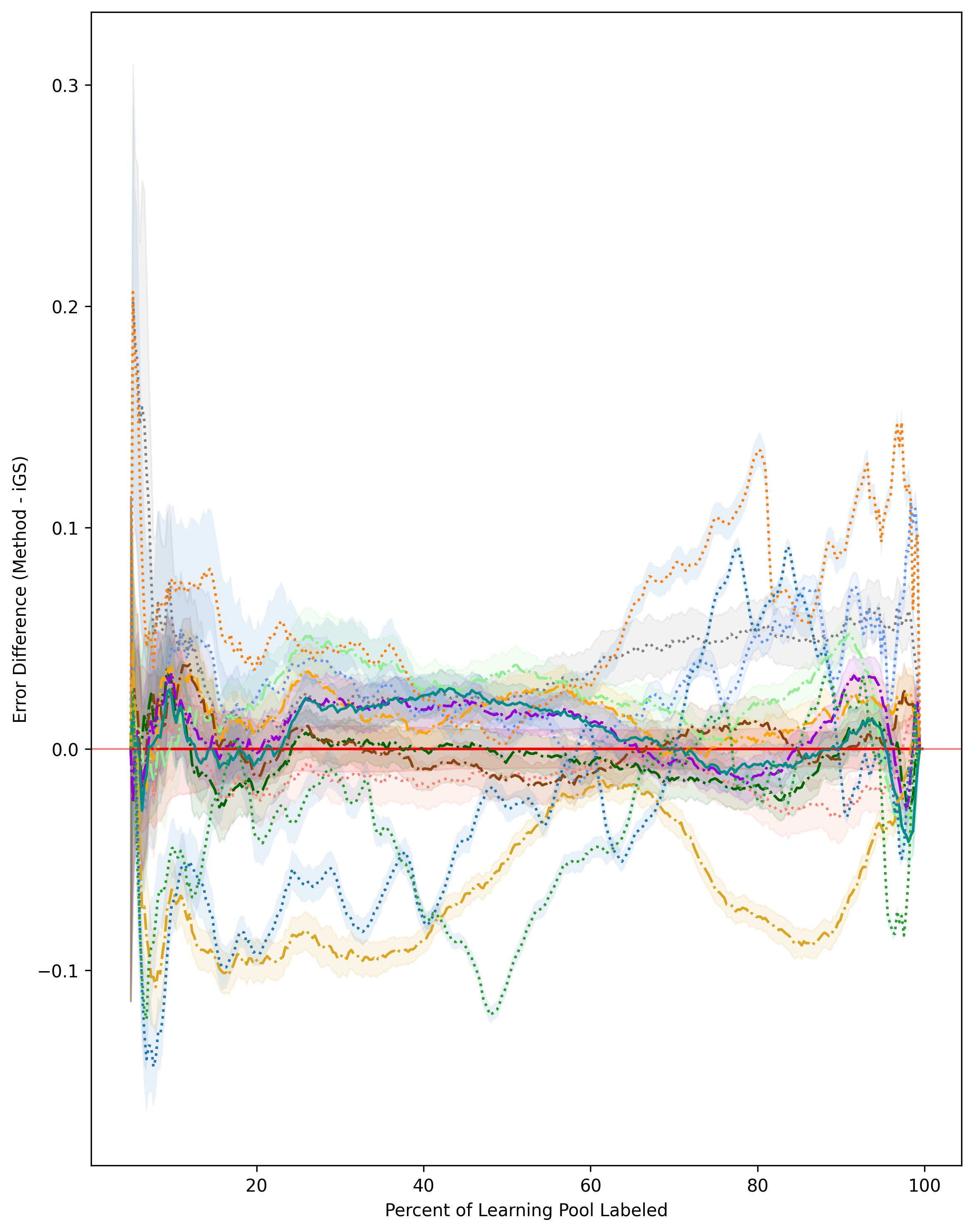}        
        \caption{beer}
    \end{subfigure}
    \hfill
    \begin{subfigure}[b]{0.31\textwidth}
        \centering
        \includegraphics[width=\linewidth, keepaspectratio]{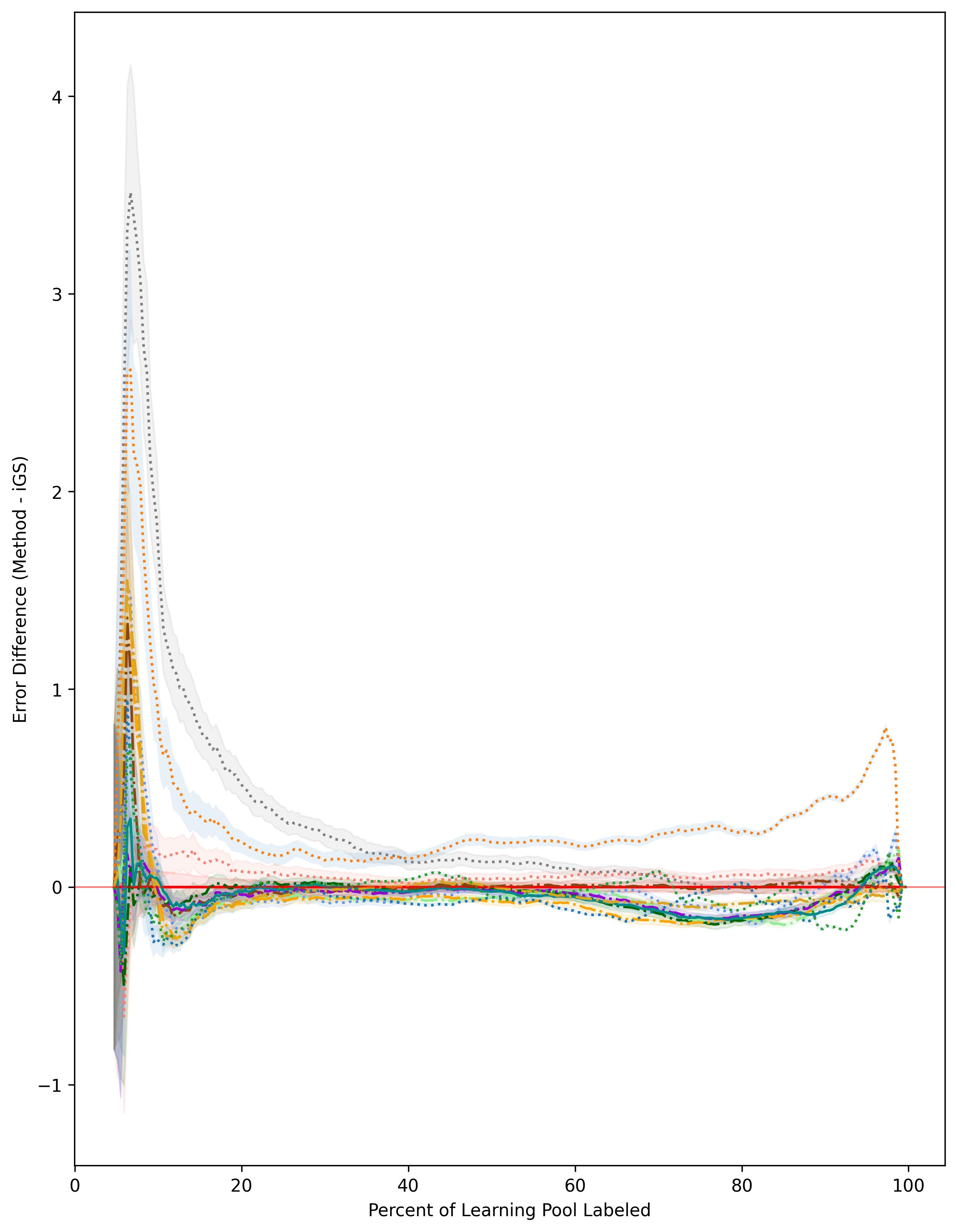}
        \caption{bodyfat}
    \end{subfigure}
    \hfill
    \begin{subfigure}[b]{0.31\textwidth}
        \centering
        \includegraphics[width=\linewidth, keepaspectratio]{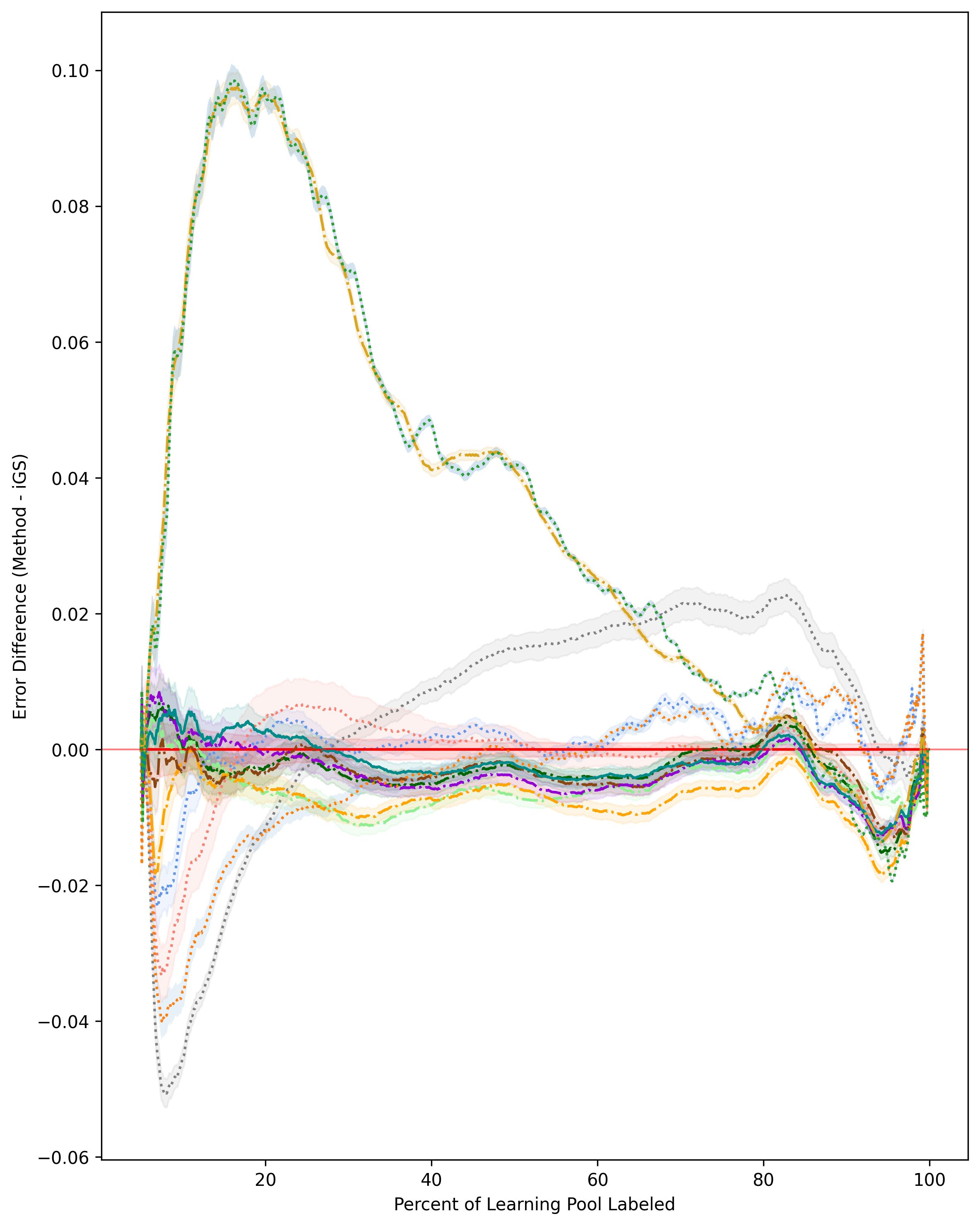}
        \caption{burbidge\_correct}
    \end{subfigure}
    
    \vspace{0.3em} 
    
    \begin{subfigure}[b]{0.31\textwidth}
        \centering
        \includegraphics[width=\linewidth, keepaspectratio]{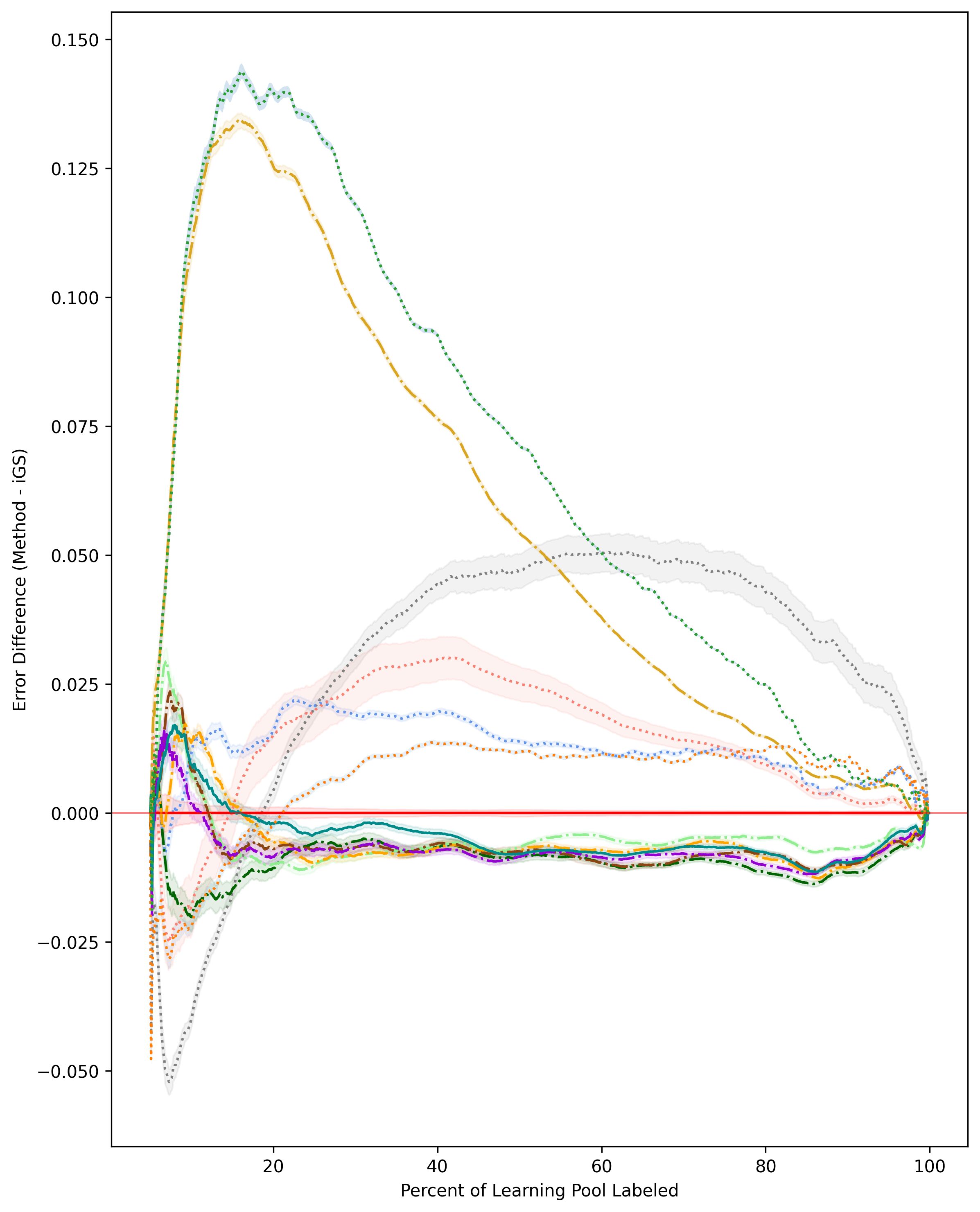}
        \caption{burbidge\_low\_noise}
    \end{subfigure}
    \hfill
    \begin{subfigure}[b]{0.31\textwidth}
        \centering
        \includegraphics[width=\linewidth, keepaspectratio]{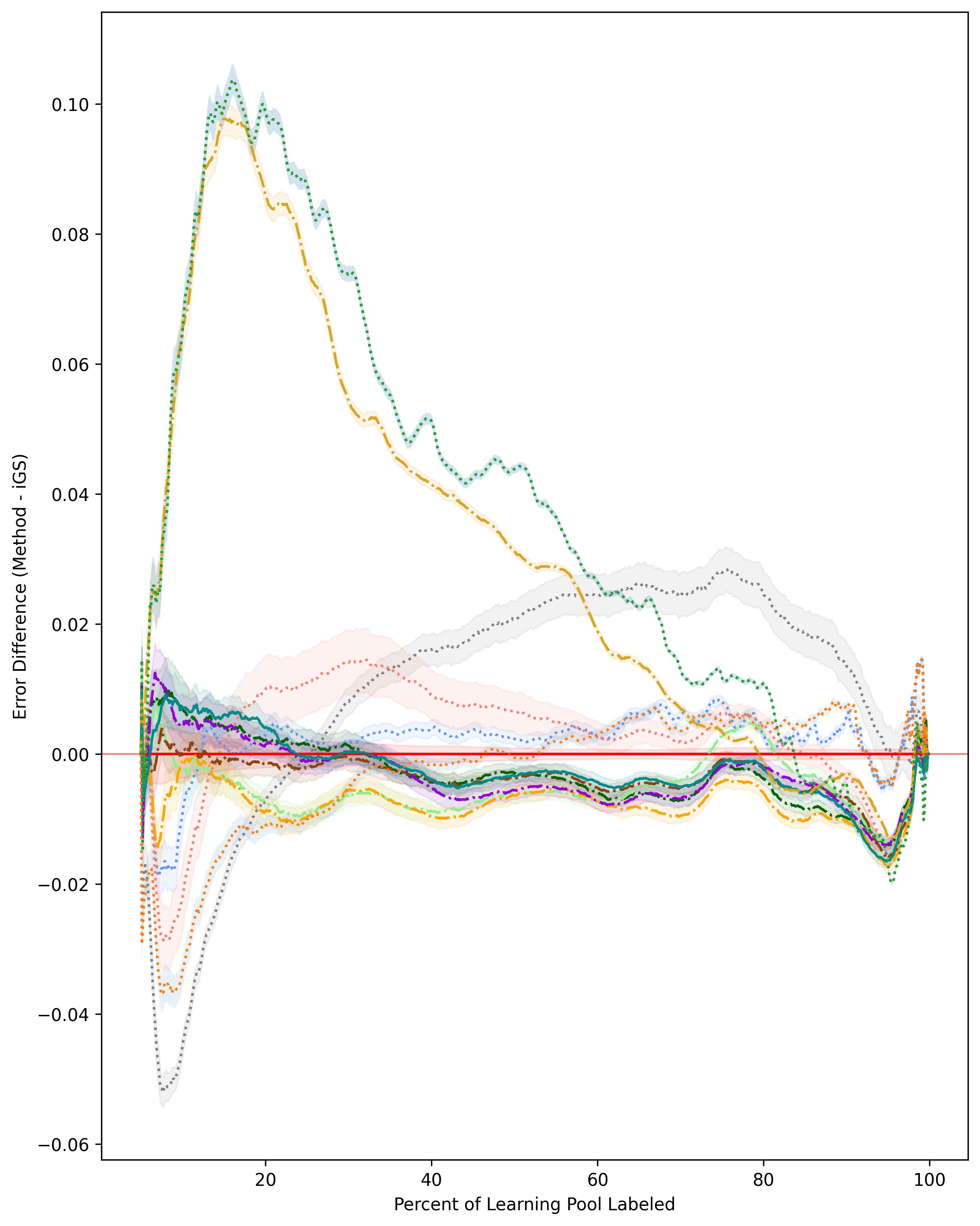}
        \caption{burbidge\_misspecified}
    \end{subfigure}
    \hfill
    \begin{subfigure}[b]{0.31\textwidth}
        \centering
        \includegraphics[width=\linewidth, keepaspectratio]{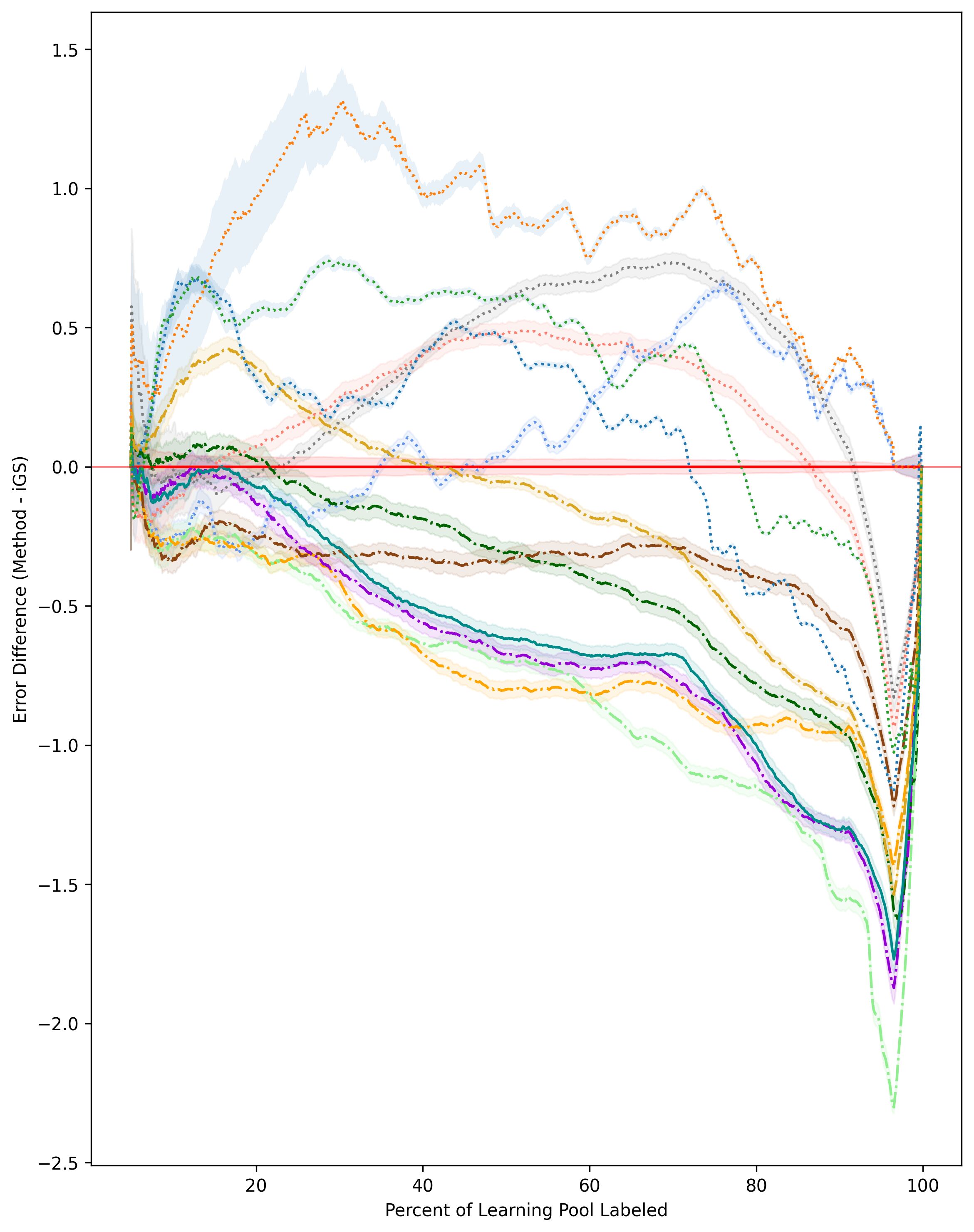}
        \caption{concrete\_4}
    \end{subfigure}
    
    \vspace{0.3em}
    
    \begin{subfigure}[b]{0.31\textwidth}
        \centering
        \includegraphics[width=\linewidth, keepaspectratio]{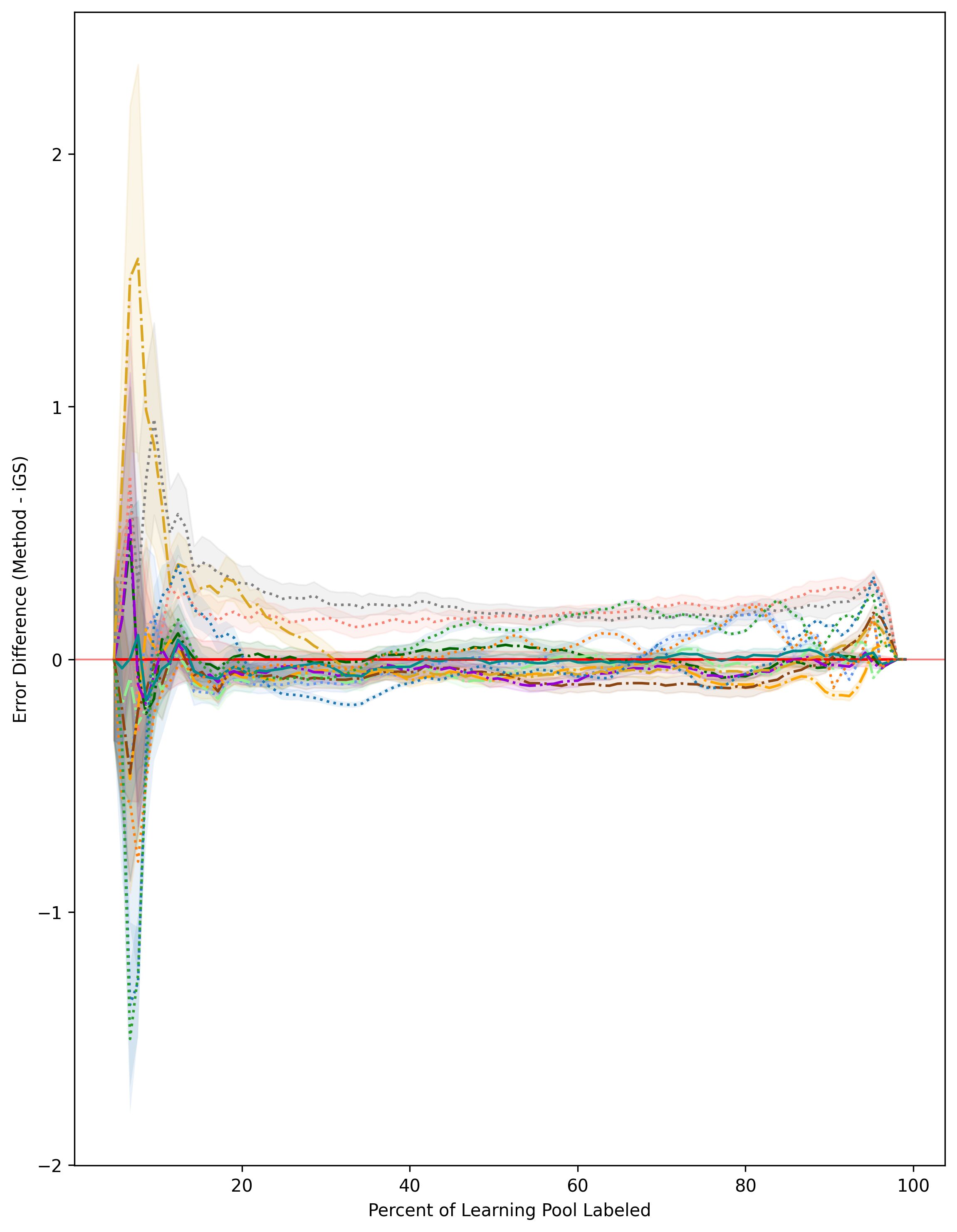}
        \caption{concrete\_cs}
    \end{subfigure}
    \hfill
    \begin{subfigure}[b]{0.31\textwidth}
        \centering
        \includegraphics[width=\linewidth, keepaspectratio]{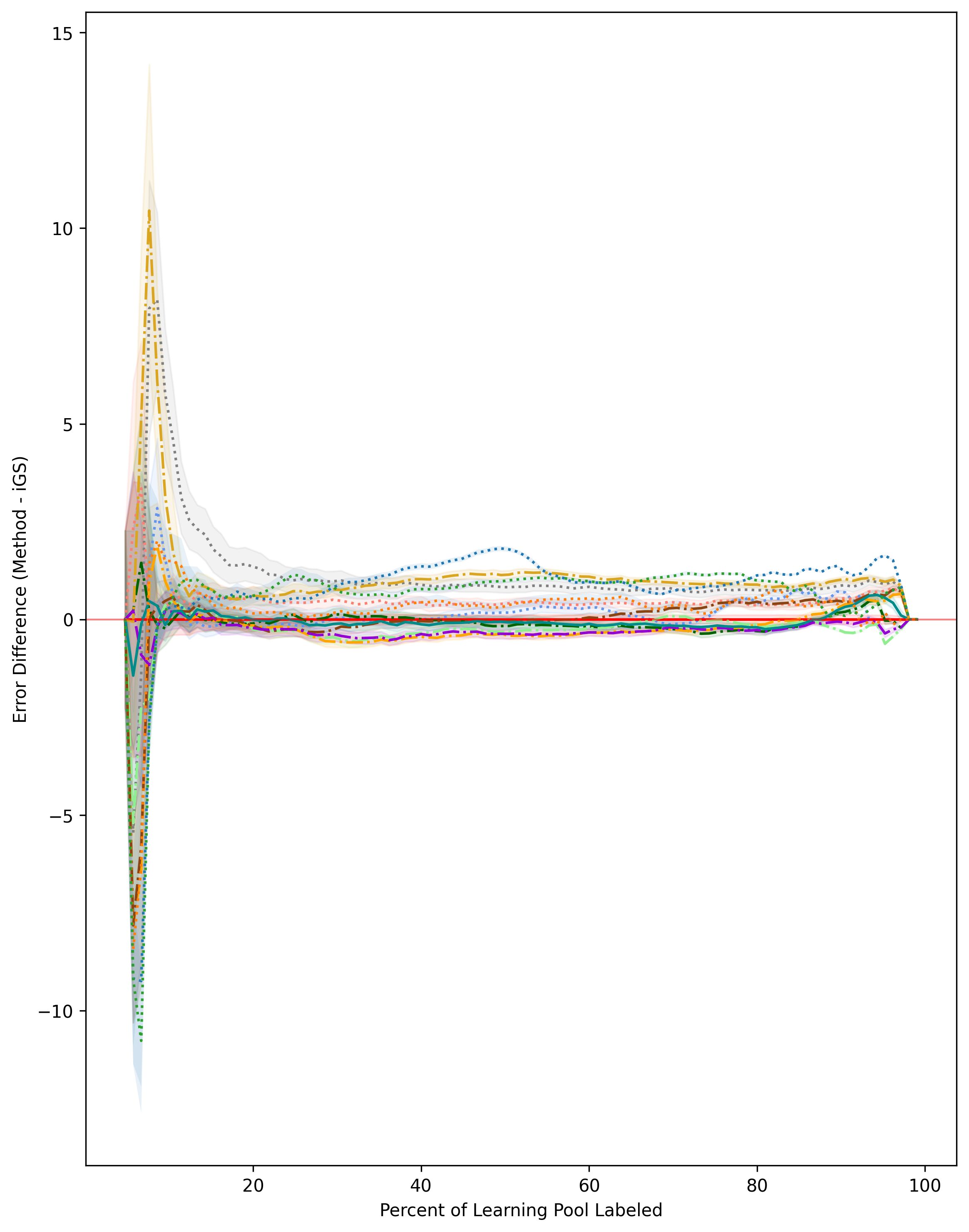}
        \caption{concrete\_flow}
    \end{subfigure}
    \hfill
    \begin{subfigure}[b]{0.31\textwidth}
        \centering
        \includegraphics[width=\linewidth, keepaspectratio]{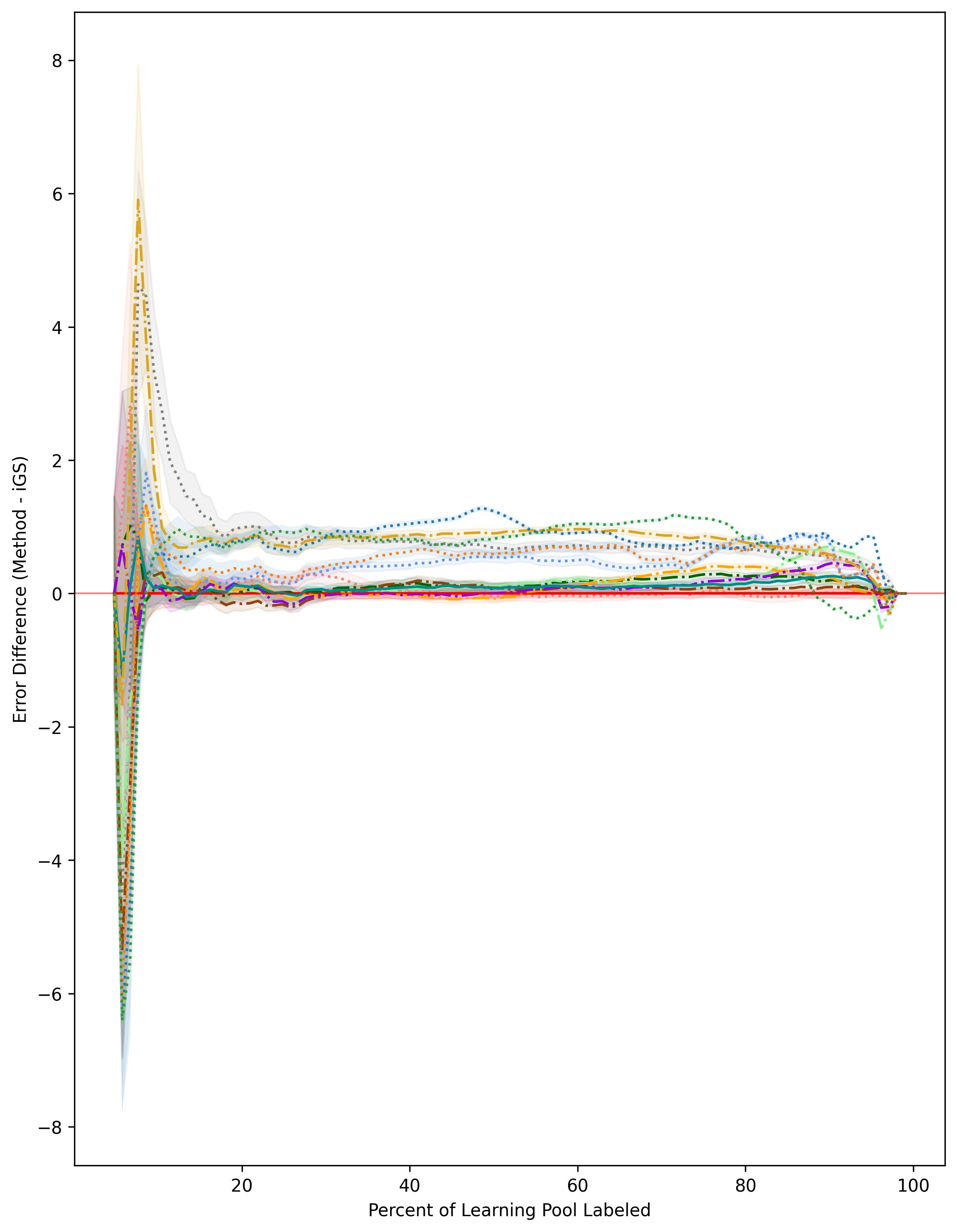}
        \caption{concrete\_slump}
    \end{subfigure}
    
    \vspace{0.5em}
    \centering
    \includegraphics[width=\linewidth]{upload_all_files/manuscript/benchmark_legend.jpg}
    
    \caption{Full-pool RMSE trace plots for benchmark datasets with robust normalization (Part 1 of 2).}
    \label{fig:MainResults_ROBUST1}
\end{figure}

\clearpage
\begin{figure}
    \centering
    \vspace*{-1cm} 
    
    \begin{subfigure}[b]{0.31\textwidth}
        \centering
        \includegraphics[width=\linewidth, keepaspectratio]{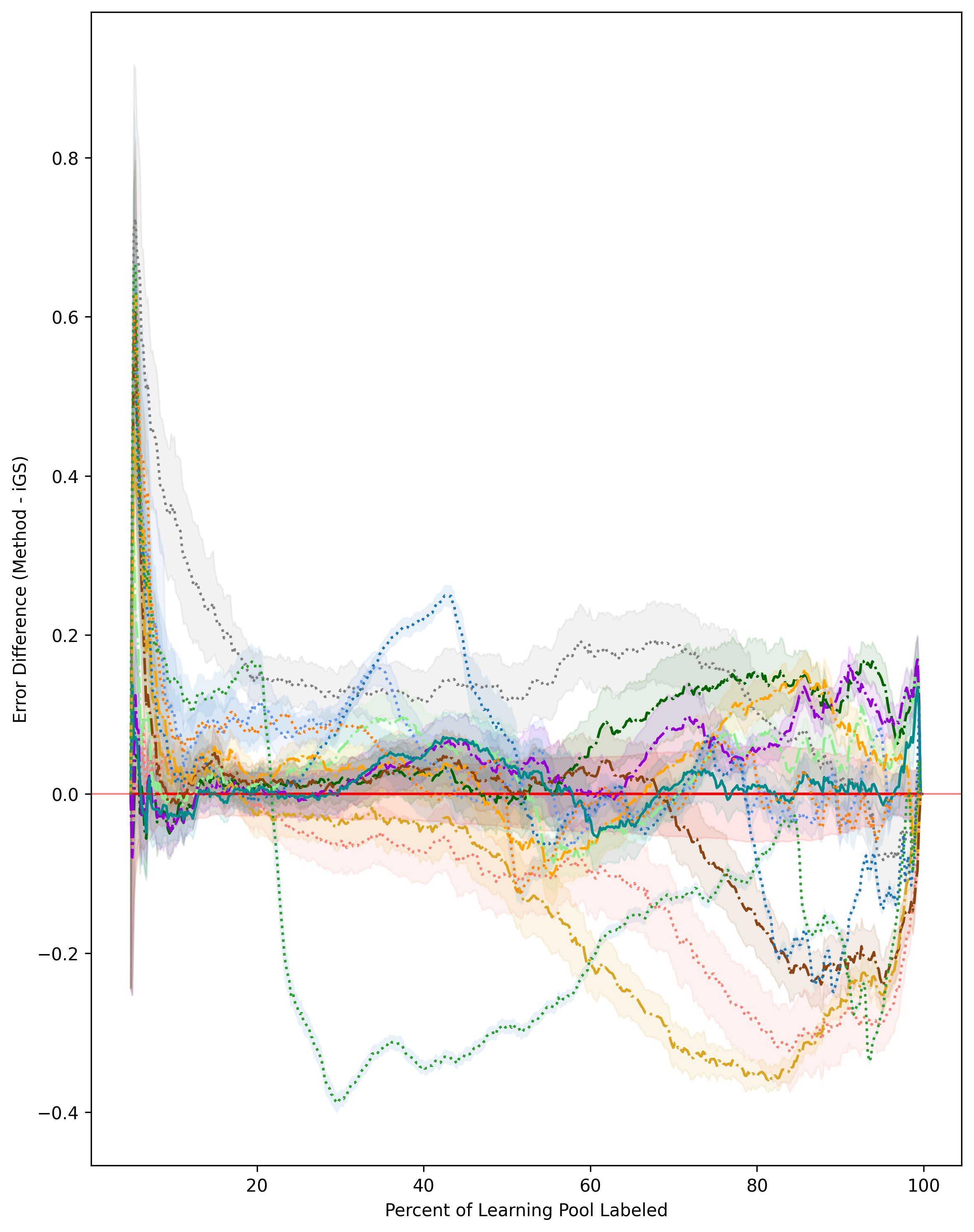}
        \caption{cps\_wage}    \end{subfigure}
    \hfill
    \begin{subfigure}[b]{0.31\textwidth}
        \centering
        \includegraphics[width=\linewidth, keepaspectratio]{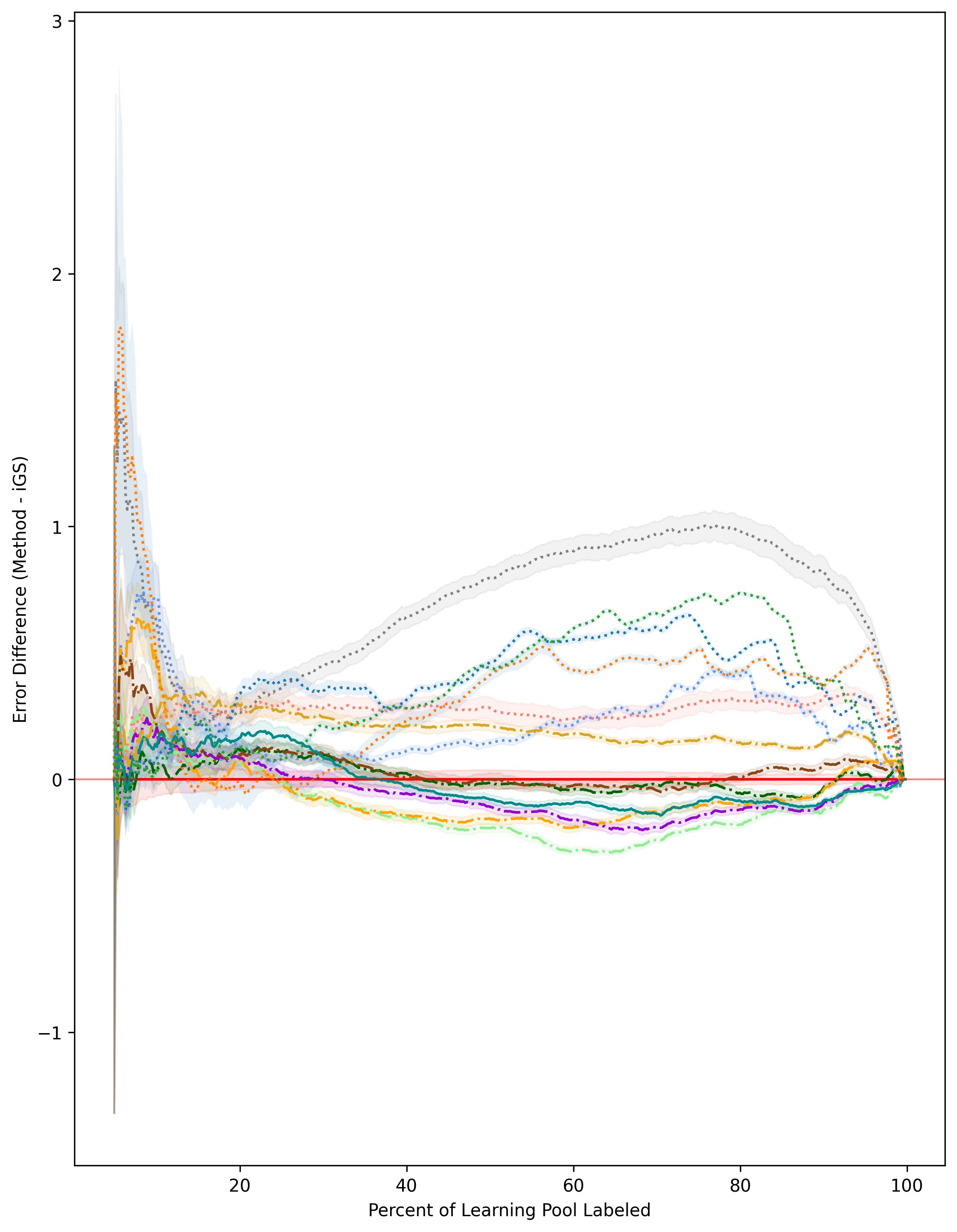}
        \caption{housing}
    \end{subfigure}
    \hfill
    \begin{subfigure}[b]{0.31\textwidth}
        \centering
        \includegraphics[width=\linewidth, keepaspectratio]{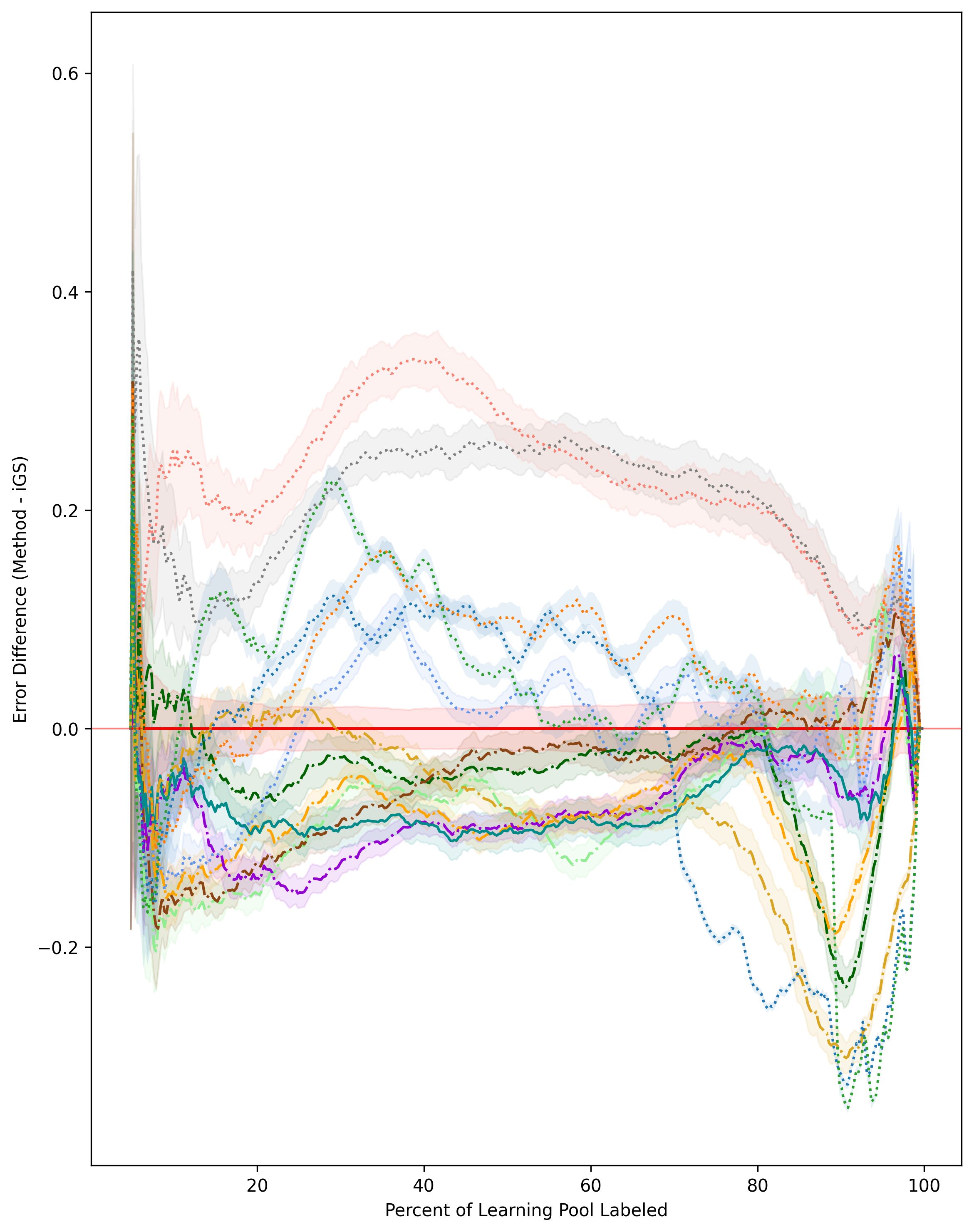}
        \caption{mpg}
    \end{subfigure}
    
    \vspace{0.3em}
    
    \begin{subfigure}[b]{0.31\textwidth}
        \centering
        \includegraphics[width=\linewidth, keepaspectratio]{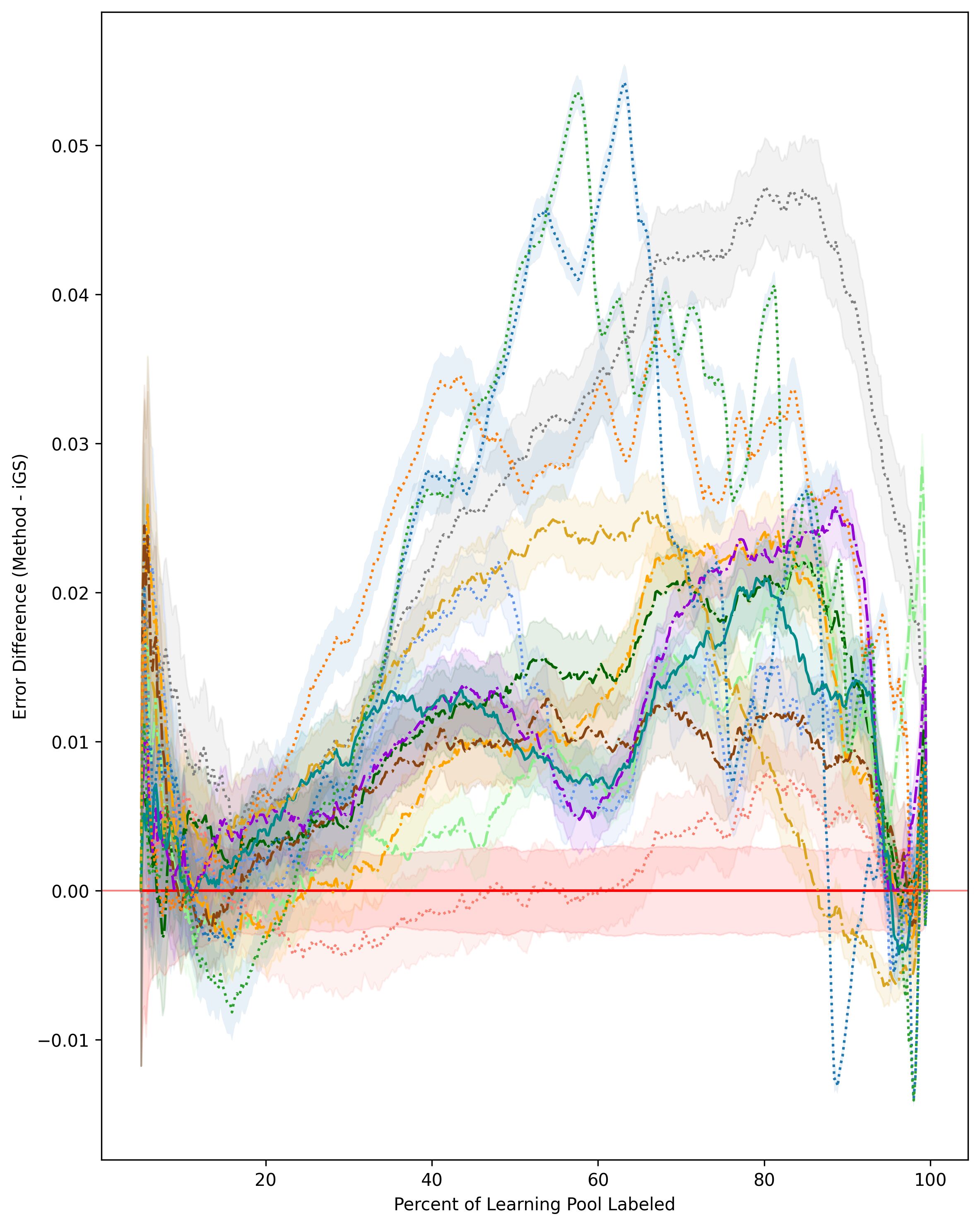}
        \caption{no2}
    \end{subfigure}
    \hfill
    \begin{subfigure}[b]{0.31\textwidth}
        \centering
        \includegraphics[width=\linewidth, keepaspectratio]{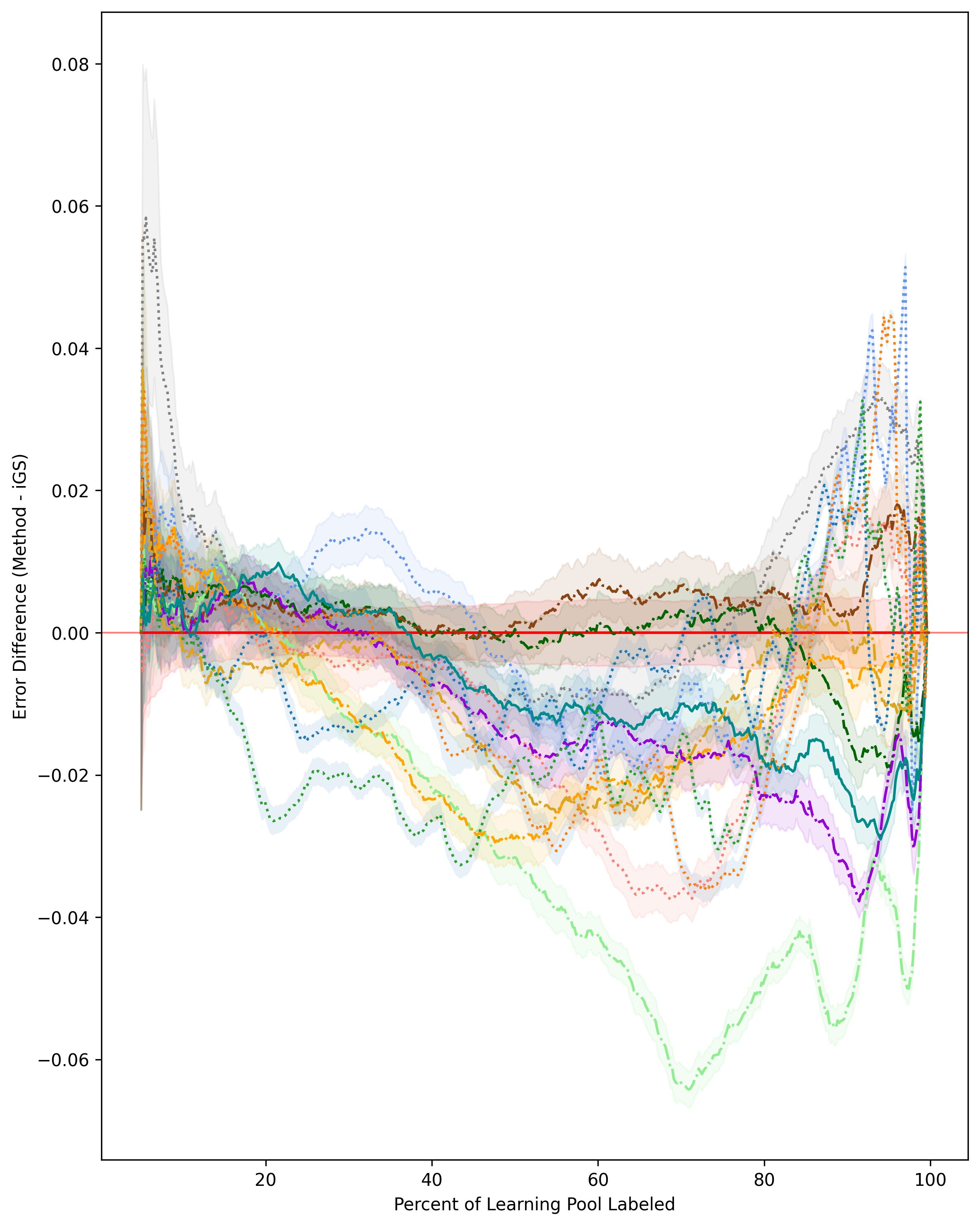}
        \caption{pm10}
    \end{subfigure}
    \hfill
    \begin{subfigure}[b]{0.31\textwidth}
        \centering
        \includegraphics[width=\linewidth, keepaspectratio]{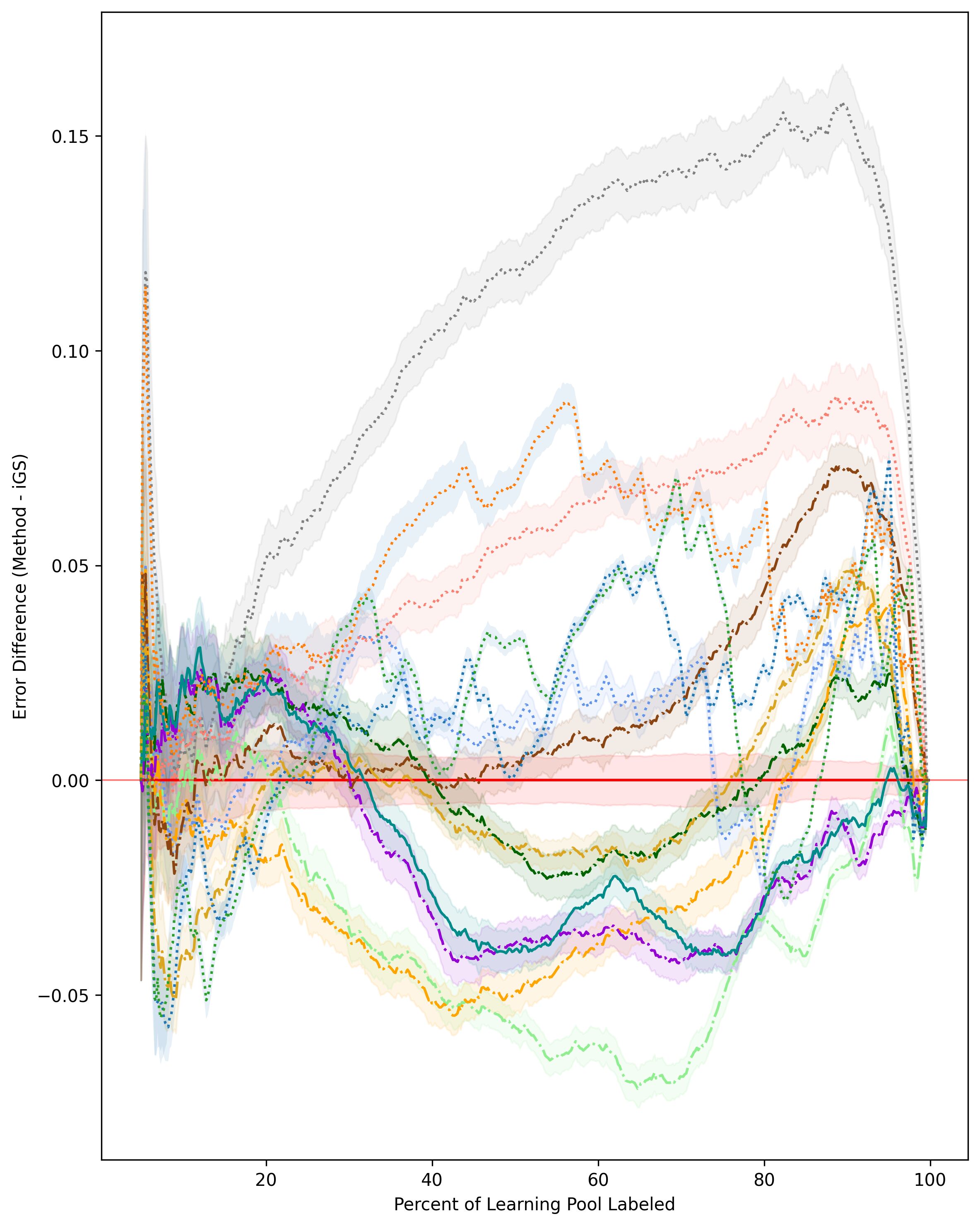}
        \caption{qsar}
    \end{subfigure}
    
    \vspace{0.3em}
    
    \begin{subfigure}[b]{0.31\textwidth}
        \centering
        \includegraphics[width=\linewidth, keepaspectratio]{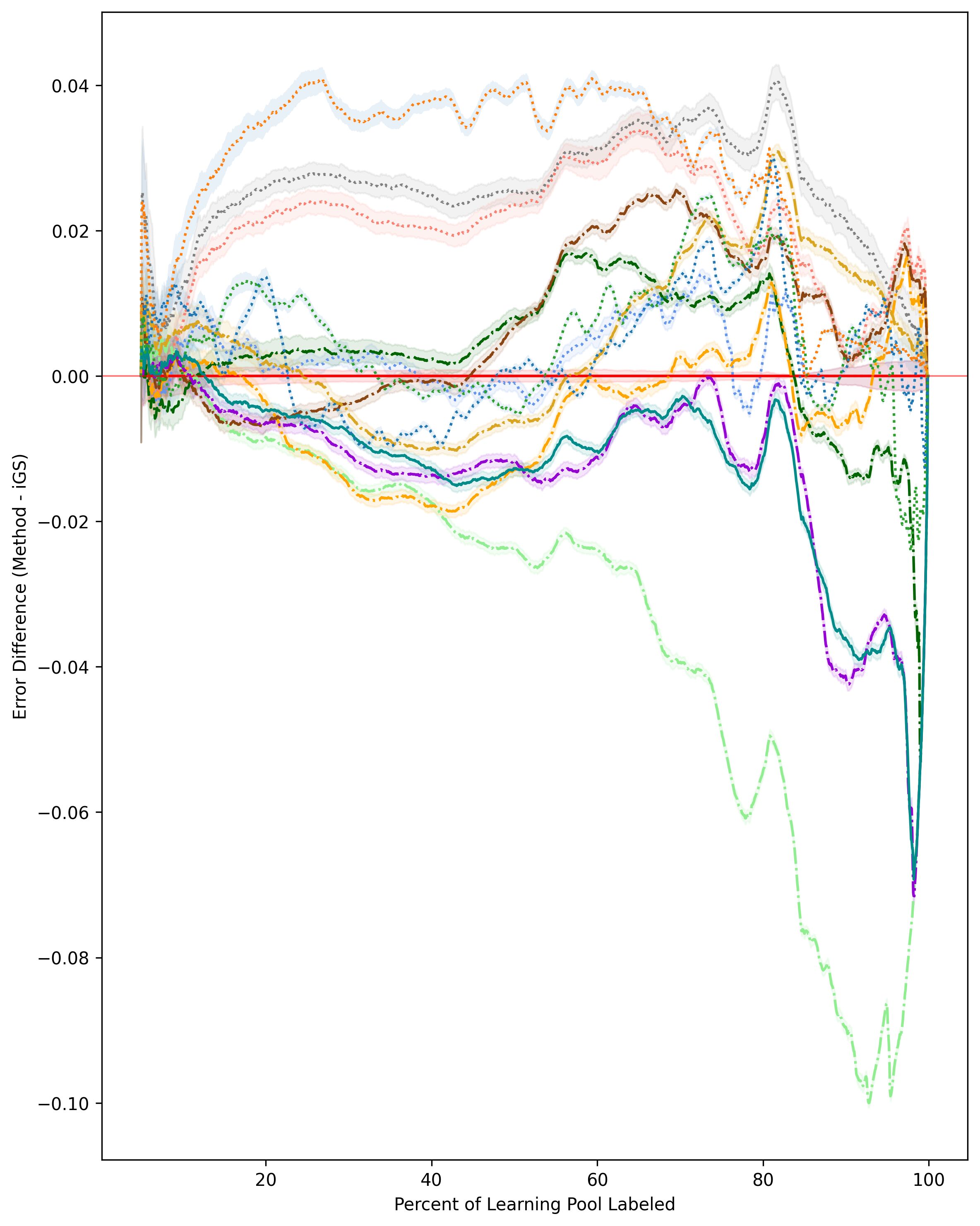}
        \caption{wine\_red}
    \end{subfigure}
    \hfill
    \begin{subfigure}[b]{0.31\textwidth}
        \centering
        \includegraphics[width=\linewidth, keepaspectratio]{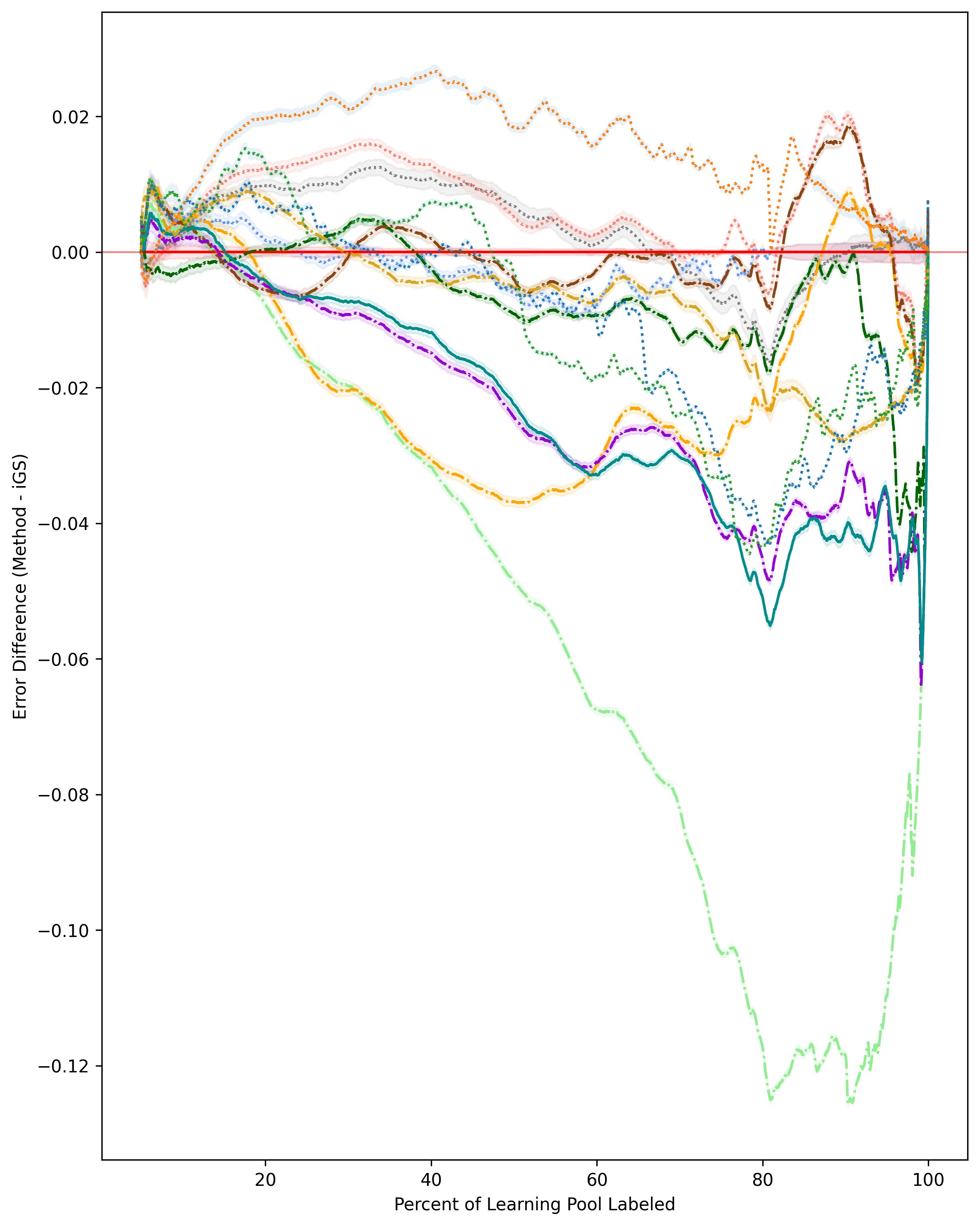}
        \caption{wine\_white}
    \end{subfigure}
    \hfill
    \begin{subfigure}[b]{0.31\textwidth}
        \centering
        \includegraphics[width=\linewidth, keepaspectratio]{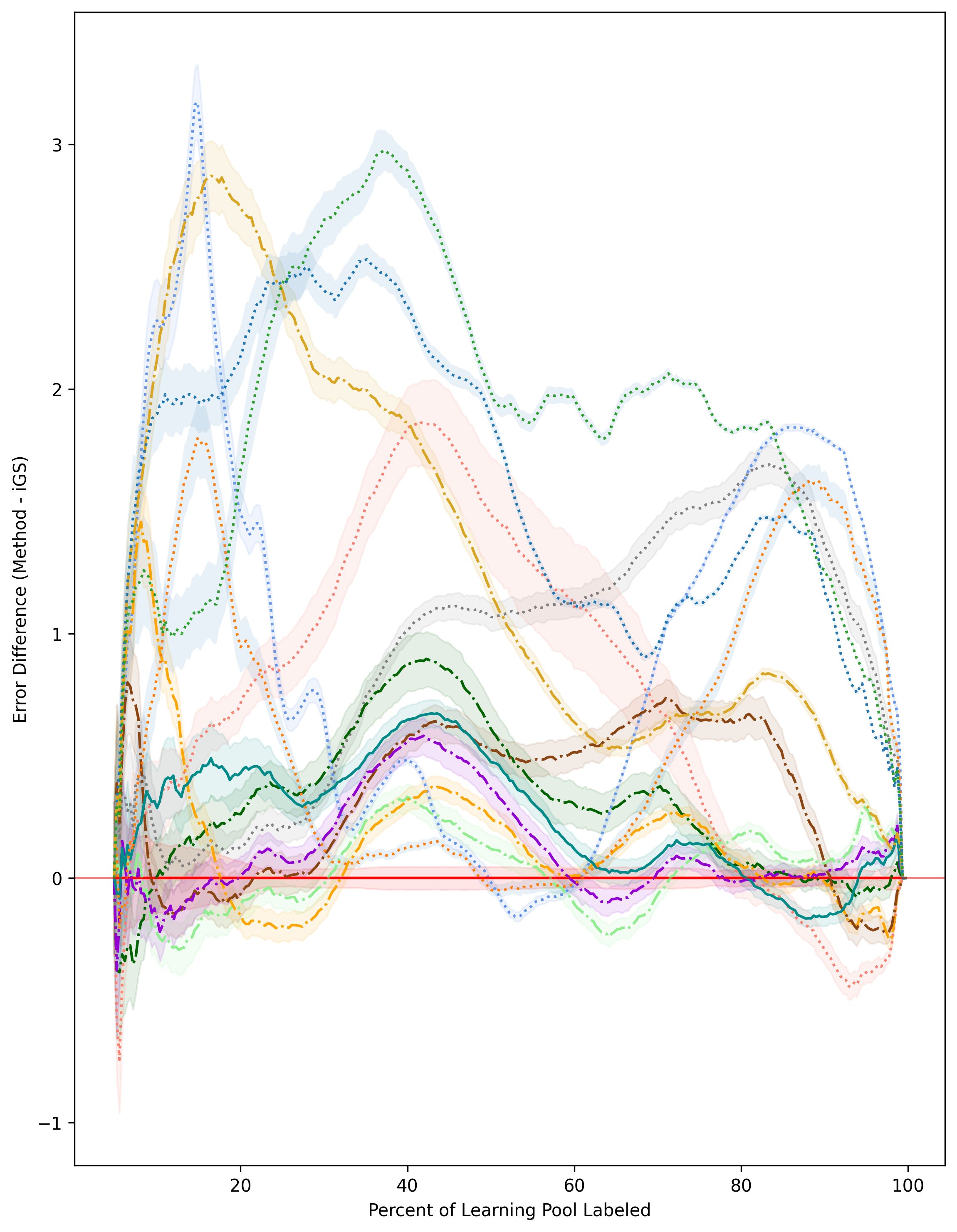}
        \caption{yacht}
    \end{subfigure}
    
    \vspace{0.5em}
    \centering
    \includegraphics[width=\linewidth]{upload_all_files/manuscript/benchmark_legend.jpg}
    
    \caption{Full-pool RMSE trace plots for benchmark datasets with robust normalization (Part 2 of 2).}
    \label{fig:MainResults_ROBUST2}
\end{figure}
\clearpage
\end{document}